\documentclass{article} %

\usepackage{etoolbox}
\usepackage{comment}
\newtoggle{iclr}
\togglefalse{iclr}

\newcommand{\arxiv}[1]{\iftoggle{iclr}{}{#1}}
\newcommand{\loose}{\looseness=-1}

\arxiv{
\usepackage[dvipsnames]{xcolor}
}

\usepackage{algorithm}
\usepackage{verbatim}
\usepackage[noend]{algpseudocode}

\usepackage[utf8]{inputenc} %
\usepackage[T1]{fontenc}    %
\usepackage{url}            %
\usepackage{booktabs}       %
\usepackage{amsfonts}       %
\usepackage{nicefrac}       %
\usepackage{microtype}      %
\usepackage{makecell}
\usepackage{enumitem}
\usepackage{breakcites}
\usepackage{mathrsfs}

\usepackage[normalem]{ulem}

\newcommand{\multiline}[1]{\parbox[t]{\dimexpr\linewidth-\algorithmicindent}{#1}}
\makeatletter
\let\OldStatex\Statex
\renewcommand{\Statex}[1][3]{%
  \setlength\@tempdima{\algorithmicindent}%
  \OldStatex\hskip\dimexpr#1\@tempdima\relax}
\makeatother

\usepackage{multicol}
\usepackage{colortbl}
\usepackage{setspace}
\usepackage{transparent}
\usepackage{upgreek}

\usepackage{inconsolata}
\usepackage[scaled=.90]{helvet}
\usepackage{xspace}
\usepackage{nicematrix}
\usepackage{booktabs}

\usepackage{graphicx}
\usepackage{subcaption}

\usepackage{listings}

\lstset{
    basicstyle=\ttfamily\small,
    backgroundcolor=\color{gray!10},
    frame=single,
    numbers=left,
    numberstyle=\tiny,
    keywordstyle=\color{blue}\bfseries,
    commentstyle=\color{green!60!black}\itshape,
    stringstyle=\color{red},
    breaklines=true,
    captionpos=b,
    tabsize=4
}

\usepackage{tikz}
\usetikzlibrary{shapes.geometric, arrows, positioning, calc, fit, backgrounds, matrix, arrows.meta, trees}

\arxiv{
\usepackage[letterpaper, left=1in, right=1in, top=1in,bottom=1in]{geometry}
  \usepackage{parskip}
  \usepackage[colorlinks=true, linkcolor=blue!70!black, citecolor=blue!70!black,urlcolor=black,breaklinks=true,hypertexnames=false]{hyperref}
}

\PassOptionsToPackage{hypertexnames=false}{hyperref}  %

\usepackage{amsmath}
\usepackage{microtype}
\usepackage{hhline}

\usepackage{amsthm}
\usepackage{mathtools}
\usepackage{bbm}
\usepackage{amsfonts}
\usepackage{amssymb}
\usepackage[nameinlink,capitalize]{cleveref}

\makeatletter
\crefname{ALC@line}{line}{lines}
\makeatother

\makeatletter
\newcommand{\neutralize}[1]{\expandafter\let\csname c@#1\endcsname\count@}
\makeatother

\usepackage{algorithm}

\arxiv{
\usepackage{natbib}
\bibliographystyle{plainnat}
\bibpunct{(}{)}{;}{a}{,}{,}
}

\usepackage{xpatch}

\usepackage{thm-restate}
\declaretheorem[name=Theorem,parent=section]{theorem}
\declaretheorem[name=Lemma,parent=section]{lemma}
\declaretheorem[name=Assumption, parent=section]{assumption}
\declaretheorem[name=Definition, parent=section]{definition}
\declaretheorem[name=Condition, parent=section]{condition}
\declaretheorem[name=Corollary, parent=section]{corollary}

\declaretheorem[qed=$\triangleleft$,name=Example,parent=section]{example}
\declaretheorem[name=Remark, parent=section]{remark}
\declaretheorem[name=Proposition, parent=section]{proposition}
\declaretheorem[name=Fact, parent=section]{fact}

\newcommand{\pfref}[1]{Proof of \cref{#1}}
\newcommand{\savehyperref}[2]{\texorpdfstring{\hyperref[#1]{#2}}{#2}}
\renewcommand{\eqref}[1]{\texorpdfstring{\hyperref[#1]{(\ref*{#1})}}{(\ref*{#1})}}
\newcommand{\lineref}[1]{\texorpdfstring{\hyperref[#1]{Line \ref*{#1}}}{Line \ref*{#1}}}
\crefformat{equation}{#2Eq.\,(#1)#3}
\Crefformat{equation}{#2Eq.\,(#1)#3}
\Crefformat{figure}{#2Figure~#1#3}
\Crefformat{assumption}{#2Assumption~#1#3}
\Crefname{assumption}{Assumption}{Assumptions}
\Crefformat{line}{#2Line~#1#3}
\Crefname{line}{Line}{Lines}
\crefname{fact}{Fact}{Facts}
\Crefformat{figure}{#2Figure #1#3}
\Crefformat{assumption}{#2Assumption #1#3}

\usepackage{crossreftools}
\pdfstringdefDisableCommands{%
    \let\Cref\crtCref
    \let\cref\crtcref
  }
\newcommand{\creftitle}[1]{\cref{#1}}

\makeatletter
\renewenvironment{proof}[1][Proof]%
{%
  \par\noindent{\bfseries\upshape {#1.}\ }%
}%
{\qed\newline}
\makeatother

\xpatchcmd{\proof}{\itshape}{\normalfont\proofnameformat}{}{}
\newcommand{\proofnameformat}{\bfseries}

\usepackage[most]{tcolorbox}

\tcbset {
  base/.style={
    arc=0mm, 
    bottomtitle=0.5mm,
    boxrule=0mm,
    colbacktitle=!10!white, 
    coltitle=black, 
    fonttitle=\bfseries, 
    left=2.5mm,
    leftrule=1mm,
    right=3.5mm,
    title={#1},
    toptitle=0.75mm, 
  }
}

\newtcolorbox{mainbox}[1]{
  colframe=blue!10!black,
  colbacktitle=blue!50!black!30!white,
  colback=blue!2!white,
  enhanced,
  fonttitle=\bfseries,
  attach boxed title to top left={yshift=-2.5mm},
  boxed title style={size=small,colframe=blue!40!black,colback=blue!40!black},
  title={\small\textcolor{white}{\textsc{#1}}}
}

\newtcolorbox{minbox}[1]{
  colframe=blue!10!black,
  colbacktitle=blue!50!black!30!white,
  colback=blue!2!white,
  enhanced,
  fonttitle=\bfseries,
}

\DeclarePairedDelimiter{\abs}{\lvert}{\rvert} %
\DeclarePairedDelimiter{\brk}{[}{]}
\DeclarePairedDelimiter{\crl}{\{}{\}}
\DeclarePairedDelimiter{\prn}{(}{)}
\DeclarePairedDelimiter{\nrm}{\|}{\|}
\DeclarePairedDelimiter{\tri}{\langle}{\rangle}

\let\Pr\undefined

\DeclareMathOperator{\En}{\mathbb{E}}

\DeclareMathOperator{\Pr}{Pr}

\DeclareMathOperator*{\argmin}{arg\,min} %
\DeclareMathOperator*{\argmax}{arg\,max}

\newcommand{\wt}[1]{\widetilde{#1}}
\newcommand{\wh}[1]{\widehat{#1}}

\def\ddefloop#1{\ifx\ddefloop#1\else\ddef{#1}\expandafter\ddefloop\fi}
\def\ddef#1{\expandafter\def\csname bb#1\endcsname{\ensuremath{\mathbb{#1}}}}
\ddefloop ABCDEFGHIJKLMNOPQRSTUVWXYZ\ddefloop
\def\ddefloop#1{\ifx\ddefloop#1\else\ddef{#1}\expandafter\ddefloop\fi}
\def\ddef#1{\expandafter\def\csname b#1\endcsname{\ensuremath{\mathbf{#1}}}}
\ddefloop ABCDEFGHIJKLMNOPQRSTUVWXYZ\ddefloop
\def\ddef#1{\expandafter\def\csname sf#1\endcsname{\ensuremath{\mathsf{#1}}}}
\ddefloop ABCDEFGHIJKLMNOPQRSTUVWXYZ\ddefloop
\def\ddef#1{\expandafter\def\csname c#1\endcsname{\ensuremath{\mathcal{#1}}}}
\ddefloop ABCDEFGHIJKLMNOPQRSTUVWXYZ\ddefloop
\def\ddef#1{\expandafter\def\csname h#1\endcsname{\ensuremath{\widehat{#1}}}}
\ddefloop ABCDEFGHIJKLMNOPQRSTUVWXYZ\ddefloop
\def\ddef#1{\expandafter\def\csname hc#1\endcsname{\ensuremath{\widehat{\mathcal{#1}}}}}
\ddefloop ABCDEFGHIJKLMNOPQRSTUVWXYZ\ddefloop
\def\ddef#1{\expandafter\def\csname t#1\endcsname{\ensuremath{\widetilde{#1}}}}
\ddefloop ABCDEFGHIJKLMNOPQRSTUVWXYZ\ddefloop
\def\ddef#1{\expandafter\def\csname tc#1\endcsname{\ensuremath{\widetilde{\mathcal{#1}}}}}
\ddefloop ABCDEFGHIJKLMNOPQRSTUVWXYZ\ddefloop
\def\ddefloop#1{\ifx\ddefloop#1\else\ddef{#1}\expandafter\ddefloop\fi}
\def\ddef#1{\expandafter\def\csname scr#1\endcsname{\ensuremath{\mathscr{#1}}}}
\ddefloop ABCDEFGHIJKLMNOPQRSTUVWXYZ\ddefloop

\newcommand{\veps}{\varepsilon}

\newcommand{\ldef}{\vcentcolon=}

\renewcommand{\bot}{\emptyset}

\renewcommand{\hat}[1]{\wh{#1}}
\renewcommand{\epsilon}{\varepsilon}

\newcommand{\neighborhood}{\cN}

\newcommand{\vepsq}{\varepsilon_{Q}}
\newcommand{\vepsv}{\varepsilon_{V}}

\newcommand{\Qstarb}{Q^{\star}_{\beta}}
\newcommand{\Astarb}{A^{\star}_{\beta}}
\newcommand{\Eleaf}{\cE_{\mathsf{leaf}}}
\newcommand{\SAT}{\textsf{SAT}}

\newcommand{\Ccov}[1][\pistarb]{\cC_{\texttt{seq}}}
\newcommand{\Cact}[1][\pistarb]{\cC_{\texttt{act}}}

\newcommand{\piref}{\pi_{\texttt{ref}}}

\newcommand{\mainalg}{\texttt{VGB}\xspace}
\newcommand{\mainalgtitle}{VGB\xspace}
\newcommand{\momentumalg}{\texttt{VGB-Momentum}\xspace}

\newcommand{\rejalg}{\texttt{SoftmaxRejectionSampling}\xspace}
\newcommand{\genrejalg}{\texttt{RejectionSampling}\xspace}

\newcommand{\delrej}{\delta_{\mathsf{rej}}}
\newcommand{\baseneigh}{q_{\mathsf{ref}}}
\newcommand{\pistarext}{\mu^\star}

\newcommand{\outcomealg}{\texttt{OutcomeLevelRS}\xspace}
\newcommand{\actionalg}{\texttt{ActionLevelRS}\xspace}
\newcommand{\actionalgreset}{\texttt{ActionLevelRS-Reset}\xspace}
\newcommand{\blockrs}{\texttt{BlockLevelRS}\xspace}
\newcommand{\blockbon}{\texttt{BlockLevelBoN}\xspace}

\newcommand{\position}{position\xspace}
\newcommand{\positions}{positions\xspace}

\makeatletter

\makeatother

\newcommand{\markch}{\BP}
\newcommand{\stdist}{\mu}
\newcommand{\tranprob}{ f }

\newcommand{\unifboundconstrewleaves}{\kappa_\mathsf{leaf}}
\newcommand{\unifboundconstrew}{\kappa}
\newcommand{\avgboundconstrew}{\kappa}

\newcommand{\pistarb}{\pistar_{\beta}}

\newcommand{\pirefh}[1][h]{\pi_{h,\texttt{ref}}}

\newcommand{\pistar}{\pi^{\star}}
\newcommand{\pihat}{\wh{\pi}}

\newcommand{\Zhat}{\wh{Z}}

\newcommand{\Cinf}{C_{\infty}}

\newcommand{\Jbeta}{J_{\beta}}

\newcommand{\Rmax}{R_{\texttt{max}}}

\newcommand{\rstar}{r^{\star}}

\newcommand{\pitil}{\wt{\pi}}%

\newcommand{\Qstar}{Q^\star}
\newcommand{\Vstar}{V^{\star}_{\texttt{tilt}}}

\newcommand{\Unif}{\mathsf{Unif}}

  \newcommand{\afrak}{\mathfrak{a}}
  \newcommand{\bfrak}{\mathfrak{b}}
  \newcommand{\cfrak}{\mathfrak{c}}

\renewcommand{\emptyset}{\varnothing}

\newcommand{\M}[1]{^{{\scriptscriptstyle M}}}  %

\newcommand{\thetastar}{\theta^{\star}}

\newcommand{\algcommentlight}[1]{\textcolor{blue!70!black}{\transparent{0.5}\footnotesize{\texttt{\textbf{//\hspace{2pt}#1}}}}}
\newcommand{\algcommentbig}[1]{\textcolor{blue!70!black}{\transparent{0.5}\footnotesize{\texttt{\textbf{/*
          #1~*/}}}}}

\newcommand{\ind}[1]{^{{\scriptscriptstyle(#1)}}}

\newcommand{\bigoh}{O}
\newcommand{\bigoht}{\wt{O}}
\newcommand{\bigom}{\Omega}

\newcommand{\indic}{\mathbb{I}}

\renewcommand{\Pr}{\bbP}

\newcommand{\Dkl}[2]{D_{\mathsf{KL}}\prn*{#1\,\|\,#2}}

\newcommand{\Dchis}[2]{D_{\chi^2}\prn*{#1\dmid{}#2}}

\newcommand{\Dtv}[2]{D_{\mathsf{TV}}\prn*{#1,#2}}

\newcommand{\Ber}{\mathrm{Ber}}

\newcommand{\dmid}{\;\|\;}

\newcommand{\yhat}{\wh y}

\newcommand{\Vf}{V}
\newcommand{\Qf}{Q}

\newcommand{\Qhat}{\wh{\Qf}}
\newcommand{\Vhat}{\wh{\Vf}}

\newcommand{\mathand}{\quad\text{and}\quad}

\def\multiset#1#2{\ensuremath{\left(\kern-.3em\left(\genfrac{}{}{0pt}{}{#1}{#2}\right)\kern-.3em\right)}}

\newcommand{\diag}{\mathrm{diag}}

\newcommand{\iid}{i.i.d.\xspace}

\renewcommand{\emptyset}{\varnothing}

\newcommand{\phat}{\wh{p}}

\newcommand{\norm}[1]{\left \lVert #1 \right \rVert}
\DeclareMathOperator*{\EE}{\mathbb{E}}
\newcommand{\RR}{\mathbb{R}}
\newcommand{\NN}{\mathbb{N}}
\newcommand{\st}{\star}
\newcommand{\BP}{\mathbb{P}}

\newcommand{\MB}{\mathcal{B}}

\DeclareMathOperator*{\Prr}{Pr}

\renewcommand{\t}{\top}
\newcommand{\ol}{\overline}

\newcommand{\ME}{\mathcal{E}}

\renewcommand{\root}{\bot}

\DeclareMathOperator{\Bin}{Bin}

\newcommand{\MF}{\mathcal{F}}

\newcommand{\ttop}{\texttt{(}}
\newcommand{\ttob}{\texttt{[}}
\newcommand{\ttcp}{\texttt{)}}
\newcommand{\ttcb}{\texttt{]}}
\newcommand{\ttB}{\texttt{B}}
\newcommand{\ttE}{\texttt{E}}
\newcommand{\ttP}{\texttt{P}}
\newcommand{\ttS}{\texttt{S}}

\input{widebar}

 \usepackage[suppress]{color-edits}
 \addauthor{df}{ForestGreen}
  \addauthor{dr}{Purple}
 \addauthor{ar}{BurntOrange}
 \addauthor{yl}{red}
 \addauthor{avs}{cyan}
 \addauthor{am}{blue}

 \newcommand{\dfc}[1]{}

 \newcommand{\akc}[1]{}

\let\oldparagraph\paragraph

\renewcommand{\paragraph}[1]{\oldparagraph{#1.}}

\newcommand{\textbfc}[1]{\textbf{\textcolor{blue!40!black}{#1}}}

\setcounter{tocdepth}{0}

\makeatletter
\g@addto@macro\appendix{%
  \crefalias{section}{appendixsection}%
  \crefalias{subsection}{appendixsubsection}%
  \crefalias{subsubsection}{appendixsubsubsection}%
}
\makeatother
\crefname{appendixsection}{Appendix}{Appendices}
\Crefname{appendixsection}{Appendix}{Appendices}
\crefname{appendixsubsection}{Appendix}{Appendices}
\Crefname{appendixsubsection}{Appendix}{Appendices}
\crefname{appendixsubsubsection}{Appendix}{Appendices}
\Crefname{appendixsubsubsection}{Appendix}{Appendices}

\title{Taming imperfect process verifiers \\
with Sinclair-Jerrum backtracking
}

\title{Taming imperfect process verifiers: \\
the benefit of stochastically backtracking
}

\title{Taming imperfect process verifiers: \\
Benefits of stochastic backtracking
}

\title{Taming imperfect process verifiers: \\
Provable benefits of backtracking
}

\title{Taming imperfect process verifiers: \\
Backtracking mitigates the curse of horizon
}

\arxiv{
\title{Taming Imperfect Process Verifiers: \\
A Sampling Perspective on Backtracking
}
}

\arxiv{ 
\date{}
}

\begin{document}

\maketitle

\arxiv{ 
\vspace{-5em}
\begin{center}
\large
\setlength{\tabcolsep}{10pt}
\begin{tabular}{cccc}
\makecell{Dhruv Rohatgi \\ \small{\texttt{drohatgi@mit.edu}}}
&
\makecell{Abhishek Shetty \\ \small{\texttt{shetty@mit.edu}}}
&
\makecell{Donya Saless \\ \small{\texttt{donya@ttic.edu}}}
&
\makecell{Yuchen Li \\ \small{\texttt{yuchenl4@cs.cmu.edu}}}
\end{tabular}\vspace{1em}
\begin{tabular}{ccc}
\makecell{Ankur Moitra \\ \small{\texttt{moitra@mit.edu}}}
&
\makecell{Andrej Risteski \\ \small{\texttt{aristesk@andrew.cmu.edu}}}
&
\makecell{Dylan J. Foster \\ \small{\texttt{dylanfoster@microsoft.com}}}
\end{tabular}
\end{center}
\vspace{1em}
}

\begin{abstract}
Test-time algorithms that combine the \emph{generative} power of language models with \emph{process verifiers} that assess the quality of partial generations offer a promising lever for eliciting new reasoning capabilities, but the algorithmic design space and computational scaling properties of such approaches are still opaque, and their benefits are far from apparent when one accounts for the cost of learning a high-quality verifier. Our starting point is the observation that seemingly benign errors in a learned verifier can lead to catastrophic failures for standard decoding techniques due to \emph{error amplification} during the course of generation. We then ask: can this be improved with more sophisticated decoding strategies?

We introduce a new process-guided test-time sampling algorithm, \mainalg, which uses theoretically grounded \emph{backtracking} to achieve \emph{provably} better robustness to verifier errors. \mainalg interprets autoregressive generation as a random walk on a tree of partial generations, with transition probabilities guided by the process verifier and base model; crucially, backtracking occurs probabilistically. This process generalizes the seminal \emph{Sinclair-Jerrum random walk} \citep{sinclair1989approximate} from the literature on approximate counting and sampling in theoretical computer science, and a conceptual contribution of our work is to highlight parallels with this literature. Empirically, we demonstrate  on both synthetic and real language modeling tasks  that \mainalg outperforms baselines on a variety of metrics.\loose

\end{abstract}

\section{Introduction}
\label{sec:intro}
Test-time compute provides a powerful lever for scaling and improving language models, driving substantial improvements in reasoning capabilities \citep{brown2024large,snell2024scaling,wu2024empirical,openai2024o1,deepseek2025r1}. At the heart of these advances lies a fundamental principle: combining the \emph{generative} power of language models with \emph{verifiers} that can evaluate and guide their outputs. Even simple approaches like best-of-$N$ sampling, where a verifier selects the highest-scoring response from multiple candidates, can yield non-trivial performance gains \citep{brown2024large}. If one has access to a \emph{process verifier} that can assess the quality of \emph{partial} generations, additional gains can be unlocked \citep{polu2020generative,uesato2022solving,lample2022hypertree,lightman2023lets,wang2023math,wang2025value}---and the \emph{space of possible algorithmic strategies} becomes considerably broader and less well-understood. 

Empirical generation methods that incorporate process verifiers range from simple token-wise reweighting \citep{yang2021fudge,mudgal2023controlled,khanov2024args,rashid2024critical} to more complex strategies that expend additional parallel \citep{mudgal2023controlled,wang2025value,puri2025probabilistic} or sequential \citep{botta2025query} computation. Many of these strategies are implicitly trying to address the problem that \textbf{process verifiers are imperfect}; often, they are learned from data \citep{yang2021fudge,lightman2023lets,wang2025value}, and therefore seem subject to the \emph{curse of horizon}---a folklore principle in many subcommunities of machine learning, including imitation learning \citep{ross2010efficient}, model-based reinforcement learning \citep{janner2019trust}, multi-hop reasoning \citep{jiang2022understanding}, dynamical systems \citep{somalwar2025learning}, PDEs \citep{lippe2023pde,molinaro2024generative}, and language modelling \citep{bengio2015scheduled,cundy2023sequencematch}, that \textbf{long horizons tend to amplify learning errors}. Understanding the extent to which different strategies mitigate this issue---and designing better strategies---requires making this problem \emph{explicit}: in what ways are learned process verifiers imperfect? And when---if ever---can amplification of these imperfections during generation be algorithmically avoided?

\textbf{In this paper, we propose a concrete model for imperfect process verifiers, in which we show error amplification \emph{can be algorithmically mitigated}}. To do so, we connect the above problem with the classical algorithmic toolkit for sampling---specifically, a Markov chain Monte Carlo (MCMC) technique from theoretical computer science that leverages \emph{approximate counting} oracles to do approximate sampling \citep{sinclair1989approximate}. 
This machinery lets us implement the empirical heuristic of \emph{backtracking} \citep{botta2025query, yang2025step, von2025generalized}---i.e. occasionally ``erasing'' generated tokens---in a mathematically justified manner, and to establish rigorous guarantees for guiding generation with an imperfect process verifier. %
Broadly, we believe that perspectives from classical theory on design and analysis of Markov chains may bear additional fruits for language model reasoning%
, and we hope our paper will stimulate further work to connect these areas. See \cref{sec:related} for additional related work.

\subsection{The Promise: Test-Time Alignment with Value Functions}
\label{sec:background}

Let $\piref: \cX\to\Delta(\cY)$ be a pre-trained language model (or \emph{base policy}) 
that maps a prompt $x$ to a distribution over responses $y=(y_1, \dots, y_H) \in \cY\ldef\cA^H$, where each $y_h$ is an \emph{action} (either a single token, or more generally, a chunk of tokens). A basic goal of both post-training and test-time interventions is to ``align'' $\piref$ according to a \emph{reward function} $\rstar:\cX\times\cY \to [0,\Rmax]$, while maintaining the capabilities of the base policy. 
In post-training pipelines, this task is typically accomplished by learning the policy $\pistar$ that maximizes KL-regularized reward \citep{ziegler2019fine}, which for a temperature parameter $\beta>0$ and a fixed prompt $x$ is defined as\loose
\begin{align}
  \label{eq:rlvr-avg}
  J_\beta(\pi;x)\ldef{}\EE_{y\sim\pi(\cdot\mid{}x)}\brk*{\rstar(x,y)}
  - \beta\cdot\Dkl{\pi(\cdot\mid{}x)}{\piref(\cdot\mid{}x)}.
\end{align}
Given prompt $x$, the analogous test-time problem is to sample from $\pistar(\cdot\mid{}x)$, which has the form\loose
\begin{equation} \pistar(y\mid{}x) \propto \piref(y\mid{}x) \cdot \tau(x,y)\label{eq:tilt}\end{equation}
where $\tau(x,y) := \exp(\beta^{-1} \rstar(x,y))$ \citep{korbak2022rl,geuter2025guided}. For a general tilt function $\tau: \cX\times\cY \to \RR_{\geq 0}$, we call this problem \emph{test-time alignment}: given sample access to $\piref$, query access to $\tau$, and a fixed prompt $x$, how do we sample from the tilted distribution $\pistar(\cdot\mid{}x)$ defined via \cref{eq:tilt}? This central algorithmic problem subsumes many natural tasks like \emph{constrained generation} \citep{scholak2021picard}, \emph{posterior inference} \citep{zhao2024probabilistic,puri2025probabilistic}, and \emph{test-time reinforcement learning} \citep{foster2025good}.
Unfortunately, since the response space $\cY$ is exponentially large, it can be computationally intractable without further information (see e.g. \cref{prop:outcome_lower} or \cite{botta2025query}). \loose

\paragraph{True value functions mitigate computational intractability} While many formalizations of process verifiers have been explored in the literature \citep{lightman2023lets,wang2023math,wang2025value}, arguably the most principled of these are \emph{value functions}. The (ground truth) value of a partial response $y_{1:h}$ for prompt $x$ is the conditional expectation\footnote{We use the notation $\Vstar$ to avoid confusion with the optimal value function for the autoregressive MDP, which in this setting would be $V^\star(y_{1:h}) := \max_{y_{h+1:H}\in\cA^{H-h}} \tau(x,y_{1:H})$.} 
\begin{equation} \Vstar(x,y_{1:h}) := \En^{\piref}[\tau(x,y_{1:H}) \mid{} y_{1:h}].\label{eq:vstar}\end{equation}
In the KL-regularized setting, $\Vstar$ is closely related to the \emph{regularized value function} $\Qstarb$ \citep{zhou2025q}. In general, $\Vstar$ enables exactly computing the next-action conditional probabilities of $\pistar$ (\cref{lemma:vstar-density-ratio})---and, thus, efficiently sampling from $\pistar$ via autoregressive generation:
\[\pistar(y_h \mid{} x, y_{1:h-1}) \propto \piref(y_h \mid{} x, y_{1:h-1}) \cdot \Vstar(x, y_{1:h}).\]
We call this method \emph{action-level rejection sampling} with $\Vstar$; see also \cite{yang2021fudge}.

\subsection{The Challenge: A True Value Function is Hard to Find}
\label{sec:error-amplification-intro}
The preceding discussion paints an optimistic picture: given the value function $\Vstar$, test-time alignment can be efficiently and exactly solved via action-level rejection sampling. But there is no free lunch: typically, the task of evaluating $\Vstar$ is itself computationally intractable. In practice, it is more common to use some heuristic progress metric \citep{khanov2024args,loula2025syntactic}---or to \emph{learn an approximation} to $\Vstar$.\arxiv{\footnote{One may also settle for an approximation to $\Vstar$ for computational reasons: there are some settings where exactly computing the value is intractable, but approximate computation is tractable.}}
A popular recent approach is to fit an approximate value function $\Vhat$ through regression on Monte-Carlo rollouts \citep{yang2021fudge,mudgal2023controlled,chakraborty2024transfer,setlur2024rewarding,wang2025value}, solving some variant of the program\loose
\begin{align}
  \label{eq:value_function_regression}
\Vhat:=\argmin_{V\in\cV}\wh{\En}\brk*{
\prn*{V(x,y_{1:h})-\tau(x,y_{1:H})}^2
},
\end{align}
which theoretically converges to $\Vstar$ in the well-specified large-sample limit, but in practice will incur errors due to misspecification, optimization bottlenecks, and finite samples. How do we design algorithms that mitigate \emph{amplification} of these errors across long horizons?%

\subsection{Our Contributions}

\begin{figure}[t]\centering\setlength{\abovecaptionskip}{1pt}
  \begin{subfigure}[b]{0.35\textwidth}
  \includegraphics[width=\linewidth]{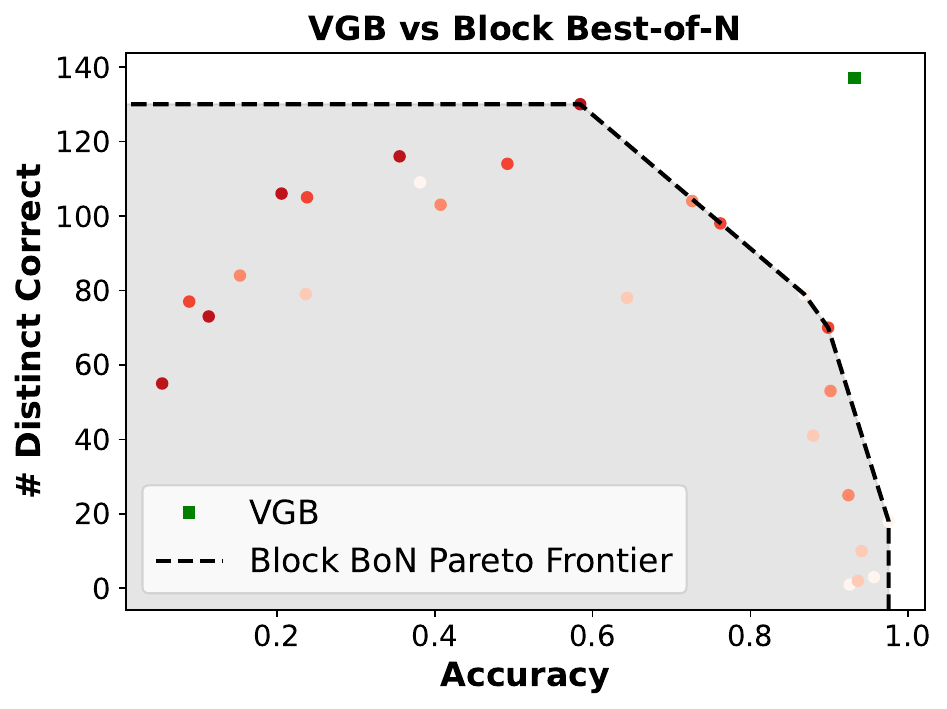}
  \end{subfigure}
  \hfill
  \begin{subfigure}[b]{0.35\textwidth}
  \includegraphics[width=\linewidth]{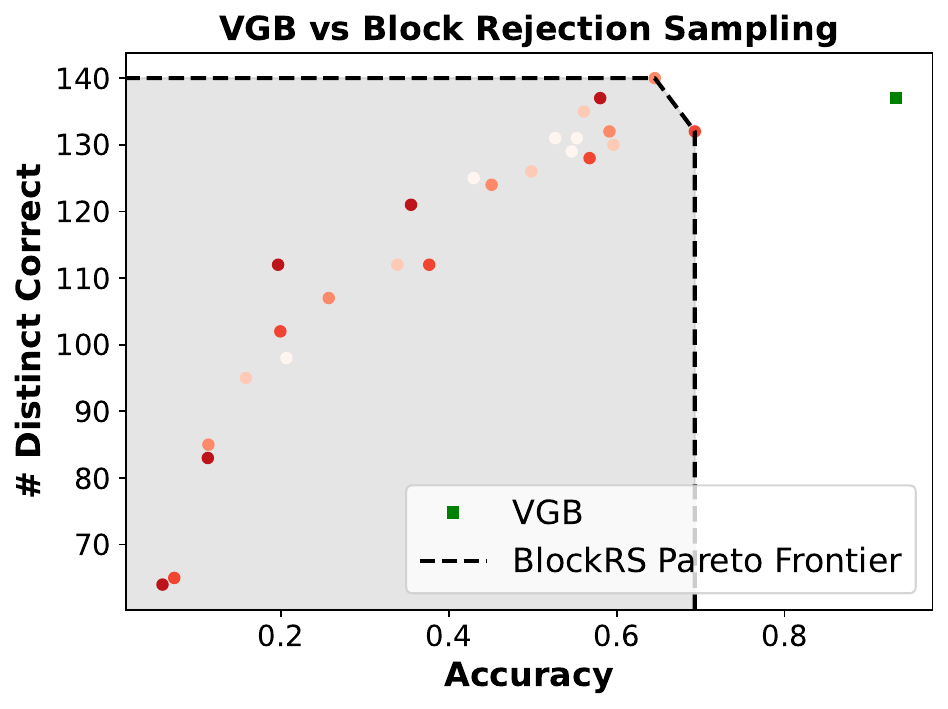}
  \end{subfigure}
  \hfill
  \begin{subfigure}[b]{0.25\textwidth}
  \centering
  \raisebox{1cm}{
\begin{tikzpicture}[
  level 1/.style={sibling distance=0.5\textwidth, level distance=1cm},
  level 2/.style={sibling distance=0.25\textwidth, level distance=1cm},
]

\tikzset{
  circlenode/.style={draw, circle, minimum size=6mm, inner sep=0pt}
}
\tikzset{
  boldnode/.style={draw, circle, thick, blue}
}

\node[circlenode] (root) {$\emptyset$}
  child {node [boldnode] (a) {$\afrak$}
    child {node [circlenode] (aa) {$\afrak\afrak$}}
    child {node [circlenode] (ab) {$\afrak\bfrak$}}
  };

\draw[->, thick] 
  (a) to node[left, font=\scriptsize] {$\propto \Vhat(\afrak)$} (root);

\draw[->, thick] 
  (a) to node[left, font=\scriptsize, yshift=2pt, align=center] {$\propto \piref(\afrak|\afrak)$ \\ $\cdot\Vhat(\afrak\afrak)$} (aa);

\draw[->, thick]   (a) to node[right, font=\scriptsize, yshift=2pt, align=center] {$\propto \piref(\bfrak|\afrak)$ \\ $\cdot\Vhat(\afrak\bfrak)$} (ab);
\end{tikzpicture}
}
  \end{subfigure}
  \caption{\textbf{Left/middle:} Accuracy (\textbf{x}) and diversity (\textbf{y}) of \mainalg vs. Block Best-of-$N$ and Block Rejection Sampling on Dyck grammar task (\cref{sec:lm-experiments}) with pre-trained base model and trained value function. Each circle is a baseline with specific hyperparameters (block length $\in\{1,2,4,8,16\}$ and \# of candidates $\in\{2,4,8,16,32\}$); darker red indicates larger block length. \textbf{Right:} \arxiv{Example snippet} of the generation tree on which \mainalg walks, with transition probabilities from ``$\afrak$'' (self-loop not shown).
  }
  \label{fig:dyck_acc_diversity_js}
  \end{figure}

\paragraph{Pitfalls of naive action-level sampling (\cref{sec:setup})} %
As a starting point, we theoretically demonstrate that when all we have is an approximate value function, \textbfc{action-level rejection sampling is no longer the right alignment algorithm}: even in the presence of evidently benign errors in the approximate value function $\Vhat$ (\cref{ex:abc-main,ex:delayed}), action-level rejection sampling with $\Vhat$ \emph{amplifies these errors}, degrading catastrophically with long generations. Other common process-guided algorithms such as beam search \citep{wang2025value} are also insufficient even without errors (\cref{sec:proofs_prelim}). %

\paragraph{A new value-guided alignment strategy: \mainalg (\cref{sec:main})} We introduce \mainalg, a simple one-line modification to action-level rejection sampling that \textbfc{provably mitigates error amplification} for a broad class of value function errors---entirely avoiding degradation in the generation length---through \emph{backtracking}. \mainalg interprets the space of possible responses to a prompt as a tree (with internal nodes as partial responses) and casts generation as a random walk on the tree (\cref{fig:dyck_acc_diversity_js}), which stochastically adds actions or backtracks, guided by the value function and base model---and is designed so that the \emph{stationary distribution} of the random walk is exactly proportional to $\pistar$ at the leaves of the tree, even with imperfect value estimates at internal nodes. %

On a technical level, \mainalg can be viewed as a generalization of the seminal \emph{Sinclair-Jerrum random walk} \citep{sinclair1989approximate}, a tool originally developed in the approximate sampling community of theoretical computer science.
A core conceptual contribution of our work is to highlight parallels between test-time alignment and approximate sampling and counting, opening the door for further transfer of ideas. 
Beyond showing that the Sinclair-Jerrum walk  
adapts to test-time alignment, a key technical contribution of our work is to establish correctness of the algorithm in the presence of average-case errors---more akin to standard assumptions from statistical learning theory---using modern techniques such as local mixing \citep{liu2024locally}.\loose

\paragraph{Empirical results (\cref{sec:empirical})}
We evaluate \mainalg with a trained value function $\Vhat$ on Dyck grammar generation, Python test case generation, and several instructive synthetic tasks. We find that \textbfc{\mainalg outperforms naive sampling strategies on a variety of axes measuring distributional accuracy}. Also, in the classical constrained text generation setting where \emph{there is no trained value function}, \mainalg produces more coherent generations than the predominant algorithm\arxiv{, locally constrained decoding} \citep{scholak2021picard}. Our empirical results corroborate our theoretical thesis that \mainalg mitigates error amplification.

\arxiv{We view our work as bringing together seemingly disparate techniques and problems---backtracking, Markov chains and test-time alignment---and placing them in a unified theoretical framework, paving the way for a systematic theoretical understanding of advanced test-time decoding strategies.
We highlight several concrete directions for future research in \cref{sec:conclusion}.}

\arxiv{
\subsection{Paper Organization}
We begin with technical preliminaries in \cref{sec:prelim}, and introduce baseline sampling algorithms. \cref{sec:setup} demonstrates the limitations of naive sampling with imperfect guidance. \cref{sec:main} introduces our main algorithm, \mainalg, and provides theoretical guarantees, followed by an empirical evaluation in \cref{sec:empirical}. We conclude with a discussion of open problems and directions for future work in \cref{sec:conclusion}.

}

\section{Preliminaries}
\label{sec:prelim}

\arxiv{We begin with technical preliminaries, and introduce the baseline sampling algorithms we consider.}%

\paragraph{General framework: Test-time alignment} \arxiv{We state our results in the following general setting that we call \emph{test-time alignment}.} Fix sets $\cX$ and $\cY := \cA^H$. Let $\piref: \cX \to \Delta(\cY)$ be a base model. Let $\tau: \cX\times\cY \to \RR_{\geq 0}$ be a tilt function, and let $\Vhat: \cX\times\cA^\st \to \RR_{\geq 0}$ be an approximate value function (in subsequent sections, we will make assumptions about how $\Vhat$ relates to the true value function $\Vstar$ defined in \cref{eq:vstar}). We assume that, for any $x\in\cX$, $h \in [H]$, and $y_{1:h}\in\cA^{h}$, we can (1) sample from the conditional distribution $\piref(\cdot\mid{}x,y_{1:h-1})$, (2) query the density $\piref(y_h\mid{}x,y_{1:h-1})$, and (3) query $\Vhat(x,y_{1:h})$. We discuss both the setting where $\tau$ is known (in which case we set $\Vhat(x,y_{1:H}) := \tau(x,y_{1:H})$) and the setting where $\tau$ is unknown. Given $x\in\cX$, our primary goal is to sample from a distribution close to $\pistar(\cdot\mid{}x)$ as defined in \cref{eq:tilt}.\loose

\paragraph{Special case: KL-regularized optimization} Suppose $\tau(x,y) \propto \exp(\beta^{-1} \rstar(x,y))$ for some reward function $\rstar:\cX\times\cY \to [0,\Rmax]$ and temperature parameter $\beta>0$. Given $x\in\cX$, the goal of KL-regularized optimization is to sample from a distribution $\pi(\cdot\mid{}x)$ with low \emph{KL-regularized regret} $\Jbeta(\pistar;x) - \Jbeta(\pi;x)$ (see \cref{eq:rlvr-avg}). 
Since $J_\beta(\pistar;x) - J_\beta(\pi;x) = \beta \Dkl{\pi(\cdot\mid{}x)}{\pistar(\cdot\mid{}x)}$ 
(\cref{lemma:jbeta-kl}), this goal is equivalent to approximate sampling. \arxiv{In this setting, $\Vstar$ is related to the \emph{regularized value function} $\Qstarb$ (see \cref{sec:prelim-kl}) by the identity
\begin{align}
  \label{eq:qstar_global_prelim}
\Qstarb(x,y_{1:h}) := \beta\log\prn*{
  \En_{y_{h+1:H}\sim\piref(\cdot\mid{}x,y_{1:h})}\brk*{\exp\prn*{\beta^{-1}\rstar(x,y_{1:H})}}
} = \beta \log(\Vstar(x,y_{1:h})).
\end{align}}

\paragraph{Special case: Constrained sampling} In this particularly common setting, $\tau(x,y) = \rstar(x,y)$ for some binary-valued reward function $\rstar: \cX\times\cY \to \{0,1\}$, so $\pistar$ is a conditional distribution \citep{yang2021fudge,scholak2021picard,lew2023sequential,wang2025value}. %

\subsection{Baseline Algorithms for Test-Time Alignment}

\paragraph{Outcome-level rejection sampling with $\tau$} Given prompt $x$, the algorithm \outcomealg repeatedly draws $y_{1:H}\sim\piref(\cdot\mid{} x)$ and accepts with probability proportional to $\tau(x,y_{1:H})$.

\paragraph{Action-level rejection sampling with $\Vhat$} Given prompt $x$, the algorithm \actionalg autoregressively samples $y_1,y_2,\dots,y_H$: for each $h \in [H]$, after sampling $y_{1:h-1}$, it uses rejection sampling to sample $y_h$ from the distribution defined by $\mu_h(y_h) \propto \piref(y_{h}\mid{}x,y_{1:h-1})\cdot\Vhat(x,y_{1:h})$. 

See \cref{sec:proofs_setup} for formal pseudocode and analyses. \outcomealg is an exact sampler for $\pistar$ (and doesn't need $\Vhat$)\footnote{More precisely, it can achieve approximation error $\veps$ in time $O(\log(1/\veps))$---see \cref{sec:proofs_setup}.} but its runtime scales with the \emph{sequence-level coverage coefficient} $\Ccov(x) := \max_{y\in\cY} \pistar(y\mid{}x)/\piref(y\mid{}x)$. When $\Vhat = \Vstar$, \actionalg is also exact, and has runtime $\bigoht(\Cact(x) H)$, where $\Cact(x) := \max_{h,y_{1:h}} \pistar(y_h\mid{}x,y_{1:h-1})/\piref(y_h\mid{}x,y_{1:h-1})$ is the \emph{action-level coverage coefficient}.\footnote{When $|\cA|$ is small, one can explicitly sample from $\mu_h$ in time $|\cA|$, for overall time complexity of $O(|\cA|H)$.} We always have $\Cact(x)\leq{}\Ccov(x)$, but one can have $\Ccov(x)\geq{}2^{H}$ even if $\Cact(x)\leq{}2$, so \actionalg can be exponentially more efficient than \outcomealg.\footnote{See \cite{setlur2024rewarding,botta2025query} for similar observations\arxiv{; we find that coverage offers a clean and unifying perspective on the underlying mechanism.}}  %

\begin{remark}[Sampling vs reward maximization]
\label{remark:reward_maximization}
  As discussed in \cref{sec:intro}, we cast guided generation as a \emph{sampling} problem, not a pure search/reward maximization problem, and thus largely consider sampling baselines. This framing is motivated by similar considerations in post-training \citep{ziegler2019fine}, applications to constrained sampling \citep{scholak2021picard,yang2021fudge, botta2025query}, and empirical evidence that sampling is a more robust formalism for exploiting imperfect process verifiers \citep{puri2025probabilistic}. \arxiv{We also theoretically demonstrate that when the available process verifier is $\Vstar$, algorithms designed for reward maximization can sometimes be \emph{worse at reward maximization} than algorithms designed for sampling; see \cref{ex:reward_maximization_lower_main} in \cref{sec:proofs_prelim}.} %
  
 \end{remark}

\section{Pitfalls of Action-Level Sampling}
\label{sec:setup}

Unfortunately, the computational benefits of \actionalg may not be realized in the presence of estimation errors. Suppose that we have access to an approximate value function $\Vhat$ that is accurate up to some multiplicative error $1 + \vepsv$: %
\arxiv{\begin{align}
    \label{eq:value_function_error}
\frac{\Vhat(x,y_{1:h})}{\Vstar(x,y_{1:h})} \in [1/(1+\vepsv), 1+ \vepsv].
\end{align}}
The following didactic example shows that even for small $\vepsv = o(1)$, \actionalg can fail entirely (unless $\vepsv$ is inverse-polynomial in the sequence length $H$). See \cref{sec:example-details} for more details.\footnote{We remark that \cref{ex:abc-main} also has an analogue in the KL-regularized setting; see \cref{ex:abc}.}

\begin{example}[Failure of \actionalg under perturbation]
    \label{ex:abc-main}    
    Fix $\vepsv\in(0,1)$ and $H\in\bbN$. Let $\cX:=\crl*{\perp}$ (henceforth omitted from notation) and action space $\cA:=\crl{\afrak,\bfrak,\cfrak}$. Let $\piref := \Unif(\crl{\afrak,\bfrak,\cfrak}^H)$ and $\tau(y_{1:H}):=\indic\crl*{\cfrak \not \in y_{1:H}}$. Then %
    \arxiv{\[\Vstar(y_{1:h}) = (2/3)^{H-h} \indic[\cfrak \not \in y_{1:h}].\]}
    Let us define %
    \arxiv{\[\Vhat(y_{1:h}) := \begin{cases} (1+\vepsv)\Vstar(y_{1:h}) & \text{ if } y_h = \afrak \\ \Vstar(y_{1:h}) & \text{ otherwise }\end{cases}.\]}
Then the output distribution $\pihat$ of \actionalg satisfies $\Dtv{\pihat}{\pistar} \geq \Omega(\min(\vepsv\sqrt{H}, 1))$.
\end{example}

\cref{ex:abc-main} shows seemingly benign perturbations to the estimated value function $\Vhat$ can compound during the execution of \actionalg. While it is tempting to imagine that this might be a fundamental limitation (i.e., the approximate value simply doesn't contain enough information to meaningfully guide the sampling process), we will see in \cref{sec:main} that this is not the case. 

We give a second example in the language of the KL-regularized setting, which demonstrates that \actionalg can suffer $\Omega(1)$ KL-regularized regret even when there are no ``perturbations'' in $\Vhat$---but it is simply \emph{delayed} one step compared to the truth. See \cref{sec:example-details} for more details.%

\begin{example}[Failure of \actionalg under delay]
    \label{ex:delayed}
    Let $\cX:=\{\perp\}$ (omitted from notation) and $\cA := \{0,1\}$. Let $\piref := \Unif(\cA^H)$, $\beta := 1/H$, and $\rstar(y_{1:H}) := \frac{1}{H}\sum_{h=1}^H y_h$, so that $\pistar(y_{1:H}) \propto \exp(\sum_{h=1}^H y_h)$ and, via \cref{eq:vstar}, $
    \Vstar(y_{1:h}) = \left(\frac{1+e}{2}\right)^{H-h} \exp\prn[\big]{\sum_{k=1}^h y_k}$. %
    We define $\Vhat$ by \emph{delaying} the true value function by one step: 
    \arxiv{\[\Vhat(y_{1:h}) \ldef \Vstar(y_{1:h-1}).\]}
    which satisfies\arxiv{ \cref{eq:value_function_error} with} $\vepsv := 1$. %
    Since $\Vhat(y_{1:h})$ does not depend on $y_h$, \actionalg samples from the uniform distribution $\pihat=\piref$, which has\arxiv{ constant suboptimality:} %
    \arxiv{\[\Jbeta(\pistar) - \Jbeta(\pihat) = \frac{1}{H}\Dkl{\pihat}{\pistar} \geq \frac{1}{10}.\qedhere\]}
\end{example}\loose
\cref{ex:delayed} is a negative result for \actionalg, but also hints at an algorithmic intervention: $\Vhat(x,y_{1:h})$ gives no guidance in selecting $y_h$, but \emph{can} help detect whether the \emph{previous} action $y_{h-1}$ was good or bad. Can we improve \actionalg by propagating information backwards?\loose

\arxiv{ 
\begin{remark}[Implicit value functions]
In \cref{sec:proofs_setup}, we show that if $\Vhat$ is ``consistent'', meaning that it is the exact value function for a different tilt function, then these failure modes do not arise: that is, the error of \actionalg is controlled by the error of $\Vhat$, with no amplification across the horizon $H$ (see \cref{prop:action_implicit}, in contrast with \cref{prop:action}). Of course, an approximate value function that is explicitly parametrized and learned via regression has no (a priori) reason to satisfy consistency.

This result provides a theoretical explanation for an empirical observation that \emph{implicitly} learned value functions (which are necessarily consistent via their parametrization, but potentially harder to learn) seem more effective at action-level guidance than explicitly learned value functions \citep{liu2024inference}. However, as we will see shortly, this shortcoming of explicit value functions can be mitigated with a more sophisticated sampling strategy.\loose
\end{remark}
}

\section{\mbox{Taming Imperfect Process Verifiers: Stochastic Backtracking}}
\label{sec:main}

    We now present a value-guided    sampling algorithm, \mainalg, that provably mitigates error amplification through a principled \emph{stochastic backtracking} strategy\arxiv{ that generalizes the seminal \emph{Sinclair-Jerrum random walk} \citep{sinclair1989approximate}}. %
    We first describe the algorithm and motivation (\cref{sec:main_alg}), then provide theoretical guarantees (\cref{sec:uniform_error,sec:average_error}).\loose

\subsection{Main Algorithm: \mainalgtitle}\label{sec:main_alg}

\begin{algorithm}[tp]
\caption{\mainalg: \underline{V}alue-\underline{G}uided Sampling with Stochastic \underline{B}acktracking (illustration in \cref{fig:main_alg_illustration})}
\label{alg:valueflow-values}
\begin{algorithmic}[1]
  \Statex[-1.1]{\bfseries Input:} base model $\piref$; appx. value function $\Vhat$, prompt $x\in\cX$, horizon $H\in\bbN$, step count $T\in\NN$.
  \State If outcome-level reward/tilt is available, set $\Vhat(x,y_{1:H}) := \tau(x,y_{1:H})$.
  \State Initialize $y\ind{0} := \bot$.
  \For{$0 \le t < T$}
      \State\label{line:backtracking-probability-main}\multiline{Set $h := |y\ind{t}|$, and define $p\ind{t}$ as the distribution over neighborhood $\neighborhood(y_{1:h}\ind{t})$ of $y_{1:h}\ind{t}$ where
    \begin{tcolorbox}[colback=blue!1, colframe=blue!20!white, sharp corners, boxsep=2pt, top=2pt, bottom=2pt, left=2pt, right=2pt, enhanced, overlay={%
      \node[anchor=west] at ([xshift=0.5em]frame.east) {\small\emph{\shortstack{Backtracking\\probability}}};}]
    \[
      p\ind{t}(y_{1:h-1}\ind{t}) \propto
        \begin{cases} \Vhat(x,y_{1:h}\ind{t}) & \text{if } h > 0, \\
          0 & \text{if } h = 0,
        \end{cases}
    \]
    \end{tcolorbox}
    and for each $y_{h+1}\in\cA$, 
    \[
      p\ind{t}(y_{1:h}\ind{t}, y_{h+1}) \propto
        \begin{cases}
          \piref(y_{h+1}\mid{}x, y_{1:h}\ind{t}) \, \Vhat(x, y_{1:h}\ind{t}, y_{h+1}) & \text{if } h < H, \\
          0 & \text{if } h = H.
        \end{cases}
    \]

      }
    \State With probability $1/2$, set $y\ind{t+1} := y\ind{t}$, else sample $y\ind{t+1} \sim p\ind{t}$. \hfill\algcommentlight{See \cref{remark:large-action}.}
    \label{line:sampling-main}
  \EndFor
  \State \textbf{return} $y\ind{T}$ (\savehyperref{thm:main_uniform}{Thm. \ref*{thm:main_uniform}}) or $y\ind{i}$ for $i\sim\Unif([T])$ (\savehyperref{thm:js-avg-guarantee}{Thm. \ref*{thm:js-avg-guarantee}}). \hfill\algcommentlight{For analysis only.} \label{line:return-main}
\end{algorithmic}
\end{algorithm}

\arxiv{ 

\begin{figure}[t]
    \centering
    \definecolor{myblue}{RGB}{30,90,160}
\definecolor{mygreen}{RGB}{30,140,80}
\definecolor{myred}{RGB}{200,60,70}

\newcommand{\labelnodesize}{17mm}  %

\tikzset{
  >=Latex,
  every node/.style = {font=\sffamily},
  treenode/.style = {
    circle, draw=myblue, very thick,
    minimum size=14mm, inner sep=0pt,
    top color=white, bottom color=myblue!6,
  },
  currentnode/.style = {
    treenode,
    draw=myblue!80!black,
    top color=myblue!5, bottom color=myblue!20,
    font=\sffamily\bfseries
  },
  fadednode/.style = {
    circle, draw=black!25, fill=black!2,
    minimum size=7mm, inner sep=0pt
  },
  extrafadednode/.style = {
    circle, draw=black!0, fill=black!0,
    minimum size=7mm, inner sep=0pt
  },
  structedge/.style = {draw=black!25, line width=0.8pt}, %
  edge/.style = {->, very thick},                        %
  elabel/.style = {fill=white, inner sep=1.5pt, rounded corners=2pt,
                   text opacity=1, fill opacity=0.9},
  note/.style = {draw=black!15, rounded corners, fill=black!1,
                 inner sep=6pt, align=left}
}

\newpage
\begin{tikzpicture}[grow=down, edge from parent/.style=structedge]
\tikzset{
  level distance=18mm,
  level 1/.style = {sibling distance=64mm},
  level 2/.style = {sibling distance=44mm},
  level 3/.style = {sibling distance=30mm},
  level 4/.style = {sibling distance=22mm},
}

\node[fadednode] (Root) {}
  child { node[treenode] (yhm1) {$y_{1:h-1}$}
    child { node[treenode] (yh) {$y_{1:h}$}
      child { node[treenode, font=\normalsize] (yhp1) {$y_{1:h+1}$} }
      child { node[treenode, font=\footnotesize] (yhp1p) {$y_{1:h},\,y_{h+1}'$} }
    }
    child { node[fadednode] (S) {} }
  }
  child [xshift=8mm] { node[fadednode] {}
    child[ xshift = -15mm] { node[fadednode] {}
      child { node[fadednode] {} }
      child { node[fadednode] {} }
    }
  };

\begin{scope}[on background layer]
  \node[
    fit=(yhm1)(yh)(yhp1)(yhp1p),
    draw=black!45, dashed, rounded corners=6pt, inner sep=9mm
  ] (hood) {};
\end{scope}
\node[anchor=south west, text=black!55]
  at ($(hood.north west)+(2pt,-2pt)$) {\small Neighborhood $\mathcal{N}(y_{1:h})$};

\draw[edge, myred]
  (yh) to []
  node[pos=0.75, right=-75pt, elabel, text=myred] {$\propto~\widehat V(x, y_{1:h})$}
  (yhm1);

\draw[edge, mygreen]
  (yh) -- node[pos=0.13, left=10pt, elabel, text=mygreen]
    {$\propto~\pi_{\mathrm{ref}}(\cdot)\,\widehat V(\cdot)$}
  (yhp1);
\draw[edge, mygreen] (yh) -- (yhp1p);

\draw[edge, myblue]
  (yh)
    .. controls ($(yh.east)+(12mm,-8mm)$) and ($(yh.east)+(12mm,8mm)$) ..
  node[midway, right=2pt, elabel] {$1/2$}
  (yh);

\node[note, text width=5.8cm, right=28mm of yh] (update) {%
  \footnotesize
  \textbf{At each step $t$, for $y^{(t)} = y_{1:h}$:}
  \begin{enumerate}\itemsep2pt
    \item With prob.\ $1/2$, \emph{stay}: $y^{(t+1)} \leftarrow y^{(t)}$.
    \item With prob.\ $1/2$, \emph{move} to a neighbor in $\mathcal{N}(y_{1:h})$
          with weights shown on the arrows
  \end{enumerate}
};

\end{tikzpicture}
    \caption{Illustration of execution of \mainalg at each step $t$.}
    \label{fig:main_alg_illustration}
\end{figure}
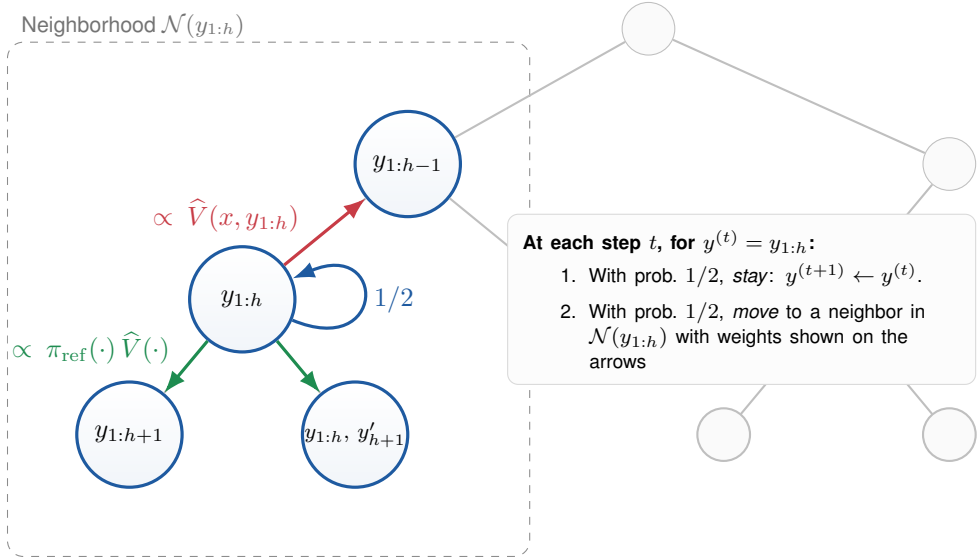
}

Our main algorithm, \mainalg, is displayed in \cref{alg:valueflow-values}\arxiv{ and illustrated in \cref{fig:main_alg_illustration}}. \arxiv{We also display how \mainalg specializes to the KL-regularized setting (\cref{alg:valueflow-qvalues}), but we focus on the general test-time alignment setting for the exposition. }To explain the algorithm, we view guided sampling algorithms as implicitly traversing the exponentially-large tree $\cT$ with node set $\bigcup_{h=0}^H \cA^h$, where $y_{1:h-1}$ is the parent of $y_{1:h}$. %
\actionalg{} implements a random walk from the root $\emptyset$ to a leaf $y_{1:H}$, which only goes \emph{down} the tree. %
As we saw in \cref{ex:delayed}, such strategies seem limited in their ability to correct for systematic errors in the value function estimates.\loose

\paragraph{\mainalg{} as a one-line change to \actionalg{}}
\mainalg{} addresses the limitations of \actionalg by augmenting the random walk with the ability to \emph{backtrack up the tree}. Formally, let $\cN(y_{1:h})$ denote the neighborhood of $y_{1:h}$ in $\cT$,  which contains its parent $y_{1:h-1}$ and all its children $\{y_{1:h+1}\}_{y_{h+1} \in \cA}$.

\mainalg now proceeds as follows. At each step $t$, given the current node $y_{1:h}\ind{t}$, we first (\lineref{line:backtracking-probability-main}) define a probability distribution $p\ind{t}$ over the neighborhood $\cN(y_{1:h}\ind{t})$. The probability of selecting a child node $(y_{1:h}\ind{t}, y_{h+1})$ is proportional to $\piref(y_{h+1}\mid{}x,y_{1:h}\ind{t}) \Vhat(x,y_{1:h}\ind{t}, y_{h+1})$---just as in \actionalg{}---while the probability of selecting the parent $y_{1:h-1}\ind{t}$ is proportional to $\Vhat(x,y_{1:h}\ind{t})$. The random walk proceeds by sampling a new node $y\ind{t+1}$ from $p\ind{t}$ with probability $1/2$, and staying at the current node otherwise (setting $y\ind{t+1}\gets{}y\ind{t}$).\footnote{This technique (staying at the current node with probability $1/2$) is called \emph{laziness}, and is needed to ensure that the random walk has a stationary distribution. See e.g. \cite{levin2009markov} for background on Markov chains.} After $T$ steps, in the version of \mainalg that we analyze first, it returns the final node of the random walk.%
\footnote{The chosen node may not be a leaf, in which case we re-run the algorithm; this can occur at most $\bigoht(H)$ times.}\loose

\paragraph{\mainalg as a generalization of the Sinclair-Jerrum walk} \mainalg can be interpreted as a generalization of the Sinclair-Jerrum walk \citep{sinclair1989approximate}, which corresponds to the special case where $\piref$ is uniform and $\tau$ is binary-valued.\footnote{See \cite{bakshi2024high} for a related generalization, applied to quantum Gibbs state preparation.} This walk was originally used to show that approximate \emph{sampling} reduces to approximate \emph{counting} in self-reducible problems such as \SAT, without incurring the error amplification of prior reductions \citep{jerrum1986random}. Value functions are precisely the generalization of counting oracles to our setting.\loose

Under \cref{assump:unif-mgf} on $\Vhat$, stated below, \mainalg inherits three key properties from the Sinclair-Jerrum walk, formalized in \cref{thm:main_uniform}: (1) the walk rapidly mixes to a stationary distribution $\pitil$; (2) $\pitil$ puts $\Omega(1/H)$ mass on the leaves of $\cT$; and (3) as long as exact outcome-level rewards are used (i.e. $\Vhat(x,y_{1:H}) = \tau(x,y_{1:H})$; see \cref{rem:approx-reward}), $\pitil$ is proportional to $\pistar$ at the leaves.

\begin{remark}[Efficient implementation for large action spaces]
    \label{remark:large-action}
    When the action space $\cA$ is small (e.g., in token-level generation), the distribution $p\ind{t}$ in \lineref{line:backtracking-probability} of \mainalg can be constructed explicitly by enumerating all actions. 
    When the action space is large (e.g., in block-level generation), this enumeration is computationally infeasible, but we can still sample efficiently from $p\ind{t}$ using rejection sampling (in time $\bigoht(\Cact(x))$---see \cref{thm:main_uniform}). See \cref{alg:valueflow-values-weight-version} for the detailed implementation.\loose
\end{remark}

\begin{algorithm}[tp]
\caption{Instantiation of \mainalg for KL-regularized reinforcement learning}
\label{alg:valueflow-qvalues}
\begin{algorithmic}[1]
  \State\multiline{ {\bfseries Input:} Base model $\piref$; approximate regularized value function $\Qhat$, prompt $x\in\cX$, horizon $H\in\bbN$,\\ regularization parameter $\beta>0$, step count $T\in\NN$. }
  \State If outcome-level reward is available, set $\Qhat(x,y_{1:H}) = \rstar(x,y_{1:H})$.
  \State Initialize $y\ind{0} := \bot$.
  \For{$0 \le t < T$}
    \State\label{line:backtracking-probability} \multiline{Set $h := |y\ind{t}|$, and define $p\ind{t}$ as the distribution over neighborhood $\neighborhood(y_{1:h}\ind{t})$ of $y_{1:h}\ind{t}$ where
    \begin{tcolorbox}[colback=blue!1, colframe=blue!20!white, sharp corners, boxsep=2pt, top=2pt, bottom=2pt, left=2pt, right=2pt, enhanced, overlay={%
      \node[anchor=west] at ([xshift=0.5em]frame.east) {\small\emph{\shortstack{Backtracking\\probability}}};}]
    \[
      p\ind{t}(y_{1:h-1}\ind{t}) \propto
        \begin{cases} \exp\!\big(\beta^{-1}\Qhat(x,y_{1:h}\ind{t})\big) & \text{if } h > 0, \\
          0 & \text{if } h = 0,
        \end{cases}
    \]
    \end{tcolorbox}
    and for each $y_{h+1}\in\cA$, 
    \[
      p\ind{t}(y_{1:h}\ind{t}, y_{h+1}) \propto
        \begin{cases}
          \piref(y_{h+1}\mid{}x, y_{1:h}\ind{t}) \, \exp\!\big(\beta^{-1}\Qhat(x, y_{1:h}\ind{t}, y_{h+1})\big) & \text{if } h < H, \\
          0 & \text{if } h = H.
        \end{cases}
    \]

      }
    \State With probability $1/2$, set $y\ind{t+1} := y\ind{t}$, else sample $y\ind{t+1} \sim p\ind{t}$. \hfill\algcommentlight{See \cref{remark:large-action}.}
    \label{line:sampling}
  \EndFor
  \State \textbf{return} $y\ind{T}$ (\savehyperref{thm:main_uniform}{Thm. \ref*{thm:main_uniform}}) or $y\ind{i}$ for $i\sim\Unif([T])$ (\savehyperref{thm:js-avg-guarantee}{Thm. \ref*{thm:js-avg-guarantee}}). \hfill\algcommentlight{For analysis only.} \label{line:return}
\end{algorithmic}
\end{algorithm}

\subsection{Theoretical Guarantees: Uniform Error}
\label{sec:uniform_error}

We present the first of our theoretical guarantees for \mainalg---the strongest and simplest---under the assumption that the learned value function $\Vhat$ satisfies a \emph{uniform} error bound with respect to the true value function $\Vstar$ (as discussed in \cref{sec:setup}), and that we have query access to the outcome-level reward $\tau$; we will relax both assumptions in the sequel. The assumption is for a fixed prompt $x\in\cX$:\loose

\begin{assumption}[Uniform bound on value errors]\label{assump:unif-mgf}
    We assume that the exact outcome-level reward is available (i.e., $\Vhat(x,y_{1:H}) = \tau(x,y_{1:H})$), and for each  $1 \leq h < H$, for each $y_{1:h}\in\cA^h$,
\begin{equation} \label{eq:valerr} 
    \frac{\Vhat(x,y_{1:h})}{\Vstar(x,y_{1:h})} \in [1/(1+\vepsv), 1+\vepsv].
\end{equation}

\end{assumption}

Under this assumption, we show that \mainalg rapidly converges to the target distribution $\pistar$. The proof closely follows the analysis of the Sinclair-Jerrum walk \citep{sinclair1989approximate}, which bounds the mixing time of the walk by analyzing its conductance. %
See \cref{sec:proofs_main} for the full proof.%

\begin{theorem}[Main guarantee for \mainalg]
    \label{thm:main_uniform}
    For any prompt $x\in\cX$ and $\delta>0$, under \cref{assump:unif-mgf}, let $\pihat$ be the output distribution of \mainalg with step count $ T := \bigoht\prn[\big]{ H^2 \cdot (1+\vepsv)^4 \cdot \log ( \delta^{-1})}$, and let $\Eleaf$ be the event that $y \sim \pihat$ is a leaf node. Then $\Pr[\Eleaf] \geq 1/(8(1+\vepsv)H)$. Moreover, $\Dtv{\pihat|_{\Eleaf}}{\pistar} \leq \delta$.  The total runtime (including evaluations of $\Vhat$ and queries/conditional generations of $\piref$) is bounded by $O(\abs{\cA}\cdot T)$ in the small-action case and $\bigoht(\Cact(x))\cdot T$ in the large-action case.\footnote{This case requires (approximate) knowledge of $\Cact(x)$ and $1+\vepsv$; see \cref{remark:rs-hyperparameters}.}
\end{theorem}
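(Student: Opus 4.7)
The plan is to view \mainalg as a lazy, reversible random walk on the tree $\cT$ and to prove three things in sequence: (i) the walk admits an explicit stationary distribution $\pitil$ that, conditioned on the leaves, equals $\pistar$; (ii) $\pitil$ places $\Omega(1/H)$ mass on the leaves; and (iii) the walk mixes to $\pitil$ in $\wt{O}(H^{2}(1+\vepsv)^{4})$ steps. Once these are in hand, the theorem follows by Markov's inequality and a union bound: choose $T$ large enough that $\Dtv{\pihat}{\pitil}$ is a small constant multiple of $\Pr_{\pitil}[\Eleaf]\cdot\delta$, then conditioning on the leaf event costs only a factor $H$ in TV, which is absorbed by the extra $\log H$ in $T$.

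For (i), I would identify $\pitil$ by checking detailed balance on each edge $(y_{1:h-1},y_{1:h})$ of $\cT$. Writing $Z(y_{1:h})$ for the normalizer of $p\ind{t}$ at $y_{1:h}$, the transition probabilities from \lineref{line:backtracking-probability-main}--\ref{line:sampling-main} give $P(y_{1:h-1}\to y_{1:h}) = \frac{1}{2}\piref(y_h\mid x,y_{1:h-1})\,\Vhat(x,y_{1:h})/Z(y_{1:h-1})$ and $P(y_{1:h}\to y_{1:h-1}) = \frac{1}{2}\Vhat(x,y_{1:h})/Z(y_{1:h})$, so the measure $\pitil(y_{1:h}) \propto \piref(y_{1:h}\mid x)\,Z(y_{1:h})$ is reversible. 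By \cref{assump:unif-mgf}, $Z(y_{1:H}) = \tau(x,y_{1:H})$, hence $\pitil(y_{1:H})\propto \piref(y_{1:H}\mid x)\tau(x,y_{1:H})\propto \pistar(y_{1:H}\mid x)$, giving (i). For (ii), I would sum $\pitil$ level by level: using the tower property and $\Vstar(x,y_{1:h})=\En^{\piref}[\tau\mid y_{1:h}]$, each internal level contributes $\approx 2\En^{\piref}[\tau]$ (one for the backtracking term and one for the forward term), while the root ($h=0$) and leaf ($h=H$) levels each contribute $\approx \En^{\piref}[\tau]$, for a total of roughly $2H\En^{\piref}[\tau]$. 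Multiplicative distortion from \cref{assump:unif-mgf} sharpens this to the stated $1/(8(1+\vepsv)H)$ leaf-mass bound.

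For (iii), I would run the classical Sinclair--Jerrum conductance argument, adapted to absorb the value-function error. For any $S\subseteq \cT$ with $\pitil(S)\le 1/2$, I lower bound the ergodic flow $Q(S,S^c)=\sum_{u\in S,v\in S^c}\pitil(u)P(u,v)$ via canonical paths between pairs $(u,v)$ routed through the lowest common ancestor in $\cT$; because any two tree nodes are $O(H)$ apart and each edge is used as the bottleneck by at most $O(H)$ paths (weighted by $\pitil\times\pitil$), standard path-congestion accounting gives conductance $\Phi=\Omega(1/(H(1+\vepsv)^{2}))$, where the $(1+\vepsv)^2$ is the worst-case distortion incurred when pulling the ratios $\Vhat(x,\cdot)/Z(\cdot)$ out of the flow inequality and comparing to their ``ideal'' $\Vstar$-based counterparts. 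Cheeger's inequality for lazy reversible chains then gives mixing time $T=O(\Phi^{-2}\log(1/(\delta\cdot\pitil_{\min})))=\wt{O}(H^{2}(1+\vepsv)^{4}\log\delta^{-1})$; the $\log(1/\pitil_{\min})$ contributes only $\poly\log H$ and a $\log\max_y\piref(y\mid x)^{-1}$ factor that is standard to hide under $\wt{O}$.

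The main obstacle will be (iii): propagating the multiplicative errors of \cref{assump:unif-mgf} carefully enough through the canonical-paths argument to obtain the stated $(1+\vepsv)^{4}$ dependence rather than a crude $H$-dependent blow-up. Once the stationary-distribution and mixing analyses are settled, the runtime claim is routine: in the small-action case each step of the walk evaluates $\Vhat$ at the $|\cA|$ children and samples once from $p\ind{t}$ in $O(|\cA|)$ time; in the large-action case sampling from $p\ind{t}$ uses rejection sampling with proposal $\piref(\cdot\mid x,y_{1:h})$ and acceptance proportional to $\Vhat$, for which \cref{assump:unif-mgf} lets us certify an acceptance probability $\Omega(1/\Cact(x))$ via a standard envelope constant, giving the $\wt{O}(\Cact(x))$-per-step cost.
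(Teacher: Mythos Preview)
Your overall structure---stationary distribution via detailed balance, leaf-mass bound by summing levels, conductance $\Omega(1/(H(1+\vepsv)^2))$, then Cheeger---matches the paper's proof. Two points deserve attention.

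First, there is a real gap in the mixing step. You write the mixing time as $O(\Phi^{-2}\log(1/(\delta\cdot\pitil_{\min})))$ and then claim that $\log(1/\pitil_{\min})$ contributes ``only $\poly\log H$ and a $\log\max_y\piref(y\mid x)^{-1}$ factor that is standard to hide under $\wt{O}$.'' This is false: $\pitil_{\min}$ (taken over leaves) is typically $\exp(-\Omega(H))$, and $\log\max_y\piref(y\mid x)^{-1}$ is likewise $\Theta(H)$ in natural examples (e.g.\ $\piref$ uniform on $\{0,1\}^H$), so hiding it under $\wt{O}$ would silently turn $H^2$ into $H^3$. The fix is that the walk starts at the \emph{root}, so you only need the $\chi^2$-mixing bound $\Dchis{\markch^T(\cdot\mid\root)}{\mu}\lesssim e^{-\Phi^2 T/2}/\mu(\root)$; your own level-sum computation in step (ii) already gives $\mu(\root)\ge 1/(2(1+\vepsv)^2 H)$, so $\log(1/\mu(\root))=O(\log((1+\vepsv)H))$ and the claimed $T$ follows.

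Second, for the conductance bound the paper does not use canonical paths. It argues directly: reduce to connected $S$ not containing the root, let $v=y_{1:h}$ be the shallowest node of $S$ with parent $u$, and observe that the single edge $(v,u)$ already certifies
\[
\Phi_S \;\ge\; \frac{\mu(v)\markch(u\mid v)}{\sum_{v'\preceq v}\mu(v')}
\;=\; \frac{\piref(y_{1:h})\Vhat(y_{1:h})}{4\sum_{v'\preceq v}\piref(v')\Vhat(v')}
\;\ge\; \frac{1}{4(1+\vepsv)^2 H},
\]
where the last step replaces $\Vhat$ by $\Vstar$ (incurring two factors of $1+\vepsv$) and uses the tower property $\sum_{y_{h:h'}}\piref(y_{1:h'})\Vstar(y_{1:h'})=\piref(y_{1:h})\Vstar(y_{1:h})$ to collapse each of the $\le H$ levels of the subtree. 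Your canonical-paths route would also work and in fact can bound the spectral gap directly (saving a $(1+\vepsv)^2$ over Cheeger), but the paper's argument is shorter and is the one actually attributed to Sinclair--Jerrum for self-reducible trees.
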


In the KL-regularized setting, we achieve an analogous bound on regret---namely, $\Jbeta(\pistar;x) - \Jbeta(\pihat|_{\Eleaf}; x) \leq \beta\delta$; see \cref{cor:js-unif-kl} for the full statement. While these guarantees are conditioned on the event $\Eleaf$, we can ensure that this occurs with high probability by simply rerunning the algorithm $\bigoht(H)$ times. Compared to \actionalg, \textbf{\mainalg avoids the error amplification demonstrated in \cref{ex:abc-main,ex:delayed}}, because these examples satisfy $\veps_V = O(1)$. %
Compared to \outcomealg, \mainalg is much more efficient: it avoids dependence on the sequence-level coverage coefficient $\Ccov(x)$, which should be thought of as potentially exponential in $H$. 

To be clear, there is room for improvement, as the overall time required for \mainalg to output a leaf node is $\bigoht(H^3)$.\arxiv{ This can be straightforwardly improved to $\bigoht(H^2)$, by increasing the weight of the self-loops at the leaves of the generation tree \citep{hayes2010liftings}, but $\bigoht(H^2)$ may be a fundamental barrier, at least for algorithms like \mainalg.} %

\begin{remark}[Access to outcome-level tilt $\tau$]\label{rem:approx-reward}
When \mainalg has exact access to $\tau$, its output distribution (conditioned on $\Eleaf$) converges to $\pistar$ as $\delta\to 0$, even for fixed value error $\vepsv$. Without such access, convergence to $\pistar$ is impossible, but \mainalg still provably avoids error amplification, as shown below. 
\end{remark}

\subsection{Guarantees under Weaker Average-Case Error}
\label{sec:average_error}

\cref{thm:main_uniform} shows that \mainalg can significantly improve over naive sampling in the presence of value function errors, but the result requires the uniform error condition of \cref{assump:unif-mgf}. Such worst-case bounds may not hold for value functions derived from standard learning techniques like Monte Carlo estimation. To address this, we present a more refined analysis of \mainalg{} under a weaker, average-case error assumption. We also drop the assumption that $\tau$ is known.\arxiv{ As before, the assumption is for a fixed prompt $x\in\cX$:}

\begin{assumption}[Average-case bound on value errors]\label{assump:avg-mgf}
    We assume that for each $h \in [H]$,
\begin{equation} \label{eq:valerr-avg} 
    \EE_{y_{1:h}\sim\pistar(\cdot\mid{}x)} \left[\frac{\Vhat(x,y_{1:h})}{\Vstar(x,y_{1:h})}\right] \leq 1+\vepsv, \quad \EE_{y_{1:h}\sim\pistar(\cdot\mid{}x)} \left[\frac{\Vstar(x,y_{1:h})}{\Vhat(x,y_{1:h})}\right] \leq 1+\vepsv.
\end{equation}

\end{assumption}

In the KL-regularized setting, this condition controls the moment generating function under $\pistar$ of the error $\Qstarb(x,y_{1:h}) - \Qhat(x,y_{1:h})$, where $\Qhat(x,y_{1:h}) := \beta\log\Vhat(x,y_{1:h})$. While still requiring control over tail behavior, this significantly relaxes \cref{assump:unif-mgf}, which corresponds to a uniform bound on that error. %
Similar conditions have been studied in \citet{aminian2025best,yang2022convergence}. %

Under this weaker assumption, \mainalg{} still provides strong guarantees, though of a different nature than \cref{thm:main_uniform}. Instead of a direct bound on suboptimality, we show \mainalg provides good \emph{coverage} \citep{huang2025best,foster2025good} of typical responses from the target policy $\pistar$:

\begin{theorem}
 \label{thm:js-avg-guarantee}
 For any prompt $x\in\cX$ and $\delta>0$, under \cref{assump:avg-mgf}, the runtime of \mainalg with step count $T := \bigoht\prn[\big]{  H^5 \cdot (1+\vepsv)^6 \cdot \delta^{-4}}$ is $O(T \cdot |\cA|)$,\footnote{We have not sought to optimize the polynomial dependence on $H$. Also, for this result, we omit analysis of the large-$|\cA|$ case; see \cref{remark:large-action-average-case} for discussion.} and the output distribution $\pihat$ satisfies
 \begin{align}
 \Prr_{y_{1:H}\sim \pistar(x)}
 \left[\frac{\pistar(y_{1:H}\mid{}x)}{\pihat(y_{1:H})} > \frac{48H (1+\vepsv)^2}{\delta}\right] \leq \delta.\label{eq:js-apx-coverage-guarantee}
\end{align}
\end{theorem}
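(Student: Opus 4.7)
The plan is to first identify the stationary distribution $\pitil$ of the \mainalg random walk on the generation tree and relate it to $\pistar$ via a global martingale identity; then transfer the resulting coverage bound from $\pitil$ to the time-averaged output distribution $\pihat$ using a local-mixing argument in the spirit of \citet{liu2024locally}.

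By detailed balance of the transition rule in \lineref{line:backtracking-probability-main} (the lazy factor $1/2$ is harmless), the walk is reversible with stationary distribution $\pitil(y_{1:h})\propto\piref(y_{1:h})\,Z_{y_{1:h}}$, where $Z_{y_{1:h}}$ is the unnormalized neighborhood weight at $y_{1:h}$. At leaves this simplifies to $\pitil(y_{1:H})=\piref(y_{1:H})\Vhat(x,y_{1:H})/C_{\text{tot}}$, and after collecting terms across levels, $C_{\text{tot}}=2\sum_{h=1}^{H}\EE_{y_{1:h}\sim\piref}[\Vhat(x,y_{1:h})]$. The key identity driving the analysis is that $\Vstar$ is a $\piref$-martingale along the tree (directly from \cref{eq:vstar}), so changing measure to the level-$h$ marginal of $\pistar$ yields
\[
\EE_{y_{1:h}\sim\piref}[\Vhat(x,y_{1:h})]=\Vstar(x,\emptyset)\cdot\EE_{y_{1:h}\sim\pistar}\!\left[\frac{\Vhat}{\Vstar}\right]\le(1+\vepsv)\,\Vstar(x,\emptyset),
\]
using the first half of \cref{assump:avg-mgf}. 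Hence $C_{\text{tot}}\le 2H(1+\vepsv)\,\Vstar(x,\emptyset)$. Combining with the identity $\pistar(y_{1:H}\mid x)/\pitil(y_{1:H})=(C_{\text{tot}}/\Vstar(x,\emptyset))\cdot(\Vstar/\Vhat)$ and the second half of \cref{assump:avg-mgf}, I get $\EE_{y\sim\pistar}[\pistar(y)/\pitil(y)]\le 2H(1+\vepsv)^2$, and Markov's inequality yields exactly the target coverage bound with $\pitil$ in place of $\pihat$.

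The main obstacle is transferring from $\pitil$ to the time-averaged law $\pihat=T^{-1}\sum_{t=0}^{T-1}\Law(y\ind{t})$. Because average-case errors allow $\Vhat/\Vstar$ to be arbitrarily large on a small set of nodes, the global conductance argument used to prove \cref{thm:main_uniform} breaks down and the walk need not mix in TV. To circumvent this, I construct a \emph{good} set $G\subset\cT$ consisting of nodes on which both $\Vhat/\Vstar$ and $\Vstar/\Vhat$ are at most $\poly(H/\delta)$; by Markov's inequality and \cref{assump:avg-mgf}, $\pistar(G)\geq 1-\delta/2$, so restricting the coverage bound to $G$ absorbs only an additional $\delta/2$ failure probability. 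On $G$ the walk satisfies an effective uniform value-error bound of $\poly(H/\delta)$, so a localized conductance/Poincar\'e argument following the local-mixing framework of \citet{liu2024locally} gives a mixing time of $\poly(H/\delta)$ within $G$; this will ensure $\pihat(y)\geq\tfrac{1}{2}\pitil(y)$ on $G$ once $T$ reaches the stated $\tilde O(H^{5}(1+\vepsv)^{6}\delta^{-4})$, from which the theorem follows. The polynomial $H^{5}\delta^{-4}$ scaling of $T$ reflects the combined cost of (i) enlarging $G$ enough for the $\pistar$-coverage argument, (ii) controlling the escape probability of the walk from $G$, and (iii) running the chain for many local mixing times.
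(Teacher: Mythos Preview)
Your stationary-distribution calculation and the resulting coverage bound for $\pitil$ (equivalently, the leaf restriction of $\mu$) are correct and match the paper's Lemma~9.17. The gap is in the second half, where you propose to transfer coverage from $\pitil$ to $\pihat$ by constructing a good set $G=\{y_{1:h}:\Vhat/\Vstar,\Vstar/\Vhat\le\poly(H/\delta)\}$ and arguing that ``on $G$ the walk satisfies an effective uniform value-error bound'' so that a localized conductance argument yields fast mixing within $G$.

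This does not work as stated, for two reasons. First, the transition probabilities at a node $y_{1:h}\in G$ depend on $\Vhat$ at all children $y_{1:h+1}$, some of which may lie outside $G$; so membership in $G$ does \emph{not} give you control over the transition kernel at that node, and the uniform-error conductance argument from \cref{thm:main_uniform} does not carry over. Second, even if you restricted $G$ further so that all children of good nodes are good, you would still need to bound the escape probability of the actual chain from $G$---but this requires knowing the marginal laws $\nu_t$, which is exactly what you are trying to establish. The dependency is circular, and you do not indicate how to break it.

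The paper's route avoids both issues and is genuinely different from your sketch. It never restricts the chain or argues about mixing at all. Instead it uses a potential-function argument: the $\chi^2$-divergence $\Dchis{\nu_t}{\mu}$ is monotonically decreasing, and its total decrease equals $\sum_t\cE(\nu_t/\mu,\nu_t/\mu)$, so the time-averaged Dirichlet energy is $O(H\kappa^2/T)$ regardless of where the chain wanders. The bad sets are defined not by pointwise value ratios but by a conductance-type quantity (the ratio of the edge weight at $y_{1:h}$ to the total leaf weight in its subtree), which is precisely what appears when you bound the variation of $\nu_r/\mu$ along a root-to-leaf path by the Dirichlet form. Finally, a separate PSD argument shows $\nu_r(\bot)/\mu(\bot)\ge 1$ deterministically, so small Dirichlet energy forces $\nu_r/\mu\gtrsim 1$ at most leaves under $\pistar$. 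This gives one-sided coverage without ever establishing two-sided mixing, which is why the argument survives the average-case assumption.
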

That is, with high probability over $y_{1:H} \sim \pistar(\cdot\mid{}x)$, the policy $\pihat$ induced by \mainalg{} assigns $y_{1:H}$ not much less probability than $\pistar$. This guarantee does not directly imply low regularized regret in the KL-regularized setting; however, it \emph{does} enable achieving low unregularized regret against $\pistar$, by composing \mainalg{} with standard Best-of-$N$ sampling \citep{cobbe2021training}; see \cref{sec:bon}. We remark that in \cref{ex:abc-main,ex:delayed}, \actionalg does \emph{not} achieve \cref{eq:js-apx-coverage-guarantee} (see \cref{sec:example-details}).

\paragraph{Overview of analysis}
The analysis of \mainalg under the average-case error assumption in \cref{assump:avg-mgf} is considerably more challenging than in the uniform error setting. Classical techniques such as global conductance are no longer applicable. Instead, our analysis leverages modern tools for analyzing slowly mixing Markov chains, particularly the notion of \emph{local stationarity} \citep{liu2024locally}. We view this analysis as a key technical contribution, as it bridges the average-case guarantees typical in statistical learning theory with the uniform guarantees traditionally required in the analysis of approximate counting and sampling algorithms. We defer the full proof to \cref{app:js-avg-proof}. \loose

\paragraph{Insufficiency of alternative assumptions} We show that several alternative assumptions (which a priori may seem more natural than \cref{assump:avg-mgf}) are insufficient for \emph{any} efficient algorithm to achieve non-trivial sampling guarantees: average-case multiplicative error under $\piref$ (\cref{sec:insuf-piref}), additive error bounds on $\Vhat$ (\cref{sec:insuf-additive}), and MSE bounds on $\Qhat$ (\cref{sec:insuf-mse-q}).

\section{Empirical Results}
\label{sec:empirical}

Our experiments focus on \emph{constrained sampling}---perhaps the most common, concrete application of test-time alignment \citep{yang2021fudge,scholak2021picard,lew2023sequential,wang2025value}---where $\tau := \rstar$ is a binary-valued reward function. %
We train value functions for tasks where $\piref$ is analytic (\cref{sec:synthetic-experiments}) or a pre-trained language model (\cref{sec:lm-experiments}), and demonstrate benefits of \mainalg on a variety of axes measuring distributional fidelity. %
We also show that \emph{constrained text generation} \citep{scholak2021picard} can be interpreted as value-guided sampling with a \emph{weak} value function---and that \mainalg improves coherence compared to the standard baseline for this setting (\cref{sec:congen-experiment}). These benefits largely do come at added computational cost, but we view our findings as a promising step towards understanding trade-offs between efficiency and error mitigation---and therefore a step towards reliable \emph{long-horizon} reasoning. For all omitted experiment details, and additional results (e.g. improving the efficiency of \mainalg with \emph{momentum}), see \cref{sec:additional_experiments}.

\paragraph{Implementation details} The value functions are parametrized as MLPs---where the input features are either one-hot encodings of the input sequence (for tasks with small $|\cA|$) or pooled hidden states from the pre-trained language model (for tasks with large $|\cA|$)---and trained via regression onto Monte Carlo roll-outs (\cref{eq:value_function_regression}). To improve wall-clock efficiency, the implementation of \mainalg differs in two ways from \cref{alg:valueflow-values}. First, instead of fixing a step count $T$ we simply stop the Markov chain as soon as it reaches a leaf.\arxiv{\footnote{Theoretically, this can certainly bias the output distribution, and quantifying the empirical trade-offs with this design choice is an interesting research question.} }
Second, in settings where the alphabet is too large to efficiently enumerate, we use a heuristic version of rejection sampling to implement each transition. In experiments where the true rewards are provided at the final \position, we allow \actionalg to restart if all actions have reward $0$. See \cref{sec:additional_experiments} for pseudocode and task-by-task implementation/training details.

\subsection{Synthetic Generator with Trained Value Function}\label{sec:synthetic-experiments}

Our first tasks are synthetic tasks that we use to stress-test the theory; the base distribution $\piref$ is analytic
rather than defined by a pre-trained language model, and the problem instances are small enough that we can easily compute relevant metrics, but the value functions are trained via \cref{eq:value_function_regression}.

\arxiv{
\begin{figure}[t]
    \centering%
    \includegraphics[width=\linewidth]{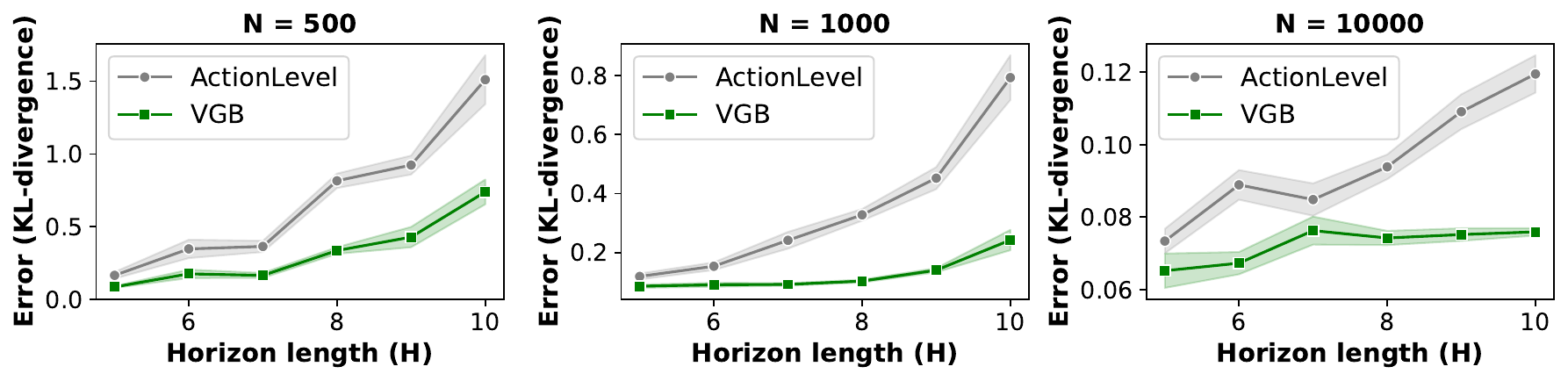}
    \caption{Estimated KL-divergence of \mainalg and \actionalg to $\pistar$ in ABC task (\cref{sec:synthetic-experiments}) for varied horizon length $H$ and \# of value function training samples $N$. We repeat the experiment $10$ times for each $(H,N)$ and report the mean and standard error. See \cref{sec:ABC_task} for details.}
    \label{fig:abc}
\end{figure}

}

\paragraph{ABC task (\cref{sec:ABC_task}): distributional benefits} We follow the task defined in \cref{ex:abc-main}, where we proved that errors in a specific $\Vhat$ compound with \actionalg, whereas \mainalg provably avoids this error compounding. We show that the benefits of \mainalg persist when $\Vhat$ is trained. We implement \mainalg and \actionalg using the true rewards at the final \position, so both algorithms produce only reward-$1$ responses, but may have biased distributions. As shown in \cref{fig:abc}, \textbf{\mainalg incurs lower KL-divergence to $\pistar$ than \actionalg}, with increasing benefits for large $H$.%

The following task provides a setting where \emph{systematic} training errors arise (a la \cref{ex:delayed})---and induce an even starker separation between \mainalg and \actionalg.

\paragraph{Parity task (\cref{sec:Parity_task}): wall-clock time benefits} This task instantiates common intuition that \emph{estimating values may often be much harder at some \positions than at others} \citep{li2024cascade}. %
Specifically, we design $\piref$ and $\rstar$ so that at many \positions $h$, $\Vstar(y_{1:h})$ is a \emph{parity function}---which is notoriously challenging for gradient descent to learn \citep{barak2022hidden}---but many other \positions ``reveal'' the most recent parity. Intuitively, this effectuates \emph{delayed} feedback in the trained value function, as in \cref{ex:delayed}. Indeed, on this task, \textbf{ \mainalg requires less wall-clock time than \actionalg to find a reward-$1$ response, with superior scaling as $H$ increases} (\cref{fig:parity}).%

\subsection{Language Model Generator with Trained Value Function}\label{sec:lm-experiments}

We now turn to tasks where $\piref$ is a Transformer-based language model. In these tasks, while it is infeasible to empirically estimate KL-divergence as before (due to larger response space), we demonstrate that \mainalg has benefits over baselines according to natural subjective metrics. %

\paragraph{Dyck grammar task (\Cref{sec:Dyck_grammar_task})} 
The \emph{Dyck grammar}~\citep{SCHUTZENBERGER1963246} is a classic context-free grammar consisting of balanced parenthesis strings
(e.g., \texttt{[()]} is valid but \texttt{([)]} is not), and a common theoretical sandbox for language modeling \citep{hewitt2020rnns,yao2021self,liu2023same,liu2023Transformers,wen2023uninterpretability,botta2025query}\arxiv{ as it exhibits features of natural and programming language syntax, such as hierarchical structure}.
Using code from \cite{botta2025query}, we train (from scratch) a Transformer language model $\piref$ on Dyck grammar sequences of length $32$ with two bracket types. 
The training data is sampled from a distribution where square brackets \texttt{[} and \texttt{]} occur with probability $0.8$ whereas round brackets \texttt{(} and \texttt{)} occur with probability $0.2$. While $\piref$ has near-perfect accuracy at completing in-distribution prefixes, it has much worse accuracy at completing out-of-distribution (OOD) prefixes, i.e. prefixes with many round brackets.\loose %

First, we fix a randomly-chosen length-$16$ prefix consisting of only round brackets, and train a value function for $40$ epochs on $10000$ completions from $\piref$. We compare \mainalg on two metrics---\emph{accuracy} and \emph{diversity} (i.e. \# of correct responses)---against two popular sampling algorithms: Block Best-of-$N$ \citep{mudgal2023controlled} and Block Rejection Sampling (a generalization of \actionalg). We sweep over block sizes and number of candidate blocks, allowing up to $32$ candidate blocks---which equalizes the number of queries to $\piref$ made by \mainalg and the baselines (\cref{tab:dyck_acc_diversity_breakdown}). \mainalg achieves a \textbf{robustly non-dominated point on the Pareto frontier between accuracy and diversity} (\cref{fig:dyck_acc_diversity_js}). %

Second, on $12$ OOD prefixes, we measure how well \mainalg and \actionalg match $\pistar$ in distribution, using a tractable \emph{proxy} for distributional error. True rewards are used at the final \position, and the value function is trained for a varied number of epochs. After initial epochs, \textbf{\mainalg achieves lower $\ell_1$ error than \actionalg while being more query-efficient than \outcomealg} (\cref{fig:dyck_distribution}). %

\arxiv{
\begin{table}[t]
    \centering

\begin{NiceTabular}{cccc}
    \CodeBefore
    \rowcolor{gray!10}{1}
    \Body
        \toprule
        Algorithm & Length hist. error ($\downarrow$) & Type hist. error ($\downarrow$) & Frac. distinct generations ($\uparrow$) \\
        \midrule
        \mainalg & \textbf{69.7500} (7.9782) & 0.0172 (0.0018) & \textbf{0.3741} (0.0033) \\
        \actionalg & 92.2000 (6.6375) & \textbf{0.0157} (0.0031) & 0.3700 (0.0036) \\
        \bottomrule
    \end{NiceTabular}

    \caption{Distributional comparison of \mainalg and \actionalg for Python test case generation task (\cref{sec:test_case_generation_task}). For each metric, we evaluated based on $1000$ reward-$1$ generations from each algorithm. We repeated the experiment with $20$ independent seeds and report the mean and standard error (in parentheses). While \actionalg outperforms \mainalg on the histogram of types, note that the discrepancy is within the standard error. We hypothesize that \mainalg achieves a better distribution over test case lengths since there is likely substantial heterogeneity in the difficulty of generating different lengths, leading \actionalg to have a biased distribution.}
    \label{tab:code_distribution_comparison}
\end{table}

}

\paragraph{Code generation task (\cref{sec:test_case_generation_task})} In this task, the goal is to generate a valid test case for a randomly-named Python function that implements \texttt{append} and is given in the prompt \citep{botta2025query}. With base model \texttt{Qwen-2.5-0.5B}, a trained value function $\Vhat$, and true rewards at the final \position, we evaluate \mainalg and \actionalg against $\pistar$ on three distributional metrics: (1) the histogram of test case lengths, (2) the distribution of object types in the test cases, and (3) the number of distinct test cases generated. We find that \textbf{\mainalg substantially improves upon \actionalg for (1), and is comparable for (2) and (3)}; see \cref{tab:code_distribution_comparison} for full results and discussion.

\subsection{Constrained Text Generation}\label{sec:congen-experiment}

In our final experiment, we demonstrate that \mainalg can be gainfully applied \textbf{even without training a value function}. In many applications of constrained sampling, the reward function $\rstar$ itself is used as a process verifier \citep{shin2021constrained,scholak2021picard,poesia2022synchromesh,lipkin2025fast}. The dominant algorithm is \emph{locally constrained decoding} \citep{scholak2021picard}, which is precisely \actionalg with $\Vhat \propto \rstar$. Recent works have observed that this algorithm does not sample from $\pistar$ \citep{lew2023sequential,park2024grammar}, and correcting this bias is an active research area.

\paragraph{Letter avoidance task (\cref{sec:constrained_text_generation_task})} In this task, the goal is to generate an English sentence with $H=32$ tokens that does not use the letter ``e''. We use base model \texttt{Qwen-2.5-0.5B} and approximate value function $\Vhat(y_{1:h}) := \alpha^{H-h}\rstar(y_{1:h})$ for varied $\alpha\in(0,1)$. Intuitively, a good choice of $\alpha$ should roughly represent the ``average probability that $\piref$ errs at an average \position''; the choice of $\alpha$ effectively controls the \emph{backtracking probability} of \mainalg, but is irrelevant to \actionalg (it is equivalent to locally constrained decoding regardless of $\alpha$). We show that across a broad range of $\alpha$, \textbf{\mainalg achieves superior coherence and log-probabilities to \actionalg} (\cref{fig:constrained_text}).

\arxiv{
\begin{figure}[t]\centering %
\includegraphics[width=\linewidth]{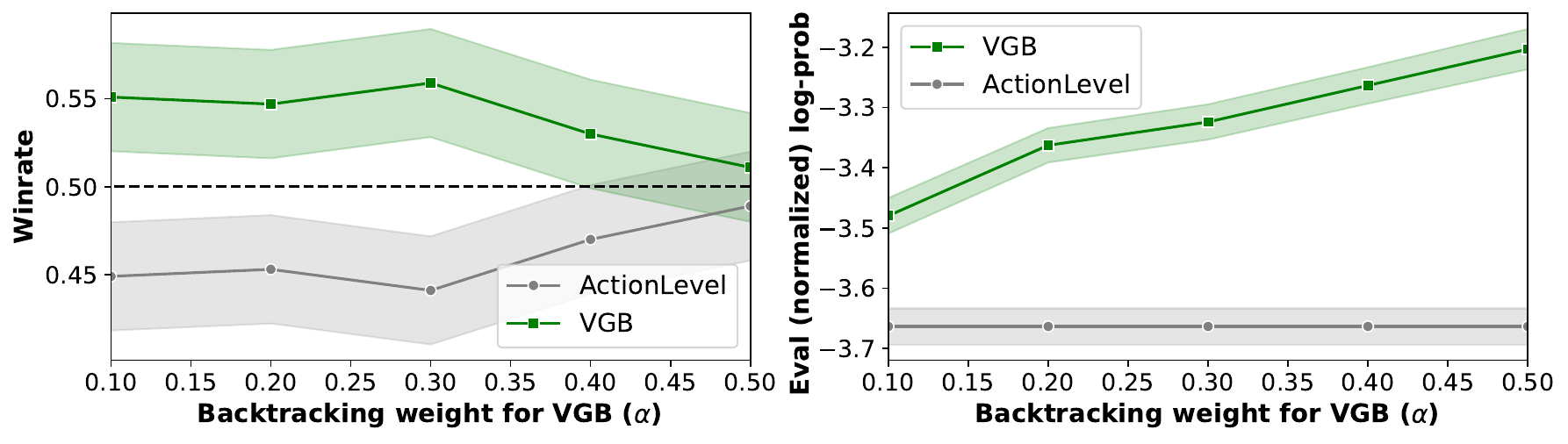}
\caption{Comparison of \mainalg against \actionalg for letter avoidance task (\cref{sec:congen-experiment}), with varied backtracking weight $\alpha$ for \mainalg. \textbf{Left:} Winrate of \mainalg against \actionalg under pairwise comparison of responses by \texttt{GPT-4o-mini} (judging for coherence). \textbf{Right:} Average horizon-normalized log-probabilities evaluated by \texttt{Qwen-2.5-1.5B}. See \cref{sec:constrained_text_generation_task} for details.%
}
\label{fig:constrained_text}
\end{figure}

}

\arxiv{ 
\section{Conclusion}
\label{sec:conclusion}
Our work shows that advanced test-time sampling strategies with backtracking (\mainalg) can mitigate amplification of errors in learned process verifiers---a problem that will become increasingly critical as machine learning methods are scaled up to perform decision-making at longer horizons. Moreover, our results open the door to a systematic theoretical understanding of the interplay between the \emph{assumptions} that we make on process verifiers, and the optimal \emph{algorithms} for using these process verifiers (measured in terms of computational efficiency and error mitigation). To this end, some important directions for future work include:\loose

\begin{itemize}
\item \textbf{Optimal runtime and query efficiency.} Our main algorithm, \mainalg, can achieve exponential improvements in runtime and query efficiency over naive sampling strategies, but an important direction for future work is to understand the extent to which the algorithm's polynomial dependence on various problem parameters (e.g., the quadratic $H^2$ dependence on the sequence length $H$) can be improved. A related direction is to investigate the complexity of test-time alignment in more fine-grained models of computation that account for important practical optimizations such as parallelization and caching.
    \item \textbf{Tradeoffs between training and test-time computation.} In our work, we assume that we have a non-trivial approximation to the value function, which is empirically well-motivated. However, an important question that we do not address is \emph{when} the assumptions that we require can be achieved by standard value function training methods---and whether there are more appropriate training methods specifically designed for compatibility with \mainalg and other guided generation methods. A broader goal is to develop a comprehensive theory of the computational benefits of process verifiers that accounts for the difficulty of training them.
\end{itemize}
Finally, we hope that the connections made in our work will inspire new algorithms and analysis techniques that draw from the rich literature on discrete sampling algorithms \citep{sinclair1989approximate,jerrum2004polynomial,anari2021spectral,liu2024locally} in theoretical computer science and beyond.\loose

\paragraph{Acknowledgments} We would like to thank Audrey Huang, Frederic Koehler, Akshay Krishnamurthy, Allen Liu, and Max Simchowitz for helpful technical discussions over the course of this work. We would also like to thank Jonathan Kelner, Yuda Song, and Jordan Wilke for advice about computational resources.

AM is supported in part by a Microsoft Trustworthy AI Grant, NSF award CCF-2430381, ONR grant N00014-22-1-2339, and a David and Lucile Packard Fellowship. AR is supported in part by NSF awards IIS-2211907, CCF-2238523, IIS-2403275, an Amazon Research
Award, ONR award N000142512124, a Google Research Scholar Award, and an OpenAI Superalignment Fast Grant. AS is supported in part by ARO award W911NF-21-1-0328, the Simons Foundation, NSF award DMS-2031883, a DARPA AIQ award, and an NSF FODSI Postdoctoral Fellowship. DR is supported by NSF awards CCF-2430381 and DMS-2022448, and ONR grant N00014-22-1-2339. DS is supported in part by NSF award CCF-2212968. YL is supported in part by NSF award IIS-2211907 and a Google Research Gift.

}

\arxiv{ 
}
\bibliography{refs}

\clearpage

\appendix

\renewcommand{\contentsname}{Contents of Appendix}
\addtocontents{toc}{\protect\setcounter{tocdepth}{2}}
{
  \hypersetup{hidelinks}
  \tableofcontents
}

\clearpage

\part{Additional Discussion and Results}

\section{Additional Related Work}
\label{sec:related}
\subsection{Empirical Literature}

Early empirical investigations into benefits of process verification for language models \citep{polu2020generative,cobbe2021training,lightman2023lets,uesato2022solving} focused on intuitive notions of intermediate progress, and explored various schemes for aggregating step-wise verification judgments into a final outcome-level score, including taking products of probabilities, majority voting \citep{li2023making}, or minimums of scores \citep{wang2023math}. A recurring theme in this line of work is the question of how to define the ``steps'' or ``blocks'' for verification, and at what level of granularity process-level feedback is most effective. 

Many of these earlier works learned process rewards merely as a building block to more effectively learn outcome rewards (and used the learned rewards with reinforcement learning rather than at test time), but many recent works on language model reasoning have also directly used process reward models at test time, with beam search or other tree search algorithms \citep{yang-etal-2022-generating,snell2024scaling,wu2024empirical,wang2025value}. Several works have demonstrated empirical shortcomings of greedy methods such as beam search with process rewards, and suggest introducing stochasticity \citep{yu2025scaling,puri2025probabilistic} or uncertainty-awareness \citep{yu2025uncertainty} to robustify generation.

\paragraph{Value functions as process verifiers}
A growing body of empirical work uses learned value functions as (a specific instantiation of) process verifiers to guide language model decoding. These works can roughly be categorized as employing either guided search-based methods (primarily for mathematical reasoning or similar reward maximization tasks) \citep{snell2024scaling,setlur2024rewarding,wang2025value} or guided sampling-based methods (for alignment or constrained text generation) \citep{yang2021fudge,mudgal2023controlled,chakraborty2024transfer,khanov2024args, rashid2024critical, li2024cascade,liu2025iterative}. Both lines of work typically learn the value function through Monte Carlo rollouts (a method that has also been employed in reinforcement learning pipelines \citep{wang2023math}) and then use it within search schemes such as beam search or (block-level) best-of-$N$, or sampling schemes such as token/action-level sampling or block-level sampling. 

Our work is most closely related to the sampling literature, and provides a formal theoretical model for analyzing different value-guided sampling methods, but we hope that it may provide insights for improving search as well. Empirically, we expect that \mainalg, by incorporating more deliberate search and backtracking while preserving the base model's diversity, could be beneficially combined with the value function learning schemes from this literature.

\paragraph{Backtracking and tree search with process verifiers}
While many search-based approaches are based on greedy decoding or beam search, several empirical works have explored more advanced backtracking or tree search schemes with process verifiers \citep{scialom2021beam,chaffin2021ppl,liu2023don,hao2023reasoning,snell2024scaling, singh2025improving,botta2025query}. These methods aim to overcome the limitations of purely left-to-right generation by allowing the model to revise earlier decisions.\footnote{We remark that other recent works explore \emph{training} models to backtrack, which is orthogonal to the focus of this work \citep{qin2025backtrack}.} Compared to \mainalg, these methods may not preserve the base model's diversity, as they are designed with pure reward maximization in mind (cf. \cref{remark:reward_maximization}). Our work demonstrates that backtracking can be applied effectively (and in a mathematically justified manner) even for sampling. \loose

\paragraph{Controlled decoding without learned process verifiers} 
Our work connects to a long line of empirical and methodological work on constrained/controlled decoding that has developed largely in parallel with the literature discussed above, but that does not actually \emph{learn} a process verifier \citep{shin2021constrained,scholak2021picard,poesia2022synchromesh,lew2023sequential,ahmed2024controllable,gonzalez2025constrained,lipkin2025fast,loula2025syntactic}. This line of work primarily focuses on the setting where the goal is to condition a base model $\piref$ on some sequence-level constraint $\rstar(x,y) = 1$ that can also be evaluated on partial generations. These works forgo training a value function (which can be computationally expensive), and instead use the reward function $\rstar(x,y_{1:h})$ itself as a process verifier for partial generations. The dominant method in this literature is \emph{locally constrained decoding} \citep{shin2021constrained, scholak2021picard} (see also \cite{lipkin2025fast} for additional references), which---as we discuss in \cref{sec:empirical}---is equivalent to \actionalg for a natural approximate value function $\Vhat$.

\paragraph{Sequential Monte Carlo and particle filtering}
Our work also connects to a body of literature on using Sequential Monte Carlo (SMC) and particle filtering methods for inference-time decoding under process-level guidance \citep{lew2023sequential,zhao2024probabilistic,loula2025syntactic,puri2025probabilistic}.\footnote{SMC is also a common method for tilting diffusion models at inference time; see e.g. \cite{skreta2025feynman}.} These algorithms bear some resemblance to \mainalg, in that they maintain a diverse set of candidate sequences, sampling successors based on importance weights. However, they typically do not incorporate backtracking, and lack finite-time/compute performance guarantees. It is an interesting open question whether these methods achieve comparable guarantees to \mainalg.

\paragraph{Additional context: diffusion models} Our work broadly shares some high-level motivations with a few strands of work on diffusion models. Since the earliest days of diffusion models, strategies like predictor-corrector \citep{song2020score} were used at inference-time to tame the error accumulation at inference time, accrued from simulating the reverse SDE. Variants of these strategies have been developed also for discrete diffusions \citep{zhao2024informed}. Moreover, for the most popular type of discrete diffusion model for language---absorbing noise diffusion models---the natural inference-time sampling strategies will not revise a token once it has been set to some value. This has naturally led to work on allowing the model to backtrack in a principled manner \citep{von2025generalized}, with promising gains in performance.  

Another related area in diffusion models is sampling from tilted distributions of a distribution that we have trained a diffusion model for. Notable applications of this flavor are classifier guidance \citep{dhariwal2021diffusion, ho2022classifier} and inverse problems \citep{bruna2024posterior, pokle2023training}. However, the conceptual problem in this area is somewhat different: namely, we have access to the denoiser for the base distribution (at any level of noise), but not for the tilted distribution---and these two cannot be easily related to each other.

\subsection{Theoretical Literature}

\paragraph{Theoretical understanding of process verifiers}
On the theoretical front, our work is most closely related to \citet{botta2025query}, who show provable benefits of access to an exact process verifier in a binary outcome setting without KL-regularization, a result analogous to our findings regarding action-level versus sequence-level coverage in \cref{sec:setup}. However, their analysis is restricted to perfect verifiers. \citet{botta2025query} also show empirical benefits of a backtracking heuristic with a trained process verifier; one key obstacle that they face is deciding \emph{when} and \emph{how often} to backtrack, which their method addresses by introducing several ad-hoc hyperparameters.

\citet{foster2025good} give guarantees for KL-regularized reinforcement learning in a  setting where $\Qstarb(x,y_{1:h})=\tri*{\thetastar,\phi(x,y_{1:h})}$ is linear in a known feature map. They show that by using deliberate exploration, it is possible to provably learn a high-quality value function, with runtime controlled by the action-level coverage coefficient $\Cact$. Conceptually this result highlights that process verifiers can be learned end-to-end while maintaining the computational benefits we sketch in \cref{sec:setup}, but the algorithm and analysis are quite specialized to the linear setting.

Other related works include \citet{huang2025self,huang2025best}, who provide algorithms and theoretical guarantees for imperfect \emph{outcome-level} verifiers, and \citet{balcan2025learning}, who study the sample complexity of learning outcome-level verifiers for chain-of-thought reasoning. Finally, we mention in passing the work of \citet{jia2025we}, which shows that for general MDPs, the sequence-level coverage coefficient we consider is controlled by a state-level coverage coefficient;\footnote{\citet{jia2025we} use this observation to show that for offline reinforcement learning in general MDPs where a reward $r_h$ is available at every step, learning from the sum $\sum_{h=1}^{H}r_h$ is no harder than learning from the individual rewards $(r_1, \ldots, r_H)$.} we remark that these quantities are equivalent for language models, i.e. the state-level coverage coefficient will also typically be impractically large.

\paragraph{Approximate sampling in theoretical computer science} As discussed in \cref{sec:main}, \mainalg can be interpreted as a generalization of the Sinclair-Jerrum random walk \citep{sinclair1989approximate} from theoretical computer science. \cite{sinclair1989approximate} developed this walk to show that for any self-reducible constraint problem such as \SAT, there is an efficient complexity-theoretic reduction from approximately sampling from the uniform distribution on satisfying assignments to approximately counting the number of satisfying assignments. Crucially, the error in the approximate counter is not amplified; to the contrary, the approximate sampler can achieve arbitrary (inverse-polynomial) accuracy even when the approximate counter has polynomially-large error.

At a technical level, Sinclair and Jerrum (in the above work and \cite{jerrum1988conductance}) developed the foundations of (now classical) theory on the mixing time of Markov chains: specifically, the method of using conductance to bound the spectral gap, which implies fast mixing. Our proof of \cref{thm:main_uniform} follows the same method; however, our proof of \cref{thm:js-avg-guarantee} requires more modern techniques such as local stationarity \citep{balasubramanian2022towards,liu2024locally}, as discussed in \cref{app:js-avg-proof}.

\newpage

\section{Approximate Coverage Enables Reward Maximization}\label{sec:bon}

In \cref{thm:js-avg-guarantee}, we showed that the law of the output of \mainalg approximately \emph{covers} the optimal distribution under \cref{assump:avg-mgf}. As a result, it is competitive with the optimal distribution in terms of diversity (i.e., under \cref{eq:js-apx-coverage-guarantee}, drawing $O(nH/\delta)$ samples from \mainalg will tend to cover almost as many responses as $O(n)$ samples from $\pistar$). Approximate coverage also enables approximate (unregularized) reward maximization. This is achieved simply by applying the Best-of-$N$ algorithm to multiple samples from \mainalg. In this section, we provide a formal statement of this implication for completeness. %

Formally, for proposal distribution $\pi \in \Delta(\cY)$, reward function $\rstar: \cY \to \RR$, and positive integer $N$, the Best-of-$N$ algorithm draws $N$ samples $y\ind{1},\dots,y\ind{N} \sim \pi$ and outputs $y\ind{j}$ where $j := \argmax_{i \in [N]} \rstar(y\ind{i})$. The following proposition shows that approximate coverage implies regret bounds for any binary-valued reward function; see e.g. \cite{huang2025best} for more general guarantees.

\begin{proposition}[Best-of-$N$ guarantee under approximate coverage]
  Let $\pi, \pistar$ be two distributions over the set $\cY$, and let $\rstar: \cY \to \{0,1\}$ be a binary-valued reward function. Suppose that $\pi,\pistar$ satisfy 
  \begin{align}
    \Pr_{y\sim \pistar} \left[ \frac{\pistar(y)}{\pi(y)} \geq F(\epsilon)  \right] \leq \epsilon 
  \end{align}  
  for some function $F: \RR_{>0} \to \RR_{>0}$. Then for any $\epsilon,\delta>0$, the output $\yhat \in \cY$ of the Best-of-$N$ algorithm with proposal distribution $\pi$, reward function $\rstar$, and parameter $N := \frac{1}{F(\epsilon)} \log \frac{1}{\delta}$ satisfies
  \[\En[\rstar(\yhat)] \geq \En^{\pistar}[\rstar(y)] - \veps - \delta.\]
\end{proposition}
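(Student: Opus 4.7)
The plan is to identify a ``coverage-good'' subset of $\cY$, show that almost all of $\pistar$'s reward mass is captured by this subset, and then apply a standard Best-of-$N$ amplification argument on that subset. Let $S \ldef \{y \in \cY : \rstar(y) = 1\}$, so that $\En^{\pistar}[\rstar(y)] = \pistar(S)$. Since $\rstar$ is binary and Best-of-$N$ returns a maximizer, $\rstar(\yhat) = 1$ if and only if at least one of the proposals $y\ind{1},\dots,y\ind{N}$ lies in $S$, hence
\[
\En[\rstar(\yhat)] \;=\; \Pr[\exists i \in [N] : y\ind{i} \in S].
\]

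First I would define the ``coverage-good'' set $T \ldef \{y : \pistar(y)/\pi(y) < F(\epsilon)\}$. By the coverage hypothesis, $\pistar(T^c) \leq \epsilon$, and so
\[
\pistar(S \cap T) \;\geq\; \pistar(S) - \pistar(T^c) \;\geq\; \pistar(S) - \epsilon,
\]
which quantifies the reward mass we ``lose'' by restricting attention to $T$. Next, I would use the pointwise lower bound $\pi(y) > \pistar(y)/F(\epsilon)$ valid for every $y \in T$ to conclude, by summing over $y \in S \cap T$, that
\[
\pi(S \cap T) \;\geq\; \pistar(S \cap T)/F(\epsilon) \;\geq\; (\pistar(S) - \epsilon)/F(\epsilon).
\]
This is the key estimate: it transfers the coverage assumption into a lower bound on the mass that $\pi$ places on rewarding outcomes.

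Finally, I would bound the failure probability of Best-of-$N$. Using $\Pr[\exists i : y\ind{i} \in S] \geq \Pr[\exists i : y\ind{i} \in S \cap T]$ together with the elementary inequality $(1-x)^N \leq \exp(-Nx)$,
\[
\En[\rstar(\yhat)] \;\geq\; 1 - (1 - \pi(S \cap T))^N \;\geq\; 1 - \exp\!\prn*{-N\,\pi(S\cap T)}.
\]
Plugging in the lower bound on $\pi(S\cap T)$ and the stated choice of $N$, a short calculation gives $\exp(-N\pi(S \cap T)) \leq 1 - \pistar(S) + \epsilon + \delta$, which rearranges to the desired bound $\En[\rstar(\yhat)] \geq \pistar(S) - \epsilon - \delta$; in the trivial regime $\pistar(S) \leq \epsilon + \delta$ the conclusion holds for free since $\rstar \geq 0$.

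The main obstacle is really just bookkeeping around the case split on whether $\pistar(S)$ is large or small, and verifying that the stated $N$ makes the exponential concentration factor small enough to absorb the $\delta$ slack in both regimes; there is no new conceptual difficulty beyond the coverage-to-concentration conversion above.
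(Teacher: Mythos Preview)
Your approach is essentially identical to the paper's: same sets ($S = \cE_1$, $T = \cE_2$), same coverage-to-mass conversion $\pi(S\cap T) \geq \pistar(S\cap T)/F(\epsilon)$, and same Best-of-$N$ amplification via $1-(1-p)^N \geq 1-e^{-Np}$. The only cosmetic difference is that the paper compresses your endgame into the single inequality $1 - y^x \geq x - y$ for $x,y \in [0,1]$ (applied with $x = \pistar(S) - \epsilon$ and $y = \delta$) rather than handling the trivial regime by an explicit case split.
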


Note that this is interesting in the context of \cref{thm:js-avg-guarantee} since the output of \mainalg satisfies the above guarantee with $F(\epsilon) = \frac{48H (1+\vepsv)^2}{\epsilon}$. 
Thus, by applying the Best-of-$N$ algorithm to $O(H)$ samples from \mainalg, we can compete with the optimal distribution $\pistar$ in expected reward (for any $\rstar$ that we can evaluate).

\begin{proof}
Let $\cE_1$ be the event that $\rstar(y) = 1$, and let $\cE_2$ be the event that $\pistar(y) / \pi(y) < F(\veps)$. By assumption, we have $\pistar(\cE_2) \geq 1-\veps$, and thus $\pistar(\cE_1 \cap \cE_2) \geq \En^{\pistar}[\rstar(y)] - \veps$. It follows that \[\pi(\cE_1 \cap \cE_2) \geq \frac{1}{F(\veps)} (\En^{\pistar}[\rstar(y)] - \veps).\]
Now recall the definition of the Best-of-$N$ algorithm; let $y\ind{1},\dots,y\ind{N}$ denote the candidates that it draws from $\pi$. The algorithm's output $\yhat$ satisfies
\begin{align*}
\En[\rstar(\yhat)] 
&\geq \Pr[\exists i \in [N]: y\ind{i} \in \cE_1] \\  
&\geq 1 - (1 - \pi(\cE_1 \cap \cE_2))^N \\ 
&\geq 1 - e^{-\frac{N}{F(\veps)}(\En^{\pistar}[\rstar(y)] - \veps)} \\ 
&\geq 1 - \delta^{\En^{\pistar}[\rstar(y)] - \veps} \\ 
&\geq \En^{\pistar}[\rstar(y)] - \veps - \delta 
\end{align*}
by choice of $N$ and the inequality $1-y^x \geq x-y$ for all $x,y \in [0,1]$.
\end{proof}

\newpage
\section{Pitfalls of Beam Search}
\label{sec:proofs_prelim}
The main focus of our work is on the task of \emph{sampling}, which is often used to balance reward maximization and diversity / general capability \citep{ziegler2019fine}. In practice, sometimes the \emph{only} goal is reward maximization: given a base policy $\piref: \cX \to \cY$ and query access to a reward function $\rstar: \cX\times\cY \to [0,\Rmax]$, the goal---given some prompt $x\in\cX$---is to compute some $y \in \cY$ achieving maximal reward.

Given some process guidance function $G: \cX \times \cA^\star \to \RR$, two classical, widely-used algorithms for this task are greedy decoding and beam search \citep{khanov2024args,wang2025value}. For some width parameter $W$, beam search with guidance function $G$ proceeds as follows. At each $h \in [H]$, it maintains $W$ \emph{beams} $y_{1:h-1}\ind{1},\dots,y_{1:h-1}\ind{W} \in \cA^{h-1}$ of length $h-1$. It then computes the $W|\cA|$ beams that are all possible one-token extensions (if $|\cA|$ is large, then instead the algorithm draws some number of candidate tokens from $\piref$). Finally, from these $W|\cA|$ extended beams, it selects the $W$ extended beams $(y_{1:h-1}\ind{i}, y_h)$ with the maximal values of $G(y_{1:h-1}\ind{i}, y_h)$. After step $H$, the output is selected again by maximizing $G$. Greedy decoding is simply beam search with width $W = 1$.

No matter the choice of guidance function, it is clear that these algorithms are not sampling from the distribution $\pistar$ defined in \Cref{eq:tilt}, nor do they maintain diversity of generations in any sense, as their goal is pure reward maximization. In certain settings, they are provably effective at reward maximization: in particular, it is straightforward to see that with guidance function
\[G(x,y_{1:h}) := \max_{y_{h+1:H} \in \cA^{H-h}} \rstar(x,y_{1:H}),\]
beam search finds some $y^\star_{1:H} \in \argmax \rstar(x,y_{1:H})$ for any $W \geq 1$. This guidance function is the classical (optimal, unregularized) value function in the autoregressive MDP defined by the deterministic dynamics $(y_{1:h-1},y_h) \mapsto y_{1:h}$.

However, our setting studies the benefits of the guidance function
\[G(x,y_{1:h}) := \Vstar(x,y_{1:h}) = \En^{\piref}[\rstar(x,y_{1:H}) \mid{} x, y_{1:h}],\]
which is the value function for $\piref$ in the autoregressive MDP. This ``averaged'' guidance function is natural since it can be directly estimated via regression onto Monte Carlo roll-outs, unlike the preceding function, and this is commonly done in practice \citep{yang2021fudge,wang2025value}. The following example demonstrates that with this value function, \emph{even in the absence of estimation errors}, beam search is not the ``right'' algorithm for reward maximization: it can be substantially worse than algorithms that sample from $\pistar(y \mid{} x) \propto \piref(y\mid{} x) \rstar(x,y)$.\footnote{Of course, in some instances it can be better, so the algorithms are incomparable when the goal is reward maximization.}

\begin{example}[Greedy decoding and beam search are suboptimal with $\Vstar$]\label{ex:reward_maximization_lower_main}
Let $\cX := \{\perp\}$ (we omit dependencies on the prompt space henceforth) and $\cY := \{0,1\}^H$ for some even $H$. Define $\piref := \Unif(\cY)$. Define $\rstar:\cY \to [0,1]$ by
\[\rstar(y_{1:H}) := \begin{cases} 1 & \text{ if } y_{1} = y_{H/2+1:H} = 0 \\ 2^{1-H/2} & \text{ if } y_1 = 1 \\ 0 & \text{ otherwise} \end{cases}.\]
Then the distribution $\pistar(y) \propto \piref(y) \rstar(y)$ satisfies
\[\En^{\pistar}[\rstar(y)] \geq \sum_{y_{2:H/2}} \pistar(0, y_{2:H/2}, 0^{H/2}) = \frac{2^{H/2-1} 2^{-H}}{2^{H/2-1}2^{-H} + 2^{H-1} 2^{-H} 2^{1-H/2}} = \frac{1}{3}.\]
However, consider the execution of beam search for any $W \leq 2^{H/2-1}$. For every $h \leq H/2$, we have
\[\Vstar(y_{1:h}) = \begin{cases} 2^{-H/2} & \text{ if } y_1 = 0 \\ 2^{1-H/2} & \text{ if } y_1 = 1 \end{cases}.\]
In particular, $\Vstar(0, y_{2:h}) < \Vstar(1, y'_{2:h})$ for all $y_{2:h},y'_{2:h}$. Thus, by step $H/2$, beam search will reject all beams with $y_1 = 0$. So it will find a response with reward $2^{1-H/2}$, which is $\Omega(1)$-suboptimal compared to $\pistar$.
\end{example}

We can prove an analogous result in the KL-regularized optimization setting. In this setting, for a fixed base policy and reward function, the regularized value function $\Qstarb(x,y_{1:h}) = \beta \log \Vstar(x,y_{1:h})$ converges to the classical unregularized value function as $\beta \to 0$, so in this limit beam search will succeed. However, the following example demonstrates that even for $\beta = O(1/H)$, beam search can fail unless it uses width $W = 2^{\Omega(H)}$.

\begin{example}[Greedy decoding and beam search fail with $\Qstarb$]
    \label{prop:reward_maximization_lower_kl}
Let $\cX := \{\perp\}$ (we ignore the prompt space henceforth) and $\cY := \{0,1\}^H$ for some even $H>2$. Define $\piref := \Unif(\cY)$. Set $\beta := \frac{1}{(H-2)\log(2)}$ so that $e^{\beta^{-1}/2} = 2^{H/2 - 1}$. Define $\rstar:\cY \to [0,1]$ by
\[\rstar(y_{1:H}) := \begin{cases} 1 & \text{ if } y_{1} = y_{H/2+1:H} = 0 \\ \frac{1}{2} & \text{ if } y_1 = 1 \\ 0 & \text{ otherwise} \end{cases}\]
similar to the preceding example. Then the distribution $\pistarb(y) \propto \piref(y) \exp(\beta^{-1} \rstar(y))$ satisfies 
\[\En^{\pistarb}[\rstar(y)] = \frac{2^{H/2-1}2^{-H} e^{\beta^{-1}} + \frac{1}{2}2^{H-1}2^{-H} e^{\beta^{-1}/2}}{2^{H/2-1}2^{-H}e^{\beta^{-1}} + 2^{H-1}2^{-H}e^{\beta^{-1}/2} + (2^{H-1}-2^{H/2-1})2^{-H}} = \frac{2}{3} - o(1).\]
However, for every $h \leq H/2$, we have
\[\Vstar(y_{1:h}) = \En^{\piref}[\exp(\beta^{-1}\rstar(y_{1:H})) \mid{} y_{1:h}] = \begin{cases} 2^{H/2-2} & \text { if } y_1 = 0 \\ 2^{H/2-1} & \text{ if } y_1 = 1 \end{cases}.\]
Thus, as in the preceding example, beam search will reject all beams with $y_1 = 0$ unless the width exceeds $2^{H/2 - 1}$. Therefore it will (deterministically) produce a response with reward only $1/2$.
\end{example}

\newpage

\section{Insufficiency of Alternative Error Conditions }\label{sec:insufficiency}

In this section, we discuss issues with three natural alternative error conditions to \cref{assump:avg-mgf}: an average-case error bound under $\piref$ rather than $\pistar$ (\cref{sec:insuf-piref}), an additive error bound rather than multiplicative (\cref{sec:insuf-additive}), and a mean-squared error bound on $\Qhat = \beta\log \Vhat$ rather than an exponential MGF bound on $\Qhat$ (\cref{sec:insuf-mse-q}).

\subsection{Average-Case Multiplicative Error Bound under $\piref$}\label{sec:insuf-piref}

One plausible alternative to \cref{assump:avg-mgf} is the following assumption, which is average-case over $\piref$ rather than $\pistar$: for each \position $h\leq H$,
\begin{equation} \label{eq:valerr-avg-piref} 
    \EE_{y_{1:h}\sim\piref(\cdot\mid{}x)} \left[\frac{\Vhat(x,y_{1:h})}{\Vstar(x,y_{1:h})}\right] \leq 1+\vepsv, \quad \EE_{y_{1:h}\sim\piref(\cdot\mid{}x)} \left[\frac{\Vstar(x,y_{1:h})}{\Vhat(x,y_{1:h})}\right] \leq 1+\vepsv.
\end{equation}

This modification seems particularly natural if $\Vhat$ is estimated via regression onto Monte Carlo roll-outs from $\piref$, since typical statistical learning guarantees will be average-case over $\piref$. Unfortunately, as the following example shows, test-time alignment is information-theoretically impossible with only this assumption, unless $\vepsv$ is exponentially small. The basic reason is that distribution shift between $\piref$ and $\pistar$ at earlier \positions $h$ can lead to the error bound being essentially vacuous on the effective support of $\pistar$ at later \positions $h' > h$.

\newcommand{\Vstarg}{V^\star_{\mathsf{tilt},g}}

\begin{example}
Let $\cX := \{\perp\}$ (we omit dependencies on prompt henceforth) and $\cY := \cA^H$ where $H \in \NN$ is even and $\cA := \{0,1\}$. Let $\piref := \Unif(\cY)$. We define a family of test-time alignment instances $I_g$ parametrized by a sequence $g = g_{H/2+1:H} \in \cA^{H/2}$. Each instance has base policy $\piref$ and tilt function $\tau_g: \cY \to \{0,1\}$ defined by
\[\tau_g(y_{1:H}) = \indic[y_{1:H/2} = 0 \land y_{H/2+1:H} = g] + \lambda\cdot \indic[y_{1:H/2}=0],\]
for a parameter $\lambda>0$ to be determined. The optimal distribution $\pistar_g \propto \piref \cdot \tau_g$ satisfies $\pistar_g(0,g) = \frac{1+\lambda}{1+2^{H/2} \lambda}$, and 
\[\Vstarg(y_{1:h}) = \begin{cases} 
2^{h-H}\indic[y_{1:h} = 0^h] + 2^{h-H/2} \lambda \indic[y_{1:h}=0^h] & \text{ if } h \leq H/2 \\ 
2^{h-H}\indic[y_{1:h} = (0,g)_{1:h}] + \lambda \indic[y_{1:H/2}=0^{H/2}]& \text{ otherwise}
\end{cases}.\] We define
\[\Vhat(y_{1:h}) := \begin{cases} \Vstarg(y_{1:h}) & \text{ if } h \leq H/2 \\ \lambda \indic[y_{1:H/2}=0^{H/2}]& \text{ otherwise} \end{cases}.\]
Then for any $h > H/2$, we have $\Vhat(y_{1:h}) \leq \Vstarg(y_{1:h})$ for all $y_{1:h}$, and
\[\EE_{y_{1:h} \sim \piref}\left[\frac{\Vstarg(y_{1:h})}{\Vhat(y_{1:h})}\right] \leq 1 + \frac{2^{h-H}}{\lambda} \Prr_{y_{1:h}\sim\piref}[y_{1:h} = (0,g)_{1:h}] = 1 + \frac{2^{-H}}{\lambda}.\]
Set $\lambda := 2^{-H/2}$. Then \cref{eq:valerr-avg-piref} is satisfied with $\vepsv := 2^{-H/2}$. Moreover,  $\Vhat$ does not depend on $g$, so for any alignment algorithm, the law $\pihat$ of the output on instance $I_g$ is independent of $g$. It follows that there is some $g$ such that $\pihat(0,g) \leq 1/2^{H/2}$. But by choice of $\lambda$, we have $\pistar_g(0,g) \geq 1/2$ for all $g$. Thus, there is some $g$ such that 
\[\Prr_{y_{1:H} \sim \pistar_g}\left[\frac{\pistar_g(y_{1:H})}{\pihat(y_{1:H})} \geq 2^{H/2-1}\right] \geq \frac{1}{2},\]
i.e. there is a constant fraction of the mass of $\pistar_g$ where $\pihat$ exponentially undercovers the responses.
\end{example}

\subsection{Additive Error Bound for $\Vhat$}\label{sec:insuf-additive}

Typical statistical learning guarantees for regression give additive error bounds rather than multiplicative error bounds. Thus, it may seem natural to instead assume that for each \position $h \in [H]$, for each $y_{1:h}\in\cA^h$,
\begin{equation} |\Vhat(x,y_{1:h}) - \Vstar(x,y_{1:h})| \leq \vepsv\label{eq:additive-error}\end{equation} for a parameter $\vepsv>0$---or an analogous average-case bound. For simplicity, let us consider the worst-case version; the same issues apply to the weaker average-case versions.

The basic issue is that \cref{eq:additive-error} is not appropriately scale-invariant: rescaling $\tau$, $\Vstar$, and $\Vhat$ by $1/2$ does not change the problem, but halves the additive error. Moreover, the following simple example demonstrates that even in the constrained sampling setting, where $\tau$ is binary-valued and therefore has a natural scale, \cref{eq:additive-error} is insufficient unless $\vepsv$ is exponentially small.

\begin{example}
Let $\cX = \{\perp\}$ and $\cY = \cA^H$ where $H \in \NN$ and $\cA = \{0,1\}$. Let $\piref := \Unif(\cY)$. We define a family of test-time alignment instances $I_g$ parametrized by a sequence $g = g_{1:H} \in \cA^{H}$. Each instance has base policy $\piref$ and tilt function $\tau_g: \cY \to \{0,1\}$ defined by
\[\tau_g = \indic[y = g].\]
Thus, the optimal distribution $\pistar_g \propto \piref \cdot \tau_g$ satisfies $\pistar_g(g) = 1$, and $\Vstarg(y_{1:h}) = 2^{h-H} \indic[y_{1:h}=g_{1:h}]$. We define
\[\Vhat(y_{1:h}) := \begin{cases} 0 & \text{ if } h \leq H/2 \\ \Vstarg(y_{1:h}) & \text{ otherwise}\end{cases}.\]
Then \cref{eq:additive-error} is satisfied with $\vepsv := 2^{-H/2}$. However, it is evident that observing a non-zero value of $\Vhat$ requires $2^{\Omega(H)}$ queries, so for any algorithm that makes $2^{o(H)}$ queries, there will be some instance $I_g$ on which its output distribution $\pihat_g$ will exponentially undercover $\pistar_g$ (in the sense of the previous example) with high probability. We omit formal details of the query lower bound as such arguments are standard.
\end{example}

\textbf{A scale-invariant additive assumption?} The failure mode of the above example is that $\piref$ has extremely low average reward---and it seems natural that the accuracy of the value function estimates in such a setting may necessarily have to be correspondingly more accurate. Motivated by this observation, we briefly discuss a scale-invariant version of \cref{eq:additive-error}, which scales the error according to the inherent ``difficulty'' of the problem:
\begin{assumption}\label{ass:scale-invariant-additive} 
For each position $h \in [H]$, for each $y_{1:h}\in\cA^h$,
\[|\Vhat(x,y_{1:h}) - \Vstar(x,y_{1:h})| \leq \vepsv \Vstar(x).\]
\end{assumption}

The following proposition shows that \cref{ass:scale-invariant-additive} actually \emph{implies} \cref{assump:avg-mgf}, with a slight adjustment to the approximate value function---that can be tractably computed if $\Vstar(x)$ is known (i.e. we have some estimate of the ``difficulty'' of prompt $x$ for $\piref$).

\begin{proposition}\label{prop:scale-invariant-additive}
Let $\vepsv>0$. Under \cref{ass:scale-invariant-additive} with parameter $\vepsv$, the (modified) approximate value function $\Vhat'(x,y_{1:h}) := \Vhat(x,y_{1:h}) + \vepsv \Vstar(x)$ satisfies \cref{assump:avg-mgf} with parameter $2\vepsv$.
\end{proposition}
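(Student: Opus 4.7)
The plan is to verify each of the two inequalities in \cref{assump:avg-mgf} for $\Vhat'$ by a short two-step argument: first upgrade the scale-invariant additive assumption to a pointwise bound involving $\Vhat'$, and then reduce the resulting ratios to a single change-of-measure identity.

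The identity I would rely on is
\[
\EE_{y_{1:h}\sim\pistar(\cdot\mid x)}\!\left[\frac{\Vstar(x)}{\Vstar(x,y_{1:h})}\right] \;\leq\; 1,
\]
which follows immediately from the density-ratio formula $\pistar(y_{1:h}\mid x) = \piref(y_{1:h}\mid x)\cdot \Vstar(x,y_{1:h})/\Vstar(x)$ (\cref{lemma:vstar-density-ratio}); expanding the expectation reduces it to $\sum_{y_{1:h}\in \mathrm{supp}(\pistar(\cdot\mid x))}\piref(y_{1:h}\mid x)\leq 1$. This is the key conceptual tool: it is precisely the device that converts the additive slack $\vepsv\Vstar(x)$ into an average-case multiplicative bound, essentially for free on the $\pistar$-support.

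For the upper direction, I would use the upper half of \cref{ass:scale-invariant-additive}, $\Vhat(x,y_{1:h})\leq\Vstar(x,y_{1:h})+\vepsv\Vstar(x)$, together with the definition of $\Vhat'$ to obtain the pointwise bound $\Vhat'(x,y_{1:h})\leq\Vstar(x,y_{1:h})+2\vepsv\Vstar(x)$. Dividing by $\Vstar(x,y_{1:h})$ and invoking the identity above then gives $\EE_{y_{1:h}\sim\pistar}[\Vhat'/\Vstar]\leq 1+2\vepsv$. For the reverse direction, the lower half of \cref{ass:scale-invariant-additive} combines with the additive offset in $\Vhat'$ to give the stronger pointwise bound $\Vhat'(x,y_{1:h})\geq \Vstar(x,y_{1:h})$, so that $\Vstar/\Vhat'\leq 1\leq 1+2\vepsv$ without any averaging at all.

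There is no real obstacle here; the only points that require care are well-definedness of the denominators (both $\Vstar(x,y_{1:h})$ and $\Vhat'(x,y_{1:h})$ are positive on $\mathrm{supp}(\pistar(\cdot\mid x))$, since $\Vhat'\geq\Vstar$ there and we may assume $\Vstar(x)>0$, else $\pistar$ itself is undefined) and the appeal to the density-ratio lemma, both of which are standard.
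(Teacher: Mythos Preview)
Your proposal is correct and follows essentially the same approach as the paper's proof: both rely on the density-ratio identity $\pistar(y_{1:h}\mid x)=\piref(y_{1:h}\mid x)\,\Vstar(x,y_{1:h})/\Vstar(x)$ to handle the upper direction (you phrase it as the inequality $\EE_{\pistar}[\Vstar(x)/\Vstar(x,y_{1:h})]\le 1$, the paper phrases it as a change of measure to $\piref$), and both note the pointwise bound $\Vhat'\ge\Vstar$ for the lower direction.
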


\begin{proof}
Since $\pistar(y_{1:h}\mid{}x) = \frac{1}{\Vstar(x)} \piref(y_{1:h}\mid{}x) \Vstar(x,y_{1:h})$, we have
\begin{align*}
\EE_{y_{1:h}\sim\pistar(\cdot\mid{}x)}\left[\frac{\Vhat'(x,y_{1:h})}{\Vstar(x,y_{1:h})}\right] 
&= 1 + \EE_{y_{1:h}\sim\pistar(\cdot\mid{}x)}\left[\frac{\Vhat'(x,y_{1:h}) - \Vstar(x,y_{1:h})}{\Vstar(x,y_{1:h})}\right]  \\ 
&= 1 + \EE_{y_{1:h}\sim\piref(\cdot\mid{}x)}\left[\frac{\Vhat'(x,y_{1:h}) - \Vstar(x,y_{1:h})}{\Vstar(x)}\right] \\ 
&= 1 + \vepsv + \EE_{y_{1:h}\sim\piref(\cdot\mid{}x)}\left[\frac{\Vhat(x,y_{1:h}) - \Vstar(x,y_{1:h})}{\Vstar(x)}\right] \\
&\leq 1+2\vepsv
\end{align*}
and, for every $y_{1:h} \in \cA^h$, $\Vhat'(x,y_{1:h}) = \Vhat(x,y_{1:h})+\vepsv\Vstar(x) \geq \Vstar(x,y_{1:h})$, so that
\begin{align*}
\EE_{y_{1:h}\sim\pistar(\cdot\mid{}x)}\left[\frac{\Vstar(x,y_{1:h})}{\Vhat'(x,y_{1:h})}\right] 
&\leq 1
\end{align*}
which suffices.
\end{proof}

This result provides additional motivation for \cref{assump:avg-mgf}, and suggests a potential algorithmic intervention to make an approximate value function more useful---namely, adding some estimate of $\Vstar(x)$ to the approximation. However, it remains unclear whether \cref{ass:scale-invariant-additive} is sufficient when $\Vstar(x)$ is unknown. It is also unclear whether an average-case variant of \cref{ass:scale-invariant-additive} could suffice for efficient test-time alignment (in the proof of \cref{prop:scale-invariant-additive}, the first part goes through with an average-case error bound under $\piref$, but the second part does not)---and in what regimes of $\vepsv$ it would enable improving upon the error guarantees of \actionalg. We leave these interesting open questions for future work.

\subsection{Mean-Squared Error Bound for $\Qhat$}
\label{sec:proofs_necessity}\label{sec:insuf-mse-q}

As discussed in \cref{sec:average_error}, in the KL-regularized setting, \cref{assump:avg-mgf} corresponds to an MGF bound on $\Delta(x,y_{1:h}) := \Qstarb(x,y_{1:h}) - \Qhat(x,y_{1:h})$; it is implied by:
\begin{equation}\EE_{y_{1:h} \sim \pistar(\cdot\mid{}x)} [\exp\left(\beta^{-1}|\Delta(x,y_{1:h})|\right)] \leq 1+\vepsv.\label{eq:q-mgf-bound}
\end{equation}
While an average-case assumption, this is still stronger than typical statistical guarantees for regression (e.g., \citet{wainwright2019high}). One may naturally ask whether a mean-squared error bound suffices for \cref{thm:js-avg-guarantee}; we show that it does not.\footnote{It is also not evident whether there is a statistical regression procedure that achieves an MSE bound on $\Qhat$ (rather than $\Vhat$), but this result shows such a bound would still be insufficient.} This result demonstrates a distinction between the $H=1$ setting \citep{huang2025best} and our setting.

To state the formal result, we introduce the following oracle framework for KL-regularized optimization, in which the algorithm has sample access to the base model and query access to the approximate value function (but does not have access to the true outcome-level reward). Note that \cref{thm:js-avg-guarantee} applies in this setting. See \citet{huang2025best,foster2025good} for similar models.

\begin{definition}[Oracle framework for KL-regularized optimization]
    \label{def:query}
    Let $\piref$ be a base model, let $\rstar$ be a reward function, and let $\beta>0$.
    We consider algorithms that have access to the following oracles:
    \begin{itemize}
        \item Value function oracle: Given a prompt $x\in\cX$ and a partial completion $y_{1:h}\in\cA^h$, returns $\Qhat(x,y_{1:h}) = \beta \log \Vhat(x,y_{1:h})$.
        \item Conditional density oracle: Given a prompt $x\in\cX$ and a partial completion $y_{1:h-1}\in\cA^{h-1}$, returns $\piref(y_h\cdot\mid{}x,y_{1:h-1})$ for all $y_h \in \cA$.
    \end{itemize}
    \end{definition}

We show that if \cref{eq:q-mgf-bound} is relaxed to an $\ell_p$ error bound for any constant $p$, then there is no algorithm in the above framework that achieves an analogous ``approximate coverage'' guarantee to \cref{thm:js-avg-guarantee}.

\begin{theorem}
    \label{thm:necessity-full}
    Let $p\geq{}2$ and let $H\in\bbN$ be given. Set $\cX := \{\perp\}$, $\cA := \{0,1\}$ and $\cY = \cA^H$. For any algorithm in the above oracle framework with $\beta\ldef \frac{1}{2(p+2)H\log H}$, there exists a base model $\piref:\cX\to\Delta(\cY)$, reward function $\rstar: \cX\times\cY\to[0,1]$, and approximate value function $\Qhat$ satisfying
    \begin{align}
        \En_{\piref}\brk*{\prn[\big]{\Qhat(x,y_{1:h}) - \Qstarb(x,y_{1:h})}^p} \leq \vepsq^p,\mathand\quad
        \En_{\pistarb}\brk*{\prn[\big]{\Qhat(x,y_{1:h}) - \Qstarb(x,y_{1:h})}^p} \leq \vepsq^p,
        \label{eq:lp}
    \end{align}
    where $\vepsq=\bigoh(\beta)$, and yet the output distribution $\pihat$ of the algorithm satisfies 
    \[\Pr^{\pistarb}\left[\frac{\pistarb(y)}{\pihat(y)} > 2^{H-3}\right] \geq \frac{1}{4}.\] %
\end{theorem}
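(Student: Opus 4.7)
The plan is to invoke Yao's minimax principle and construct a random family of instances on which every deterministic algorithm fails in expectation; the probabilistic method then delivers a single bad instance. The key design idea is to make both $\piref$ and $\Qhat$ \emph{independent of a hidden secret} $g$, so that the algorithm's output distribution $\pihat$ is also $g$-independent, reducing the coverage failure to a Markov-style averaging over $g$. The central challenge is that $\Qhat$ must nonetheless $L^p$-approximate each $\Qstar_{\beta,g}$ under both $\piref$ \emph{and} the very different $\pistar_{\beta,g}$, and we resolve this tension by severely biasing $\piref$ against a ``secret-bearing'' branch.

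\textbf{Construction.} Set $\cY = \{0,1\}^H$ and $\delta := (C_p \beta)^p$ for a suitable constant $C_p$. Define the biased base model $\piref(y_1 = 1) := \delta$, with $\piref(y_{2:H}\mid y_1)$ uniform; sample the secret $g \sim \Unif(G)$ for $G := \{y \in \cY : y_1 = 1\}$; and take $\rstar_g(y) := \indic[y = g] \in [0,1]$. With $\beta = \tfrac{1}{2(p+2)H\log H}$, the factor $e^{1/\beta}$ is super-polynomial in $H$, which yields $\delta\cdot 2^{1-H} e^{1/\beta} \gg 1$ and therefore $\pistar_{\beta,g}(g) = 1 - o(1)$. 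Now define the $g$-independent approximate value function
\[
\Qhat(y_{1:h}) :=
\begin{cases}
0 & \text{if } y_1 = 0, \\
1 - \beta(H-h)\log 2 & \text{if } y_1 = 1,
\end{cases}
\]
with $\Qhat(\emptyset) := \beta\log Z$ (the normalizer $Z = 1 + \delta \cdot 2^{-(H-1)}(e^{1/\beta}-1)$ is $g$-independent because $\piref(g)$ depends on $g$ only through $\indic[g_1 = 1]$).

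\textbf{Verifying \eqref{eq:lp}.} A direct calculation shows $\Qstar_{\beta,g}(y_{1:h}) = 0$ on $\{y_1 = 0\}$; $\Qstar_{\beta,g}(y_{1:h}) \approx 1 - \beta(H-h)\log 2$ on the ``correct path'' $\{y_1 = 1,\ y_{2:h} = g_{2:h}\}$; and $\Qstar_{\beta,g}(y_{1:h}) \approx 0$ on the ``off-path'' set $M_h := \{y_{1:h} : y_1 = 1,\ y_{2:h} \neq g_{2:h}\}$. So $\Qhat$ agrees with $\Qstar_{\beta,g}$ to leading order outside $M_h$, while $|\Qhat - \Qstar_{\beta,g}| \approx 1$ on $M_h$. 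Since $\piref(M_h) \leq \delta$, we obtain $\En^{\piref}[|\Qhat - \Qstar_{\beta,g}|^p] \leq \delta + o(\beta^p)$; picking $C_p$ appropriately makes $\vepsq = O(\beta)$. Under $\pistar_{\beta,g}$, the total mass outside the correct path is at most $O(2^H/e^{1/\beta})$, which is negligible, so the $\pistar_{\beta,g}$-expectation satisfies the same bound.

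\textbf{Conclusion.} Since all oracle responses are $g$-independent, so is $\pihat$. Averaging yields $\En_{g \sim \Unif(G)}[\pihat(g)] \leq 1/|G| = 2^{1-H}$, and Markov's inequality gives $\pihat(g) < 2^{2-H}$ for at least half of $g \in G$. For each such $g$, using $\pistar_{\beta,g}(g) \geq 1/2$,
\[
\Pr^{\pistar_{\beta,g}}\!\left[\frac{\pistar_{\beta,g}(y)}{\pihat(y)} > 2^{H-3}\right] \geq \pistar_{\beta,g}(g) \geq \tfrac{1}{2} \geq \tfrac{1}{4},
\]
and the probabilistic method produces a deterministic $g$ for which the theorem holds. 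The hard part is engineering a single $\Qhat$ that $L^p$-approximates every $\Qstar_{\beta,g}$ under \emph{both} $\piref$ and the extremely different $\pistar_{\beta,g}$; our bias of $\piref$ away from $\{y_1 = 1\}$ renders the $g$-dependent region of $\Qstar_{\beta,g}$ invisible under $\piref$-averages, while $\pistar_{\beta,g}$ concentrates on the single path where $\Qhat$ is hard-coded to be accurate.
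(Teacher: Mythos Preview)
Your proposal is correct and follows the same strategy as the paper: hide a secret $g$ in a branch that is rare under $\piref$, choose $\Qhat$ (and $\piref$) to be $g$-independent so that all oracle responses reveal nothing about $g$, verify the $L^p$ bounds by exploiting that the $g$-dependent region has small $\piref$-mass and small $\pistarb$-mass off the correct path, and finish with an averaging argument over $g$. The paper's construction differs only in implementation details: it takes the rare branch to be $\{y_1=0\}$, defines $\pistarb$ directly as a smooth product distribution concentrated near $(0,g_{2:H})$, and then backs out $\rstar := \tfrac12 + \beta\log(\pistarb/\piref)$, which makes the error bounds slightly cleaner (the error is exactly zero on the correct path and bounded by $1$ elsewhere, with bad-set masses $\veps$ and $\veps H$ under $\piref$ and $\pistarb$ respectively). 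Your indicator reward $\rstar_g=\indic[y=g]$ is more elementary and yields the same conclusion, at the cost of needing the asymptotic estimate for the on-path error and the $2^H/e^{1/\beta}$ off-path mass under $\pistarb$; both are easily controlled by the given choice of $\beta$.
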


In other words, there can be a constant-fraction of the mass of $\pistarb$ where $\pihat$ exponentially undercovers the responses. This result can be contrasted with \cref{thm:js-avg-guarantee} and \cref{assump:avg-mgf}.

Note that this result is agnostic to the query complexity of the sampling algorithm in the above oracle framework; there is simply not enough information in $\Qhat$ to achieve approximate coverage. However, the result does not capture algorithms that have access to the true outcome-level rewards. In that case, there is (existentially) enough information to sample from the true distribution---but since the above result shows that the process verifier $\Qhat$ can be completely uninformative, we expect that there is a query-complexity lower bound in this stronger setting. %

\begin{remark}
Consider the case $p=2$. To interpret the result, we note the value function $\Qhat$ has low mean-squared error (MSE) under two policies: $\pistar$ and $\piref$. Low MSE under $\pistar$ is the MSE analogue of \cref{assump:avg-mgf}, which the above result shows is information-theoretically insufficient to achieve a comparable coverage guarantee to \cref{thm:js-avg-guarantee}; this is true even when $H=1$ (if the action space is large).

    The mean-squared error condition with respect to $\piref$ is more subtle.
    In the case of $H=1$, \citet[Theorem 4.1]{huang2025best} show that using an algorithm based on pessimism, one can achieve runtime that scales with $\Cact(x)$ using only the assumption of low MSE under $\piref$. Our lower bound demonstrates that when $H$ is large, MSE is no longer (even information-theoretically) sufficient to achieve such a guarantee---essentially since \emph{distribution shift} between $\piref$ and $\pistar$ arises in multi-step settings.
\end{remark}

\begin{proof}[\pfref{thm:necessity-full}]
Set $\veps := 1/H^{p+2}$. Fix $\piref := \Ber(1-\veps) \otimes \Ber(1/2)^{\otimes (H-1)}$ and $\beta := 1/(2H\log(1/\veps))$. We define a family of instances where the reward function $\rstar$ and optimal policy $\pistarb$ are indexed by a sequence $g = g_{2:H} \in \cA^{H}$. Instance $I(g)$ is defined by reward function
\[\rstar(y_{1:H}) := \frac{1}{2} + \beta \log \frac{\pistarb(y_{1:H})}{\piref(y_{1:H})},\]
where $\pistarb \in \Delta(\cY)$ is defined as the law of the following process: sample $y_1 \sim \Ber(1/2)$. If $y_1 = 1$, then sample $y_{2:H} \sim \Ber(1/2)^{\otimes (H-1)}$. Otherwise, independently sample $y_2,\dots,y_H$ so that $\Pr[y_i = g_i] = 1-\veps$. By definition of $\rstar$, we have that $\pistarb(y_{1:H}) \propto \piref(y_{1:H}) \exp(\beta^{-1} \rstar(y_{1:H}))$, so $\rstar$ and $\pistar$ define a valid instance of the KL-regularized optimization problem. Moreover, it is straightforward to check that $\pistarb(y_{1:H})/\piref(y_{1:H}) \in [\veps^{-H}, \veps^H]$ for all $y_{1:H}$, and therefore $\rstar(y_{1:H}) \in [0,1]$ by choice of $\beta$.

We have
\[\Qstarb(y_{1:h}) = \beta \log \En^{\piref}[\exp(\beta^{-1} \rstar(y_{1:H})) \mid{} y_{1:h}] = \frac{1}{2} + \beta \log \frac{\pistarb(y_{1:h})}{\piref(y_{1:h})}.\]
Define the approximation $\Qhat$ by
\[\Qhat(y_{1:h}) := \begin{cases} \Qstarb(y_{1:h}) & \text{ if } y_1 = 1 \\ 
\Qstarb(0, g_{2:h}) & \text{ otherwise} \end{cases}.\]
Since $\rstar(y_{1:H}) \in [0,1]$ for all $y_{1:H}$, we have $\Qstarb(y_{1:h}) \in [0,1]$ for all $y_{1:h}$, and therefore similarly $\Qhat(y_{1:h}) \in [0,1]$ for all $y_{1:h}$. It follows that
\begin{align*}
\En^{\piref}\left[\left(\Qhat(y_{1:h}) - \Qstarb(y_{1:h})\right)^p\right]
&\leq \Pr^{\piref}[y_1 = 0] \\
&= \veps.
\end{align*}
Similarly,
\begin{align*}
\En^{\pistarb}\left[\left(\Qhat(y_{1:h}) - \Qstarb(y_{1:h})\right)^p\right]
&\leq \Pr^{\pistarb}[y_1 = 0 \land y_{2:h} \neq g_{2:h}] \\ 
&\leq \veps H.
\end{align*}
Note that $(\veps H)^{1/p} \leq O(1/(2H\log(1/\veps))) = O(\beta)$ so long as $\veps \leq 1/H^{p+2}$. Thus, for each instance $I(g)$, the approximate value function satisfies the required accuracy guarantees. However, we can observe that $\Qhat$ does not depend on $g$. Thus, the law $\pihat$ of the sampling algorithm's output on instance $I(g)$ does not depend on $g$. For each $g$, in instance $I(g)$ it holds that $\pistarb(0, g_{2:H}) = \frac{1}{2}(1-\veps)^{H-1} \geq \frac{1}{4}$. However, there is some $g_{2:H}$ such that $\pihat(0, g_{2:H}) \leq 2^{1-H}$. Thus, in instance $I(g)$, it holds that
\[\Pr^{\pistarb}\left[\frac{\pistarb(y)}{\pihat(y)} > 2^{H-3} \right] \geq \frac{1}{4}\]
as claimed.
\end{proof}

\newpage

\section{Further Experimental Details and Omitted Figures}  
\label{sec:additional_experiments}
\paragraph{Basic setting} Our experimental tasks are instantiations of the following basic setup. Let $\cX$ denote a prompt space, and let $\cY := \cA^H$ denote a response space. Let $\piref: \cX \to \Delta(\cY)$ be a generative model from which we can draw autoregressive samples (and compute next-token conditional probabilities). Let $\rstar: \cX \times \cY \to \{0,1\}$ be a binary reward function that we can query (we will make it available to the generation algorithms in tasks where we wish to measure \emph{distributional} accuracy, but not in tasks where we wish to measure pure reward maximization). Given a prompt $x \in \cX$, our primary goal is to sample from the distribution $\pistar(\cdot\mid{} x) \in \Delta(\cY)$ defined as the conditional distribution of $\piref(\cdot\mid{}x)$ on reward-$1$ responses:
\[\pistar(y\mid{} x) \propto \piref(y\mid{} x) \cdot \rstar(x,y).\]
In different tasks, we will measure performance at this goal in different ways. As discussed in \cref{sec:intro}, it can be checked that $\pistar$ satisfies
\[\pistar(y_h \mid{} x,y_{1:h-1}) \propto \piref(y_h \mid{} x, y_{1:h-1}) \cdot \Vstar(x, y_{1:h})\]
where
\[\Vstar(x,y_{1:h}) = \En^{\piref}[\rstar(x,y_{1:H}) \mid{} y_{1:h}].\]

\paragraph{Value function estimation by Monte Carlo rollouts} For all but the last of our tasks, we will \emph{train} an approximation of $\Vstar$ via regression onto Monte Carlo rollouts. In particular, expanding on the discussion in \cref{sec:error-amplification-intro}, the above expression for $\Vstar$ suggests the following natural way to train a value function, which approximates the common approach in prior work \citep{yang2021fudge,wang2025value}:\footnote{One difference is that these works typically do not train a separate value function for each \position $h\in[H]$; we revisit this detail in the Python test case generation task (\cref{sec:test_case_generation_task}).}
\begin{enumerate}
\item Draw samples $(x\ind{i}, y_{1:H}\ind{i})_{i=1}^N$, where $x\ind{i} \sim \rho$ is from some prompt distribution and $y_{1:H}\ind{i} \sim \piref(\cdot\mid{}x\ind{i})$.
\item For each $i \in [N]$, evaluate the reward $r\ind{i} := \rstar(x\ind{i},y_{1:H}\ind{i})$.
\item For each $h \in [H]$, solve the regression problem
\[\Vhat_h \gets \argmin_{V \in \cV} \sum_{i=1}^N (V_h(x\ind{i},y_{1:h}\ind{i}) - r\ind{i})^2.\]
\end{enumerate}

In specific experiments, we may deviate slightly from this formula, and we will describe such details in the corresponding sections.

\paragraph{Organization of section} In \cref{sec:practical-implementation-details}, we discuss in detail how we implement \mainalg and baselines, focusing particularly on the differences from the theoretical versions of the algorithms (made for efficiency and avoidance of unnecessary hyperparameters). In \cref{sec:ABC_task,sec:Parity_task,sec:Dyck_grammar_task,sec:test_case_generation_task,sec:constrained_text_generation_task} we provide details on the task setup, value function training setup, and evaluation protocol for each of the ABC task, parity task, Dyck grammar task, Python test case generation task, and letter avoidance task respectively. In \cref{sec:additional-results} we provide additional exploratory results.

\subsection{Implementation Details for Sampling Algorithms}\label{sec:practical-implementation-details}

\paragraph{Implementation details for \mainalg} Recall that the provable guarantees for \mainalg require repeatedly running the Markov Chain for a fixed number of steps, until the resulting generation is a leaf node of the autoregressive tree (\cref{alg:valueflow-values}). For our practical implementation of \mainalg, we instead return the \emph{first leaf reached during the random walk}.\footnote{We enforce a fixed generation length in all of our experiments (so all leaves of the tree are at the same height $H$), but we expect that our methods can be adapted to settings where generation lengths vary. We treat \texttt{EOS} as a normal token.%
} We also remove self-loop transitions, as these now have no effect. For tasks with small alphabet size (ABC, Parity, and Dyck grammar) these modifications are the only changes made for the implementation; see \cref{alg:valueflow-practical}. 

For tasks with large alphabet size (Python test case generation and letter avoidance), we make one additional change. Enumerating the alphabet at each step would be computationally infeasible. Instead, we implement the transitions using an implicit approximation of rejection sampling (that avoids needing to estimate a normalization constant, unlike \cref{alg:rejection}). At iteration $t$, we sample $K=32$ candidate tokens from $\piref(\cdot\mid{} x, y_{1:h}\ind{t})$. Append each of these to $y_{1:h}\ind{t}$ and let the resulting $K$ sequences be denoted by $z[1],\dots,z[K] \in \cA^{h+1}$. Also define $z[0] := y_{1:h-1}\ind{t}$. We define a probability distribution $\wh p\ind{t}$ over $z[0],\dots,z[K]$ by
\[\wh p\ind{t}(z[0]) \propto K \cdot \Vhat(x,y_{1:h}\ind{t})\]
and, for $1 \leq i \leq K$,
\[\wh p\ind{t}(z[i]) \propto \Vhat(x,z[i]).\]
We then sample $y\ind{t+1} \sim \wh p\ind{t}$. It can be checked that as $K \to \infty$, this distribution approximates the ideal distribution over the entire neighborhood of $y_{1:h}\ind{t}$.

\paragraph{Implementation details for \momentumalg} We implement a version of \mainalg with momentum, analogous to the modification made by \cite{hayes2010liftings} to the original Sinclair-Jerrum chain. This means that there are two copies of each node of the generation tree, and each transition of the original undirected Markov Chain is replaced by a directed cycle; the crossing flows are then cancelled out to decrease diffusive behavior. See \cite{hayes2010liftings} for the details of the modification, which carries over readily to our setting. The only practical modification we make is to stop the Markov Chain as soon as it reaches a leaf (as with \mainalg). We only implement this algorithm for settings with small alphabets, but one can see that it would readily extend to large alphabets by the technique described above.

\paragraph{Implementation details for \actionalg} When the true rewards are used for leaf nodes of the generation tree, \actionalg may get ``stuck'', i.e. all continuation probabilities may be $0$. In this case, we simply restart the algorithm at the root node. See \cref{alg:actionalg-practical} for the implementation of \actionalg for small alphabets. The implementation for large alphabets is the same as described above for \mainalg (again with $K=32$) except that the transition distribution puts zero mass on the backtracking candidate $z[0] = y_{1:h-1}\ind{t}$. We do not tune the choice of $K$ for either algorithm.

\paragraph{Implementation details for \outcomealg} This algorithm is particularly simple in the constrained sampling setting: it simply repeatedly draws responses from $\piref$ until we find a response with reward $1$.

\paragraph{Implementation details for Block Best-of-$B$ and Block Rejection Sampling} In both algorithms, we generate via an iterative procedure: sample $B$ blocks of length $L$, from $\piref$ conditioned on the current partial generation. Then choose one of the resulting continuations by either (a) choosing the continuation with the largest estimated value (Block Best-of-$B$), or (b) sampling the continuations proportional to their values (Block Rejection Sampling).

\paragraph{Efficiency metric: step count} Besides various error, diversity, and accuracy metrics, we will often measure the efficiency of different algorithms. Our main efficiency metric is \emph{step count}. For \mainalg, this is the number of transitions of the Markov chain until it reaches a leaf node; for \actionalg, this is exactly the horizon $H$ (i.e. length of responses) unless it gets ``stuck'' and either restarts or (in the large-$|\cA|$ setting of \cref{sec:constrained_text_generation_task}) has to redraw and evaluate new candidate tokens. For Block Best-of-$B$ and Block Rejection Sampling, this is $BH$ where $B$ is the number of candidate blocks. For \outcomealg, this is $nH$ where $n$ is the number of repetitions required to find a reward-$1$ response.\footnote{We do not argue that step count is the only efficiency metric one might care about; indeed, in practice there are numerous other factors that heavily influence wall-clock time, including but not limited to the number of language model queries per step, number of value function queries per step, key-value caching, and ease of parallelization; some of these are the focus of entire bodies of research in guided generation \citep{lipkin2025fast}. These considerations are largely orthogonal to the focus of our work, for which step count is a simple theoretically-aligned metric. We do nevertheless demonstrate that \mainalg can outperform \actionalg in strict wall-clock time, albeit in the synthetic setting of the Parity task (\cref{sec:Parity_task}).} 

\begin{algorithm}[tp]
\caption{Practical implementation of \mainalg for small alphabets}
\label{alg:valueflow-practical}
\begin{algorithmic}[1]
  \State\multiline{ {\bfseries input:} Reference model $\piref$; approximate value function $\Vhat$, prompt $x\in\cX$, horizon $H\in\bbN$. }
  \Statex[0] \algcommentlight{If outcome-level reward is available, set $\Vhat(x,y_{1:H}) = \rstar(x,y_{1:H})$.}
  \State Initialize $y\ind{0} := \bot$ and $t=0$.
  \While{$|y\ind{t}| < H$}
    \State Set $h := |y\ind{t}|$.
    \State\label{line:backtracking-probability}\multiline{Define $p\ind{t}$ as the distribution over neighborhood $\neighborhood(y_{1:h}\ind{t})$ of $y_{1:h}\ind{t}$ where
    \[
      p\ind{t}(y_{1:h-1}\ind{t}) \propto
        \begin{cases}
          \Vhat(x,y_{1:h}\ind{t}) & \text{if } h > 0, \\
          0 & \text{if } h = 0,
        \end{cases}
    \]
    and for each $y_{h+1}\in\cA$, 
    \[
      p\ind{t}(y_{1:h}\ind{t}, y_{h+1}) \propto
        \begin{cases}
          \piref(y_{h+1}\mid{}x, y_{1:h}\ind{t}) \, \Vhat(x, y_{1:h}\ind{t}, y_{h+1}) & \text{if } h < H, \\
          0 & \text{if } h = H.
        \end{cases}
    \]

      }
    \State Sample $y\ind{t+1} \sim p\ind{t}$, and increment $t$.
  \EndWhile
  \State \textbf{return} $y\ind{t}$.
\end{algorithmic}
\end{algorithm}

\begin{algorithm}[tp]
\caption{Practical implementation of \actionalg for small alphabets}
\label{alg:actionalg-practical}
\begin{algorithmic}[1]
  \State\multiline{ {\bfseries input:} Reference model $\piref$; approximate value function $\Vhat$, prompt $x\in\cX$, horizon $H\in\bbN$. }
  \Statex[0] \algcommentlight{If outcome-level reward is available, set $\Vhat(x,y_{1:H}) = \rstar(x,y_{1:H})$.}
  \State Initialize $y\ind{0} := \bot$ and $t=0$.
  \While{$|y\ind{t}| < H$}
    \If{$\sum_{y_{h+1}\in\cA} \piref(y_{h+1} \mid{} x, y_{1:h}\ind{t}) \Vhat(x,y_{1:h}\ind{t},y_{h+1}) = 0$}
        \State Set $y\ind{t+1} \gets \bot$ and increment $t$.
    \EndIf
    \State Set $h := |y\ind{t}|$.
    \State\multiline{Define $p\ind{t}$ as the distribution over neighborhood $\neighborhood(y_{1:h}\ind{t})$ of $y_{1:h}\ind{t}$ where for each $y_{h+1}\in\cA$, 
    \[
      p\ind{t}(y_{1:h}\ind{t}, y_{h+1}) \propto
        \begin{cases}
          \piref(y_{h+1}\mid{}x, y_{1:h}\ind{t}) \, \Vhat(x, y_{1:h}\ind{t}, y_{h+1}) & \text{if } h < H, \\
          0 & \text{if } h = H.
        \end{cases}
    \]

      }
    \State Sample $y\ind{t+1} \sim p\ind{t}$, and increment $t$.
  \EndWhile
  \State \textbf{return} $y\ind{t}$.
\end{algorithmic}
\end{algorithm}

\subsection{ABC Task}
\label{sec:ABC_task}

\paragraph{Task setup} Let $H \in \NN$ be a fixed horizon length, and define alphabet $\cA := \{\afrak,\bfrak,\cfrak\}$. Define $\cX := \{\perp\}$ to be the trivial prompt space, and define the response space to be $\cY := \cA^H$. Define $\piref \in \Delta(\cY)$ to be uniform over $\cY$. Define reward function $\rstar(y) := \indic[y_h \in \{\afrak,\bfrak\} \forall h \in [H]]$. Thus, $\pistar$ is uniform over $\{\afrak,\bfrak\}^H$. This task, following \cref{ex:abc-main}, is designed to measure the effect of benign training errors on the sampling algorithms; note that there should be no computational obstacles to learning the true value function.

\paragraph{Value function training setup} Let $N \in \NN$ be the number of training samples. We train a separate value network for each generation length $h \in [H-1]$ (at length $H$ we use the true rewards). Each network takes as input a one-hot encoding of the generation and has one hidden layer of width $128$; the internal activation function is a ReLU and the final activation function is a sigmoid. Each network is trained with $100$ steps of Adam, and learning rate $0.01$, on the binary cross-entropy loss from the $N$ Monte Carlo roll-outs.

\begin{figure}[t]
    \centering
    \includegraphics[width=\linewidth]{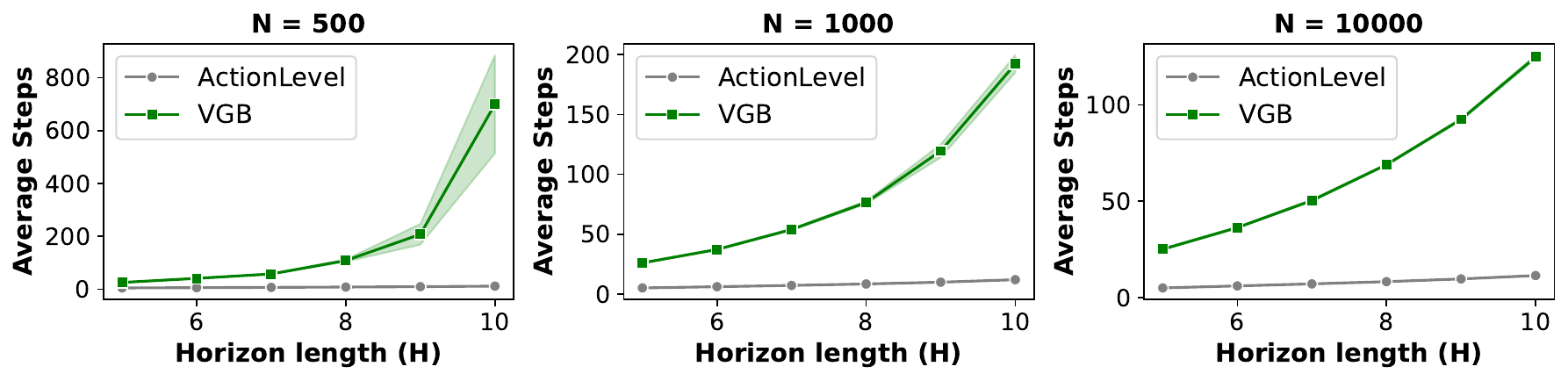}
    \caption{Average step counts for ABC task (\cref{sec:ABC_task}). Each experiment is repeated $10$ times and we plot the mean and standard error.
    }
    \label{fig:abc-steps}
\end{figure}

\paragraph{Evaluation protocol and results} For fixed $H$ and $N$, we train the value functions as described above, and then draw $10 \cdot 2^H$ samples from \mainalg (\cref{alg:valueflow-practical}) and \actionalg (\cref{alg:actionalg-practical}). For each sampling algorithm, we compute the KL divergence of the empirical distribution to $\pistar$. We repeat this experiment for $10$ seeds, and all combinations of $N \in \{500,1000,10000\}$ and $H \in \{5,6,7,8,9,10\}$. We observe that \mainalg robustly achieves lower KL-divergence than \actionalg, with particularly pronounced benefits for larger $H$ (\cref{fig:abc}). We also record the average steps used by each algorithm (\cref{fig:abc-steps}), and observe that the step count of \mainalg appears to grow roughly quadratically in $H$ for large $N$, as predicted by the theory, but may scale considerably worse for small $N$.

\subsection{Parity Task}
\label{sec:Parity_task}

\paragraph{Task setup} Let $K, M \in \NN$. Let $H := MK$ be the horizon length, and define alphabet $\cA := \{0,1\}$. Define $\cX := \{\perp\}$ to be the trivial prompt space, and define the response space to be $\cY := \cA^H$. Define $\piref \in \Delta(\cY)$ by
\[\piref(y_h \mid{} y_{1:h-1}) := \begin{cases} \indic[y_h \equiv y_{h+1-K}+\dots+y_{h-1} \bmod{2}] & \text{ if } h \in \{K, 2K, \dots, MK\} \\ 1/2 & \text{ otherwise} \end{cases}.\]
Define reward function $\rstar(y) := \indic[y_{tK} = 0 \, \forall t \in [M]]$. 

\paragraph{Theoretical intuition for task} Intuitively, this task is designed so that a decision has to be made at each \position $h \equiv K-1 \bmod{K}$, but the correctness of this decision is only made obvious at \position $h+1$ (a la \cref{ex:delayed}). This is motivated by the common empirical observation that in language model generations there is often natural ``segmentation'' where a complete segment is easier to score than an incomplete segment \citep{li2024cascade}. 

More precisely, with this setup, we observe that the true value function is
\[\Vstar(y_{1:h}) = 2^{\lceil h/K\rceil - M} \indic[y_{tK} = 0 \; \forall 1\leq t < \lceil h/K\lceil] \begin{cases} 
 \indic[y_h = 0] & \text{ if } h\equiv K\bmod{K} \\ 
\indic\left[\sum_{i=h+2-K}^h y_i \equiv 0 \bmod{2}\right] & \text{ if } h \equiv K-1\bmod{K} \\ 
1/2 & \text{ otherwise}
\end{cases}.\]
For any \position $h \equiv K-1 \bmod{K}$, learning the value function requires fitting a parity of $K-1$ variables (as well as an \texttt{AND} clause)---a task that is notoriously challenging for gradient descent \citep{barak2022hidden}. In contrast, for any \position $h \equiv K \bmod{K}$, learning the value function only requires learning an \texttt{AND} clause. Thus, we hypothesize that there is a broad training regime in which a value function trained via gradient descent will approximately have the following form:
\begin{equation} \Vhat(y_{1:h}) \approx 2^{\lceil h/K\rceil - M} \indic[y_{tK} = 0 \; \forall 1\leq t < \lceil h/K\lceil] \begin{cases} 
 \indic[y_h = 0] & \text{ if } h\equiv K\bmod{K} \\ 
1/2 & \text{ if } h \equiv K-1\bmod{K} \\ 
1/2 & \text{ otherwise}
\end{cases}.\label{eq:parity-vhat-ansatz}\end{equation}
If this is the case, then \actionalg will make a mistake at each \position $h \equiv K-1 \bmod{K}$ with probability $1/2$, and will therefore be \emph{exponentially unlikely} to find \emph{any} reward-$1$ response---let alone sample from the distribution $\pistar$ over reward-$1$ responses---whereas \mainalg may be able to automatically use the values at \positions $h \equiv K \bmod{K}$ to correct earlier errors (this is not a priori obvious from the theory, since $\Vhat(y_{1:h})/\Vstar(y_{1:h})$ can be infinite, so \cref{assump:unif-mgf,assump:avg-mgf} are not technically satisfied, but a formal theoretical guarantee can likely be proven under \cref{eq:parity-vhat-ansatz}). This intuition is borne out by the experiment---where we train $\Vhat$ as discussed below. 

\paragraph{Value function training setup} We use the same network parametrization as in the ABC task, with a separate network for each generation length (and true rewards at length $H$). We train each network with Adam for a single epoch, with $10^4$ batches of size $128$ and learning rate $0.001$, on the MSE loss from the batch of Monte Carlo roll-outs. We checkpoint the value functions every $1000$ batches.

\begin{figure}[p]
    \centering
    \includegraphics[width=\linewidth]{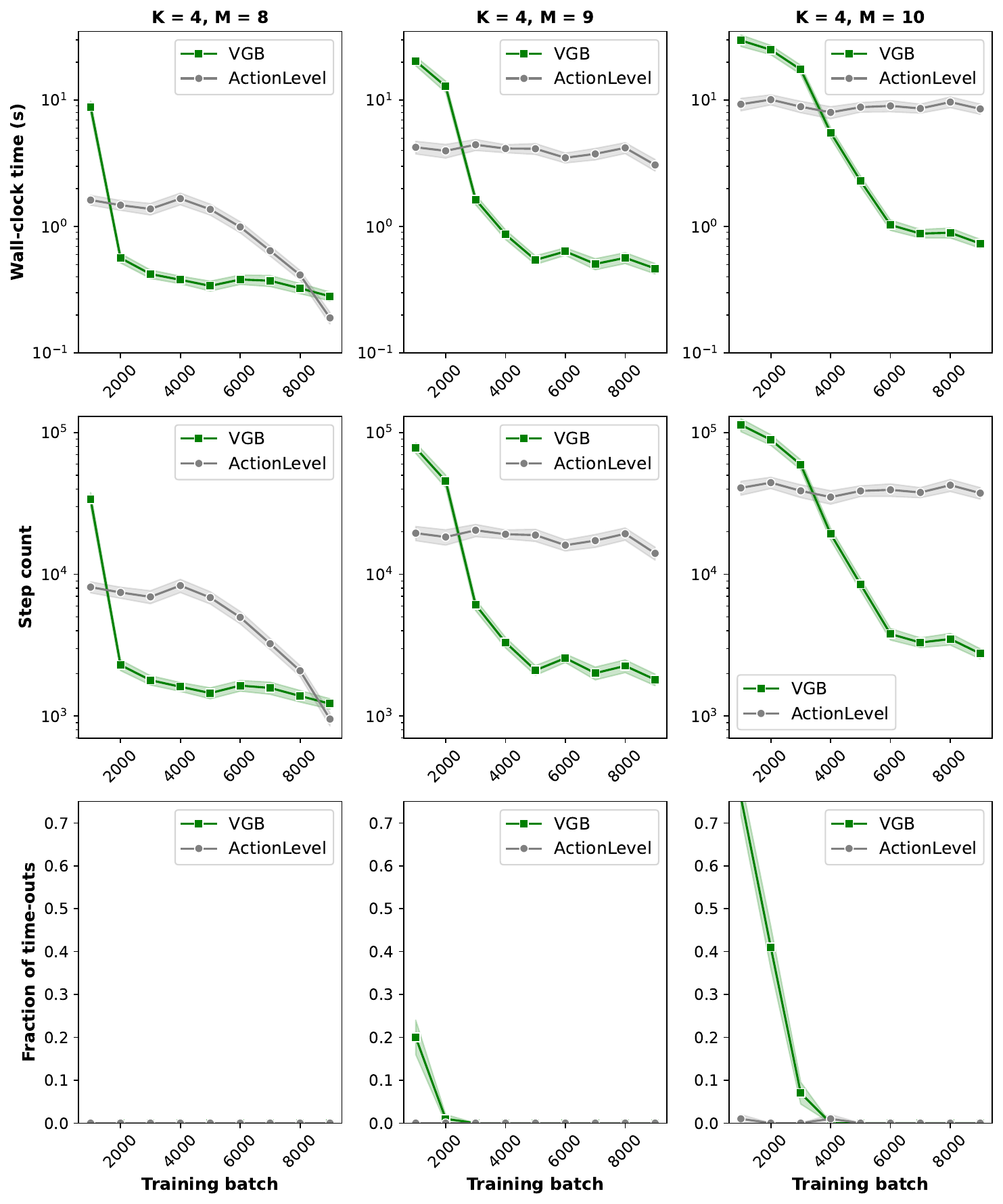}
    \caption{Efficiency of \mainalg and \actionalg at finding a reward-$1$ response in Parity task (\cref{sec:Parity_task}) with $K = 4$ and $M \in \{8,9,10\}$ (so that $H \in \{32,36,40\}$), across training checkpoints $N \in \{1000,2000,\dots,9000\}$ of the value function. We cap the time to draw a single response at $60$ seconds. \textbf{Top row}: wall-clock time (in seconds), conditioned on not timing out; \textbf{middle row}: step count, conditioned on not timing out; \textbf{bottom row}: fraction of time-outs. For each $M$ and each training checkpoint, we report mean and standard error over $100$ samples drawn from each algorithm. We plot wall-clock time and step count on a log scale to make evident that the multiplicative benefit of \mainalg over \actionalg increases with $M$.}
    \label{fig:parity_c4}
\end{figure}

\begin{figure}[p]
    \centering
    \includegraphics[width=\linewidth]{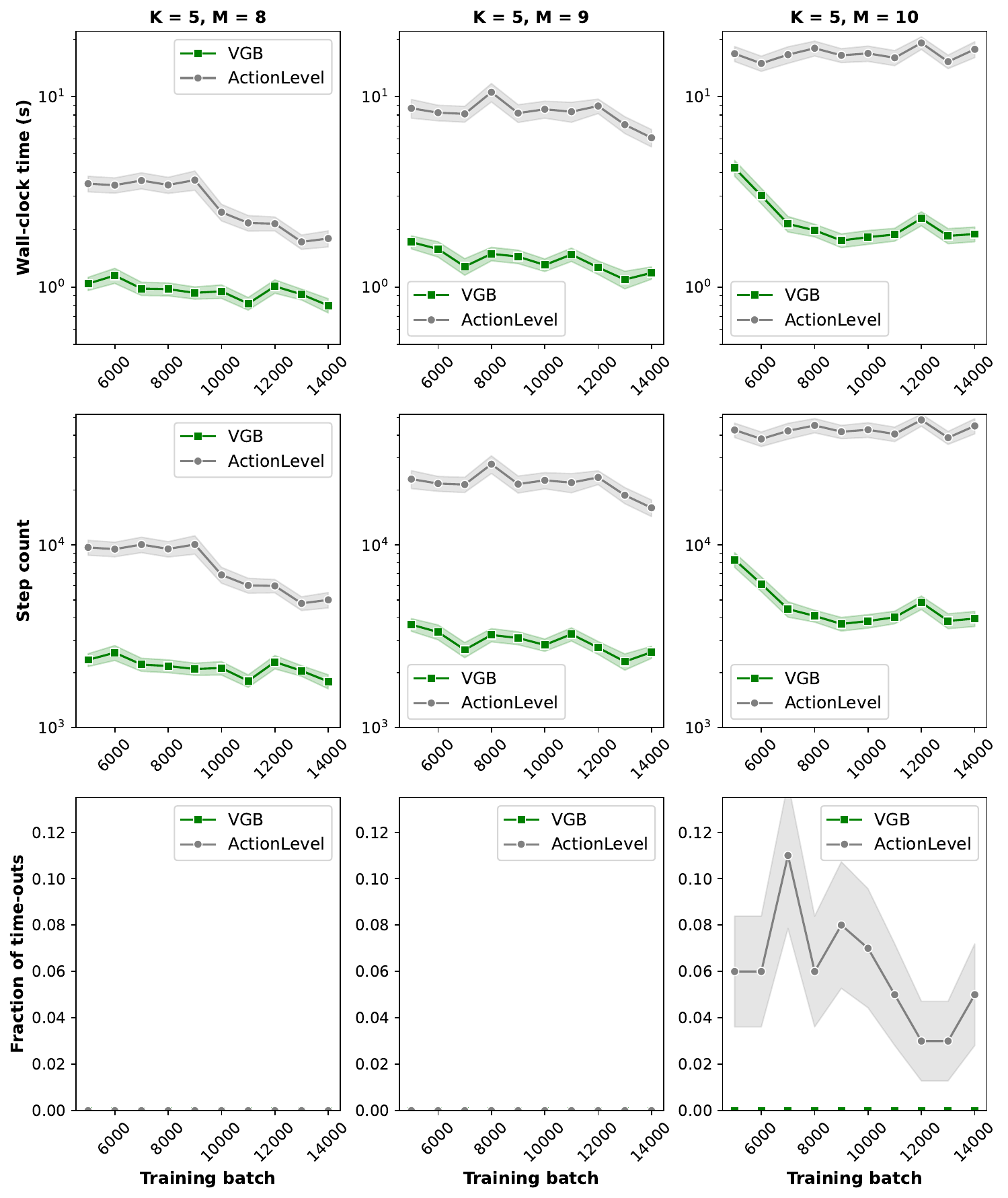}
    \caption{Efficiency of \mainalg and \actionalg at finding a reward-$1$ response in Parity task (\cref{sec:Parity_task}) with $K = 5$ and $M \in \{8,9,10\}$ (so that $H \in \{40,45,50\}$), across training checkpoints $N \in \{5000,6000,\dots,14000\}$ of the value function. We cap the time to draw a single response at $60$ seconds. \textbf{Top row}: wall-clock time (in seconds), conditioned on not timing out; \textbf{middle row}: step count, conditioned on not timing out; \textbf{bottom row}: fraction of time-outs. For each $M$ and each training checkpoint, we report mean and standard error over $100$ samples drawn from each algorithm. We plot wall-clock time and step count on a log scale to make evident that the multiplicative benefit of \mainalg over \actionalg increases with $M$.}
    \label{fig:parity}
\end{figure}

\begin{figure}[h]
    \centering
    \includegraphics[width=0.7\linewidth]{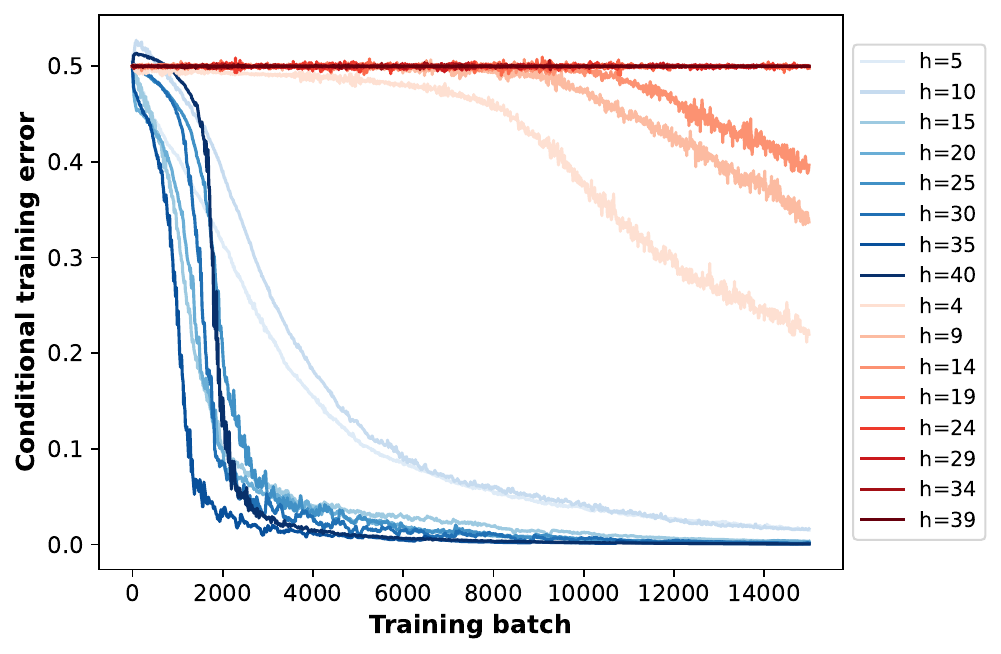}
    \caption{Visualization of training error for value function at each \position $h \in [H]$ in Parity task (\cref{sec:Parity_task}) with $K=5$ and $M=8$, across training checkpoints of the value function. For each $h \in [H]$, we compute the probability that \actionalg makes a mistake at \position $h$ of generation, on average over $100$ roll-ins $y_{1:h-1} \sim \pistar$. Since \actionalg would never make a mistake with the true value function $\Vstar$, this quantifies the effective training error at each \position (while producing a more informative visualization than directly measuring MSE, as this metric is appropriately normalized across $h$). For clarity, we only plot \positions $h \equiv -1, 0 \bmod{K}$.}
    \label{fig:parity_training}
\end{figure}

\paragraph{Evaluation protocol and results} We set $K=4$ and vary $M \in \{8,9,10\}$. For each $N \in \{1000, 2000, \dots, 9000\}$, after training the value functions for $N$ batches, we draw $100$ samples each from \mainalg and \actionalg, and we record how long each algorithm takes on average (in wall clock time), and how many steps it used. If the algorithm takes longer than $1$ minute to generate a single sample, we call that sample a timeout, and we track the number of timeouts. We repeat the experiment for $K=5$ (this time with $N \in \{5000, 6000, \dots, 14000\}$ for computational reasons). Results are shown in \cref{fig:parity_c4,fig:parity}.

We see that after early checkpoints, \mainalg robustly improves upon \actionalg in all metrics, with increased benefits for larger $K$ and $M$ (and therefore larger horizon $H$). To illustrate the source of this improvement, for $K=5$ and $M=8$, we visualize the error in the trained value function at each \position $h \in [H]$ (\cref{fig:parity_training}). As theoretically predicted, training is substantially more efficient at \positions $h \equiv 0 \bmod{K}$ than at \positions $h \equiv -1 \bmod{K}$. \mainalg improves upon \actionalg in the regime after the value function has been learned at \positions $h \equiv 0 \bmod{K}$ but before the value function has been learned at \positions $h \equiv -1 \bmod{K}$.

\subsection{Dyck Grammar Task}
\label{sec:Dyck_grammar_task}

\paragraph{Task setup} The basic setup for this task follows \cite{botta2025query}. Define $\cA := \{\ttop,\ttcp,\ttob,\ttcb,\ttB,\ttE,\ttP,\ttS\}$. Define prompt space $\cX := \cA^{17}$ and response space $\cY := \cA^{17}$. We define a reward function $\rstar:\cX\times\cY\to\{0,1\}$ where a pair $(x,y) \in \cX \times \cY$ has reward $1$ if and only if it has the form $xy = \ttB s_{1:32} \ttE$ where $s_{1:32} \in \cA^{32}$ lies in the Dyck grammar for $\{\ttop,\ttcp,\ttob,\ttcb\}$.

We train a transformer-based language model $\piref$ on a distribution of strings in $\cX \times \cY$ that have reward $1$. This distribution is designed so that square brackets appear with probability $0.8$ whereas round parentheses appear with probability $0.2$. Following \cite{botta2025query}, tokens ``P'' and ``S'' are extraneous tokens that do not appear in the training data and are not present in reward-$1$ strings.~\footnote{
This is consistent with practical LLMs: the vocabulary typically includes a few unused token IDs.
Our results show that \mainalg is robust to unseen, irrelevant tokens in the vocabulary.
}

While $\piref$ typically achieves high reward on in-distribution prompts, as we will see subsequently, there are out-of-distribution prompts on which it achieves low average reward. These are the prompts on which we will train value functions and use value-guided samplers.

\paragraph{Value function training setup} We train a separate value network for each generation length $h \in [H]$. The network parametrization is the same as in previous tasks except the hidden layer has width $64$ (this was not tuned, but simply chosen to mitigate overparametrization since the input layer has much larger width than in previous tasks). Each network is trained for $40$ epochs on $10,000$ Monte Carlo roll-outs, using the Adam optimizer with learning rate $0.003$, weight decay $0.1$, batch size $32$, and the MSE loss. We checkpoint the value functions at the end of every epoch.

\begin{figure}[p]
    \centering
    \includegraphics[width=0.8\linewidth]{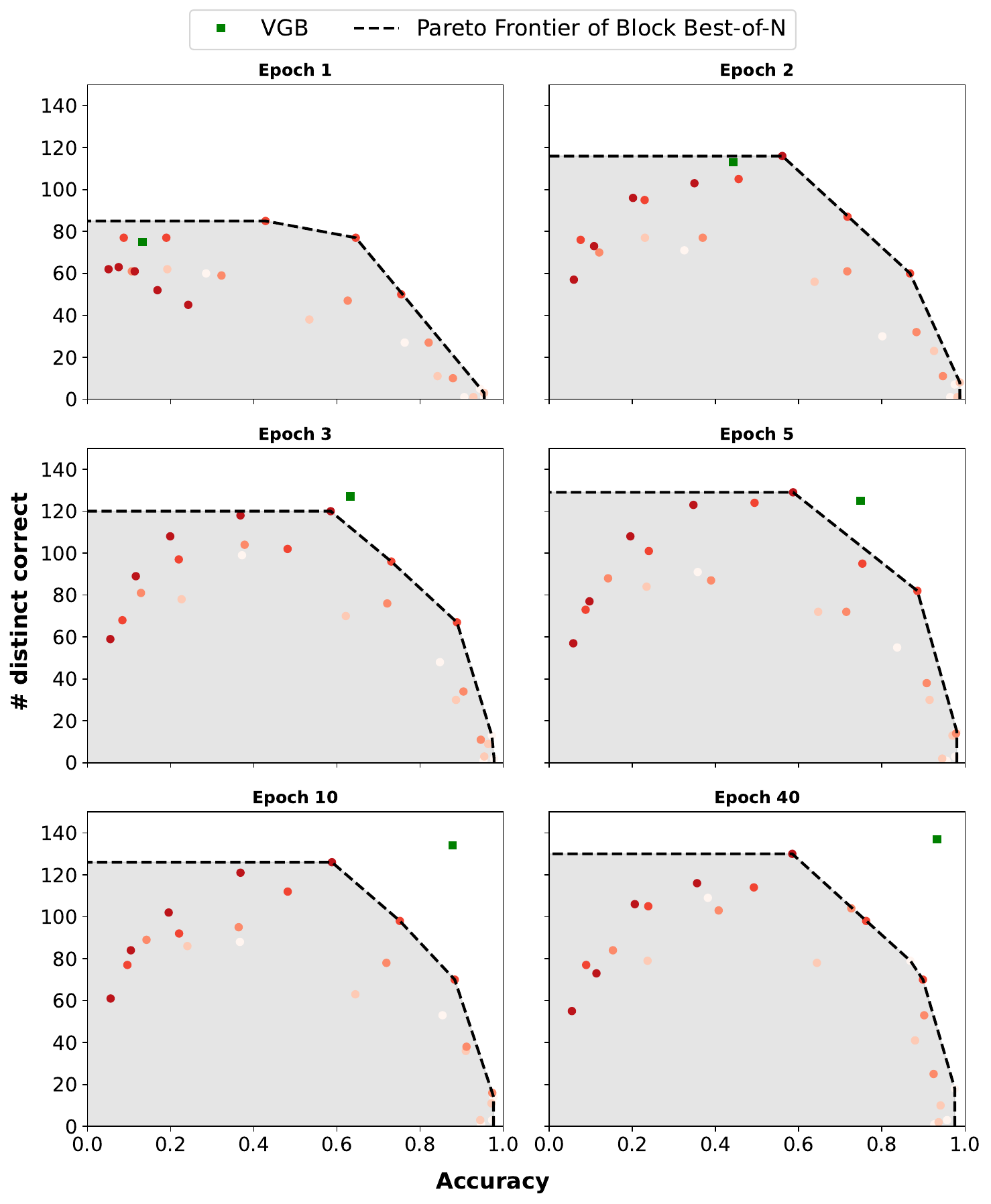}
    \caption{Accuracy (\textbf{x}) and diversity (\textbf{y}) of \mainalg vs. Block Best-of-N on Dyck grammar task (\cref{sec:Dyck_grammar_task}) with pre-trained base model and value function trained for varied number of epochs. Each circle indicates performance of the baseline with specific hyperparameters (block length $\in\{1,2,4,8,16\}$ and \# of candidates $\in\{2,4,8,16,32\}$); darker red indicates larger block length. }\label{fig:dyck_acc_div_bbon_allepochs}
\end{figure}

\begin{figure}[p]
    \centering
    \includegraphics[width=0.8\linewidth]{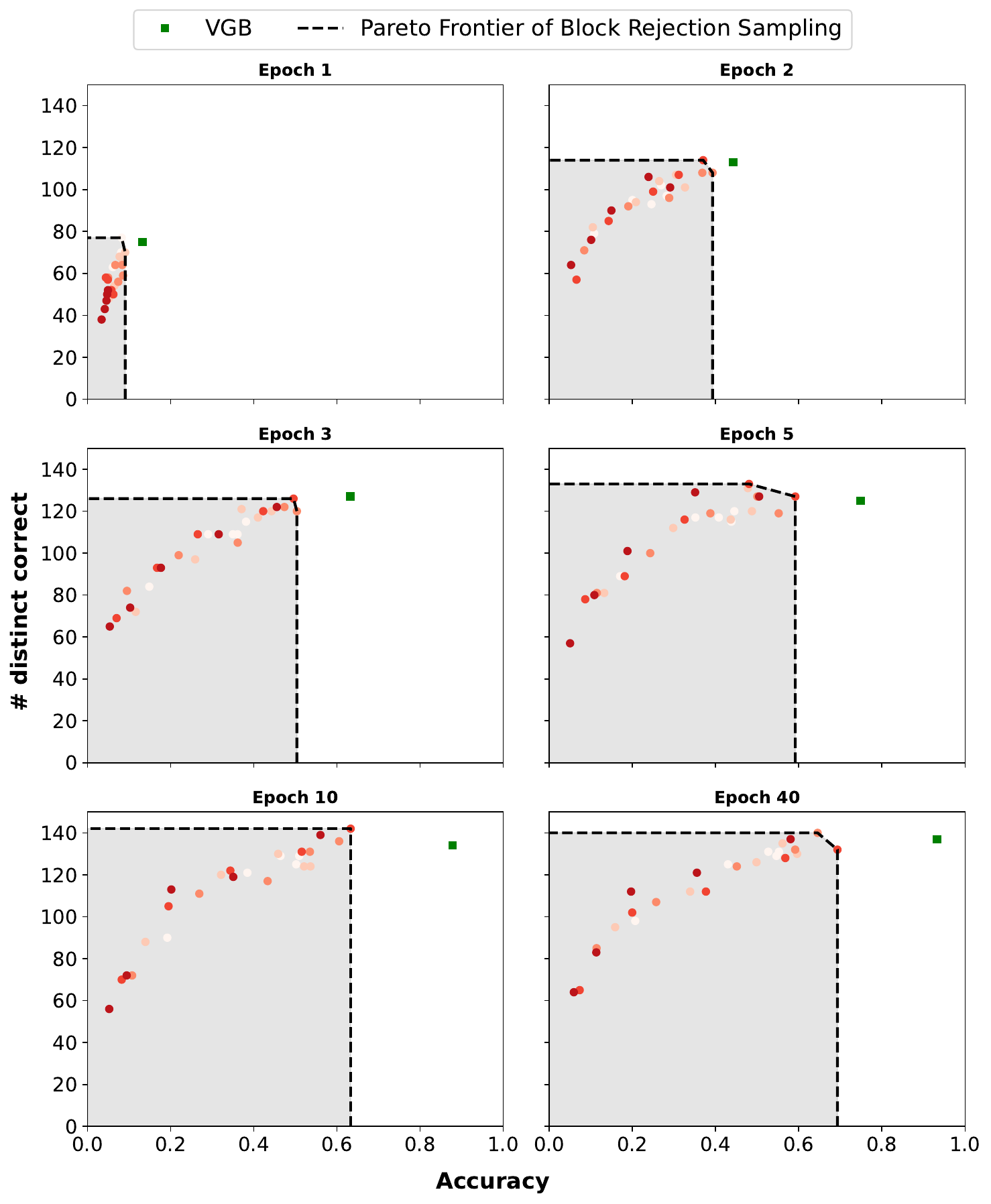}
    \caption{Accuracy (\textbf{x}) and diversity (\textbf{y}) of \mainalg vs. Block Rejection Sampling on Dyck grammar task (\cref{sec:Dyck_grammar_task}) with pre-trained base model and value function trained for varied number of epochs. Each circle indicates performance of the baseline with specific hyperparameters (block length $\in\{1,2,4,8,16\}$ and \# of candidates $\in\{2,4,8,16,32\}$); darker red indicates larger block length.}
    \label{fig:dyck_acc_div_bprop_allepochs}
\end{figure}

\begin{figure}[p]
    \centering
    \includegraphics[width=0.8\linewidth]{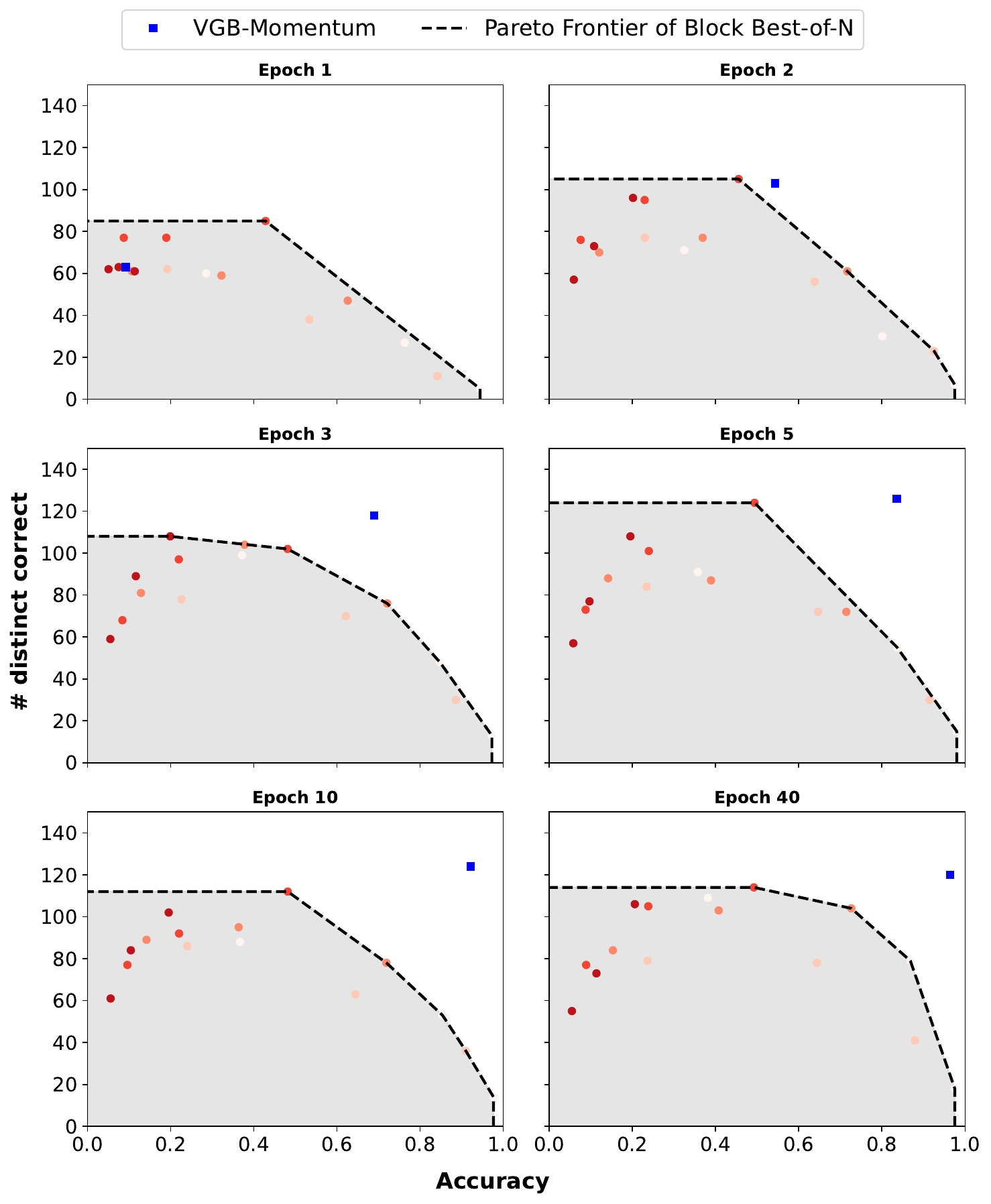}
    \caption{Accuracy (\textbf{x}) and diversity (\textbf{y}) of \momentumalg vs. Block Best-of-N on Dyck grammar task (\cref{sec:Dyck_grammar_task}) with pre-trained base model and value function trained for varied number of epochs. Each circle indicates performance of the baseline with specific hyperparameters (block length $\in\{1,2,4,8,16\}$ and \# of candidates $\in\{2,4,8\}$, to equalize query complexity); darker red indicates larger block length.}
    \label{fig:dyck_acc_div_mjs_bbon_allepochs}
\end{figure}

\begin{figure}[p]
    \centering
    \includegraphics[width=0.8\linewidth]{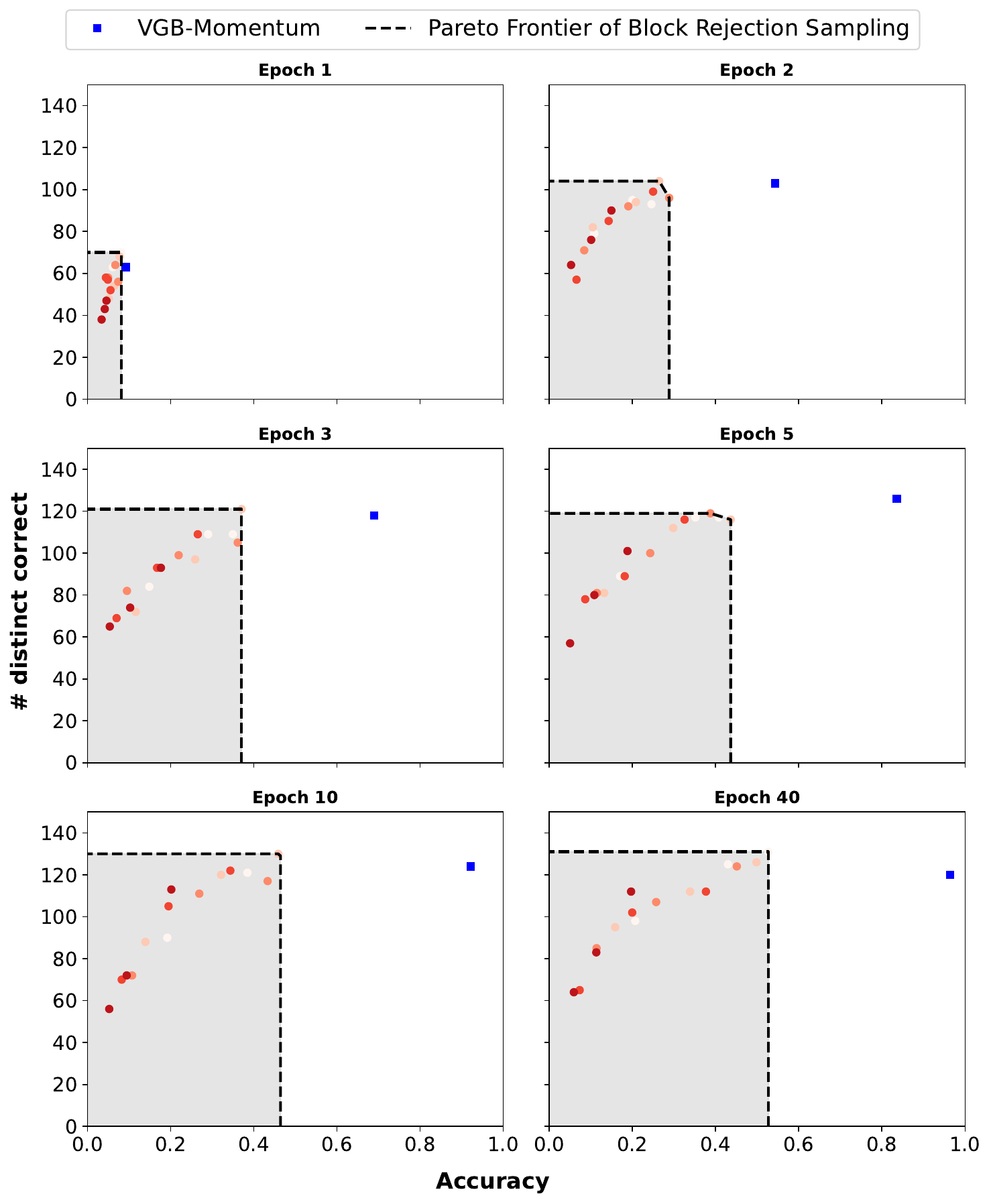}
    \caption{Accuracy (\textbf{x}) and diversity (\textbf{y}) of \momentumalg vs. Block Rejection Sampling on Dyck grammar task (\cref{sec:Dyck_grammar_task}) with pre-trained base model and value function trained for varied number of epochs. Each circle indicates performance of the baseline with specific hyperparameters (block length $\in\{1,2,4,8,16\}$ and \# of candidates $\in\{2,4,8\}$, to equalize query complexity); darker red indicates larger block length.}
    \label{fig:dyck_acc_div_mjs_bprop_allepochs}
\end{figure}

\paragraph{Accuracy/diversity tradeoff: protocol and results}  First, we fix the length-$17$ prefix \[x := \ttB \texttt{((()(((((((()(((},\] which was randomly chosen subject to the constraint that $\piref$ has low success at producing reward-$1$ completions, i.e. $\EE_{y \sim \piref(\cdot \mid{} x)}[\rstar(x,y)]$ is small. We train value functions, via the procedure described above, on Monte Carlo rollouts from this specific prompt. For $e \in \{1,2,3,5,10,40\}$, we draw $3000$ completions of this prompt from $\mainalg$, using the learned value function checkpointed at $e$ epochs of training. For this first experiment, we do not use the true rewards, so the algorithm may produce incorrect completions. We do the same for $\momentumalg$, $\actionalg$, the base model $\piref$, Block Best-of-$N$, and Block Rejection Sampling. For the latter two algorithms, we sweep over block lengths in $\{1,2,4,8,16\}$ and \# of candidate blocks in $\{2,4,8,16,32\}$. 

The results comparing \mainalg against the baselines are summarized in \cref{fig:dyck_acc_div_bbon_allepochs,fig:dyck_acc_div_bprop_allepochs}. We see that after the first few epochs, \mainalg achieves an accuracy/diversity tradeoff outside the Pareto frontier of Block Best-of-$N$ and Block Rejection Sampling. While we do not plot \actionalg on these plots, we remark that it is comparable to Block Rejection Sampling with block length $1$ and $32$ candidate blocks. See \cref{tab:dyck_acc_diversity_breakdown} for a breakdown of the results for all baselines, including \actionalg and $\piref$ itself, when using the value function trained for $40$ epochs. We observe from this table that the result is ``query-equalized'', i.e. we are allowing the baselines as many (in fact, slightly more) queries than used by \mainalg.

The results comparing \momentumalg against the baselines are summarized in \cref{fig:dyck_acc_div_mjs_bbon_allepochs,fig:dyck_acc_div_mjs_bprop_allepochs}. In these figures we only compare against baselines with \# of candidate blocks in $\{2,4,8\}$, since these are the baselines with competitive query efficiency to \momentumalg.

\arxiv{ 

\begin{figure}[t]
\includegraphics[width=\linewidth]{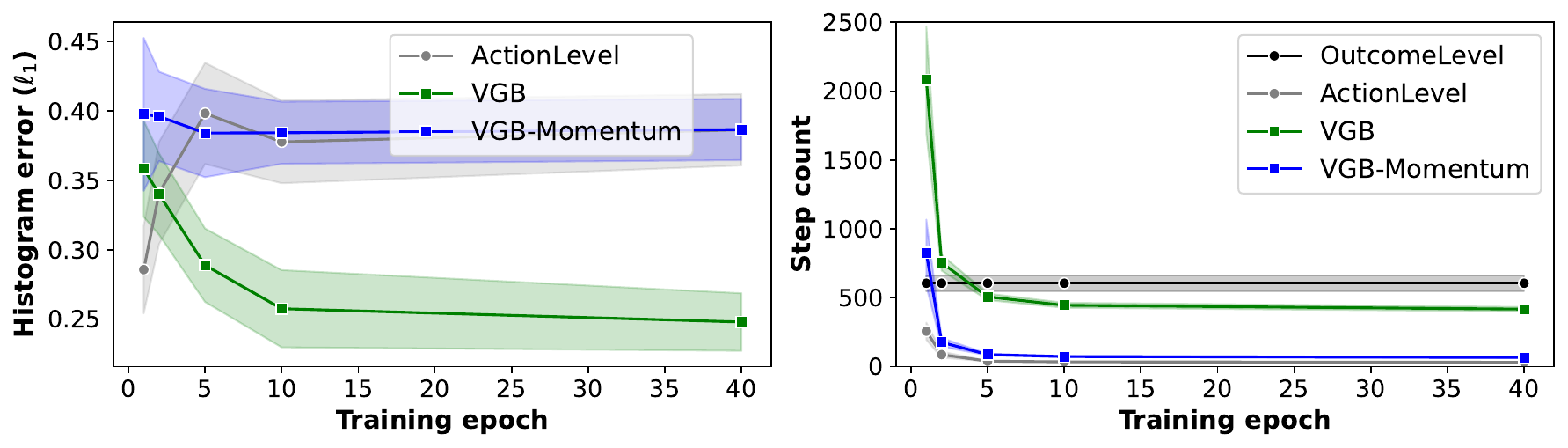}
\caption{Distributional comparison between \mainalg, \momentumalg, and baselines in Dyck grammar task (\cref{sec:Dyck_grammar_task}), across training epochs for value function, and using true rewards at final \position. \textbf{Left:} Error of histogram of open-bracket \positions (not distinguishing between square and round bracket types), using \outcomealg to estimate the ground truth. \textbf{Right:} Average step count. The experiment was independently repeated for $12$ OOD prefixes (chosen randomly subject to the constraint that $\piref$ has accuracy in $[0.01, 0.1]$), and we report mean and standard error---see \cref{tab:dyck_distribution_breakdown} for the prefix-by-prefix results.  %
}
\label{fig:dyck_distribution}
\end{figure}

}

\paragraph{Distributional comparison: protocol and results} In this experiment, using the same prompt as before, and varying the training epochs over $\{1,2,5,10,40\}$, we draw $1000$ samples each from \mainalg, \momentumalg, and \actionalg, this time using the true rewards at the last \position so that each algorithm only produces reward-$1$ generations (as discussed previously, this requires restarting \actionalg if it gets stuck). For each algorithm and epoch, we compute the empirical histogram of where open parentheses/brackets are located in the completion. To estimate the ground truth histogram, we draw $10000$ samples from $\pistar$ (i.e. from $\piref$ conditioned on reward $1$). We then compute the $\ell_1$ error of each histogram compared to the estimated ground truth. For statistical power, we repeat the experiment for $11$ other out-of-distribution prompts. As shown in \cref{fig:dyck_distribution}, for all epochs $e \geq 5$, \mainalg achieves robustly lower distributional error than \actionalg; however, \momentumalg is comparable to \actionalg. While \outcomealg (i.e. sequence-level rejection sampling from $\piref$) would yield exactly the correct distribution, we see that this naive method is less query-efficient than \mainalg. See \cref{tab:dyck_distribution_breakdown} for a prompt-by-prompt breakdown of the results at training epoch $40$.

\begin{table}[p]
    \centering
    {\small
\begin{NiceTabular}{lccccc}
\CodeBefore
  \rowcolor{gray!10}{1}
  \rowcolor{green!10}{3,7}
    \rowcolor{red!10}{23}
\Body
\toprule
Algorithm & \# Candidates & Block Len. & Avg. Steps & Accuracy & \# Distinct Correct \\
\midrule
\blockbon & 8 & 1 & 136.000 & 0.975 & 18 \\
\momentumalg &  &  & 77.617 & 0.964 & 120 \\
\blockbon & 16 & 1 & 272.000 & 0.957 & 3 \\
\blockbon & 16 & 2 & 272.000 & 0.941 & 10 \\
\blockbon & 32 & 2 & 544.000 & 0.936 & 2 \\
\mainalg &  &  & 393.471 & 0.932 & 137 \\
\blockbon & 32 & 1 & 544.000 & 0.926 & 1 \\
\blockbon & 32 & 4 & 544.000 & 0.924 & 25 \\
\blockbon & 16 & 4 & 272.000 & 0.902 & 53 \\
\blockbon & 32 & 8 & 544.000 & 0.899 & 70 \\
\blockbon & 8 & 2 & 136.000 & 0.880 & 41 \\
\blockbon & 4 & 1 & 68.000 & 0.868 & 79 \\
\blockbon & 16 & 8 & 272.000 & 0.762 & 98 \\
\blockbon & 8 & 4 & 136.000 & 0.726 & 104 \\
\blockrs & 32 & 8 & 544.000 & 0.693 & 132 \\
\blockrs & 32 & 4 & 544.000 & 0.645 & 140 \\
\blockbon & 4 & 2 & 68.000 & 0.644 & 78 \\
\blockrs & 32 & 2 & 544.000 & 0.596 & 130 \\
\blockrs & 16 & 4 & 272.000 & 0.591 & 132 \\
\blockbon & 32 & 16 & 544.000 & 0.584 & 130 \\
\blockrs & 32 & 16 & 544.000 & 0.580 & 137 \\
\actionalg &  &  & 17.000 & 0.580 & 132 \\
\blockrs & 16 & 8 & 272.000 & 0.568 & 128 \\
\blockrs & 16 & 2 & 272.000 & 0.561 & 135 \\
\blockrs & 16 & 1 & 272.000 & 0.552 & 131 \\
\blockrs & 32 & 1 & 544.000 & 0.546 & 129 \\
\blockrs & 8 & 1 & 136.000 & 0.527 & 131 \\
\blockrs & 8 & 2 & 136.000 & 0.498 & 126 \\
\blockbon & 8 & 8 & 136.000 & 0.492 & 114 \\
\blockrs & 8 & 4 & 136.000 & 0.451 & 124 \\
\blockrs & 4 & 1 & 68.000 & 0.430 & 125 \\
\blockbon & 4 & 4 & 68.000 & 0.407 & 103 \\
\blockbon & 2 & 1 & 34.000 & 0.381 & 109 \\
\blockrs & 8 & 8 & 136.000 & 0.377 & 112 \\
\blockbon & 16 & 16 & 272.000 & 0.355 & 116 \\
\blockrs & 16 & 16 & 272.000 & 0.355 & 121 \\
\blockrs & 4 & 2 & 68.000 & 0.339 & 112 \\
\blockrs & 4 & 4 & 68.000 & 0.257 & 107 \\
\blockbon & 4 & 8 & 68.000 & 0.238 & 105 \\
\blockbon & 2 & 2 & 34.000 & 0.236 & 79 \\
\blockrs & 2 & 1 & 34.000 & 0.207 & 98 \\
\blockbon & 8 & 16 & 136.000 & 0.206 & 106 \\
\blockrs & 4 & 8 & 68.000 & 0.199 & 102 \\
\blockrs & 8 & 16 & 136.000 & 0.197 & 112 \\
\blockrs & 2 & 2 & 34.000 & 0.158 & 95 \\
\blockbon & 2 & 4 & 34.000 & 0.153 & 84 \\
\blockrs & 2 & 4 & 34.000 & 0.114 & 85 \\
\blockbon & 4 & 16 & 68.000 & 0.113 & 73 \\
\blockrs & 4 & 16 & 68.000 & 0.113 & 83 \\
\blockbon & 2 & 8 & 34.000 & 0.089 & 77 \\
\blockrs & 2 & 8 & 34.000 & 0.073 & 65 \\
\blockrs & 2 & 16 & 34.000 & 0.059 & 64 \\
\blockbon & 2 & 16 & 34.000 & 0.054 & 55 \\
$\piref$ &  &  & 17.000 & 0.028 & 43 \\
\bottomrule
\end{NiceTabular}
}

\caption{Detailed results for accuracy/diversity tradeoff experiment in Dyck grammar task (\cref{sec:Dyck_grammar_task}) with value function trained for $40$ epochs. \blockbon denotes Block Best-of-$N$ and \blockrs denotes Block Rejection Sampling.}
    \label{tab:dyck_acc_diversity_breakdown}
\end{table}

\begin{table}[t]
    \centering
\begin{NiceTabular}{ccccc}
\CodeBefore
  \rowcolor{gray!10}{1,2}
\Body
\toprule
 & \Block{1-2}{Average Steps ($\downarrow$)} && \Block{1-2}{Distributional Error ($\downarrow$)} \\
 \cmidrule(lr){2-3} \cmidrule(lr){4-5}
Prefix & $\piref$ & \mainalg & \actionalg & \mainalg \\
\midrule
\texttt{((((((((((([][](} & \textbf{276.303} & 314.998 & 0.485 & \textbf{0.327} \\
\texttt{((((([](()((((([} & 403.232 & \textbf{330.082} & 0.536 & \textbf{0.179} \\
\texttt{()()((((((((((((} & \textbf{426.003} & 437.812 & 0.344 & \textbf{0.187} \\
\texttt{(()(((((([](((((} & 442.204 & \textbf{396.146} & 0.416 & \textbf{0.283} \\
\texttt{()(()(((((((((((} & 505.481 & \textbf{445.886} & 0.347 & \textbf{0.149} \\
\texttt{((()(((((((()(((} & 608.796 & \textbf{410.510} & 0.475 & \textbf{0.387} \\
\texttt{(()(()((((((([([} & 639.817 & \textbf{422.092} & 0.364 & \textbf{0.272} \\
\texttt{(((((()(()((((([} & 670.543 & \textbf{462.700} & 0.435 & \textbf{0.259} \\
\texttt{((([((((([(()()(} & 706.200 & \textbf{508.182} & 0.409 & \textbf{0.263} \\
\texttt{()(((((((((()(((} & 739.038 & \textbf{446.574} & 0.265 & \textbf{0.176} \\
\texttt{(((()[([[(((()()} & 852.620 & \textbf{378.564} & 0.239 & \textbf{0.193} \\
\texttt{((((()[((()((([(} & 981.361 & \textbf{440.922} & 0.324 & \textbf{0.300} \\
\bottomrule
\end{NiceTabular}

\caption{Prefix-by-prefix breakdown of distributional comparison experiment for Dyck grammar task (\cref{sec:Dyck_grammar_task}) at epoch $40$, sorted by the average steps used by $\piref$ to produce a correct generation. On each prefix, \mainalg achieves lower distributional error (compared to $\piref$) than \actionalg, and on $10$ of $12$ prefixes it uses fewer steps on average than $\piref$. See \cref{fig:dyck_distribution} for the aggregated results for all epochs.}
    \label{tab:dyck_distribution_breakdown}
\end{table}

\subsection{Python Test Case Generation Task}
\label{sec:test_case_generation_task}

\paragraph{Task setup} Let $\piref$ be the pre-trained language model \texttt{Qwen2.5-0.5B}, which has a token space $\cA$ of size roughly $150,000$. We set $H = 32$ and define the response space to be $\cA^H$, so responses are fixed-length (however, in this task the prompts may have varying lengths). Following \cite{botta2025query}, we define a distribution over prompts as follows. Pick a random three-letter function name (we draw from a pool of $10$ possible names). The prompt first gives an in-context example of test case generation: (1) the Python implementation of the addition function $f(a,b) := a+b$, (2) a text prompt to produce a test case for $f$, and (3) a valid random test case for $f$. The prompt then gives the Python implementation of the list append function, but with the random function name, and asks for a test case for this function. We define a reward function that checks if the response is a valid Python test case (in the desired format).\footnote{More precisely, the reward function parses each line of the response and returns reward $1$ if there is exactly one valid response. Of course, there are many other reward functions that may be natural (e.g. at least one valid response) but we did not explore these possibilities. We used the parser developed by \cite{botta2025query} for this purpose.} See \cref{fig:test-case-generation-example} for an example prompt and a correct response.

\begin{figure}[t]
    \centering
    \begin{minipage}{0.8\textwidth}
\begin{lstlisting}
def f(a, b):
    return a + b

List 1 test case of the above function f, in one line:
assert f(7, 7) == 14

def ovs(l, item):
    assert type(l) is list
    l.append(item)
    return l


List 1 test case of the above function ovs, in one line:
\end{lstlisting}

\begin{lstlisting}
assert ovs([3,2], "asd") == [3,2,"asd"]
\end{lstlisting}
\end{minipage}
    \caption{Example prompt and reward-$1$ completion for Python test case generation task (\Cref{sec:test_case_generation_task}). See \cite{botta2025query} for additional details on prompt distribution.}
    \label{fig:test-case-generation-example}
\end{figure}

\paragraph{Value function training setup} Since this task has large alphabet size, training value networks on one-hot encodings of generations is infeasible. Instead, we train a single value network that can be applied to generations of any length $h \in [H]$. We generate a dataset by drawing $50,000$ prompts from the distribution described above; for each prompt $x$, we sample a response $y$ from $\piref$ and evaluate the reward. We then pick a random index $h \sim \Unif(\{1, 2, \dots, |y|\})$, and compute $\frac{1}{h}\sum_{i=1}^h \phi(x, y_{1:i})$, where $\phi(x,y_{1:i})$ is the final-token hidden state at the penultimate layer of \texttt{Qwen2.5-0.5B}.\footnote{It has been previously observed by \cite{botta2025query} that the final layer does not necessarily lead to the best performance. We did not tune the layer index, since our goal is to compare different guided sampling algorithms, not to optimize the end-to-end pipeline.}

Since the hidden states of \texttt{Qwen2.5-0.5B} have dimension $896$, the value network has input dimension $896$ as well. It is fully-connected with a single hidden layer with $8$ neurons; the first activation function is a ReLU and the second activation function is a sigmoid. We train the value network on the dataset described above for $100$ epochs using the Adam optimizer with learning rate $10^{-4}$, weight decay $10^{-4}$, and batch size $100$. 

\paragraph{Evaluation protocol and results} We draw $1000$ fresh prompts and sample a completion for each prompt from \mainalg and \actionalg, using the true rewards at the last layer. We also sample a completion for each prompt from $\pistar$. We compare the empirical distributions produced by $\mainalg$ and $\actionalg$ to that of $\pistar$ using three metrics: (1) the histogram of \emph{lengths} of the final Python list in each generated test case, (2) the fraction of \emph{strings} in the final lists in the test cases (the most common object type is an integer, and the second most common is a string; the remaining types have negligible occurrences, so this essentially tracks the histogram of object types), and (3) the fraction of \emph{distinct} final lists in the generated test cases. For (1) and (2) we compute the error of each algorithm compared to $\pistar$; (3) is a measure of diversity, and from examination we saw that $\pistar$ had greater diversity than both algorithms. We repeat the experiment for $10$ independent seeds (the dataset generation and value function training are independent as well), and in \cref{tab:code_distribution_comparison} we report the mean and standard error across seeds for each metric.

\subsection{Letter Avoidance Task}
\label{sec:constrained_text_generation_task}

\paragraph{Task setup} Let $\piref$ be the pre-trained language model \texttt{Qwen2.5-0.5B}, as in the preceding task. We set $H = 32$ and define the response space to be $\cA^H$, where $\cA$ is the token space for the language model. We fix a prompt, which asks the model to generate a sentence without using the letter ``e'' (\cref{fig:constrained-generation-prompt}), and we define a reward function $\rstar$ that checks (1) the letter ``e'' was not used (in the response), and (2) the EOS token for the language model was not used. We find that $\piref$ only produces $6$ reward-$1$ responses out of $10,000$.

\begin{figure}[t]
    \centering
    \begin{minipage}{0.85\textwidth}
\begin{lstlisting}
Generate a one-sentence story without using the letter 'e':
\end{lstlisting}

\begin{lstlisting}
Vicky is having a party and wants to buy two hamburgars. A sub sandwich costs $1.00 and a ravioli costs $3
\end{lstlisting}

\end{minipage}
    \caption{Prompt for letter avoidance task (\cref{sec:constrained_text_generation_task}), and an example generation from naive locally constrained decoding (i.e., \actionalg).}
    \label{fig:constrained-generation-prompt}
\end{figure}

\paragraph{Value function} In this task, we do not train a value function. Instead, for $\alpha \in \{0.1,0.2,0.3,0.4,0.5\}$, we define \[\Vhat_\alpha(x,y) := \alpha^{H-|y|} \rstar(x,y).\] 
Note that with \actionalg, all of these value functions are equivalent, and induce a standard algorithm for constrained generation called \emph{locally constrained decoding} \citep{scholak2021picard,lipkin2025fast}: at each generation step $(x,y_{1:h})$, the next token $y_{h+1}$ is drawn from $\piref(\cdot\mid{}x,y_{1:h})$ conditioned on the event $\rstar(x,y_{1:h+1}) = 1$. However, these value functions are not equivalent for $\mainalg$, as they essentially vary the probability that \mainalg will backtrack.

\begin{figure}[t]\centering
\includegraphics[width=\linewidth]{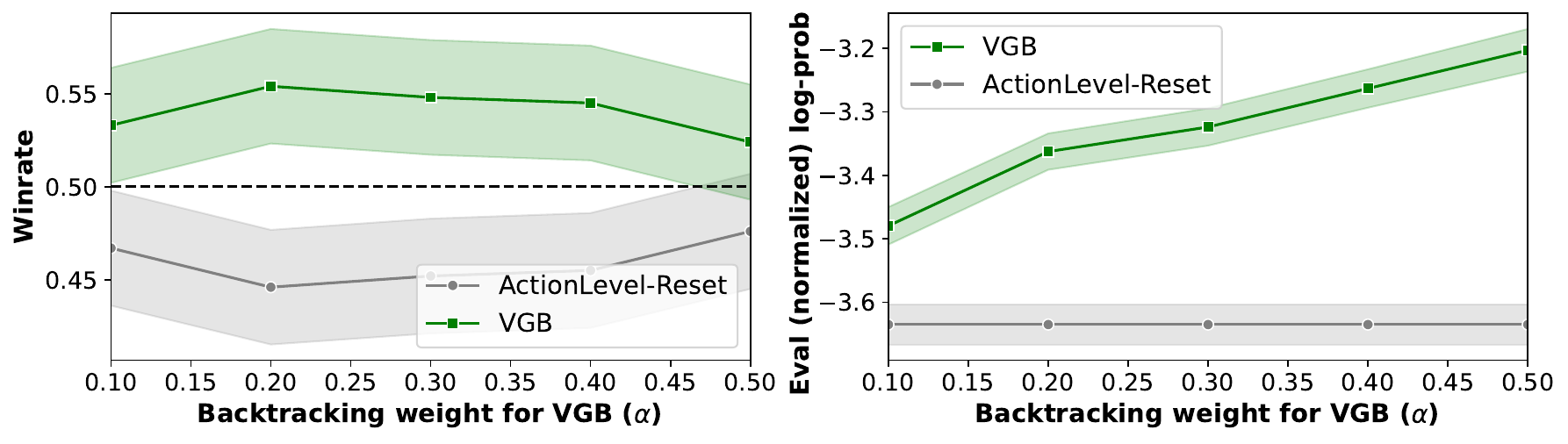}
\caption{Comparison of \mainalg against \actionalgreset for letter avoidance task (\cref{sec:constrained_text_generation_task}), with $H=32$ and varied backtracking weight $\alpha$ for \mainalg. \textbf{Left:} Winrate of \mainalg against \actionalgreset under pairwise comparison of $1000$ responses from each algorithm, using \texttt{GPT-4o-mini} to judge writing quality. We report the 95\% binomial confidence intervals. \textbf{Right:} Horizon-normalized log-probabilities of generated responses, evaluated by \texttt{Qwen-2.5-1.5B}; for each algorithm, we report the mean and standard error over $1000$ responses. %
}
\label{fig:constrained_text_resets}
\end{figure}

\paragraph{Evaluation protocol and results} With the prompt shown in \cref{fig:constrained-generation-prompt} and for each $\alpha \in \{0.1,0.2,0.3,0.4,0.5\}$, we draw $1000$ responses from $\mainalg$ using $\Vhat_\alpha$. We also draw $1000$ responses from $\actionalg$ using $\Vhat_{0.1}$ (as we discussed above, the choice of $\alpha$ is irrelevant for \actionalg). Recall that in our implementation of \actionalg with large alphabets, we draw $K=32$ candidate tokens at each generation step, and sample proportional to their estimated values. Since in this setting the estimated value can be exactly $0$, this can be ill-posed. We implement two natural heuristics for dealing with this: the first is to simply draw $K=32$ more candidate tokens, which increments the step count (we simply refer to this algorithm as \actionalg, since it is roughly comparable to sending $K \to \infty$); the second is to restart the response from the beginning (we refer to this algorithm as \actionalgreset). We remark that the first method can get ``stuck'' if the base model, conditioned on some prefix, puts almost all mass on tokens containing the letter ``e''. This is why the step count of \actionalg has higher mean and standard error than \actionalgreset (\cref{fig:constrained_text_steps}). 

We evaluate \mainalg against \actionalg and \actionalgreset using two metrics. The first metric is winrate according to \texttt{GPT-4o-mini}: given $1000$ completions from two algorithms, we pair up the completions and ask \texttt{GPT-4o-mini} to judge which is more coherent, using the prompt template in \Cref{fig:constrained-generation-winrate-prompt}. We present the two completions in a random order to avoid (positive or negative) primacy bias. The second metric is the average horizon-normalized log-probability of the completions (conditioned on the prompt) as computed by \texttt{Qwen2.5-1.5B}. See \cref{fig:constrained_text,fig:constrained_text_resets} for the comparisons with \actionalg and \actionalgreset respectively. For completeness, we also plot the average step count of each algorithm; see \cref{fig:constrained_text_steps}.

To quantify how the benefits scale in $H$, we repeat the experiment for $H \in \{8,16\}$ (this time only evaluating against \actionalg for computational reasons). As shown in \cref{fig:constrained-generation-varied-h}, we broadly see that \mainalg has larger benefits for larger $H$, particularly in the (horizon-normalized) log-probabilities---the winrate increases from $H=8$ to $H=16$ but seems roughly comparable between $H=16$ and $H=32$.

\begin{figure}
    \centering
    \includegraphics[width=0.5\linewidth]{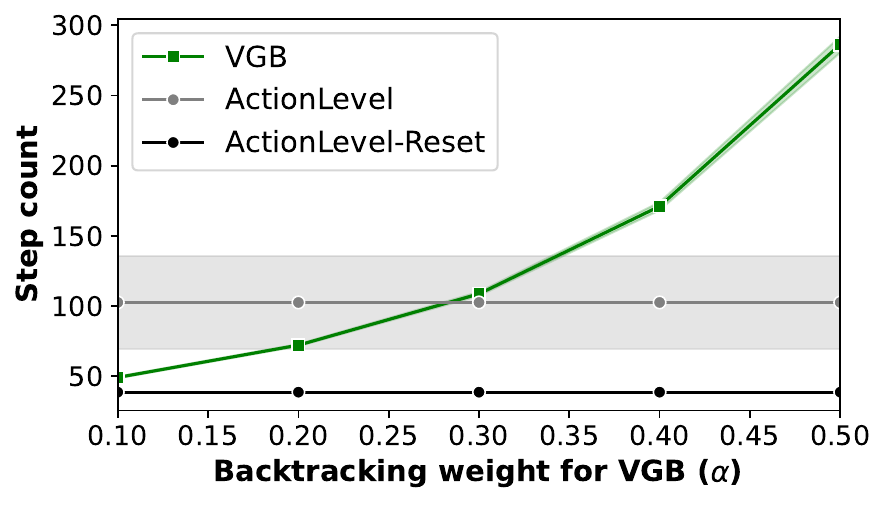}
    \caption{Step count comparison for \mainalg against \actionalg and \actionalgreset for letter avoidance task (\cref{sec:constrained_text_generation_task}), with $H=32$ and varied backtracking weight $\alpha$ for \mainalg.}
    \label{fig:constrained_text_steps}
\end{figure}

\begin{figure}[t]
    \centering
    \begin{minipage}{0.85\textwidth}
\begin{lstlisting}
You are a writing quality judge. You will be given a pair of text fragments. Say ONLY 'A' if the first fragment is more coherent, or ONLY 'B' if the second fragment is more coherent.
Fragment A: ______________

Fragment B: ______________
\end{lstlisting}

\end{minipage}
    \caption{Template of prompt fed to \texttt{GPT-4o-mini} for winrate computation in letter avoidance task (\cref{sec:constrained_text_generation_task}).}
    \label{fig:constrained-generation-winrate-prompt}
\end{figure}

\begin{figure}[p]
    \centering
    \includegraphics[width=\linewidth]{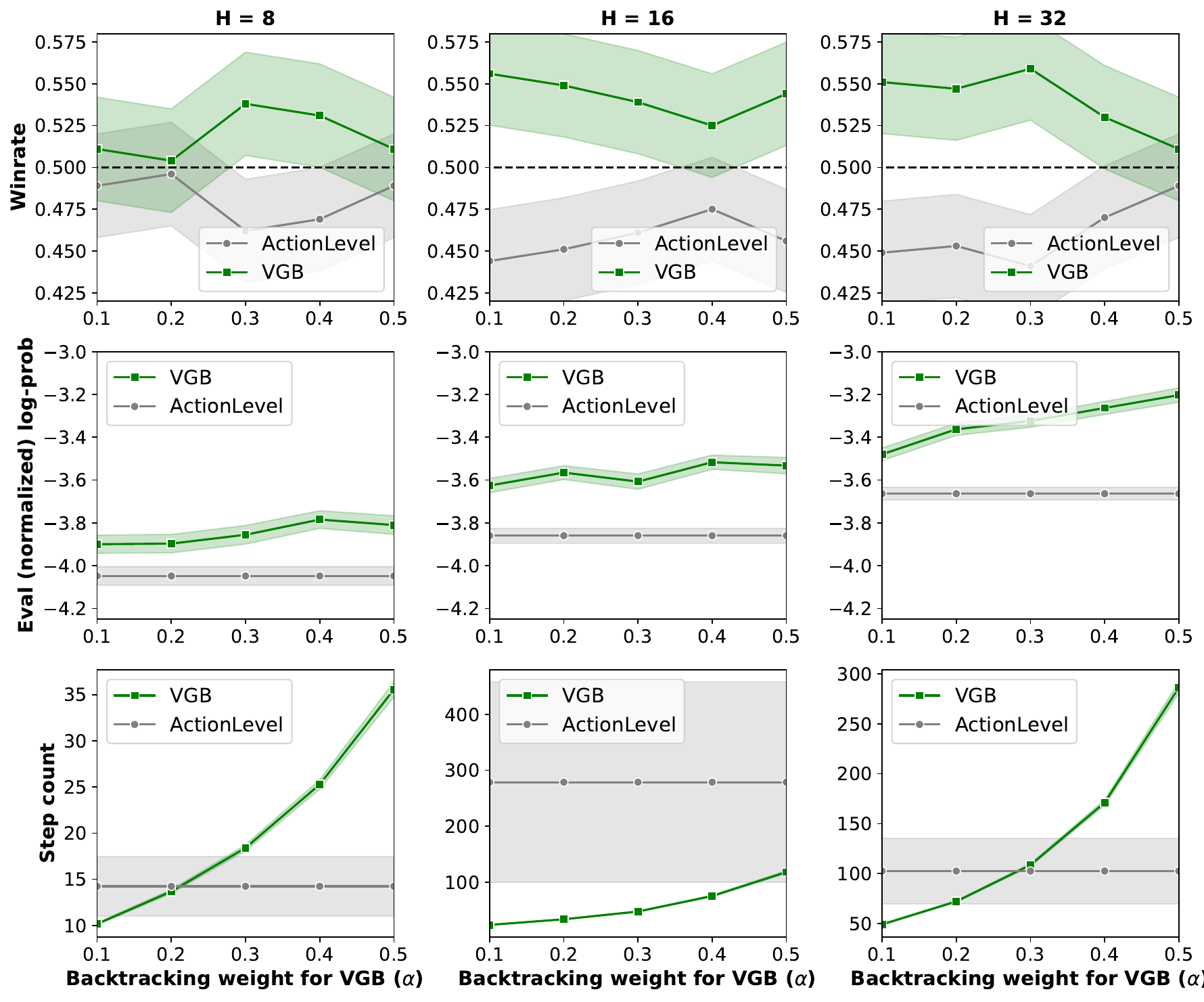}
    \caption{Comparison of \mainalg against \actionalg for letter avoidance task (\cref{sec:constrained_text_generation_task}) with varied horizon $H \in \{8,16,32\}$. \textbf{Top row:} Winrate of \mainalg against \actionalg under pairwise comparison of $1000$ responses from each algorithm, using \texttt{GPT-4o-mini} to judge writing quality. We report the 95\% binomial confidence intervals. \textbf{Middle row:} Average horizon-normalized log-probabilities of generated responses, evaluated by \texttt{Qwen-2.5-1.5B}; for each algorithm, we report the mean and standard error over $1000$ responses. \textbf{Bottom row:} Step count comparison; we note that the average step count for \actionalg is higher at $H=16$ than $H=32$ due to an outlier generation that required $148251$ steps.} 
    \label{fig:constrained-generation-varied-h}
\end{figure}

\subsection{Additional Exploratory Experimental Results}\label{sec:additional-results}

\paragraph{Value function parametrization matters} In the Dyck grammar task, we saw that the single-line change from \actionalg to \mainalg---adding a probability of backtracking---robustly improves \emph{accuracy} across training epochs. In this case, the value function was trained on the concatenation of one-hot encodings of the partial generation. In the code generation task, this parametrization is infeasible due to the large token space. Instead, the value function is trained on the mean-pooling of the hidden states (in the base model) of each token in the partial generation. Again, \mainalg has robustly higher accuracy than \actionalg. But if the value function is trained on the hidden state of just the \emph{last} token in the partial generation, then this effect disappears (\cref{fig:window-length-acc}). While in principle the last hidden state should encode all relevant information about the partial generation, this finding suggests that in practice it may heavily attend to recent tokens %
---thus, some benefits of backtracking-based algorithms can be contingent on exactly what features they are trained on.

\paragraph{Momentum improves the query complexity of \mainalg} A classical method for accelerating Markov chains like \mainalg is to add \emph{momentum}, which decreases the likelihood that the chain will ``switch directions'' (from going down the tree to going up, and vice versa) while preserving the stationary distribution. Hayes and Sinclair \citep{hayes2010liftings} theoretically investigated the benefits of momentum specifically for the Sinclair-Jerrum walk, and showed that it provably improves runtime if the approximate counts are $(1 \pm o(1))$-accurate (specifically, the accelerated chain mixes to $\pistar$ in time $\vepsv H^2$ under \cref{assump:unif-mgf}). We implement this Markov chain (with the practical tweak of stopping it as soon as it reaches a leaf), and find that it improves the query complexity of \mainalg by nearly an order of magnitude, while achieving comparable (or even higher) accuracy (\cref{tab:dyck_acc_diversity_breakdown}). However, it does not enjoy the same distributional benefits as \mainalg (\cref{fig:dyck_distribution}), suggesting a potentially fundamental statistical-computational tradeoff.

\begin{figure}[t]\centering
\includegraphics[width=0.5\linewidth]{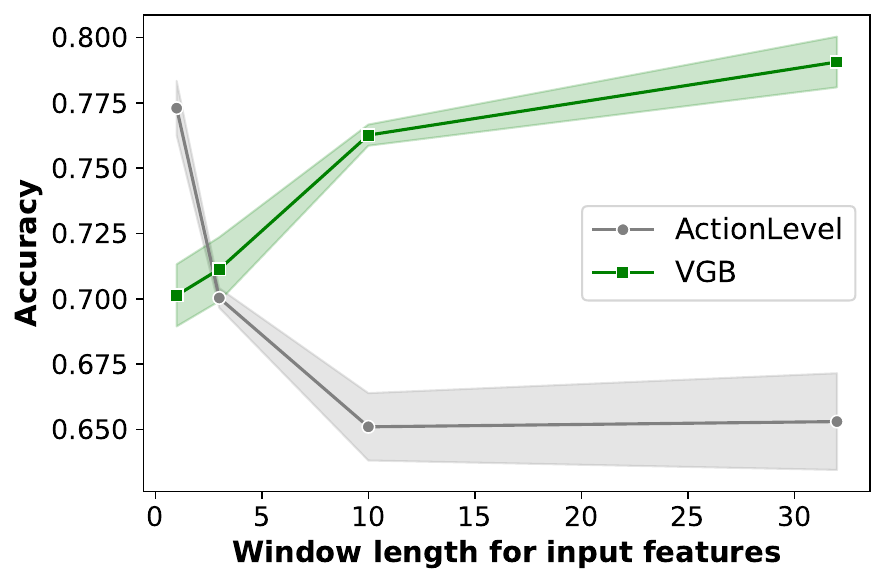}
\caption{Window length of mean-pooled input features for value function (\textbf{x}) against accuracy of \mainalg and \actionalg (\textbf{y}) for Python test case generation task (\cref{sec:test_case_generation_task}). We repeat the experiment for $3$ independent seeds and report the mean and standard error. All algorithms generate at most $32$ new tokens per prompt, so window length $32$ corresponds to mean-pooling over all tokens generated so far (the method used for the main experiment in \cref{sec:test_case_generation_task}). We never pool over the tokens in the prompt.}
\label{fig:window-length-acc}
\end{figure}

\paragraph{Backtracking can help throughout generation} One might wonder whether e.g. backtracking only helps at the last \position, where the value function is likely most accurate (or even perfect, if the true rewards are being used). In \cref{fig:dyck_values_by_index_and_error}, we demonstrate that for the Dyck grammar task, for a prefix where \actionalg achieves average reward roughly $0.5$, it often makes mistakes fairly early in the generation sequence: the most common error indices are $\{4,5,6,8\}$ out of $17$. Moreover, \cref{fig:dyck_values_by_index_and_error} shows that \emph{these errors can be detected early}: conditioned on a particular index for the first error in a generation, we see that the average estimated value at subsequent indices rapidly drops off (compared to the average estimated value for sequences where \actionalg makes no error). Thus, it is (at least at a population level) possible for backtracking to be useful before reaching the leaves of the generation tree.

\begin{figure}[t]
    \centering
    \includegraphics[width=0.5\linewidth]{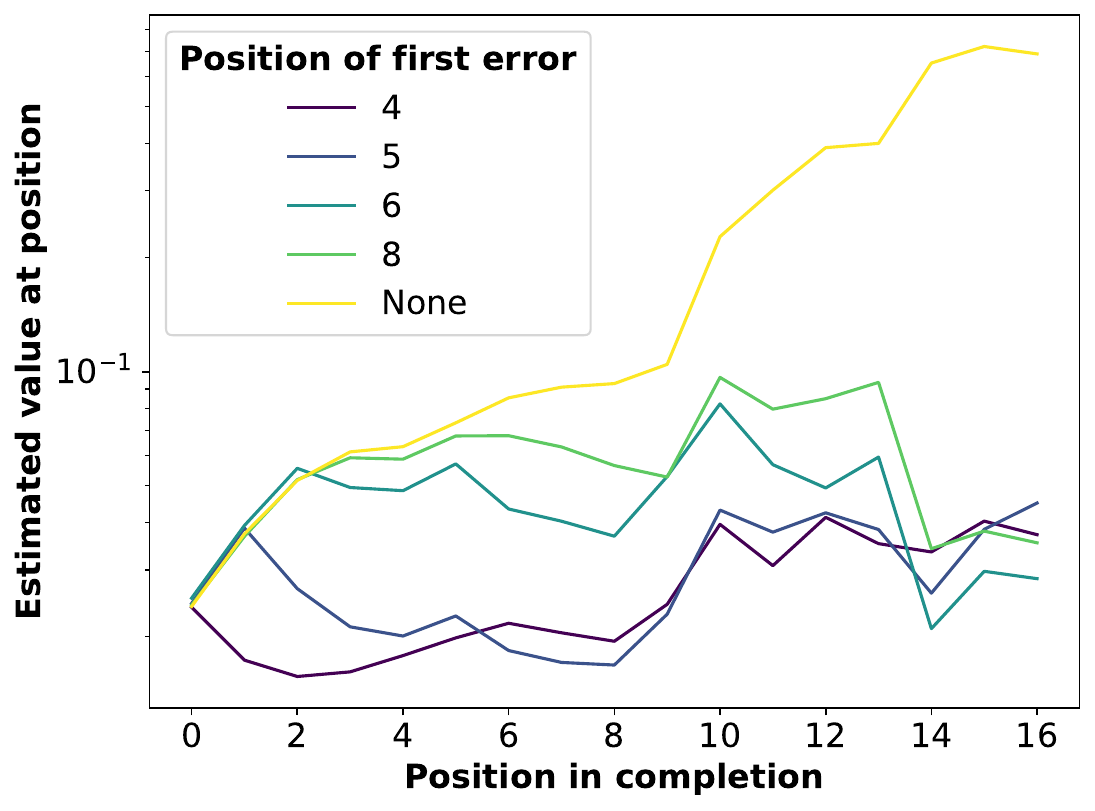}
    \caption{Average estimated values (\textbf{y}) in the Dyck grammar task (\cref{sec:Dyck_grammar_task}), by position of first error (\textbf{hue}) and position in completion (\textbf{x}). We pick a single prefix and value function trained for $20$ epochs. We generate $1000$ completions using \actionalg and record the estimated value at each step of each generation, as well as the first position in the generation at which a mistake has been made (i.e. it becomes impossible to complete to a valid string); if the generation is correct, we record this as ``None''. For each of the most common error positions $\{4,5,6,8,\texttt{None}\}$, we plot the average estimated value at each position of the completion. We see that errors early in the completion can be witnessed (at least, at a population level) well before the final position.}
    \label{fig:dyck_values_by_index_and_error}
\end{figure}

\clearpage

\part{Proofs and Supporting Results}

\section{Preliminaries}\label{app:prelim}

 In this section, we collect standard results that will be used in our proofs.

\subsection{Probability Theory} 

\begin{definition}[Probability divergences]
  \label{def:chisq}
  Let $p,q$ be distributions on a common domain $\cX$, with $p \ll q$. 
  The $\chi^2$-divergence from $p$ to $q$ is defined as
  \begin{align*}
    \Dchis{p}{q} := \EE_{\omega \sim q}\left[ \left(\frac{p(\omega)}{q(\omega)}\right)^2 \right] - 1.
  \end{align*}
  The KL-divergence from $p$ to $q$ is defined as
  \[
  \Dkl{p}{q} := \EE_{\omega \sim p}\left[\log\frac{p(\omega)}{q(\omega)}\right].
  \]
  The total variation distance between $p$ and $q$ is defined as
  \[
  \Dtv{p}{q} = \sup_{A}\left|p(A)-q(A)\right| = \frac{1}{2}\EE_{\omega\sim{}q}\left|\frac{p(\omega)}{q(\omega)}-1\right|. 
  \]
\end{definition}

\begin{proposition}[Divergence inequalities \citep{sason2016f}]
  \label{prop:divergence_inequalities}
  For any two distributions $p,q$ on a common domain, with $p \ll q$, we have
  \[
    2\cdot\Dtv{p}{q}^2 \leq \Dkl{p}{q} \leq \Dchis{p}{q}.
  \]
\end{proposition}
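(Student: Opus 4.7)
The plan is to establish the two inequalities separately. Both are classical, and I would present them in the order stated.

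For the second inequality $\Dkl{p}{q} \leq \Dchis{p}{q}$, I would give a short one-line proof by change of measure and the elementary bound $\log t \leq t-1$. Specifically, starting from the definition, $\Dkl{p}{q} = \EE_{\omega\sim p}[\log(p(\omega)/q(\omega))] \leq \EE_{\omega\sim p}[p(\omega)/q(\omega) - 1]$, and then rewriting the first term via change of measure as $\EE_{\omega\sim q}[(p(\omega)/q(\omega))^2]$ yields exactly $\Dchis{p}{q}$ (using $p\ll q$ to justify the change of measure). This step is routine.

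For the first inequality (Pinsker) $2\Dtv{p}{q}^2 \leq \Dkl{p}{q}$, I would use the standard two-step reduction. First, I would use the data processing inequality for KL together with the variational definition $\Dtv{p}{q} = \sup_A |p(A)-q(A)|$: taking a set $A^\star$ achieving the supremum (or an approximating sequence) and pushing forward through the indicator $\mathbf{1}_{A^\star}$ reduces the problem to Bernoulli distributions $\mathrm{Ber}(a)$ and $\mathrm{Ber}(b)$ with $a=p(A^\star)$, $b=q(A^\star)$. This preserves the total variation exactly and only decreases the KL.

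Second, I would prove the scalar inequality
\[
f(b) \;:=\; a\log\tfrac{a}{b} + (1-a)\log\tfrac{1-a}{1-b} - 2(a-b)^2 \;\geq\; 0
\]
for fixed $a\in(0,1)$ and all $b\in(0,1)$. A direct calculation gives $f(a)=0$ and
\[
f'(b) \;=\; (a-b)\Bigl(4 - \tfrac{1}{b(1-b)}\Bigr),
\]
and since $b(1-b)\leq 1/4$ the second factor is nonpositive, making $f'$ have the opposite sign of $a-b$; thus $f$ attains its minimum at $b=a$, giving $f(b)\geq 0$. Combining with the reduction step yields Pinsker. The main (mild) technical point is just the measure-theoretic care for the reduction—handling the possibility that the total variation supremum is not attained by a single set—but this is standard and can be dispatched by taking a sequence of sets approaching the supremum and invoking continuity, or by restricting attention to the $\sigma$-algebra generated by the Radon–Nikodym derivative $dp/dq$. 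No step should be a serious obstacle since both inequalities are textbook.
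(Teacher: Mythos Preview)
Your proof is correct and standard. However, the paper does not actually prove this proposition: it is stated with a citation to \citet{sason2016f} and no proof is given, as both inequalities (Pinsker and $\Dkl{\cdot}{\cdot}\leq\Dchis{\cdot}{\cdot}$) are classical textbook results. Your argument---the $\log t\leq t-1$ bound plus change of measure for the second inequality, and the data-processing reduction to Bernoullis followed by the scalar calculus argument for Pinsker---is exactly the usual route and is fine.
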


\begin{lemma}\label{lemma:bin-tv}
Let $n \in \NN$ and $\Delta \in (0,1/2)$. Then
\[\Dtv{\Bin(n,1/2)}{\Bin(n,1/2+\Delta)} = \Omega(\Delta\sqrt{n}).\]
\end{lemma}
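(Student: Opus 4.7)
The plan is to prove the stronger statement $\Dtv{\Bin(n,1/2)}{\Bin(n,1/2+\Delta)} = \Omega(\min(\Delta\sqrt{n},1))$, since TV is always bounded by $1$, and split the argument into two regimes according to whether $\Delta\sqrt{n}$ exceeds a suitable absolute constant $c_0$. In the regime $\Delta\sqrt{n} \geq c_0$ (say $c_0 = 2$), the means of $P := \Bin(n,1/2)$ and $Q := \Bin(n,1/2+\Delta)$ differ by $n\Delta \geq c_0 \sqrt{n}$, while both have variance at most $n/4$. I would apply Chebyshev's inequality to the event $E = \{k \leq n/2 + n\Delta/2\}$, which gives $P(E) \geq 1 - 1/(n\Delta^2)$ and $Q(E) \leq 1/(n\Delta^2)$. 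Thus $\Dtv{P}{Q} \geq P(E) - Q(E) \geq 1 - 2/(n\Delta^2) = \Omega(1)$, which matches $\Omega(\min(\Delta\sqrt{n},1))$ in this regime.

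In the regime $\Delta\sqrt{n} \leq c_0$, I would work directly with the likelihood ratio. Writing $k = n/2 + t$, Taylor expansion of $\log(1 \pm 2\Delta)$ gives
\[
    \log\frac{Q(k)}{P(k)} = k \log(1+2\Delta) + (n-k)\log(1-2\Delta) = 4\Delta t - 2n\Delta^{2} + O(n\Delta^{3}).
\]
For $|t| \leq C\sqrt{n}$, this quantity has magnitude $O(\Delta\sqrt{n}) = O(c_0)$, so I can pick $c_0$ small enough that $|e^{x}-1| \geq |x|/2$ applies uniformly to the right-hand side. Consequently,
\[
    |Q(k) - P(k)| \geq \tfrac{1}{2}\,P(k)\,\bigl|4\Delta t - 2n\Delta^{2}\bigr| \quad \text{for } |t| \leq C\sqrt{n}.
\]

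Summing, it remains to lower-bound $E_P[|4\Delta T - 2n\Delta^{2}|\,\indic\{|T|\leq C\sqrt{n}\}]$ where $T = X - n/2$ under $X \sim P$. Since $T$ is symmetric around $0$ with variance $n/4$, the mean-absolute-deviation identity (minimized at the median) gives $E_P[|T - n\Delta/2|] \geq E_P[|T|] \gtrsim \sqrt{n}$. Choosing $C$ large, a Chebyshev-plus-Cauchy-Schwarz truncation shows $E_P[|T - n\Delta/2|\,\indic\{|T|\leq C\sqrt{n}\}] \gtrsim \sqrt{n}$ as well. Combining,
\[
    2\Dtv{P}{Q} \;\geq\; 4\Delta \cdot E_P\bigl[|T - n\Delta/2|\,\indic\{|T|\leq C\sqrt{n}\}\bigr] \;\gtrsim\; \Delta\sqrt{n},
\]
completing the proof. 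The main obstacle is the bookkeeping in the second regime: controlling the higher-order terms in the Taylor expansion of $\log(Q/P)$ (both the $O(n\Delta^{3})$ remainder and the $|e^{x}-1-x|$ error) requires choosing $c_0$ small enough and the truncation parameter $C$ large enough that these lower-order contributions do not swamp the main term $4\Delta E_P[|T|]$.
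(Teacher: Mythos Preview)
Your overall strategy is sound, but there is a genuine tension between your two regimes that you do not resolve. The Chebyshev argument in the first regime gives $\Dtv{P}{Q} \geq 1 - 2/(n\Delta^2)$, which is only useful when $n\Delta^2$ is bounded away from $2$---forcing $c_0 \gtrsim 2$. But the inequality $|e^x - 1| \geq |x|/2$ that you invoke in the second regime requires the log-likelihood ratio to be uniformly small; since $|4\Delta t|$ can be as large as $4Cc_0$ on $\{|t| \leq C\sqrt{n}\}$, this forces $c_0$ to be small once $C$ is fixed. No single threshold works for both cases, leaving an uncovered band. This is patchable (for instance, replace $|e^x - 1| \geq |x|/2$ by the universally valid $|e^x - 1| \geq \tfrac{1}{2}\min(|x|,1)$ and carry the $\min$ through), but as written there is a gap.

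The paper sidesteps the case split entirely via an algebraic factorization. Writing
\[
\frac{q(X)}{p(X)} = (1+2\Delta)^X(1-2\Delta)^{n-X} = (1-4\Delta^2)^X(1-2\Delta)^{n-2X}
\]
and restricting to $X \leq n/2$, both factors are at most $1$, so
\[
1 - \frac{q(X)}{p(X)} \;\geq\; 1 - (1-2\Delta)^{n-2X} \;\geq\; 1 - e^{-2\Delta(n-2X)} \;\gtrsim\; \min\bigl(\Delta(n-2X),\,1\bigr).
\]
An anti-concentration bound $\Pr_P[X \leq n/2 - \sqrt{n}] = \Omega(1)$ then gives $\Dtv{P}{Q} \gtrsim \min(\Delta\sqrt{n},1)$ directly. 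The inequality $1 - e^{-t} \gtrsim \min(t,1)$ is precisely what absorbs the dichotomy you are trying to manage by hand, and the one-sided restriction $X \leq n/2$ removes any need for Taylor expansion or remainder control.
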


\begin{proof}
Let $p := \Bin(n,1/2)$ and $q := \Bin(n,1/2+\Delta)$. Then
\begin{align*}
\Dtv{p}{q}
&\gtrsim \EE_{X \sim p} \left|1 - \frac{q(X)}{p(X)}\right| \\ 
&= \EE_{X \sim p} \left|1 - (1+\Delta)^X (1-\Delta)^{n-X} \right| \\ 
&=\EE_{X \sim p} \left|1 - (1-\Delta^2)^X (1-\Delta)^{n-2X} \right| \\ 
&\geq\EE_{X \sim p} \left[1 - (1-\Delta)^{n-2X} \right] \indic[X \leq n/2] \\ 
&\geq \EE_{X \sim p} \left[1 - e^{-(n-2X)\Delta} \right] \indic[X \leq n/2] \\ 
&\gtrsim \EE_{X \sim p} \min((n-2X)\Delta,1) \indic[X \leq n/2]
\end{align*}
using the fact that $1-e^{-t} \geq \frac{1}{2}\min(t,1)$ for all $t \geq 0$. It follows that
\[
\Dtv{p}{q} \gtrsim \EE_{X\sim p} \min(\Delta\sqrt{n}, 1) \indic[X \leq n/2 - \sqrt{n}] \gtrsim \min(\Delta\sqrt{n},1)\]
where the final inequality is by a standard anti-concentration bound for the central Binomial distribution \citep[Proposition 7.3.2]{matouvsek2001probabilistic}.
\end{proof}

\begin{lemma}\label{lemma:divergence-conditioning}
Let $p,q$ be distributions on a finite domain $\cX$ and let $\cE \subseteq \cX$ be an event. Then 
\[\Dtv{p|_{\cE}}{q|_{\cE}} \leq \frac{\Dtv{p}{q}}{q(\cE)}\]
and
\[\Dchis{p|_{\cE}}{q|_{\cE}} \leq \frac{\Dchis{p}{q}}{q(\cE) - 2\sqrt{q(\cE)\Dchis{p}{q}}}.\]
\end{lemma}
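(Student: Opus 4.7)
For the first bound, I would start from the definition $\Dtv{p|_\cE}{q|_\cE} = \tfrac{1}{2}\sum_{x\in\cE}|p(x)/p(\cE) - q(x)/q(\cE)|$ and multiply both sides by $q(\cE)$. The triangle inequality applied termwise,
\[
\left|\frac{p(x)q(\cE)}{p(\cE)} - q(x)\right| \leq p(x)\frac{|q(\cE)-p(\cE)|}{p(\cE)} + |p(x)-q(x)|,
\]
and summing over $x\in\cE$, gives $2q(\cE)\Dtv{p|_\cE}{q|_\cE} \leq |q(\cE)-p(\cE)| + \sum_{x\in\cE}|p(x)-q(x)|$. The one observation to make is that $q(\cE) - p(\cE) = \sum_{x\in\cE^c}(p(x)-q(x))$, so $|q(\cE)-p(\cE)| \leq \sum_{x\in\cE^c}|p(x)-q(x)|$, and the right-hand side collapses to $\sum_x|p(x)-q(x)| = 2\Dtv{p}{q}$.

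For the chi-squared bound, the central algebraic identity, with $w := p(\cE)/q(\cE)$, is
\[
\Dchis{p|_\cE}{q|_\cE} = \frac{1}{w^2}\,\EE_{q|_\cE}\!\left[\left(\frac{p(x)}{q(x)} - w\right)^{\!2}\right].
\]
Because $w = \EE_{q|_\cE}[p/q]$, the inner expectation is the variance of $p/q$ under $q|_\cE$, and I would invoke the variance-minimization property to get $\EE_{q|_\cE}[(p/q-w)^2] \leq \EE_{q|_\cE}[(p/q-1)^2] = \EE_q[\indic_\cE(p/q-1)^2]/q(\cE) \leq \Dchis{p}{q}/q(\cE)$, hence $\Dchis{p|_\cE}{q|_\cE} \leq q(\cE)\Dchis{p}{q}/p(\cE)^2$. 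Converting the $p(\cE)$ in the denominator to $q(\cE)$ is then a one-line Cauchy-Schwarz step: from $p(\cE)-q(\cE) = \EE_q[(p/q-1)\indic_\cE]$ one gets $|p(\cE)-q(\cE)| \leq \sqrt{q(\cE)\Dchis{p}{q}}$, so $p(\cE)^2 \geq (q(\cE) - \sqrt{q(\cE)\Dchis{p}{q}})^2 \geq q(\cE)(q(\cE) - 2\sqrt{q(\cE)\Dchis{p}{q}})$ after discarding a non-negative cross term, and substituting yields the claim.

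No step is particularly hard, but the one easy-to-miss subtlety is the re-centering in the chi-squared calculation: a direct bound via $\sum_{x\in\cE}p(x)^2/q(x) \leq 1+\Dchis{p}{q}$ yields only $\Dchis{p|_\cE}{q|_\cE} \leq (q(\cE)(1+\Dchis{p}{q}) - p(\cE)^2)/p(\cE)^2$, which carries an additive $O(1)$ slack (essentially the $(1-w)^2$ contribution) that ruins the bound when $q(\cE)$ is small. Passing through the variance of $p/q$ under $q|_\cE$ and re-centering around $w$ instead of $1$ is exactly what absorbs that slack.
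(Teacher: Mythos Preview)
Your proof is correct and follows essentially the same route as the paper: the TV bound uses the identical triangle-inequality split plus the complement trick, and for $\chi^2$ both arguments arrive at $\Dchis{p|_\cE}{q|_\cE}\le q(\cE)\Dchis{p}{q}/p(\cE)^2$ (your variance-minimization step is exactly the bias--variance identity the paper writes out explicitly and then drops the squared-bias term), followed by the same Cauchy--Schwarz/data-processing bound $|p(\cE)-q(\cE)|\le\sqrt{q(\cE)\Dchis{p}{q}}$ to convert $p(\cE)$ to $q(\cE)$. The only thing the paper adds that you leave implicit is the remark that squaring $p(\cE)\ge q(\cE)-\sqrt{q(\cE)\Dchis{p}{q}}$ requires the right side to be nonnegative, but the stated bound is vacuous otherwise so this is harmless.
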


\begin{proof}
We have
\begin{align*}
2\Dtv{p|_{\cE}}{q|_{\cE}}
&= \sum_{x\in\cE} \left|\frac{p(x)}{p(\cE)} - \frac{q(x)}{q(\cE)}\right| \\ 
&\leq \frac{1}{q(\cE)} \sum_{x\in\cE}\left|p(x) - q(x)\right| + \left|\frac{1}{p(\cE)} - \frac{1}{q(\cE)}\right| \sum_{x\in\cE} p(x) \\ 
&= \frac{1}{q(\cE)} \sum_{x\in\cE}\left|p(x) - q(x)\right| + \frac{|p(\cE) - q(\cE)|}{q(\cE)} \\ 
&=\frac{1}{q(\cE)} \sum_{x\in\cE}\left|p(x) - q(x)\right| + \frac{|p(\ol \cE) - q(\ol\cE)|}{q(\cE)} \\ 
&\leq \frac{1}{q(\cE)} \sum_{x\in\cE}\left|p(x) - q(x)\right| + \frac{1}{q(\cE)}\sum_{x\in\not\in\cE} |p(x)-q(x)| \\ 
&= \frac{2}{q(\cE)} \Dtv{p}{q}
\end{align*}
which proves the first claim. For the second claim, we note that
\begin{align*} \Dchis{p|_{\cE}}{q|_{\cE}} 
&= \frac{1}{q(\cE)} \sum_{x\in\cE} q(x)\left(1 - \frac{q(\cE)}{p(\cE)} \frac{p(x)}{q(x)}\right)^2 \\ 
&= \frac{q(\cE)}{p(\cE)^2} \sum_{x\in\cE} q(x) \left(1 - \frac{p(x)}{q(x)}\right)^2 - \left(1 - \frac{q(\cE)}{p(\cE)}\right)^2 \\ 
&\leq \frac{q(\cE)}{p(\cE)^2} \Dchis{p}{q}.
\end{align*}
Now observe that by the data processing inequality, $\frac{(p(\cE) - q(\cE))^2}{q(\cE)} \leq \Dchis{p}{q}$, and therefore $|p(\cE)-q(\cE)| \leq \sqrt{q(\cE) \Dchis{p}{q}}$. It follows that $p(\cE) \geq q(\cE) - \sqrt{q(\cE) \Dchis{p}{q}}$, and so
\[\Dchis{p|_{\cE}}{q|_{\cE}} \leq \frac{q(\cE)\Dchis{p}{q}}{(q(\cE) - \sqrt{q(\cE)\Dchis{p}{q}})^2} \leq \frac{\Dchis{p}{q}}{q(\cE) - 2\sqrt{q(\cE)\Dchis{p}{q}}}\]
where the first inequality requires $\Dchis{p}{q} \leq q(\cE)$ (the final bound is vacuous if this is false, so it holds unconditionally).
\end{proof}

\begin{lemma}\label{lemma:averaging}
Let $T \in \NN$ and $\epsilon_{1:T},\delta > 0$. and let $Z_1,\dots,Z_T$ be nonnegative random variables with $\Prr[Z_i<\delta] \leq \epsilon_i$ for each $i \in [T]$. Then
\[\Prr\left[\frac{1}{T}\sum_{i=1}^T Z_i < \frac{\delta}{2}\right] \leq \frac{2}{T}\sum_{i=1}^T \epsilon_i.\]
\end{lemma}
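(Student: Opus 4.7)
The plan is a short reduction to Markov's inequality after passing to indicator variables. Let me sketch it.

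First I would introduce the Bernoulli random variables $X_i := \indic[Z_i < \delta]$, which by hypothesis satisfy $\En[X_i] = \Prr[Z_i < \delta] \leq \veps_i$. The goal is then to relate the tail event $\{\frac{1}{T}\sum_i Z_i < \delta/2\}$ to the count $\sum_i X_i$ of ``small'' $Z_i$'s.

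The key deterministic observation is that since the $Z_i$ are nonnegative, we can lower-bound their sum by dropping the small contributions:
\[
\sum_{i=1}^T Z_i \;\geq\; \sum_{i: Z_i \geq \delta} Z_i \;\geq\; \delta \cdot \#\{i: Z_i \geq \delta\} \;=\; \delta \cdot (T - \tsum_{i=1}^T X_i).
\]
Thus on the event $\frac{1}{T}\sum_i Z_i < \delta/2$, we must have $\delta(T - \sum_i X_i) < T\delta/2$, i.e.\ $\sum_i X_i > T/2$. So $\{\frac{1}{T}\sum_i Z_i < \delta/2\} \subseteq \{\sum_i X_i > T/2\}$.

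The final step is just Markov's inequality applied to the nonnegative random variable $\sum_i X_i$:
\[
\Prr\Big[\tsum_i X_i > T/2\Big] \;\leq\; \frac{2 \En[\sum_i X_i]}{T} \;\leq\; \frac{2}{T}\sum_{i=1}^T \veps_i,
\]
which gives the desired bound. There is no real obstacle here---the lemma is essentially a two-line argument, with the only subtlety being the passage from a lower bound on the average to a count of how many $Z_i$ fall below the threshold $\delta$. I would expect the actual proof in the paper to be only a few lines.
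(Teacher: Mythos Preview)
Your proof is correct and essentially identical to the paper's: the paper defines $N := \#\{i: Z_i < \delta\}$ (your $\sum_i X_i$), bounds $\En[N] \leq \sum_i \veps_i$, applies Markov to get $\Prr[N > T/2] \leq \frac{2}{T}\sum_i \veps_i$, and notes that $N \leq T/2$ implies $\frac{1}{T}\sum_i Z_i \geq \delta/2$. Same argument, same length.
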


\begin{proof}
Define the random variable $N := \#\{i: Z_i < \delta\}$. Then $\EE[N] \leq \sum_{i=1}^T \epsilon_i$, so $\Prr[N > T/2] \leq \frac{2}{T}\sum_{i=1}^T \epsilon_i$. Moreover, if $N \leq T/2$ then $\frac{1}{T}\sum_{i=1}^T Z_i \geq \delta/2$ as needed.
\end{proof}

\subsubsection{Markov Chains}

We present several basic definitions and results from the theory of Markov chains. See \cite{levin2009markov} or other standard references for additional background.

\begin{definition}\label{def:reversible}
A Markov chain $\markch$ on state space $\cX$ is \emph{reversible with stationary distribution $\mu$} if, for all $u,v \in \cX$, it holds that
\[\markch(v\mid{}u)\mu(u) = \markch(u\mid{}v)\mu(v).\]
\end{definition}

\begin{definition}\label{def:ergodic}
A Markov chain $\markch$ on state space $\cX$ is \emph{irreducible} if, for all $u,v\in \cX$, there exists $t \geq 0$ such that $\markch^t(u\mid{}v) > 0$. The Markov chain is \emph{aperiodic} if, for every $u \in \cX$, the GCD of $\{t \geq 1: \markch(u\mid{}u) > 0\}$ is $1$. The Markov chain is \emph{ergodic} if it is both irreducible and aperiodic.
\end{definition}

Conductance is a standard measure of the bottlenecks in a Markov chain, and is formally defined as follows.

\begin{definition}[Conductance]\label{def:conductance}
The conductance of a Markov chain $\markch$ with stationary distribution $\mu$ is defined as
\[\Phi := \min_{S \subset \cT: 0<\mu(S) \leq 1/2} \Phi_S\] where 
\[\Phi_S := \frac{\sum_{u \in S, v \notin S} \stdist(u) \markch(v \mid u)}{\sum_{u \in S} \stdist(u)}.\]
\end{definition}

We recall the following fundamental result that relates the conductance with the mixing time of the Markov chain. This result can be derived from Cheeger's inequality, which relates the conductance of a Markov chain to its spectral gap; see e.g. \citep[Chapter 3]{mathaspectsMC} or Theorem 2.1 and the proof of Theorem 1.2 in \cite{liu2023spectral}.

\begin{theorem}[Conductance implies mixing in $\chi^2$] \label{lem:conductance-mixing-chi2}
For any ergodic reversible Markov chain $\markch$ with stationary distribution $\mu$ and conductance $\Phi$, for any initial state $z$ and any $t \in \NN$, it holds that
\begin{align}
    \Dchis{\markch^t(\cdot\mid{}z)}{\mu} \lesssim e^{- \Phi^2 t / 2} \frac{1}{\mu(z)}.
\end{align}
\end{theorem}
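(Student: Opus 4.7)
The statement is a classical consequence of Cheeger's inequality combined with the standard spectral decomposition bound for $\chi^2$-mixing of reversible Markov chains. The plan is therefore to (i) use Cheeger to translate the conductance bound into a spectral gap bound for $\markch$, and then (ii) expand the time-$t$ density with respect to the eigenbasis of $\markch$ on $L^2(\mu)$ and apply contraction.

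\paragraph{Key steps.} Since $\markch$ is reversible with stationary distribution $\mu$, the averaging operator $(\markch f)(x) := \sum_y \markch(y\mid x) f(y)$ is self-adjoint on the weighted space $L^2(\mu)$, with real eigenvalues $1 = \lambda_1 > \lambda_2 \geq \cdots \geq \lambda_{|\cX|} \geq -1$ and a corresponding orthonormal eigenbasis $\{\phi_i\}$ (with $\phi_1 \equiv 1$). Cheeger's inequality for reversible Markov chains gives $1 - \lambda_2 \geq \Phi^2/2$. Moreover, because the chain of interest here is \emph{lazy} (by construction in \mainalg, \cref{alg:valueflow-values}, each step retains the current state with probability at least $1/2$), all eigenvalues are nonnegative, so the absolute second eigenvalue $\lambda_* := \max(\lambda_2, |\lambda_{|\cX|}|)$ equals $\lambda_2$, and therefore $\lambda_* \leq 1 - \Phi^2/2$.

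Next, introduce the density $q_t(y) := \markch^t(y\mid z)/\mu(y)$ of the time-$t$ distribution with respect to $\mu$, so that $\Dchis{\markch^t(\cdot\mid z)}{\mu} = \|q_t - 1\|_\mu^2$. A direct calculation using the detailed-balance equation of \cref{def:reversible} shows $q_{t+1} = \markch q_t$ in $L^2(\mu)$, whence $q_t - 1 = \markch^t (q_0 - 1)$. The initial density $q_0(y) = \indic[y = z]/\mu(z)$ is orthogonal (in $\langle\cdot,\cdot\rangle_\mu$) to the constant function $\phi_1 \equiv 1$, and satisfies $\|q_0 - 1\|_\mu^2 = 1/\mu(z) - 1 \leq 1/\mu(z)$. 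Expanding $q_0 - 1$ in the $\{\phi_i\}_{i \geq 2}$ basis and applying $\markch^t$ contracts each coordinate by $\lambda_i^t$, yielding
\[
\Dchis{\markch^t(\cdot\mid z)}{\mu} \;=\; \|q_t - 1\|_\mu^2 \;\leq\; \lambda_*^{2t}\,\|q_0 - 1\|_\mu^2 \;\leq\; \bigl(1 - \Phi^2/2\bigr)^{2t}\,\frac{1}{\mu(z)} \;\lesssim\; e^{-\Phi^2 t/2}\,\frac{1}{\mu(z)},
\]
which is the claimed bound (in fact slightly stronger, since one could save a further factor of $2$ in the exponent).

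\paragraph{Expected obstacles.} This is a standard textbook-style argument and no substantive obstacle is anticipated. The only technical care concerns the sign of $\lambda_{|\cX|}$: if laziness is not assumed a priori, one must either verify it from the construction (which holds for \mainalg), pass to the lazy version $(I + \markch)/2$ at a constant-factor cost in both $t$ and $\Phi$, or argue via the absolute spectral gap directly. For \cref{alg:valueflow-values}, the $1/2$-self-loop in \lineref{line:sampling-main} makes this step immediate.
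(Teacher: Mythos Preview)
Your proposal is correct and follows precisely the approach the paper indicates: the paper does not give a self-contained proof but states that the result ``can be derived from Cheeger's inequality, which relates the conductance of a Markov chain to its spectral gap,'' citing standard references. Your derivation---Cheeger for the spectral gap, then $L^2(\mu)$ spectral decomposition for $\chi^2$-contraction, with laziness ensuring nonnegativity of eigenvalues---is exactly this standard route, and you correctly flag that the laziness assumption (supplied by the $1/2$ self-loop in \mainalg) is what handles the bottom eigenvalue.
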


 \subsection{KL-Regularized Reinforcement Learning}\label{sec:prelim-kl}

In the KL-regularized setting introduced in \cref{sec:prelim}, the optimal KL-regularized value function $\Qstarb$ (e.g., \citet{rafailov2024r,xie2024exploratory}) is often defined via the following inductive procedure. For step $H$, we set $\Qstarb(x,y_{1:H})\ldef\rstar(x,y_{1:H})$. Then, for $h=H-1,\ldots,1$:
\begin{align}
Q_{\beta}^{\star}(x,y_{1:h}) 
&=\sup_{p\in\Delta(\cA)}\prn*{\EE_{y_{h+1}\sim p}\left[\Qstarb(x,y_{1:h+1})\right] - \beta\cdot\Dkl{p}{\piref(y_{h+1}=\cdot\mid{}x,y_{1:h})}}\label{eq:qstar1}\\
&= \beta\log\prn*{
\EE_{y_{h+1}\sim\piref(\cdot\mid{}x,y_{1:h})}\brk*{\exp\prn*{\beta^{-1}\Qstarb(x,y_{1:h+1})}}
}.\label{eq:qstar2}
\end{align}
\cref{eq:qstar2} can be viewed as a soft Bellman update; as $\beta\to{}0$, this update recovers standard dynamic programming (over those actions for which $\piref(y_{h+1}=\cdot\mid{}x,y_{1:h})>0$).
The optimal policy satisfies $\pistarb(y_{h}\mid{}x,y_{1:h-1}) \propto \piref(y_{h}\mid{}x,y_{1:h-1})\cdot\exp\prn[\big]{\beta^{-1}\Qstarb(x,y_{1:h})}$, and we further define $\Astarb(x,y_{1:h})\ldef{}\Qstarb(x,y_{1:h})-\Qstarb(x,y_{1:h-1})$ as the optimal \emph{advantage function}. It can be checked that with the above definition, $\Qstarb$ also satisfies 
\begin{align}
  \label{eq:qstar_global}
\Qstarb(x,y_{1:h}) = \beta\log\prn*{
  \EE_{y_{h+1:H}\sim\piref(\cdot\mid{}x,y_{1:h})}\brk*{\exp\prn*{\beta^{-1}\rstar(x,y_{1:H})}}
}.
\end{align}

The following standard fact relates KL-regularized regret to KL-divergence to the optimal policy.

\begin{lemma}[See {\cite[Lemma F.4]{foster2025good}}]\label{lemma:jbeta-kl}
Fix any $x\in\cX$ and $\pihat:\cX\to\Delta(\cY)$. It holds that
\[\Jbeta(\pistar;x) - \Jbeta(\pihat;x) = \beta \cdot \Dkl{\pihat(\cdot\mid{}x)}{\pistar(\cdot\mid{}x)}.\]
\end{lemma}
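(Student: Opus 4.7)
The plan is to exploit the explicit form of $\pistar$ to re-express the reward $\rstar$ in terms of a log-density ratio plus a $\pi$-independent normalizer, and then observe that $\Jbeta(\pi; x)$ collapses (up to an additive constant) to a negative KL to $\pistar$.

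First, I would use the definition $\pistar(y\mid{}x) \propto \piref(y\mid{}x)\exp(\beta^{-1}\rstar(x,y))$, which (with tilt $\tau(x,y) = \exp(\beta^{-1}\rstar(x,y))$) gives
\[
\pistar(y\mid{}x) = \frac{\piref(y\mid{}x)\exp(\beta^{-1}\rstar(x,y))}{Z(x)}, \qquad Z(x) := \sum_{y\in\cY}\piref(y\mid{}x)\exp(\beta^{-1}\rstar(x,y)).
\]
Taking logs and solving for $\rstar$ yields the key identity
\[
\rstar(x,y) \;=\; \beta\log\frac{\pistar(y\mid{}x)}{\piref(y\mid{}x)} \;+\; \beta\log Z(x).
\]

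Next, I would substitute this identity into the definition of $\Jbeta$. For any policy $\pi$,
\[
\Jbeta(\pi;x) = \EE_{y\sim\pi(\cdot\mid{}x)}\!\brk*{\beta\log\tfrac{\pistar(y\mid{}x)}{\piref(y\mid{}x)}} + \beta\log Z(x) - \beta\Dkl{\pi(\cdot\mid{}x)}{\piref(\cdot\mid{}x)}.
\]
The expectation on the right equals $\beta\EE_{y\sim\pi}\brk{\log\pistar(y\mid{}x)-\log\piref(y\mid{}x)}$, so adding and subtracting $\beta\EE_{y\sim\pi}\log\pi(y\mid{}x)$ recombines the terms as
\[
\Jbeta(\pi;x) \;=\; -\beta\Dkl{\pi(\cdot\mid{}x)}{\pistar(\cdot\mid{}x)} \;+\; \beta\log Z(x).
\]

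Finally, since $\log Z(x)$ does not depend on $\pi$, I would instantiate this identity at $\pi=\pistar$ (where the KL term vanishes) to get $\Jbeta(\pistar;x) = \beta\log Z(x)$, and subtracting the two expressions gives exactly
\[
\Jbeta(\pistar;x) - \Jbeta(\pihat;x) = \beta\Dkl{\pihat(\cdot\mid{}x)}{\pistar(\cdot\mid{}x)},
\]
as claimed. There is no real obstacle here: the lemma is a one-step consequence of the Gibbs variational principle, and the only thing to be careful about is keeping track of which KL (forward vs.\ reverse) appears, which is forced by the substitution above.
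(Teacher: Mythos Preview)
Your proof is correct. The paper does not actually give its own proof of this lemma; it simply cites \cite[Lemma F.4]{foster2025good} as the source, so there is nothing to compare against beyond noting that your argument is the standard Gibbs variational computation that such a citation would point to.
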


\subsection{Value Functions}

The following lemma states the key property of the true value function $\Vstar$, which relates it to the next-action conditional probabilities of $\pistar$. This identity has been observed previously by, e.g., \cite{yang2021fudge}. We will use this fact throughout the paper.

\begin{lemma}\label{lemma:vstar-density-ratio}
For any $x \in \cX$, $h \in [H]$, and $y_{1:h} \in \cA^h$, it holds that
\[\pistar(y_h\mid{}x, y_{1:h-1}) = \piref(y_h\mid{}x,y_{1:h-1}) \frac{\Vstar(x,y_{1:h})}{\Vstar(x,y_{1:h-1})}\]
\end{lemma}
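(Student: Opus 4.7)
The plan is to verify the identity by a direct computation, marginalizing $\pistar$ down to the prefix $y_{1:h}$ and then taking a telescoping ratio. Concretely, I would first observe that by the definition \eqref{eq:tilt} of $\pistar$, for any full response $y_{1:H}$,
\[
\pistar(y_{1:H}\mid x) \;=\; \frac{\piref(y_{1:H}\mid x)\,\tau(x,y_{1:H})}{Z(x)},
\]
where $Z(x) := \En_{y\sim\piref(\cdot\mid x)}[\tau(x,y)]$ is the normalizing constant.

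Next I would marginalize out the suffix $y_{h+1:H}$. Using the chain rule $\piref(y_{1:H}\mid x) = \piref(y_{1:h}\mid x)\,\piref(y_{h+1:H}\mid x,y_{1:h})$, I get
\[
\pistar(y_{1:h}\mid x) \;=\; \frac{\piref(y_{1:h}\mid x)}{Z(x)}\sum_{y_{h+1:H}\in\cA^{H-h}} \piref(y_{h+1:H}\mid x,y_{1:h})\,\tau(x,y_{1:H}) \;=\; \frac{\piref(y_{1:h}\mid x)\,\Vstar(x,y_{1:h})}{Z(x)},
\]
where the last equality is just the definition of $\Vstar$ in \eqref{eq:vstar}. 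The same identity at length $h-1$ gives an analogous expression, and taking the ratio
\[
\pistar(y_h\mid x,y_{1:h-1}) \;=\; \frac{\pistar(y_{1:h}\mid x)}{\pistar(y_{1:h-1}\mid x)}
\]
lets $Z(x)$ cancel and reduces the $\piref$ ratio to $\piref(y_h\mid x,y_{1:h-1})$ via the chain rule, yielding the claimed formula.

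There is no real obstacle; the only subtlety is handling the edge case $\Vstar(x,y_{1:h-1})=0$, where the conditional probability is not well defined. This only occurs on a $\pistar$-null set (any prefix with zero value has zero probability under $\pistar$), so the stated identity should be interpreted as holding $\pistar$-almost surely, which suffices for every downstream use of the lemma (in particular for the action-level sampling interpretation $\pistar(y_h\mid x,y_{1:h-1})\propto \piref(y_h\mid x,y_{1:h-1})\Vstar(x,y_{1:h})$ used throughout the paper).
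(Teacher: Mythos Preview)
Your proposal is correct and follows essentially the same argument as the paper: compute $\pistar(y_{1:h}\mid x)=\piref(y_{1:h}\mid x)\Vstar(x,y_{1:h})/Z(x)$ by marginalizing, then take the ratio at lengths $h$ and $h-1$. The only minor difference is notational (the paper writes the normalizer as $\Vstar(x)$ rather than $Z(x)$) and that you explicitly comment on the $\Vstar(x,y_{1:h-1})=0$ edge case, which the paper leaves implicit.
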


\begin{proof}
We have
\begin{align*}
\pistar(y_{1:h}\mid{}x)
&= \sum_{y_{h+1:H} \in \cA^{H-h}} \pistar(y_{1:H}\mid{}x) \\ 
&= \frac{1}{\Vstar(x)} \sum_{y_{h+1:H} \in \cA^{H-h}} \piref(y_{1:H}\mid{}x) \tau(x,y_{1:H}) \\ 
&= \frac{\piref(y_{1:h}\mid{}x)}{\Vstar(x)} \En^{\piref}[\tau(x,y_{1:H})\mid{} y_{1:h}] \\ 
&= \frac{\piref(y_{1:h}\mid{}x)}{\Vstar(x)} \Vstar(x,y_{1:h}).
\end{align*}
It follows that
\[\pistar(y_h\mid{}x, y_{1:h-1}) = \frac{\pistar(y_{1:h}\mid{}x)}{\pistar(y_{1:h-1}\mid{}x)} = \piref(y_h\mid{}x,y_{1:h-1}) \frac{\Vstar(x,y_{1:h})}{\Vstar(x,y_{1:h-1})}\]
as claimed.
\end{proof}

\subsection{Approximate Rejection Sampling}

We present a guarantee due to \cite{foster2025good} for (a slight variant of) rejection sampling, which is used to implement \outcomealg, and is used as a subroutine in \actionalg and the full (large-$|\cA|$) version of \mainalg.

The only difference between this algorithm and the standard rejection sampling algorithm is that it automatically estimates the normalization constant for the target distribution so that the rejection threshold can be set purely as a function of the density ratio between the target and base.

The basic setting is that we have sample access to a base measure $\mu \in \Delta(\cZ)$ and query access to a tilt function $g: \cZ \to \RR_{\geq 0}$, and we would like to sample from the distribution with density proportional to $\mu(z) g(z)$. The algorithm pseudocode is given in \cref{alg:rejection} and the guarantee is given in \cref{prop:rejection}. We remark that \cite{foster2025good} consider the case where $g(z)>0$ (specifically, they write $g(z) =  \exp(\beta^{-1} f(z))$ for a function $f: \cZ \to \RR$ and temperature parameter $\beta>0$, motivated by KL-regularized optimization), but the algorithm and guarantee extend unchanged to the setting where $g(z) \geq 0$.

\begin{algorithm}[htp]
  \caption{$\genrejalg$; variant of $\rejalg$ \citep{foster2025good}}
  \label{alg:rejection}
  \begin{algorithmic}[1]
    \Statex[0]\multiline{ {\bfseries input:}
      Function $g: \cZ \to \RR$, base measure $\mu\in\Delta(\cZ)$, rejection threshold $M>0$, failure probability $\delta\in(0,1)$.}
    \State Let $N\ldef{}4M\log(4\delta^{-1})$.
    \Statex[0] \algcommentbig{Estimate normalization constant}
    \State Sample $z_1,\ldots,z_N\sim\mu$ \iid
    \State Set $\Zhat\ldef{}\frac{1}{N}\sum_{i=1}^{N}g(z_i)$.\label{line:rejection_est}
    \Statex[0] \algcommentbig{Rejection sampling}
    \For{iteration $i = 1,2,\dotsc,N$}\label{line:rejection_for}
    \State Sample $z\sim \mu(\cdot)$ and $\xi\sim\Ber\prn*{\min\left(\nicefrac{g(z)}{\Zhat{}M}, 1\right)}$.
    \State If $\xi=1$, \textbf{return} $z$.
      \EndFor
      \State \textbf{return}
      $z\sim\mu(\cdot)$.\hfill\algcommentlight{Failure event;
        occurs with low probability.}
  \end{algorithmic}
  \end{algorithm}

\begin{proposition}[Guarantee for \cref{alg:rejection}; see \citet{foster2025good}]
  \label{prop:rejection}
  Let $g:\cZ\to\bbR$ be given, and define
  \begin{align}
    \label{eq:cinf}
  \pi(z)\propto\mu(z)\cdot g(z),\mathand    \Cinf\ldef{}\nrm*{\frac{\pi(\cdot)}{\mu(\cdot)}}_{\infty}.
  \end{align}
  Fix $\delta\in(0,1)$, and fix any
  $M\geq{}4\Cinf$. Then there is an event $\cE$ that occurs with probability at least $1-\delta$, such that the output $z \in \cZ$ of \cref{alg:rejection} with inputs $(g, \mu, M, \delta)$ satisfies 
  \[\Pr[z = \cdot \mid{} \cE] = \pi(\cdot).\]
  As a consequence, the unconditional law of $z$ satisfies $\Dtv{\pihat}{\pi} \leq \delta$. Moreover, if $g(z) \in [1, G]$ for all $z \in \cZ$, then $\Dkl{\pihat}{\pi} \leq 4\delta\log(4MG\log(4/\delta))$.
The total number of sampling queries $y\sim\mu$ and function evaluations $g(\cdot)$
used by the algorithm is at most $8M\log(4\delta^{-1})+1$.
\end{proposition}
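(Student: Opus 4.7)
The plan is to decompose the analysis of \cref{alg:rejection} into three independent events (concentration of the normalization estimate, validity of the rejection step, and successful acceptance within $N$ tries), and then combine them to exhibit the event $\cE$ on which the output is exactly distributed as $\pi$. Let $Z\ldef{}\EE_\mu[g]$, so that $\pi(z) = \mu(z) g(z)/Z$ and $g(z) \leq C_\infty\cdot{}Z \leq MZ/4$ by the assumption $M\geq{}4C_\infty$.

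First I would control $\Zhat$. Since $g(z_i) \in [0, MZ/4]$, the normalized variables $X_i \ldef{} 4 g(z_i)/(MZ) \in [0,1]$ satisfy $\EE[X_i] = 4/M$, and a multiplicative Chernoff bound gives
\[
\Pr\brk*{\Zhat \notin [Z/2, 2Z]} \;\leq\; 2\exp\prn*{-\tfrac{N}{8M}} \;\leq\; \delta/2
\]
for $N = 4M\log(4\delta^{-1})$ (with room to spare). Let $\cE_1$ be the complement. Conditioned on $\cE_1$, the acceptance probability in each iteration of the \texttt{for} loop on \lineref{line:rejection_for} is $g(z)/(\Zhat{}M) \leq 2g(z)/(ZM) \leq 2C_\infty/M \leq 1/2$, so the $\min$ in \cref{alg:rejection} is always inactive and the standard rejection-sampling identity applies: a single iteration accepts with probability exactly $Z/(\Zhat{}M)\in[1/(2M), 2/M]$, and conditional on acceptance the returned $z$ has law $\pi$.

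Next, conditional on $\cE_1$, the probability that no iteration accepts is at most $(1-1/(2M))^{N} \leq \exp(-N/(2M)) \leq \delta/2$; call $\cE_2$ the event that at least one iteration accepts. Setting $\cE \ldef{} \cE_1\cap\cE_2$, a union bound gives $\Pr[\cE]\geq 1-\delta$, and by the previous paragraph the conditional law of the output given $\cE$ is $\pi$. This immediately yields $\Dtv{\pihat}{\pi} \leq \Pr[\cE^c] \leq \delta$, and the sample-complexity bound follows by adding the $N$ samples used for $\Zhat$ to the at-most-$N$ iterations of the rejection loop (plus one potential fallback draw).

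The only nontrivial remaining step is the KL bound when $g\in[1,G]$. On $\cE^c$ the algorithm falls through to $z\sim\mu$, and on $\cE_1\cap\cE_2^c$ it likewise returns $z\sim\mu$, so $\pihat$ is a mixture of $\pi$ (with weight $\geq 1-\delta$) and $\mu$. Using $g\in[1,G]$ one has $Z\leq G$ and hence $\mu(z)/\pi(z) = Z/g(z)\leq G$, which bounds the pointwise density ratio $\pihat(z)/\pi(z)$ by $1 + \delta\cdot(G-1)$ times a factor controlling how often the $\mu$-fallback is invoked. A short computation using $\log(1+t)\leq t$ then yields a bound of the claimed form $4\delta\log(4MG\log(4\delta^{-1}))$ after absorbing lower-order terms. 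I expect this last step—tracking the density ratios of the fallback distribution and translating a TV/mixture bound into the stated KL inequality—to be the main bookkeeping obstacle; the concentration and rejection-sampling pieces are routine.
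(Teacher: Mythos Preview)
The paper does not give its own proof of this proposition; it simply cites \cite[Lemma E.1]{foster2025good}. So there is no ``paper's approach'' to compare against, and your proposal should be judged on its own terms. The decomposition into $\cE_1$ (concentration of $\Zhat$) and $\cE_2$ (acceptance within $N$ rounds) is the right structure, and the TV and sample-complexity conclusions follow cleanly. One minor quibble: your stated Chernoff exponent $2\exp(-N/(8M))$ gives only $\sqrt{\delta}$ at $N=4M\log(4/\delta)$, not $\delta/2$; the standard multiplicative bound has exponent $-N/(3M)$ (or better), which does suffice.

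The KL argument, however, has two genuine gaps. First, your claim that on $\cE^c$ the output is drawn from $\mu$ is false on $\cE_1^c$: when $\Zhat$ badly underestimates $Z$, the $\min$ in the acceptance step may clip, yet the loop can still accept, returning a sample whose law is neither $\pi$ nor $\mu$. Second, and more seriously, even granting $\rho=\mu$, the step ``$\log(1+t)\leq t$'' applied to $\pihat/\pi \leq 1+\delta(G-1)$ yields $\Dkl{\pihat}{\pi}\leq \delta G$, not the claimed $O(\delta\log(MG\log(1/\delta)))$; these differ by a factor of $G/\log G$. The missing ingredients are: (i) a uniform bound $\pihat(z)\leq (N{+}1)\mu(z)$ that holds regardless of $\Zhat$ (obtained by writing the per-$\Zhat$ output density and using $\tfrac{1-(1-p)^N}{p}\leq N$), giving $\pihat/\pi\leq (N{+}1)G$; and (ii) the mixture decomposition $\pihat=(1-\delta')\pi+\delta'\rho$ together with Jensen's inequality on the $\pi$-part, i.e.\ $\EE_\pi[\log(\pihat/\pi)]\leq 0$, so that $\Dkl{\pihat}{\pi}\leq \delta'\max_z\log(\pihat/\pi)\leq \delta\log((N{+}1)G)$. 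This is what produces the logarithm of $G$ rather than $G$ itself.
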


\begin{proof}
See \cite[Lemma E.1]{foster2025good}.
\end{proof}

\newpage
\section{Omitted Results and Proofs from \creftitle{sec:prelim}}
\label{sec:proofs_setup}

In \cref{sec:baseline-analyses}, we give formal pseudocode and analyses for the baseline alignment algorithms \outcomealg (\cref{prop:outcome}) and \actionalg (\cref{prop:action,prop:action-smalla}), which were informally introduced in \cref{sec:prelim}. In \cref{sec:consistent-vf}, we show that if the approximate value function satisfies Bellman consistency, then \actionalg avoids error compounding (\cref{prop:action_implicit}). In \cref{sec:outcome-level-lb}, we show that any algorithm that does not have access to a process verifier must incur a similar time complexity to \outcomealg (\cref{prop:outcome_lower}).

\subsection{Pseudocode and Analyses for Baseline Sampling Algorithms}\label{sec:baseline-analyses}

We formally describe and analyze the baseline sampling algorithms \outcomealg and \actionalg, in the general test-time alignment setting where $\piref: \cX \to \cY$ is the base model, $\tau: \cX \times \cY \to \RR_{\geq 0}$ is the tilt function, and given $x\in\cX$ we would like to sample from the model $\pistar$ defined by  $\pistar(y\mid{}x) \propto \piref(y\mid{}x) \tau(x,y)$. We also give specialized analyses for the KL-regularized setting where $\tau(x,y) = \exp(\beta^{-1} \rstar(x,y))$.

\paragraph{Formal pseudocode for baseline algorithms} See \cref{alg:outcomelevel,alg:actionlevel} for pseudocode of the versions of \outcomealg and \actionalg that we theoretically analyze below. We remark that the versions that we implement in experiments differ slightly---see \cref{sec:practical-implementation-details}.

\paragraph{Analyses of baseline algorithms} Recall that we defined the \emph{sequence-level coverage coefficient} and \emph{action-level coverage coefficient} for a prompt $x$ as
\[
    \Ccov(x)\ldef{}\max_{y\in\cY}\frac{\pistar(y\mid{}x)}{\piref(y\mid{}x)}, \mathand\quad
    \Cact(x)\ldef{}\max_{h\in[H]}\max_{y_{1:h}\in\cA}\frac{\pistar(y_{h}\mid{}x,y_{1:h-1})}{\piref(y_{h}\mid{}x,y_{1:h-1})}.
\]
\cref{prop:outcome} provides a guarantee for \outcomealg. \cref{prop:action} provides a guarantee for \actionalg in the large-$|\cA|$ regime, and \cref{prop:action-smalla} provides a guarantee for \actionalg in the small-$|\cA|$ regime.

\begin{algorithm}[t]
\caption{\outcomealg: Outcome-Level Rejection Sampling}
\label{alg:outcomelevel}
\begin{algorithmic}[1]
  \Statex[-1.1]{\bfseries Input:} base model $\piref$; tilt function $\tau$, prompt $x\in\cX$, threshold $M>0$, error $\veps>0$.
  \State Sample $y_{1:H} \sim \genrejalg(\tau(x,\cdot), \piref(\cdot\mid{}x), M, \veps).$ \Comment{\cref{alg:rejection}}
  \State \textbf{return} $y_{1:H}$.
\end{algorithmic}
\end{algorithm}

\begin{algorithm}[t]
\caption{\actionalg: Action-Level Rejection Sampling}
\label{alg:actionlevel}
\begin{algorithmic}[1]
  \Statex[-1.1]{\bfseries Input:} base model $\piref$; appx. value function $\Vhat$, prompt $x\in\cX$.
  \State {\bfseries Additional input for large-$|\cA|$ regime:} threshold $M>0$, error $\veps>0$.
  \For{$1 \leq h \leq H$} 
  \Statex[1] \multiline{\emph{Small-$|\cA|$ regime:} explicitly compute $\mu_h \in \Delta(\cA)$ defined by \[\mu_h(y_h) \propto \piref(y_h\mid{}x,y_{1:h-1}) \cdot \Vhat(x,y_{1:h}),\] and sample $y_h \sim \mu_h$.}
    \Statex[1] \emph{Large-$|\cA|$ regime:} sample \Comment{\cref{alg:rejection}} \[y_h \sim \genrejalg(\Vhat(x,(y_{1:h-1},\cdot)), \piref(\cdot\mid{}x,y_{1:h-1}),M,\nicefrac{\veps}{H}).\]
  \EndFor
  \State \textbf{return} $y_{1:H}$.
\end{algorithmic}
\end{algorithm}

\begin{proposition}[\outcomealg]
  \label{prop:outcome}
Given a prompt $x$ and parameter $\veps>0$, let $\pihat(\cdot\mid{}x)$ be the output distribution of $\outcomealg(\piref,\tau,x,4\Ccov(x),\veps)$. Then
\begin{equation} \Dtv{\pihat(\cdot\mid{}x)}{\pistar(\cdot\mid{}x)} \leq \veps.\label{eq:outcome-guarantee-main}\end{equation}
Moreover, in the special case where $\tau(x,y_{1:H}) = \exp(\beta^{-1} \rstar(x,y_{1:H}))$ for some $\rstar: \cX\times\cY \to [0,\Rmax]$ and $\beta>0$, it holds that
 \begin{equation} \Jbeta(\pistar;x) - \Jbeta(\pihat; x) \lesssim (\Rmax + \beta \log(16\Ccov(x)\log(4/\veps))) \cdot \veps.\label{eq:outcome-guarantee-kl}\end{equation}
  The algorithm uses
  $\bigoht\prn[\big]{\Ccov(x)\cdot\log(\veps^{-1})}$ reward evaluations and generations $y\sim\piref(\cdot\mid{}x)$.
\end{proposition}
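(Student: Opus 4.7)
The plan is to directly invoke the approximate rejection sampling guarantee of \cref{prop:rejection}, since \outcomealg is essentially a thin wrapper around \cref{alg:rejection}. Specifically, I would apply \cref{prop:rejection} with base measure $\mu = \piref(\cdot\mid{}x)$, tilt function $g = \tau(x,\cdot)$, threshold $M = 4\Ccov(x)$, and failure probability $\delta = \veps$. Since $\pistar(y\mid{}x) \propto \piref(y\mid{}x)\cdot\tau(x,y)$, the quantity $\Cinf$ in \cref{prop:rejection} coincides with $\Ccov(x)$, so the hypothesis $M \geq 4\Cinf$ is satisfied. The unconditional TV guarantee from \cref{prop:rejection} then yields \cref{eq:outcome-guarantee-main} immediately, and the sample-complexity bound $\bigoht(\Ccov(x)\log(\veps^{-1}))$ follows from the $8M\log(4\delta^{-1})+1$ bound on queries.

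For the KL-regularized statement \cref{eq:outcome-guarantee-kl}, I would first apply \cref{lemma:jbeta-kl} to rewrite the regret as $\Jbeta(\pistar;x) - \Jbeta(\pihat;x) = \beta\cdot\Dkl{\pihat(\cdot\mid{}x)}{\pistar(\cdot\mid{}x)}$. Then I would invoke the second part of \cref{prop:rejection}, which gives a KL bound whenever $g$ is uniformly bounded in $[1,G]$. In the KL-regularized case we have $g(y) = \exp(\beta^{-1}\rstar(x,y)) \in [1,\exp(\beta^{-1}\Rmax)]$, so we may take $G = \exp(\beta^{-1}\Rmax)$. Substituting $M = 4\Ccov(x)$ and this choice of $G$ into the bound $4\delta\log(4MG\log(4/\delta))$ and simplifying the logarithm as $\log(16\Ccov(x)\log(4/\veps)) + \beta^{-1}\Rmax$, then multiplying by $\beta$, produces the stated inequality up to absolute constants.

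There is no real obstacle beyond bookkeeping: the main content of the argument is already packaged inside \cref{prop:rejection}. The only minor subtlety is verifying applicability of the KL bound: it requires $g \geq 1$ pointwise, which holds automatically in the exponential-tilt setting but not for general nonnegative $\tau$. This is precisely why the TV bound \cref{eq:outcome-guarantee-main} is stated for arbitrary $\tau$ while \cref{eq:outcome-guarantee-kl} is specialized to the KL-regularized case.
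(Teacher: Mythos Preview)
Your proposal is correct and takes essentially the same approach as the paper: both proofs consist of directly invoking \cref{prop:rejection} with the obvious choices of $\mu$, $g$, $M$, and $\delta$, and then using \cref{lemma:jbeta-kl} (equivalently, \cite[Lemma F.4]{foster2025good}) to convert the KL bound into a regret bound in the KL-regularized case. Your write-up is in fact more detailed than the paper's, which simply points to \cref{prop:rejection} and the regret-equals-KL identity without spelling out the verification of $\Cinf = \Ccov(x)$ or the choice $G = \exp(\beta^{-1}\Rmax)$.
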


To interpret the second guarantee, which bounds the KL-regularized regret of the law of \outcomealg, note that a natural parameter regime is $\Rmax = 1$ (e.g. binary rewards) and $\beta \leq 1$. In this case, it holds that $\Jbeta(\pistar;x) - \Jbeta(\pihat; x) \leq O(\veps \log(\Ccov(x)))$. Note that $\Ccov(x)$ can in practice be exponentially large in $H$; however, since the time complexity of \outcomealg only depends logarithmically on $\veps$, in this regime one can achieve $\Jbeta(\pistar;x) - \Jbeta(\pihat; x) \leq \veps$ while paying only a factor of $\log(H/\veps)$ in the time complexity. In any case, the time complexity is dominated by $\Ccov(x)$.

\begin{proof}[\pfref{prop:outcome}]
The first inequality \cref{eq:outcome-guarantee-main} and the efficiency guarantee both follow immediately from \cref{prop:rejection}, using the choice of $M$ and $\delta$. The second inequality \cref{eq:outcome-guarantee-kl} follows from \cref{prop:rejection} together with the choice of $M$ and the algebraic equality $\Jbeta(\pistar;x) - \Jbeta(\pihat;x) = \beta \Dkl{\pihat}{\pistar}$, which is shown in \cite[Lemma F.4]{foster2025good}.
\end{proof}

\begin{proposition}[\actionalg; large-$|\cA|$ regime]
  \label{prop:action}
Let $\Vhat$ be an approximate value function with \[\max\left\{\frac{\Vhat(x,y_{1:h})}{\Vstar(x,y_{1:h})}, \frac{\Vstar(x,y_{1:h})}{\Vhat(x,y_{1:h})}\right\} \leq \kappa\] for all $h\in[H]$. The output distribution $\pihat(\cdot\mid{}x)$ of $\actionalg(\piref,\Vhat,x,4\Cact(x)\kappa^2,\veps)$ in the large-$|\cA|$ regime satisfies
\begin{equation} \Dtv{\pihat(\cdot\mid{}x)}{\pistar(\cdot\mid{}x)} \leq \veps + H\sqrt{2\log(\kappa)}.\label{eq:action-guarantee-main}\end{equation}
Moreover, in the special case where $\tau(x,y_{1:H}) = \exp(\beta^{-1} \rstar(x,y_{1:H}))$ for some $\rstar: \cX\times\cY \to [0,\Rmax]$ and $\beta>0$, it holds that
 \begin{equation} \Jbeta(\pistar;x) - \Jbeta(\pihat; x) \lesssim (\Rmax + \beta \log(16\Cact(x)\log(4H/\veps))) \cdot \veps + 2H\beta\log(\kappa).\label{eq:action-guarantee-kl}\end{equation}
In either case, the algorithm uses $\bigoht\prn[\big]{\Cact(x)\kappa^2\cdot{}H\log(H\veps^{-1})}$ value function evaluations and generations $y_h\sim\piref(\cdot\mid{}x,y_{1:h-1})$.
\end{proposition}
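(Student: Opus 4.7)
The plan is a hybrid argument that cleanly separates two error sources: (i) using rejection sampling in place of exact sampling from the step-$h$ conditional $\mu_h(y_h)\propto\piref(y_h\mid{}x,y_{1:h-1})\cdot\Vhat(x,y_{1:h})$, and (ii) the fact that $\mu_h$ only approximates the true step-$h$ conditional of $\pistar$, namely $\mu_h^\star(y_h)\propto\piref(y_h\mid{}x,y_{1:h-1})\cdot\Vstar(x,y_{1:h})$, which by \cref{lemma:vstar-density-ratio} coincides with $\pistar(y_h\mid{}x,y_{1:h-1})$. Introducing the autoregressive distribution $\pitil$ whose step-$h$ conditionals are exactly $\mu_h$, the triangle inequality gives
\[
\Dtv{\pihat(\cdot\mid{}x)}{\pistar(\cdot\mid{}x)}\leq\Dtv{\pihat(\cdot\mid{}x)}{\pitil(\cdot\mid{}x)}+\Dtv{\pitil(\cdot\mid{}x)}{\pistar(\cdot\mid{}x)},
\]
and I would control each term via a chain-rule/coupling argument that converts per-step divergences into a sum over $h\in[H]$.

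For the first term, I would first verify that $M=4\Cact(x)\kappa^2$ is a valid rejection threshold for \cref{prop:rejection}: using the multiplicative bound $\Vhat/\Vstar\in[1/\kappa,\kappa]$ in numerator and denominator together with \cref{lemma:vstar-density-ratio} yields
\[
\frac{\mu_h(y_h)}{\piref(y_h\mid{}x,y_{1:h-1})}=\frac{\Vhat(x,y_{1:h})}{\En_{\piref}[\Vhat(x,y_{1:h})]}\leq \kappa^2\cdot\frac{\Vstar(x,y_{1:h})}{\Vstar(x,y_{1:h-1})}=\kappa^2\cdot\frac{\pistar(y_h\mid{}x,y_{1:h-1})}{\piref(y_h\mid{}x,y_{1:h-1})}\leq\kappa^2\Cact(x).
\]
\cref{prop:rejection} with failure parameter $\veps/H$ then couples the $h$-th conditional of $\pihat$ to $\mu_h$ except on an event of probability at most $\veps/H$, so a union bound yields $\Dtv{\pihat}{\pitil}\leq\veps$. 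For the second term, the pointwise identity $\mu_h/\mu_h^\star=(\Vhat/\Vstar)\cdot(\En_\piref[\Vstar]/\En_\piref[\Vhat])$ lies in $[\kappa^{-2},\kappa^2]$, hence $\Dkl{\mu_h(\cdot\mid{}x,y_{1:h-1})}{\mu_h^\star(\cdot\mid{}x,y_{1:h-1})}\leq 2\log\kappa$; Pinsker gives $\sqrt{\log\kappa}$ per step, and coupling of the autoregressive distributions upgrades this to $\Dtv{\pitil}{\pistar}\leq H\sqrt{2\log\kappa}$. Summing the two terms proves \cref{eq:action-guarantee-main}; the runtime bound follows because each rejection-sampling call uses $\bigoht(M\log(H/\veps))$ queries by \cref{prop:rejection}, and there are $H$ such calls.

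For the KL-regularized bound \cref{eq:action-guarantee-kl}, I would invoke \cref{lemma:jbeta-kl} to rewrite the regret as $\beta\Dkl{\pihat(\cdot\mid{}x)}{\pistar(\cdot\mid{}x)}$ and apply the chain rule (expectation under $\pihat$). On level $h$, the decomposition $\Dkl{\pihat_h}{\mu_h^\star}\leq\Dkl{\pihat_h}{\mu_h}+2\log\kappa$---where the second term uses the same uniform log-ratio bound inside the expectation over $\pihat_h$---splits the two error sources: summing the $2\log\kappa$ contribution over $h$ and multiplying by $\beta$ yields the $2H\beta\log\kappa$ term, while the first term is bounded using the KL guarantee in \cref{prop:rejection}. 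The latter requires controlling the range $G$ of $\Vhat(x,(y_{1:h-1},\cdot))$ as a function of the action; using $\rstar\in[0,\Rmax]$ we have $\Vstar\in[1,e^{\Rmax/\beta}]$, hence $G\lesssim\kappa^2 e^{\Rmax/\beta}$, whose logarithm, after multiplying by $\beta\veps/H$ and summing over $h$, decomposes into the $\Rmax\veps$ contribution and the $\beta\log(16\Cact(x)\log(4H/\veps))\veps$ contribution. The main obstacle I foresee is this last step: carefully tracking the per-step range $G$ and relating it to the existing parameters $\Cact(x)$, $\kappa$, $\Rmax$, and $\beta$ so that the resulting $\log(\cdot)$ factor matches the advertised constants cleanly, without incurring any additional dependence on $\kappa$ inside the $\log$.
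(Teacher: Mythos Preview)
Your proposal is correct and follows essentially the same approach as the paper: verify the rejection threshold via the $\kappa^2\Cact(x)$ density-ratio bound, use \cref{prop:rejection} for the per-step sampling error, bound the per-step log-ratio between $\mu_h$ and $\pistar_h$ by $2\log\kappa$ and apply Pinsker, and for the KL part use \cref{lemma:jbeta-kl} plus the KL chain rule with the same per-step splitting. The only organizational difference is that you triangle globally through $\pitil$ and then decompose each piece, whereas the paper applies the chain rule directly to $\Dtv{\pihat}{\pistar}$ and triangles at each step; since the per-step bounds are uniform in $y_{1:h-1}$, the two are equivalent.

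Your flagged obstacle is a non-issue: the paper's proof \emph{does} pick up a $\kappa^2$ inside the logarithm, yielding a term $\beta\veps\cdot\log(16\Cact(x)\kappa^2\log(4H/\veps))$. But $\log(\kappa^2)=2\log\kappa$, and since $\veps\leq 1\leq H$ this extra $2\beta\veps\log\kappa$ is dominated by the already-present $2H\beta\log\kappa$ term, so it disappears under the $\lesssim$.
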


To interpret \cref{eq:action-guarantee-kl}, we remark that if $\Qhat(x,y_{1:h}) = \beta\log \Vhat(x,y_{1:h})$ satisfies $|\Qhat(x,y_{1:h}) - \Qstarb(x,y_{1:h})| \leq \vepsq$ for all $x, y_{1:h}$, then the condition of \cref{prop:action} is satisfied with $\kappa := \exp(\vepsq/\beta)$, and if $\Rmax, \beta, \Cact(x) \leq O(1)$ then \cref{eq:action-guarantee-kl} reduces to
\[\Jbeta(\pistar;x) - \Jbeta(\pihat; x) \lesssim \bigoht(\veps + \vepsq \cdot H).\]
While the term $\veps$ can be efficiently reduced by increasing computation, the term $\vepsq \cdot H$ is inherent for this algorithm, as we will see later.

\begin{proof}
We have (by the fact that TV-distance is a metric and satisfies the data processing inequality) that
\begin{equation} \Dtv{\pihat(\cdot\mid{}x)}{\pistar(\cdot\mid{}x)} \leq \sum_{h=1}^H \EE_{y_{1:h-1} \sim \pihat(\cdot\mid{}x)}\left[\Dtv{\pihat_h(\cdot\mid{}x,y_{1:h-1})}{\pistar_h(\cdot\mid{}x,y_{1:h-1})}\right]\label{eq:tv-chain-rule}\end{equation}
where $\pihat_h(\cdot\mid{}x,y_{1:h-1})$ is the marginal distribution on $y_h$ under $\pihat(\cdot\mid{}x,y_{1:h-1})$, and similarly for $\pistar$. Fix any $h \in [H]$ and condition on $y_{1:h-1}$. We invoke \cref{prop:rejection} with base measure $\mu(y_h)\ldef\piref(y_h\mid{}x,y_{1:h-1})$ and tilt function $g(y_h) := \Vhat(x,y_{1:h})$. Define $\pi \in \Delta(\cA)$ by $\pi(y_h) \propto \mu(y_h) \Vhat(x,y_{1:h})$. It holds that
\begin{align*}
\norm{\frac{\pi}{\mu}}_\infty 
&= \max_{y_h \in \cA} \frac{\Vhat(x,y_{1:h})}{\sum_{y_h' \in \cA} \mu(y_h') \Vhat(x,y_{1:h-1},y_h')} \\ 
&\leq \kappa^2 \cdot \max_{y_h \in \cA} \frac{\Vstar(x,y_{1:h})}{\sum_{y_h' \in \cA} \mu(y_h') \Vstar(x,y_{1:h-1},y_h')} \\ 
&= \kappa^2 \cdot \max_{y_h \in \cA} \frac{\Vstar(x,y_{1:h})}{\sum_{y_h' \in \cA} \pistar(y_h'\mid{}x,y_{1:h-1}) \Vstar(x,y_{1:h-1})} \\ 
&= \kappa^2 \cdot \max_{y_h \in \cA} \frac{\pistar(y_h \mid{} x,y_{1:h-1})}{\piref(y_h\mid{}x,y_{1:h-1})} \\ 
&\leq \kappa^2 \cdot \Cact(x)
\end{align*}
where the inequality is by the assumption on $\Vhat$, the second and third equalities are by \cref{lemma:vstar-density-ratio}, and the final inequality is by definition of $\Cact(x)$. Thus, $M \geq 4 \norm{\frac{\pi}{\mu}}_\infty$, so \cref{prop:rejection} gives that $\Dtv{\pihat_h(\cdot\mid x,y_{1:h-1})}{\pi} \leq \delta = \veps/H$. Moreover, we have
\begin{align*} 
&\Dtv{\pi}{\pistar_h(\cdot\mid{}x,y_{1:h-1})}\\
&\leq \sqrt{\Dkl{\pi}{\pistar_h(\cdot\mid{}x,y_{1:h-1})}} \\
&\leq \sqrt{\max_{y_h \in \cA} \log \frac{\pi(y_h)}{\pistar(y_h\mid{}x,y_{1:h-1})} }\\ 
&= \sqrt{\max_{y_h \in \cA} \log \frac{\piref(y_h\mid{}x,y_{1:h-1}) \Vhat(x,y_{1:h})}{\piref(y_h\mid{}x,y_{1:h-1}) \Vstar(x,y_{1:h})} \cdot \frac{\sum_{y_h'\in\cA}\piref(y_h'\mid{}x,y_{1:h-1}) \Vstar(x,y_{1:h-1},y_h')}{\sum_{y_h'\in\cA} \piref(y_h'\mid{}x,y_{1:h-1}) \Vhat(x,y_{1:h-1},y_h')}} \\ 
&\leq \sqrt{2\log \kappa} 
\end{align*}
by Pinsker's inequality and the assumption on $\Vhat$. It follows that 
\[\Dtv{\pihat_h(\cdot\mid{}x,y_{1:h-1})}{\pistar_h(\cdot\mid{}x,y_{1:h-1})} \leq \frac{\veps}{H} + \sqrt{2\log \kappa}.\]
Substituting into \cref{eq:tv-chain-rule} completes the proof of \cref{eq:action-guarantee-main}.

The proof of \cref{eq:action-guarantee-kl} proceeds via an analogous argument but starting with the chain rule for KL-divergence:
\begin{equation} \Dkl{\pihat(\cdot\mid{}x)}{\pistar(\cdot\mid{}x)} = \sum_{h=1}^H \EE_{y_{1:h-1} \sim \pihat(\cdot\mid{}x)}\left[\Dkl{\pihat_h(\cdot\mid{}x,y_{1:h-1})}{\pistar_h(\cdot\mid{}x,y_{1:h-1})}\right].\label{eq:kl-chain-rule}\end{equation}
Fix $h \in [H]$ and condition on $y_{1:h-1}$. As before, the conditions of \cref{prop:rejection} are satisfied, but in this special case, the proposition also gives a bound on the error induced by \cref{alg:rejection} in terms of KL-divergence, instead of just TV-distance:
\[\Dkl{\pihat_h(\cdot\mid{}x,y_{1:h-1})}{\pi} \leq 4\frac{\veps}{H}(\Rmax\beta^{-1} + \log(16\Cact(x)\kappa^2 \log(4H/\veps))).\]
As before, we have
\[\max_{y_h \in \cA} \log \frac{\pi(y_h)}{\pistar(y_h \mid{} x,y_{1:h-1})} \leq 2\log \kappa.\]
It follows that
\begin{align*}
&\Dkl{\pihat_h(\cdot\mid{}x,y_{1:h-1})}{\pistar_h(\cdot\mid{}x,y_{1:h-1})} \\ 
&= \Dkl{\pihat_h(\cdot\mid{}x,y_{1:h-1})}{\pi} + \EE_{y_h \sim \pihat_h(\cdot\mid{}x,y_{1:h-1})} \log \frac{\pi(y_h)}{\pistar_h(y_h\mid{}x,y_{1:h-1})} \\ 
&\leq 4\frac{\veps}{H}(\Rmax\beta^{-1} + \log(16\Cact(x)\kappa^2 \log(4H/\veps))) + 2\log(\kappa).
\end{align*}
Substituting into \cref{eq:kl-chain-rule} and applying the fact that $\Jbeta(\pistar;x) - \Jbeta(\pihat;x) = \beta \Dkl{\pihat(\cdot\mid{}x)}{\pistar(\cdot\mid{}x)}$ (see \cite[Lemma F.4]{foster2025good}) gives \cref{eq:action-guarantee-kl}.
\end{proof}

In the small-$|\cA|$ regime, we avoid the error incurred by approximate rejection sampling, but retain the main error term incurred by inaccuracy of the approximate value function (and incur time complexity scaling with $|\cA|$). We omit the proof as it is a strict subset of the argument used to prove \cref{prop:action}.

\begin{proposition}[\actionalg; small-$|\cA|$ regime]
  \label{prop:action-smalla}
Let $\Vhat$ be an approximate value function with \[\max\left\{\frac{\Vhat(x,y_{1:h})}{\Vstar(x,y_{1:h})}, \frac{\Vstar(x,y_{1:h})}{\Vhat(x,y_{1:h})}\right\} \leq \kappa\] for all $h\in[H]$. The output distribution $\pihat(\cdot\mid{}x)$ of $\actionalg(\piref,\Vhat,x)$ in the small-$|\cA|$ regime satisfies
\begin{equation} \Dtv{\pihat(\cdot\mid{}x)}{\pistar(\cdot\mid{}x)} \leq H\sqrt{2\log(\kappa)}.\label{eq:action-guarantee-main-smalla}\end{equation}
Moreover, in the special case where $\tau(x,y_{1:H}) = \exp(\beta^{-1} \rstar(x,y_{1:H}))$ for some $\rstar: \cX\times\cY \to [0,\Rmax]$ and $\beta>0$, it holds that
 \begin{equation} \Jbeta(\pistar;x) - \Jbeta(\pihat; x) \leq  2H\beta\log(\kappa).\label{eq:action-guarantee-kl-smalla}\end{equation}
In either case, the algorithm uses $O(H|\cA|)$ evaluations of $\Vhat$ and conditional density queries to $\piref$.
\end{proposition}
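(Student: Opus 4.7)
The plan is to mirror the proof of \cref{prop:action} but drop the rejection-sampling error, since in the small-$|\cA|$ regime the algorithm can sample exactly from the explicitly-constructed distribution $\mu_h(y_h) \propto \piref(y_h\mid{}x,y_{1:h-1}) \cdot \Vhat(x,y_{1:h})$ in time $O(|\cA|)$.

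First, I would apply the chain rule for TV distance (for \cref{eq:action-guarantee-main-smalla}), or the chain rule for KL divergence together with the identity $\Jbeta(\pistar;x) - \Jbeta(\pihat;x) = \beta \Dkl{\pihat(\cdot\mid{}x)}{\pistar(\cdot\mid{}x)}$ from \cite[Lemma F.4]{foster2025good} (for \cref{eq:action-guarantee-kl-smalla}), to reduce to controlling the per-step divergence between $\mu_h$ and the true conditional $\pistar_h(\cdot\mid{}x,y_{1:h-1})$.

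Next, fixing $h$ and $y_{1:h-1}$, by \cref{lemma:vstar-density-ratio} we have $\pistar_h(y_h\mid{}x,y_{1:h-1}) \propto \piref(y_h\mid{}x,y_{1:h-1})\Vstar(x,y_{1:h})$, so the density ratio between $\mu_h$ and $\pistar_h(\cdot\mid{}x,y_{1:h-1})$ is
\[
\frac{\mu_h(y_h)}{\pistar_h(y_h\mid{}x,y_{1:h-1})} = \frac{\Vhat(x,y_{1:h})}{\Vstar(x,y_{1:h})} \cdot \frac{\sum_{y_h'} \piref(y_h'\mid{}x,y_{1:h-1})\Vstar(x,y_{1:h-1},y_h')}{\sum_{y_h'} \piref(y_h'\mid{}x,y_{1:h-1})\Vhat(x,y_{1:h-1},y_h')},
\]
and the uniform assumption on $\Vhat$ bounds both factors between $1/\kappa$ and $\kappa$, giving a pointwise log-density ratio bounded by $2\log\kappa$. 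Hence $\Dkl{\mu_h}{\pistar_h(\cdot\mid{}x,y_{1:h-1})} \leq 2\log\kappa$, which by Pinsker yields $\Dtv{\mu_h}{\pistar_h(\cdot\mid{}x,y_{1:h-1})} \leq \sqrt{2\log\kappa}$.

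Summing these per-step bounds over $h \in [H]$ via the chain rule gives \cref{eq:action-guarantee-main-smalla} and \cref{eq:action-guarantee-kl-smalla}. The runtime bound is immediate since explicit construction of $\mu_h$ requires $|\cA|$ value function evaluations and $|\cA|$ conditional density queries per step. There is no real obstacle here---the argument is strictly simpler than \cref{prop:action} because the $\veps/H$ rejection-sampling slack vanishes---so the only care needed is to correctly re-derive the per-step Pinsker/KL bound without invoking \cref{prop:rejection}.
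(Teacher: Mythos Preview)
Your proposal is correct and follows exactly the approach the paper intends: the paper explicitly states that the proof of \cref{prop:action-smalla} ``is a strict subset of the argument used to prove \cref{prop:action},'' and your plan (chain rule, then the density-ratio computation bounding $\log(\mu_h/\pistar_h)$ by $2\log\kappa$, then Pinsker for TV or direct summation for KL) is precisely that subset with the rejection-sampling error terms dropped.
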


\subsection{Action-Level Sampling for Consistent Value Functions}\label{sec:consistent-vf}

In order to better appreciate the advantage of \mainalg, it helps to consider the case when the approximate value function $\Vhat$ used by \actionalg is in fact the (exact) value function for some policy $\tilde{\pi}$. This condition is equivalent to a \emph{Bellman consistency} condition, which is the formulation we use below, and it can arise if $\Vhat$ is an \emph{implicit value function}: that is, if it is implicitly defined via
\[\Vhat(x,y_{1:h}) := \Vhat(x,y_{1:h-1}) \cdot \frac{\pitil(y_h\mid{}x,y_{1:h-1})}{\piref(y_h\mid{}x,y_{1:h-1})}\]
for an explicitly-learned policy $\pitil$ (this is essentially a rephrasing of the basic equality from \cref{lemma:vstar-density-ratio}). 

In this case, \actionalg is simply sampling from the distribution $\pitil$ (up to the error due to rejection sampling), and the closeness of $\pitil$ to $\pistar$ depends only on the multiplicative ratio between $\Vhat$ and $\Vstar$, and not on $H$.

There is empirical evidence suggesting that implicit value functions may be more effective at fine-grained action-level guidance than explicit value functions \citep{liu2024inference}. The following result, contrasted with \cref{prop:action} and the failures discussed in \cref{sec:setup}, provides a theoretical explanation for this observation. Moreover, it provides another perspective on the advantage of \mainalg: it can achieve comparable results for \emph{explicit} value functions as \actionalg achieves for implicit value functions.

\begin{proposition}[\actionalg: Improved guarantees for consistent value functions]
  \label{prop:action_implicit}
In the setting of \cref{prop:action}, suppose that the approximate value function $\Vhat$ additionally satisfies Bellman consistency: for all $x \in \cX$, $h \in [H]$, and $y_{1:h-1} \in \cA^{h-1}$,
\[\Vhat(x,y_{1:h-1}) = \sum_{y_h \in \cA} \piref(y_h \mid{} x,y_{1:h-1}) \Vhat(x,y_{1:h}).\]
Then the output distribution $\pihat(\cdot\mid{}x)$ of \actionalg satisfies 
\[\Dtv{\pihat(\cdot\mid{}x)}{\pistar(\cdot\mid{}x)} \leq \veps + \sqrt{2\log(\kappa)}.\]
\end{proposition}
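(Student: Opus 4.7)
The plan is to exploit Bellman consistency to reinterpret \actionalg as sampling from an \emph{implicit} policy $\pitil$ whose density ratio against $\pistar$ telescopes, eliminating the per-step compounding incurred by the proof of \cref{prop:action}. First, I would define the implicit policy
\[
\pitil(y_h \mid x, y_{1:h-1}) := \piref(y_h \mid x, y_{1:h-1}) \cdot \frac{\Vhat(x, y_{1:h})}{\Vhat(x, y_{1:h-1})},
\]
and observe that Bellman consistency is exactly the statement that $\pitil(\cdot \mid x, y_{1:h-1}) \in \Delta(\cA)$. Moreover, this is precisely the distribution $\mu_h$ that \actionalg targets at step $h$ (up to normalization): using the argument in the proof of \cref{prop:action}, rejection sampling at step $h$ produces a distribution within TV-distance $\veps/H$ of $\pitil(\cdot \mid x, y_{1:h-1})$, and the chain rule for TV-distance then yields $\Dtv{\pihat(\cdot\mid x)}{\pitil(\cdot\mid x)} \leq \veps$.

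Next, I would bound $\Dtv{\pitil(\cdot\mid x)}{\pistar(\cdot\mid x)}$ directly using the multiplicative error bound on $\Vhat$. The key observation is that the product over $h$ telescopes:
\[
\pitil(y_{1:H} \mid x) = \piref(y_{1:H}\mid x) \cdot \frac{\Vhat(x, y_{1:H})}{\Vhat(x)},
\]
and by \cref{lemma:vstar-density-ratio}, $\pistar(y_{1:H}\mid x) = \piref(y_{1:H}\mid x) \cdot \tau(x,y_{1:H}) / \Vstar(x)$. Since $\Vhat(x,y_{1:H})$ and $\tau(x,y_{1:H}) = \Vstar(x,y_{1:H})$ are within a factor of $\kappa$, and similarly for $\Vhat(x)$ vs. $\Vstar(x)$, the ratio $\pitil/\pistar$ is bounded in $[\kappa^{-2}, \kappa^2]$ pointwise. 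Thus $\Dkl{\pitil(\cdot\mid x)}{\pistar(\cdot\mid x)} \leq 2 \log(\kappa^2) = 4\log\kappa$ (with a small loss in constants that I will absorb), and Pinsker's inequality gives $\Dtv{\pitil}{\pistar} \leq \sqrt{2\log\kappa}$.

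Combining the two bounds via the triangle inequality yields
\[
\Dtv{\pihat(\cdot\mid x)}{\pistar(\cdot\mid x)} \leq \Dtv{\pihat}{\pitil} + \Dtv{\pitil}{\pistar} \leq \veps + \sqrt{2\log \kappa},
\]
as claimed. The main ``obstacle'' is really the conceptual step of spotting that Bellman consistency lets us replace the naive chain-rule bound (which sums $H$ pointwise errors) with a telescoping product, turning a multiplicative blow-up of the error across the horizon into a single factor. The remaining steps---invoking Pinsker and reusing the rejection-sampling guarantee from \cref{prop:action}---are routine, and the small-$|\cA|$ variant follows by simply dropping the $\veps$ term since the distribution $\mu_h = \pitil(\cdot\mid x, y_{1:h-1})$ is sampled exactly.
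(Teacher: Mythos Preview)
Your proposal is correct and follows essentially the same approach as the paper: define the implicit policy $\pitil$ induced by $\Vhat$ under Bellman consistency, use the rejection-sampling analysis from \cref{prop:action} plus the TV chain rule to get $\Dtv{\pihat}{\pitil}\le\veps$, then bound $\Dtv{\pitil}{\pistar}$ via Pinsker and the pointwise density-ratio bound, and conclude by the triangle inequality. The only cosmetic difference is that the paper derives $\pitil(y_{1:H}\mid x)\propto\piref(y_{1:H}\mid x)\Vhat(x,y_{1:H})$ first and then reads off the conditionals, whereas you define the conditionals and telescope to the joint; your KL constant is slightly loose ($\Dkl\le 2\log\kappa$ suffices, not $4\log\kappa$), but you already flagged this and land on the correct final bound.
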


\begin{proof}[\pfref{prop:action_implicit}]
From Bellman consistency, we have that 
\[\Vhat(x,y_{1:h}) = \EE_{y_{h+1:H} \sim \piref(\cdot\mid{}x,y_{1:h})}[\Vhat(x,y_{1:H})],\]
so $\Vhat$ is the value function for the tilted distribution $\pitil: \cX \to \Delta(\cY)$ defined by $\pitil(y_{1:H}\mid{}x) \propto \piref(y_{1:H}\mid{}x) \Vhat(x,y_{1:H})$. Then \cref{lemma:vstar-density-ratio} applied to $\pitil$ gives 
\[\pitil(y_h \mid{} x,y_{1:h-1}) \propto \piref(y_h\mid{}x,y_{1:h-1}) \Vhat(x,y_{1:h}).\]
We now analyze \actionalg using this fact. Fix any $h \in [H]$ and condition on $y_{1:h-1}$. We invoke \cref{prop:rejection} with base measure $\mu(y_h) := \piref(y_h\mid{} x,y_{1:h-1})$ and tilt function $g(y_h) := \Vhat(x,y_{1:h})$. We define $\pi \in \Delta(\cA)$ by $\pi(y_h) \propto \mu(y_h) \Vhat(x,y_{1:h})$. As in the proof of \cref{prop:action}, we get that $\Dtv{\pihat_h(\cdot\mid{}x,y_{1:h-1})}{\pi} \leq \veps/H$. But now we observe that $\pi$ coincides with $\pitil_h(\cdot\mid{}x,y_{1:h-1})$, i.e. the conditional distribution of $y_h$ under $\pitil(\cdot\mid{}x,y_{1:h-1})$. Therefore
\[\Dtv{\pihat(\cdot\mid{}x)}{\pitil(\cdot\mid{}x)} \leq \sum_{h=1}^H \EE_{y_{1:h} \sim \pihat(\cdot\mid{}x)}[\Dtv{\pihat_h(\cdot\mid{}x,y_{1:h-1})}{\pitil_h(\cdot\mid{}x,y_{1:h-1})}] \leq \veps.\]
We now observe that
\begin{align*}
\Dtv{\pitil(\cdot\mid{}x)}{\pistar(\cdot\mid{}x)}
&\leq \sqrt{\Dkl{\pitil(\cdot\mid{}x)}{\pistar(\cdot\mid{}x)}} \\ 
&\leq \sqrt{\max_{y_{1:H}\in\cY} \log \frac{\pitil(y_{1:H}\mid{}x)}{\pistar(y_{1:H}\mid{}x)}} \\ 
&\leq \sqrt{\max_{y_{1:H}\in\cY} \log \frac{\Vhat(x,y_{1:H})}{\Vstar(x,y_{1:H)}} \cdot \frac{\sum_{y'_{1:H} \in \cY} \Vstar(x,y'_{1:H})}{\sum_{y'_{1:H}\in\cY} \Vhat(x,y'_{1:H})}} \\ 
&\leq \sqrt{2\log \kappa}
\end{align*}
by the assumption that the multiplicative ratio between $\Vhat$ and $\Vstar$ is bounded by $\kappa$ (see the statement of \cref{prop:action}). The triangle inequality for TV-distance completes the proof.
\end{proof}

\subsection{Lower Bound for Outcome-Level Algorithms}\label{sec:outcome-level-lb}

The following result shows that time complexity dependence on the sequence-level coverage coefficient $\Ccov(x)$ is unavoidable when only outcome-level rewards are available; the proof is standard but included for completeness.

\begin{proposition}[Lower bound for outcome-level algorithms]
  \label{prop:outcome_lower}
  Consider the KL-regularized optimization setting, where the algorithm only has access to $\piref$, $\beta$, and $\rstar$ (but no value function). Let $\cX := \{\perp\}$ and $\cY := \{0,1\}^H$, and set $\piref := \Unif(\cY)$. For any $C \in [1, 2^H]$, set $\beta := 1/\log(C)$. Any algorithm that satisfies $\Jbeta(\pistar;x) - \Jbeta(\pihat; x)\leq \frac{1}{8\log(C)}$ for all instances with $\Ccov(x) \leq C$ must use at least $N=\bigom(C)$ reward evaluations. Similarly, any algorithm that satisfies $\Dtv{\pistar}{\pihat}\leq 1/4$ for all instances with $\Ccov(x) \leq C$ must use at least $N=\bigom(C)$ reward evaluations.\footnote{Formally, this result holds in the ``sample-and-evaluate'' model of computation \citep{huang2025self,huang2025best}, where $\piref$ and $\rstar$ are initially unknown to the algorithm, and can only be accessed through black-box reward evaluation queries $\rstar(x,y)$ and generation queries $y\sim\piref(\cdot\mid{}x)$.}
\end{proposition}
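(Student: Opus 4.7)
The plan is to apply Yao's minimax principle to a family of planted-set instances. Fix $k \ldef \lfloor 2^H/C \rfloor$ and, for each $k$-subset $S \subseteq \cY$, define reward $\rstar_S(y) \ldef \indic[y \in S]$. Since $\exp(\beta^{-1}) = C$, the induced optimal policy $\pistarb_S(y) \propto 2^{-H} C^{\indic[y \in S]}$ satisfies, by direct computation, $\pistarb_S(S) = 1/(2 - 1/C) \geq 1/2$ and $\max_y \pistarb_S(y)/\piref(y) = C^2/(2C-1) \leq C$, so each $S$ yields a valid instance in the claimed class. I endow $S$ with the uniform distribution over $k$-subsets of $\cY$. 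Sampling queries to $\piref$ return uniform random points independent of $S$, so they can be absorbed into the algorithm's internal randomness without gaining information, and only the reward queries matter.

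For any deterministic algorithm making $N$ reward queries, let $Q$ denote the (random) set of queried points. An adaptive union bound shows $\Pr_S[Q \cap S \neq \emptyset] \leq 2N/C$, since at each step, conditional on all prior queries having returned $0$, the probability that the next query lies in $S$ is at most $k/(2^H - N) \leq 2/C$ for $N \leq 2^H/2$. Letting $\pihat$ denote the algorithm's output distribution, I decompose
\[\En_S[\pihat(S)] = \En_S[\pihat(S)\indic[Q\cap S\neq\emptyset]] + \En_S[\pihat(S)\indic[Q\cap S=\emptyset]].\]
The first term is at most $\Pr_S[Q \cap S \neq \emptyset] \leq 2N/C$ since $\pihat(S) \leq 1$. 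For the second term, conditioning on the observed event $Q \cap S = \emptyset$ (which determines $\pihat$ as a function of the transcript), the posterior on $S$ is uniform over $k$-subsets of $\cY \setminus Q$, so $\Pr[y \in S \mid \text{obs}] \leq k/(2^H - N) \leq 2/C$ for each $y \in \cY \setminus Q$, yielding $\En_S[\pihat(S) \mid \text{obs}] \leq \pihat(\cY \setminus Q) \cdot 2/C \leq 2/C$. Combining gives $\En_S[\pihat(S)] \leq 2(N+1)/C$.

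Now suppose the algorithm satisfies $\Dtv{\pihat}{\pistarb_S} \leq 1/4$ on every instance. Then $\pihat(S) \geq \pistarb_S(S) - 1/4 \geq 1/4$ pointwise, so $\En_S[\pihat(S)] \geq 1/4$, forcing $2(N+1)/C \geq 1/4$ and hence $N = \Omega(C)$ (the regime $C = O(1)$ is handled trivially). For the KL-regularized regret bound, I invoke \cref{lemma:jbeta-kl}: $\Jbeta(\pistarb_S;x) - \Jbeta(\pihat;x) = \beta \Dkl{\pihat}{\pistarb_S}$, so the assumed regret bound $\leq 1/(8\log C) = \beta/8$ is equivalent to $\Dkl{\pihat}{\pistarb_S} \leq 1/8$; Pinsker's inequality then gives $\Dtv \leq 1/4$, reducing to the TV case. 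The main subtlety I anticipate is the posterior-averaging step: making precise that, because $\pihat$ is a probability distribution measurable with respect to the transcript, the expected planted-mass factors cleanly as $\sum_y \pihat(y)\Pr[y \in S \mid \text{obs}]$, and that the adaptive union bound correctly handles the fact that later queries may depend on earlier outcomes.
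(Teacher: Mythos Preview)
Your argument is essentially correct and takes a somewhat different route from the paper. Both proofs plant a uniformly random subset $S$ of size $\Theta(2^H/C)$ and set $\rstar_S = \indic[\cdot \in S]$, and both reduce the KL-regularized regret case to the TV case via \cref{lemma:jbeta-kl} and Pinsker. The difference is in how the TV case is handled. The paper introduces an auxiliary \emph{null instance} with $\rstar \equiv 0$ (so $\pistar_\circ = \piref$), argues that any algorithm with $N \leq C/40$ queries sees no reward-$1$ generation with probability $\geq 3/4$ on some specific $S$, and hence $\Dtv{\pihat_S}{\pihat_\circ} \leq 1/4$; the triangle inequality then forces $\Dtv{\pistar_S}{\piref} \leq 3/4$, contradicting the direct calculation $\Dtv{\pistar_S}{\piref} \geq 0.8$. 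Your approach avoids the null instance entirely: you bound $\En_S[\pihat(S)]$ directly via a posterior-averaging argument (conditional on all queries missing $S$, the posterior on $S$ is uniform over $k$-subsets of the unqueried points), obtaining $\En_S[\pihat(S)] \leq 2(N+1)/C$, which collides with the pointwise lower bound $\pihat(S) \geq \pistarb_S(S) - 1/4$. Your route is arguably more self-contained; the paper's coupling-to-null argument is slightly less computational.

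One minor issue: with $k = \lfloor 2^H/C \rfloor$, your claimed equality $\pistarb_S(S) = 1/(2 - 1/C)$ only holds when $2^H/C$ is an integer. In general the floor can push $\pistarb_S(S)$ strictly below $1/2$ (e.g., $H=3$, $C=5$ gives $\pistarb_S(S) = 5/12$), which breaks the step $\pihat(S) \geq 1/4$. This is easily repaired by taking $k = \lceil 2^H/C \rceil$ (then $\alpha C \geq 1$ gives $\pistarb_S(S) \geq 1/2$, and $Z \geq 1$ still ensures $\Ccov \leq C$), or by assuming without loss of generality that $C$ is a power of $2$, as the paper does.
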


\begin{proof}[\pfref{prop:outcome_lower}]
Without loss of generality, assume that $C$ is a power of $2$. Fix $x \in \cX$ and $\cY = \{0,1\}^H$; we ignore the dependence on $x$ henceforth. Let $\piref = \Unif(\cY)$. We construct a \emph{random} instance of the alignment problem as follows. Set $\beta := 1/\log(C)$ and let $S \subset \cY$ be a uniformly random set of size $10 \cdot 2^H/C$. Define $\rstar_S(y) := \indic[y \in S]$, and let $\pistar_S$ be the desired distribution, i.e. $\pistar_S(y) \propto \piref(y) \exp(\beta^{-1} \rstar_S(y))$. Then
\[Z := \sum_{y \in \cY} \piref(y) \exp(\beta^{-1} \rstar_S(y)) = \frac{10}{C} \cdot C + \left(1 - \frac{10}{C}\right) \cdot 1 \in [10, 11].\]
Therefore for any $y \in \cY$,
\[\frac{\pistar_S(y)}{\piref(y)} := \frac{\exp(\beta^{-1} \rstar(y))}{Z} \leq C,\]
and hence this instance satisfies the desired bound $\Ccov(x) \leq C$ on the sequence-level coverage coefficient. Let $\pihat_S\in\Delta(\cY)$ be the law of the algorithm for this instance.

Consider also the ``null'' instance in which $\rstar(y) := 0$ for all $y \in \cY$. Let $\pihat_\circ\in\Delta(\cY)$ be the law of the algorithm for this instance. The desired distribution for this instance is $\pistar_\circ := \piref$.

Suppose that the algorithm uses at most $N \leq C/40$ reward evaluations. Then on average over the choice of $S$, the probability that the algorithm observes any reward-$1$ generation is at most $1/4$. Thus, there is some specific $S$ such that the probability that the algorithm observes a reward-$1$ generation is at most $1/4$. For this $S$, it holds that $\Dtv{\pihat_S}{\pihat_\circ} \leq 1/4$. 

We now consider the two cases of the lemma's assumption: either the algorithm is accurate in TV-distance, or in KL-regularized regret. We show both are impossible under the preceding assumption that $N \leq C/40$.
\begin{enumerate}
\item If $\Dtv{\pihat_S}{\pistar_S} \leq 1/4$ and $\Dtv{\pihat_\circ}{\piref} \leq 1/4$, then $\Dtv{\pistar_S}{\piref} \leq 3/4$. But $\piref(\cY\setminus S) = 1 - 10/C \geq 0.9$, whereas $\pistar_S(\cY\setminus S) \leq 1/Z \leq 0.1$. Thus $\Dtv{\pistar_S}{\piref} \geq 0.8$, which is a contradiction.

\item Similarly, suppose that $\Jbeta(\pistar_S;S) - \Jbeta(\pihat_S;S) \leq 1/(8\log C)$ and $\Jbeta(\pistar_\circ;\circ) - \Jbeta(\pihat_\circ;\circ) \leq 1/(8\log C)$, where $\Jbeta(\cdot;S)$ is the KL-regularized regret for the instance parametrized by $S$, and $\Jbeta(\cdot;\circ)$ is the KL-regularized regret for the null instance. It follows from \cite[Lemma F.4]{foster2025good} and choice of $\beta$ that $\Dkl{\pihat_S}{\pistar_S} \leq 1/8$ and $\Dkl{\pihat_\circ}{\pistar_\circ} \leq 1/8$. From Pinsker's inequality, we get  $\Dtv{\pihat_S}{\pistar_S} \leq 1/4$ and $\Dtv{\pihat_\circ}{\pistar_\circ} \leq 1/4$. The proof concludes as in (1).
\end{enumerate}
This completes the proof.
\end{proof}

\newpage
\section{Omitted Results and Proofs from \cref{sec:setup}}

In \cref{sec:example-details} we provide computations omitted from \cref{ex:abc-main,ex:delayed}, formally proving that \actionalg incurs $\Omega(1)$ TV-error in \cref{ex:abc-main} and $\Omega(1)$ KL-regularized regret in \cref{ex:delayed}. We also demonstrate that \outcomealg requires exponential time in these examples, and that \actionalg achieves exponentially poor coverage over $\pistar$ (in contrast with the guarantee of \cref{thm:js-avg-guarantee} for \mainalg).%

In \cref{sec:abc-kl} we demonstrate that \cref{ex:abc-main} can be adapted to the KL-regularized setting.

\subsection{Details for \cref{ex:abc-main,ex:delayed}}\label{sec:example-details}

\begin{proof}[Details for \cref{ex:abc-main}]
The calculation of $\Vstar$ is straightforward from the definition
\[\Vstar(y_{1:h}) = \En^{\piref}[\tau(y_{1:H}) \mid{} y_{1:h}].\]
Moreover, by construction it is clear that $\Vhat(y_{1:h})/\Vstar(y_{1:h}) \in [1/(1+\vepsv), 1+\vepsv]$ for all $y_{1:h}$. Note that $\pistar = \Unif(\{\afrak,\bfrak\}^H)$ whereas the output distribution $\pihat$ of \actionalg is a product distribution that puts mass $(1+\vepsv)/(2+\vepsv)$ on $\afrak$ for each coordinate. 

First, we lower bound $\Dtv{\pihat}{\pistar}$. By the data processing inequality and \cref{lemma:bin-tv}, we have \[\Dtv{\pihat}{\pistar} \geq \Dtv{\Bin(H, 1/2)}{\Bin(H, (1+\vepsv)/(2+\vepsv)} \geq \Omega(\min(\vepsv\sqrt{H},1))\] 
as claimed. 

Second, we observe that $\piref(y_{1:H})  = 3^{-H}$ for all $y_{1:H} \in \cY$, whereas $\pistar(\afrak,\dots,\afrak) = 2^{-H}$. Thus, $\Ccov \geq (3/2)^H$, so \outcomealg requires $\exp(\Omega(H))$ time in this example.

Third, in the regime $\vepsv = \Theta(1)$, we show that $\pihat$ does not approximately cover $\pistar$ (in the sense of \cref{eq:js-apx-coverage-guarantee} in \cref{thm:js-avg-guarantee}). Suppose without loss of generality that $H$ is even, and let $\cE \subset \cY$ be the set of $y_{1:H}$ that contain at most $H/2$ occurrences of $\afrak$. Defining $\Delta := \vepsv/(4+2\vepsv)$, we have for any $y_{1:H} \in \cE$ that
\[\frac{\pihat(y_{1:H})}{\pistar(y_{1:H})} = (1+\Delta)^{\#\{h: y_h=\afrak\}}(1-\Delta)^{H-\#\{h:y_h=\afrak\}} \leq (1-\Delta^2)^{H/2} = \exp(-\Omega(H))\]
since $\Delta = \Omega(\vepsv) = \Omega(1)$. It follows that for some constant $c = c(\vepsv)$,
\[\Prr_{y_{1:H} \sim \pistar}\left[\frac{\pistar(y_{1:H})}{\pihat(y_{1:H})} > \exp(cH)\right] \geq \pistar(\cE) = \frac{1}{2},\]
i.e. $\pihat$ has exponentially poor coverage of a constant fraction of the mass of $\pistar$.
\end{proof}

\begin{proof}[Details for \cref{ex:delayed}]
We calculate that the optimal regularized reward is $
\Jbeta(\pistar) = \Qstarb(\emptyset) = \log((1+e)/2).$ For the distribution $\pihat$ produced by \actionalg, it is clear that $\pihat=\piref=\Unif(\crl{0,1}^H)$, and hence
\begin{align}
\Jbeta(\pistar) - \Jbeta(\pihat) = \Jbeta(\pistar) - \En_{\piref}\brk*{\rstar(y_{1:H})} =  \log((1+e)/2) - \frac{1}{2} \geq\frac{1}{10}
\end{align}
as claimed. Next, we observe that $\pistar = \Ber(1/2 + \Delta)^{\otimes H}$ for some $\Delta = \Omega(1)$, whereas $\pihat = \piref = \Ber(1/2)^{\otimes H}$. Thus, the fact that \outcomealg requires $\exp(\Omega(H))$ time follows by an analogous argument as in \cref{ex:abc-main}, as does the fact that \actionalg fails to approximately cover $\pistar$. 
\end{proof}

\subsection{KL-Regularized ABC Example}\label{sec:abc-kl}

\begin{example}[Failure of \actionalg with approximate $\Qstarb$]
    \label{ex:abc}    
    Consider the following construction for $\vepsq\in\brk*{0,1}$ and $H\in\bbN$. We have a single prompt $\cX=\crl*{\perp}$ (henceforth we will omit the prompt from the notation) and action space $\cA=\crl{\afrak,\bfrak,\cfrak}$. We define $\piref = \Unif(\crl{\afrak,\bfrak,\cfrak}^H)$ and $\rstar(y_{1:H})=\indic\crl*{\cfrak \not \in y_{1:H}}$. Then it can be checked that
    \[\Qstarb(y_{1:h}) = \beta \log\left(1 + (2/3)^{H-h} (e^{\beta^{-1}}-1) \indic[\cfrak \not \in y_{1:h}]\right).\]
    Let us define \[\Qhat(y_{1:h}) := \begin{cases} \Qstarb(y_{1:h}) + \vepsq & \text{ if } y_h = \afrak \\ \Qstarb(y_{1:h}) & \text{ otherwise }\end{cases}.\]
Then \actionalg with approximate value function $\Vhat(y_{1:h}) := \exp(\beta^{-1} \Qhat(y_{1:h}))$ satisfies $\Jbeta(\pistar) - \Jbeta(\pihat)\geq \Omega(\min(\vepsq,\beta) \cdot H)$.\footnote{The sequence-level coverage coefficient is also exponential in $H$, $\Ccov\geq{}2^{\bigom(H)}$, and hence \outcomealg requires an exponential computational budget.} %
\end{example}

\begin{proof}[Details for \cref{ex:abc}]
We first derive the expression for $\Qstarb$. For any $y_{1:h} \in \cA^h$, if $\cfrak \in y_{1:h}$, then
\begin{align*}
\Qstarb(y_{1:h})
= \beta \log \En^{\piref}[\exp(\beta^{-1} \rstar(y_{1:H})) \mid{} y_{1:h}] 
= \beta \log 1
\end{align*}
since $\rstar(y_{1:H})$ will be almost surely $0$. On the other hand, if $\cfrak \not \in y_{1:h}$, then
\begin{align*}
\Qstarb(y_{1:h})
&= \beta \log \En^{\piref}[\exp(\beta^{-1} \rstar(y_{1:H})) \mid{} y_{1:h}] \\ 
&= \beta \log \left( (2/3)^{H-h} \cdot \exp(\beta^{-1}) + (1 - (2/3)^{H-h}) \cdot 1\right) \\ 
&= \beta \log \left(1 + (2/3)^{H-h}(\exp(\beta^{-1}) - 1)\right).
\end{align*}
Thus, it holds for all $y_{1:h} \in \cA^h$ that
\[\Qstarb(y_{1:h}) = \beta \log \left(1 + (2/3)^{H-h}(\exp(\beta^{-1}) - 1) \indic[\cfrak \not \in y_{1:h}]\right)\]
as claimed. Next, we observe that by construction, $|\Qhat(y_{1:h}) - \Qstar(y_{1:h})| \leq \vepsq$ for all $h \in [H]$ and $y_{1:h} \in \cA^h$. It remains to verify the lower bound on KL-regularized regret. By \cite[Lemma F.4]{foster2025good} and the chain rule for KL-divergence, we have
\begin{align}
\Jbeta(\pistar) - \Jbeta(\pihat) 
&= \beta \sum_{h=1}^H \En^{\piref}\left[ \Dkl{\pihat_h(\cdot\mid{}y_{1:h-1})}{\pistar_h(\cdot\mid{}y_{1:h-1})}\right].\label{eq:kl-chain-example}
\end{align}
Fix $h \in [H]$ and condition on $y_{1:h-1}$. We consider two cases.
\begin{enumerate}
    \item Suppose that $\cfrak \in y_{1:h-1}$. Then $\exp(\beta^{-1}\Qstarb(y_{1:h})) = 1$ for each $y_h \in \cA$, so $\pistar_h(\cdot\mid{}y_{1:h-1})$ is uniform over $\cA$. However, $\exp(\beta^{-1}\Qhat(y_{1:h})) = \exp(\vepsq \beta^{-1} \indic[y_h=\afrak])$, so $\pihat_h(\afrak\mid{}x,y_{1:h-1}) = e^{\vepsq/\beta} / (2 + e^{\vepsq/\beta}) \geq 1/3$. It follows that \begin{equation} \Dkl{\pihat_h(\cdot\mid{}y_{1:h-1})}{\pistar_h(\cdot\mid{}y_{1:h-1})} \geq \frac{1}{3} \cdot \log \frac{e^{\vepsq/\beta} / (2 + e^{\vepsq/\beta})}{1/3} \geq \Omega(\min(\vepsq/\beta, 1)).\end{equation}
    \item Suppose that $\cfrak \not \in y_{1:h-1}$. Then $\exp(\beta^{-1} \Qstarb(y_{1:h})) = 1 + (2/3)^{H-h}(e^{\beta^{-1}}-1)\indic[y_h \neq \cfrak]$. On the other hand, $\exp(\beta^{-1}\Qhat(y_{1:h})) = \exp(\beta^{-1} \Qstarb(y_{1:h})) \cdot \exp(\vepsq \beta^{-1} \indic[y_h=\afrak])$. Therefore
    \begin{align*} &\Dkl{\pihat_h(\cdot\mid{}y_{1:h-1})}{\pistar_h(\cdot\mid{}y_{1:h-1})} \\
    &\geq \frac{1}{3} \log \frac{e^{\vepsq/\beta}/(e^{\beta^{-1}\Qstarb(y_{1:h-1},\afrak)+\veps/\beta} + e^{\beta^{-1}\Qstarb(y_{1:h-1},\bfrak)} + e^{\beta^{-1}\Qstarb(y_{1:h-1},\cfrak)})}{1/(e^{\beta^{-1}\Qstarb(y_{1:h-1},\afrak)} + e^{\beta^{-1}\Qstarb(y_{1:h-1},\bfrak)} + e^{\beta^{-1}\Qstarb(y_{1:h-1},\cfrak)})} \\ 
    &\geq \frac{1}{3} \log\left(1 + \frac{e^{\vepsq/\beta} - 1}{e^{\vepsq/\beta}+2}\right) \\
    &\geq \Omega(\min(\vepsq/\beta, 1))
    \end{align*}
    where the penultimate inequality uses the fact that $\Qstarb(y_{1:h-1},\afrak)=\Qstarb(y_{1:h-1},\bfrak) \geq \Qstarb(y_{1:h-1},\cfrak)$.
\end{enumerate}
Substituting into \cref{eq:kl-chain-example} gives that $\Jbeta(\pistar) - \Jbeta(\pihat) \geq \Omega(\min(\vepsq,\beta)H)$ as claimed.
\end{proof}

\newpage

\section{Omitted Results and Proofs from \creftitle{sec:main}}
\label{sec:proofs_main}

In this section we formally prove \cref{thm:main_uniform,thm:js-avg-guarantee}. Specifically, this section is organized as follows. In \cref{sec:vgb-general-implementation} we formally discuss how \mainalg is implemented in the large-$|\cA|$ regime (\cref{alg:valueflow-values-weight-version}). In \cref{sec:uniform-error-proof} we show that \cref{thm:main_uniform} follows from a more general guarantee for the small-$|\cA|$ regime, \cref{thm:js-unif-full}. We also give a regret guarantee for \mainalg in the KL-regularized setting (\cref{cor:js-unif-kl}). In \cref{sec:proofs_uniform} we prove \cref{thm:js-unif-full}. In \cref{app:js-avg-proof} we prove \cref{thm:js-avg-guarantee}.

\paragraph{Notation} Recall that we let $\cT$ denote the autoregressive tree of generations, which has node set $\bigcup\limits_{h=0}^H \cA^h$. We also let $\neighborhood(y_{1:h})$ denote the neighborhood of node $y_{1:h}$ in $\cT$. In the proofs in this section, we will typically omit dependence on the prompt $x$, therefore writing e.g. $\piref(y_{1:H})$ instead of $\piref(y_{1:H} \mid{} x)$. Moreover, we will consider both distributions over the set $\cA^H$ (leaves of the autoregressive tree) as well as distributions over the entire tree $\cT$. To avoid confusion, we will notate the former as $\piref$, $\pistar$, $\pitil$, etc. whereas we will use other characters for the latter. Thus, $\piref(y_{1:h})$ refers to the marginal density of $\piref \in \Delta(\cA^H)$ on $y_{1:h}$, whereas $\nu(y_{1:h})$ refers to the single-point density of $\nu \in \Delta(\cT)$ at $y_{1:h}$.

\subsection{Detailed Implementation of \mainalg for Uniform-Error Results}\label{sec:vgb-general-implementation}

\begin{algorithm}[t]
\caption{Detailed Implementation of \mainalg for Uniform-Error Results}
\label{alg:valueflow-values-weight-version}
\begin{algorithmic}[1]
  \State\multiline{ {\bfseries Input:} Reference model $\piref$, estimated value function $\Vhat$, prompt $x\in\cX$, horizon $H\in\bbN$, step count $T\in\NN$, sampling regime (``large-$|\cA|$'' or ``small-$|\cA|$'').}
  \State {\bfseries Additional input for large-$|\cA|$ regime:} rejection threshold $M>0$, rejection error $\delrej\in(0,1)$.
  \State Initialize $y\ind{0} := \bot$.
  \For{$0 \le t < T$}
    \State Set $h := |y\ind{t}|$.
    \State Set $\baseneigh\ind{t} := \baseneigh(\cdot\mid{} y\ind{t})$.\Comment{\cref{def:neighborhood-dist}}
      \State \multiline{Let $g\ind{t}: \neighborhood(y_{1:h}\ind{t}) \to \RR_{\geq 0}$ be defined by \[g\ind{t}(y_{1:h-1}\ind{t}) := \Vhat(x,y_{1:h}\ind{t})\] 
      and, for all $y_{h+1}' \in \cA$,
      \[g\ind{t}(y_{1:h}\ind{t},y_{h+1}') := \Vhat(x,y_{1:h}\ind{t},y_{h+1}')\]}
    \State With probability $1/2$, set $y\ind{t+1} := y\ind{t}$, else: \Statex[0] \multiline{\emph{Small-$|\cA|$ regime:} explicitly compute $p\ind{t} \in \Delta(\neighborhood(y_{1:h}\ind{t}))$ defined by \[p\ind{t}(z) \propto \baseneigh\ind{t}(z) \cdot g\ind{t}(z),\] and sample $y\ind{t+1} \sim p\ind{t}$.}
    \Statex[0] \emph{Large-$|\cA|$ regime:} sample \Comment{\cref{alg:rejection}} \[y\ind{t+1} \gets \genrejalg(g\ind{t},\baseneigh\ind{t},M, \delrej).\]
  \EndFor
  \State \textbf{return} $y\ind{T}$.
\end{algorithmic}
\end{algorithm}

In this section we present \cref{alg:valueflow-values-weight-version}, which gives more detailed pseudocode for \mainalg (compared to \cref{alg:valueflow-values}). In particular, \cref{alg:valueflow-values-weight-version} shows how to efficiently implement the transitions of the random walk in \mainalg via rejection sampling in the large-$|\cA|$ case. For any node $y_{1:h}$ in the autoregressive tree $\cT$, it is convenient to define the following distribution $\baseneigh(\cdot\mid{}y_{1:h})$ over the neighborhood $\neighborhood(y_{1:h})$.

\begin{definition}[Unweighted neighborhood distribution]\label{def:neighborhood-dist}
Let $y_{1:h} \in \cT$ (so that $0 \leq h \leq H$). We define $\baseneigh(\cdot\mid{}y_{1:h}) \in \Delta(\neighborhood(y_{1:h}))$ as follows:
\begin{itemize}
    \item If $h = 0$, then $\baseneigh(y_{1:h+1}\mid{}y_{1:h}) := \piref(y_{h+1}\mid{}y_{1:h})$ for all $y_{h+1} \in \cA$.
    \item If $h = H$, then $\baseneigh(y_{1:h-1}\mid{}y_{1:h}) := 1$.
    \item If $0 < h < H$, then $\baseneigh(y_{1:h-1}\mid{}y_{1:h}) := \frac{1}{2}$ and $\baseneigh(y_{1:h+1}\mid{}y_{1:h}) := \frac{1}{2}\piref(y_{h+1}\mid{}y_{1:h})$ for all $y_{h+1} \in \cA$.
\end{itemize}
\end{definition}

In other words, $\baseneigh$ represents a transition kernel on $\cT$ that transitions to the parent node with probability $1/2$ (if possible), and transitions to a random child node with probaiblity $1/2$ (if possible, and under the distribution induced by $\piref$). Notably, sampling from $\baseneigh(\cdot\mid{}y_{1:h})$ is tractable with a single conditional sampling query to $\piref$. To implement the transitions of \mainalg at step $t$ in the large-$|\cA|$ case, we use \genrejalg (\cref{alg:rejection}) with base distribution $\baseneigh(\cdot\mid{}y_{1:h}\ind{t})$ and tilt function defined using $\Vhat$. See \cref{alg:valueflow-values-weight-version} for details.

\subsection{Proof of \cref{thm:main_uniform}: Uniform Error Bounds}
\label{sec:uniform-error-proof}

In this section, we state \cref{thm:js-unif-full}---our general guarantee under uniform error bounds---and use it to prove \cref{thm:main_uniform}. Notably, \cref{thm:js-unif-full} does not require exact access to the outcome-level reward $\tau$, and also implies a regret bound in the KL-regularized setting (\cref{cor:js-unif-kl}). It requires the following assumption on the multiplicative error of $\Vhat$, which reduces to \cref{assump:unif-mgf} in the special case that $\unifboundconstrewleaves=1$. We assume that $\unifboundconstrewleaves\leq\unifboundconstrew$ for convenience.

\begin{assumption}[Uniform bound on value errors; generalization of \cref{assump:unif-mgf}]  \label{assumption:uniform_error_bound_reweighting}
    Let $\unifboundconstrew \geq \unifboundconstrewleaves\geq 1$. For all $x \in \cX$, $h \in [H]$, and $y_{1:h} \in \cA^h$, it holds that
      \begin{align}
        \max\left\{  \frac{\Vhat(x,y_{1:h})}{\Vstar(x,y_{1:h})} , \frac{\Vstar(x,y_{1:h})}{\Vhat(x,y_{1:h})} \right\} \leq \unifboundconstrew.
    \end{align}
   Moreover, for all $x \in \cX$ and $y_{1:H} \in \cA^H$, it holds that
    \begin{align} \label{eq:leaf-error-bound} 
      \max\left\{  \frac{\Vhat(x,y_{1:H})}{\Vstar(x,y_{1:H})} , \frac{\Vstar(x,y_{1:H})}{\Vhat(x,y_{1:H})} \right\} \leq \unifboundconstrewleaves.
    \end{align}
\end{assumption}

  We remark that in general, $\unifboundconstrewleaves$ can be interpreted as capturing the quality of the estimated reward model, while $\unifboundconstrew$ captures the quality of the estimated value function. If the estimated reward model is perfect, we have that $\unifboundconstrewleaves = 1$.

To state the formal guarantee of \mainalg under \cref{assumption:uniform_error_bound_reweighting}, we must introduce the following model $\pitil$, which is defined by tilting $\piref$ by the estimated values at the leaves.

\begin{definition}[Ideal information-theoretic sampling distribution]\label{def:pitil}
  Let $\pitil: \cX \to \Delta(\cY)$ be defined by
  \begin{align}
    \pitil(y_{1:H}\mid{}x) := \frac{\piref(y_{1:H}\mid{}x) \, \Vhat(x,y_{1:H})}{\sum_{y'_{1:H} \in \cA^H} \piref(y'_{1:H}\mid{}x) \, \Vhat(x,y'_{1:H})}.
  \end{align}
\end{definition}

While it is not generically possible to sample from a distribution arbitrarily close to $\pistar$ when $\unifboundconstrewleaves > 1$ (even information-theoretically), we will show that it \emph{is} possible to sample from a distribution arbitrarily close to $\pitil$ (with time complexity of the sampling algorithm scaling logarithmically in the sampling error). Moreover, it is straightforward to see that $\pitil$ is close to $\pistar$ in the sense of coverage:

  \begin{fact}\label{fact:pitil-pistar}
  Under \cref{assumption:uniform_error_bound_reweighting}, the density ratio between $\pistar$ and $\pitil$ is bounded as
      \begin{align}
        \max \left\{ \frac{\pitil(y_{1:H}\mid{}x)}{\pistar(y_{1:H}\mid{}x)}, \frac{\pistar(y_{1:H}\mid{}x)}{\pitil(y_{1:H}\mid{}x)}  \right\} \leq \unifboundconstrewleaves^2 
      \end{align}
      for all $x \in \cX$ and $y_{1:H} \in \cA^H$.
    \end{fact}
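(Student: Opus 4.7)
The plan is to prove the fact by directly writing out the density ratio and decomposing it into two factors, each of which is controlled by the leaf-level error bound \eqref{eq:leaf-error-bound} in \cref{assumption:uniform_error_bound_reweighting}. Since $\Vstar(x,y_{1:H}) = \tau(x,y_{1:H})$ at $h=H$ (by the definition of $\Vstar$ in \eqref{eq:vstar}), we have $\pistar(y_{1:H}\mid x) \propto \piref(y_{1:H}\mid x)\,\Vstar(x,y_{1:H})$, while by \cref{def:pitil}, $\pitil(y_{1:H}\mid x) \propto \piref(y_{1:H}\mid x)\,\Vhat(x,y_{1:H})$.

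Concretely, I would compute
\[
\frac{\pitil(y_{1:H}\mid x)}{\pistar(y_{1:H}\mid x)}
= \frac{\Vhat(x,y_{1:H})}{\Vstar(x,y_{1:H})}\cdot
\frac{\sum_{y'_{1:H}}\piref(y'_{1:H}\mid x)\,\Vstar(x,y'_{1:H})}{\sum_{y'_{1:H}}\piref(y'_{1:H}\mid x)\,\Vhat(x,y'_{1:H})}.
\]
The first factor is bounded above by $\unifboundconstrewleaves$ by \eqref{eq:leaf-error-bound}. For the second factor, I would apply \eqref{eq:leaf-error-bound} pointwise inside the sum in the denominator to get
\[
\sum_{y'_{1:H}} \piref(y'_{1:H}\mid x)\,\Vhat(x,y'_{1:H})
\;\geq\; \unifboundconstrewleaves^{-1}\sum_{y'_{1:H}} \piref(y'_{1:H}\mid x)\,\Vstar(x,y'_{1:H}),
\]
which bounds the second factor by $\unifboundconstrewleaves$. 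Multiplying gives the upper bound $\unifboundconstrewleaves^2$. The reverse inequality is completely symmetric: swap the roles of $\Vhat$ and $\Vstar$ in both factors.

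There is no real obstacle here; the fact is essentially a two-line calculation about normalized density ratios, and the key observation is just that \emph{both} the pointwise ratio and the ratio of normalizing constants contribute a factor of $\unifboundconstrewleaves$. It's worth noting explicitly that $\unifboundconstrew$ (which controls errors at internal nodes) plays no role — this fact only uses the leaf-level bound, which matches the intuition that $\pitil$ is the ``information-theoretically optimal'' target distribution determined by the estimated outcome-level values alone.
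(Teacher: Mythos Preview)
Your proposal is correct and matches the straightforward calculation the paper has in mind (the paper states this as a Fact without proof, calling it ``straightforward to see''). Your decomposition into the pointwise leaf ratio and the ratio of normalizing constants, each bounded by $\unifboundconstrewleaves$ via \eqref{eq:leaf-error-bound}, is exactly the intended argument.
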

We will show that \mainalg can be used to sample from a distribution that is close to $\pitil$ in $\chi^2$-divergence (\cref{def:chisq}). Note that the output of \mainalg is a node of $\cT$, which may or may not be a leaf node, whereas $\pitil$ (and $\pistar$) are distributions over leaf nodes. However, the following result demonstrates that \mainalg produces a leaf node with reasonable probability, and that \emph{conditioned} on this event, the output distribution is close to $\pitil$. %

\newcommand{\eventleaf}{\mathcal{E}}

\begin{theorem}[Main accuracy guarantee under uniform errors]
    \label{thm:js-unif-full}
    There is an absolute constant $C_{\ref{thm:js-unif-full}}>0$ with the following property. In the general test-time alignment setting (\cref{sec:prelim}), suppose that \cref{assumption:uniform_error_bound_reweighting} holds with parameters $\unifboundconstrew \geq \unifboundconstrewleaves \geq 1$. Fix $x \in \cX$,  $T \in \NN$, and $\veps\in(0,1)$, and let $\nu \in \Delta(\cT)$ be the distribution of the output $y\ind{T}$ of \cref{alg:valueflow-values-weight-version} with inputs $\piref$, $\Vhat$, $x$, $H$, and $T$, in the \emph{small-$|\cA|$} regime. 
    
    Let $ \eventleaf$ be the event that $|y\ind{T}| = H$. 
    If $T \geq C_{\ref{thm:js-unif-full}}\unifboundconstrew^4 H^2 \log(\unifboundconstrew H/\epsilon)$, it holds that  
       \begin{align}
        \nu(\eventleaf) \geq \frac{1}{8\unifboundconstrewleaves \unifboundconstrew H}
      \end{align}
      and
      \begin{align}
        \Dchis{\nu|_{\eventleaf} }{\pitil} \leq \epsilon.  
      \end{align}

\end{theorem}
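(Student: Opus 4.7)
The plan is to analyze \cref{alg:valueflow-values-weight-version} in the small-$|\cA|$ regime as a reversible Markov chain $P$ on the generation tree $\cT$, identify its stationary distribution, and then invoke standard conductance-mixing machinery (\cref{lem:conductance-mixing-chi2}). First, I would verify that $P$ satisfies detailed balance with the distribution $\mu(y_{1:h}) \propto \pi_{\mathsf{ref}}(y_{1:h}) Z(y_{1:h})$, where $Z(y_{1:h}) := \hat V(x,y_{1:h}) \mathbf{1}\{h>0\} + \sum_{y_{h+1}\in\cA} \pi_{\mathsf{ref}}(y_{h+1}\mid y_{1:h}) \hat V(x,y_{1:h+1}) \mathbf{1}\{h<H\}$ is the total unnormalized neighborhood weight. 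The $1/2$ laziness at \lineref{line:sampling-main} ensures aperiodicity, and irreducibility is immediate since every node has a path to the root. Because $Z(y_{1:H}) = \hat V(x,y_{1:H})$ at a leaf, a direct calculation shows that $\mu$ restricted to $\cE$ is exactly $\tilde\pi$ from \cref{def:pitil}.

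Next, I would estimate $\mu(\cE)$. Using \cref{assumption:uniform_error_bound_reweighting} and the Bellman identity for the true value $V^\star$ under $\pi_{\mathsf{ref}}$, I get $Z(y_{1:h}) \leq 2\kappa V^\star(y_{1:h})$ for $h<H$ and $Z(y_{1:H}) \geq V^\star(y_{1:H})/\kappa_{\mathsf{leaf}}$. The tower-property identity $\sum_{y_{1:h}} \pi_{\mathsf{ref}}(y_{1:h}) V^\star(y_{1:h}) = V^\star(\emptyset)$ at every level then bounds the total normalizer $Z_{\mathrm{tot}} := \sum_{h, y_{1:h}} \pi_{\mathsf{ref}}(y_{1:h}) Z(y_{1:h})$ by $O(\kappa H V^\star(\emptyset))$, while the leaf contribution is at least $V^\star(\emptyset)/\kappa_{\mathsf{leaf}}$. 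Dividing gives $\mu(\cE) \geq \Omega(1/(\kappa_{\mathsf{leaf}} \kappa H))$, and the same estimates imply $\mu(\emptyset) \geq \Omega(1/(\kappa^2 H))$, which we need for the starting state bound in \cref{lem:conductance-mixing-chi2}.

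The main technical step is a canonical-paths bound on conductance in the style of Sinclair-Jerrum. For each pair $(u,v) \in \cT^2$, let $\gamma_{uv}$ be the unique tree path that climbs to the least common ancestor and then descends. On the edge $e$ joining $y_{1:h}$ and $y_{1:h+1}$, detailed balance gives flow $Q(e) = \mu(y_{1:h}) P(y_{1:h+1}\mid y_{1:h}) \propto \pi_{\mathsf{ref}}(y_{1:h+1}) \hat V(y_{1:h+1}) \geq \pi_{\mathsf{ref}}(y_{1:h+1}) V^\star(y_{1:h+1})/\kappa$ (up to the constant $1/Z_{\mathrm{tot}}$). The canonical paths through $e$ are exactly those connecting the subtree rooted at $y_{1:h+1}$ to its complement, so $\sum_{\gamma_{uv}\ni e} \mu(u)\mu(v) \leq \mu(\text{subtree of } y_{1:h+1})$, which the level-wise summation bounds by $O(\kappa H) \pi_{\mathsf{ref}}(y_{1:h+1}) V^\star(y_{1:h+1})/Z_{\mathrm{tot}}$. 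Dividing gives congestion $\rho = O(\kappa^2 H)$ and hence conductance $\Phi \geq \Omega(1/(\kappa^2 H))$.

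Finally, I would put the pieces together. \cref{lem:conductance-mixing-chi2} gives $\Dchis{\nu}{\mu} \lesssim e^{-\Phi^2 T/2}/\mu(\emptyset)$, which is at most $\mu(\cE)^2 \epsilon/16$ once $T \geq C_{\ref{thm:js-unif-full}}\kappa^4 H^2 \log(\kappa_{\mathsf{leaf}}\kappa H/\epsilon)$; applying \cref{lemma:divergence-conditioning} and using $\mu|_\cE = \tilde\pi$ then yields $\Dchis{\nu|_\cE}{\tilde\pi} \leq \epsilon$. The hardest part will be the bookkeeping in the canonical-paths calculation, specifically tracking the $\kappa$ factors appearing in $Z(y_{1:h})$ and how they interact with the normalizers on both sides of each edge, since this is what pins down the advertised $\kappa^4 H^2$ mixing time rather than a worse polynomial.
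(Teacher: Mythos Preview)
Your proposal is correct and follows essentially the same approach as the paper: identify the reversible stationary distribution, bound $\mu(\cE)$ and $\mu(\bot)$ via the Bellman identity for $V^\star$, establish a $\Phi \geq \Omega(1/(\kappa^2 H))$ conductance bound, and conclude via \cref{lem:conductance-mixing-chi2} and \cref{lemma:divergence-conditioning}. The only cosmetic difference is that you phrase the conductance step via canonical paths, whereas the paper argues directly that for any connected $S$ not containing the root, the edge from the shallowest node of $S$ to its parent already witnesses $\Phi_S \geq 1/(4\kappa^2 H)$; on a tree these two arguments are equivalent and yield the same bound.
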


The proof of \cref{thm:main_uniform} from \cref{thm:js-unif-full} is largely straightforward. The main technical detail is to extend the accuracy guarantees to the large-$|\cA|$ regime, which we do by approximately coupling the execution of \mainalg in the large-$|\cA|$ regime to its execution in the small-$|\cA|$ regime. 

\begin{remark}[Hyperparameters for rejection sampling in large-$|\cA|$ regime]\label{remark:rs-hyperparameters}
We remark that in the large-$|\cA|$ regime, as shown in \cref{alg:valueflow-values-weight-version}, \mainalg formally requires two additional hyperparameters (which were omitted from the statement of \cref{thm:main_uniform}), in order to use \genrejalg  (\cref{alg:rejection}) to implement transitions in the random walk: (1) a rejection threshold $M$ and (2) an error tolerance $\delrej$. In the proof of \cref{thm:main_uniform} below, we assume that $\delta$, $\Cact(x)$, and $\vepsv$ are known, and moreover $M := 4\Cact(x)(1+\vepsv)^2$ and $\delrej := \delta/(16(1+\vepsv)HT)$. In the practical implementation of \mainalg, we use a heuristic approximation of \genrejalg that only requires a single hyperparameter---see \cref{sec:practical-implementation-details}.
\end{remark}

\begin{proof}[Proof of \cref{thm:main_uniform}]
We observe that under \cref{assump:unif-mgf}, \cref{assumption:uniform_error_bound_reweighting} is satisfied with $\unifboundconstrew := 1+\vepsv$ and $\unifboundconstrewleaves := 1$. Also, by \cref{fact:pitil-pistar}, we have $\pitil = \pistar$. 

First, we consider the execution of \mainalg in the small-$|\cA|$ regime (as formally described in \cref{alg:valueflow-values-weight-version}). Invoking \cref{thm:js-unif-full} with $\veps = \delta^2/4$, we get $\Prr[\Eleaf] \geq 1/(8(1+\vepsv)H)$ and
  \[\Dtv{\pihat|_{\Eleaf}}{\pistar} \leq \sqrt{\Dchis{\pihat|_{\Eleaf}}{\pistar}} \leq \delta/2 \leq \delta,\]
  where the first inequality uses \cref{prop:divergence_inequalities}. The time complexity bound of $O(T \cdot |\cA|)$ is evident from the algorithm description.

We next extend the analysis to the large-$|\cA|$ regime where each transition is implemented using \genrejalg (as described in \cref{alg:valueflow-values-weight-version}). Fix some step $t \in [T]$. To invoke the guarantee for \genrejalg (\cref{prop:rejection}), we must bound $\norm{p\ind{t}/\baseneigh\ind{t}}_\infty$. Suppose that $h := |y\ind{t}|$ satisfies $0 < h < H$ (the edge cases follow by analogous and simpler arguments). We have
\[
\frac{p\ind{t}(y_{1:h-1}\ind{t})}{\baseneigh\ind{t}(y_{1:h-1}\ind{t})} 
= 2p\ind{t}(y_{1:h-1}\ind{t}) \leq 2,
\]
and, for any $y_h \in \cA$,
\begin{align*}
\frac{p\ind{t}(y_{1:h}\ind{t},y_{h+1})}{\baseneigh\ind{t}(y_{1:h}\ind{t},y_{h+1})}
&= \frac{\Vhat(x,y_{1:h}\ind{t}, y_{h+1})}{\frac{1}{2}\Vhat(x,y_{1:h}\ind{t}) + \frac{1}{2} \sum_{y_{h+1}' \in \cA} \piref(y_{h+1}'\mid{}x,y_{1:h}\ind{t}) \Vhat(x,y_{1:h}\ind{t},y_{h+1}')} \\ 
&\leq \frac{\unifboundconstrew^2 \Vstar(x,y_{1:h}\ind{t},y_{h+1})}{\frac{1}{2}\Vstar(x,y_{1:h}\ind{t}) + \frac{1}{2} \sum_{y_{h+1}' \in \cA} \piref(y_{h+1}'\mid{}x,y_{1:h}\ind{t}) \Vstar(x,y_{1:h}\ind{t},y_{h+1}')} \\ 
&= \frac{\unifboundconstrew^2 \Vstar(x,y_{1:h}\ind{t},y_{h+1})}{\frac{1}{2}\Vstar(x,y_{1:h}\ind{t}) + \frac{1}{2} \Vstar(x,y_{1:h}\ind{t})} \\ 
&= \unifboundconstrew^2 \frac{\pistar(y_{h+1}\mid{}x,y_{1:h}\ind{t})}{\piref(y_{h+1}\mid{}x,y_{1:h}\ind{t})} \\
&\leq \unifboundconstrew^2 \Cact(x)
\end{align*}
where the first inequality is by \cref{assumption:uniform_error_bound_reweighting}, the second equality is by definition of $\Vstar$ (\cref{eq:vstar}), the third equality is by \cref{lemma:vstar-density-ratio}, and the final inequality is by definition of $\Cact(x)$. It follows from \cref{prop:rejection} (so long as the hyperparameters in \cref{alg:valueflow-values-weight-version} are set appropriately---see \cref{remark:rs-hyperparameters}) that in the event that \genrejalg is invoked at step $t$, the distribution $\phat\ind{t}$ of its output satisfies $\Dtv{\phat\ind{t}}{p\ind{t}} \leq \delta/(16(1+\vepsv)HT)$. Let $\pihat$ denote the distribution of the output $y\ind{T}$, and let $\pihat_{\mathsf{small}}$ denote the distribution of the output in the small-$|\cA|$ regime; then a standard coupling argument gives $\Dtv{\pihat}{\pihat_{\mathsf{small}}} \leq \delta/(16(1+\vepsv)H)$, and therefore $\Dtv{\pihat|_{\eventleaf}}{\pihat_{\mathsf{small}}|_{\eventleaf}} \leq \delta/2$ by \cref{lemma:divergence-conditioning}. By the above analysis, we have $\Dtv{\pihat_{\mathsf{small}}|_{\eventleaf}}{\pistar} \leq \delta/2$, so we get $\Dtv{\pihat|_{\eventleaf}}{\pistar} \leq \delta$ as needed. The time complexity bound claimed in the theorem statement is evident from the choice of $M$ (\cref{remark:rs-hyperparameters}) and the pseudocode for \genrejalg (\cref{alg:rejection}).
\end{proof}

\begin{remark}[Large-$|\cA|$ regime with average-case error bound]\label{remark:large-action-average-case}
A key step in the analysis of \cref{thm:main_uniform} for the large-$|\cA|$ regime was to show that every transition can be efficiently implemented with rejection sampling. This required showing that at every step $t$, the desired sampling distribution $p\ind{t}$ has bounded density ratio with respect to a tractable proposal distribution (to invoke \cref{prop:rejection}). Since $p\ind{t}$ is defined in terms of $\Vhat$, this in turn requires some \emph{uniform} assumption on $\Vhat$. Above, we showed that \cref{assump:unif-mgf} suffices to control the density ratio in terms of the action-level coverage coefficient (and the assumption parameter $\unifboundconstrew=1+\vepsv$). 

However, with only the average-case bound \cref{assump:avg-mgf}, it is unclear how to make a similar argument work. For this reason, we only state \cref{thm:js-avg-guarantee} in the small-$|\cA|$ regime. One could easily extend it to the large-$|\cA|$ regime by introducing an additional assumption that explicitly controls the density bounds in the above analysis. It may also be possible to avoid extra assumptions via a more delicate average-case argument that allows the rejection sampling procedure to occasionally fail. We leave this question for future work.
\end{remark}

In the KL-regularized setting, we get the following additional guarantee on the KL-regularized regret (\cref{sec:prelim}). We only provide this guarantee in the small-$|\cA|$ regime, but we believe it may be possible to extend to the large-$|\cA|$ regime using a similar argument as in the analysis of \actionalg (\cref{prop:action}).

\begin{corollary}[Regret guarantee for \mainalg in KL-regularized setting]\label{cor:js-unif-kl}
In the KL-regularized alignment setting (\cref{sec:prelim}) with temperature parameter $\beta>0$ and reward function $\rstar:\cX\times\cY\to[0,\Rmax]$,
suppose that \cref{assump:unif-mgf} holds with parameter $\vepsv>0$. Fix prompt $x \in \cX$ and  $T \in \NN$, and let $\nu \in \Delta(\cT)$ be the distribution of the output $y\ind{T}$ of \cref{alg:valueflow-values-weight-version} with inputs $\piref$, $\Vhat$, $x$, $H$, and $T$, in the small-$|\cA|$ regime. Let $ \eventleaf$ be the event that $|y\ind{T}| = H$. 
    Then, for any $\delta \in (0,1)$, if $T \geq C_{\ref{thm:main_uniform}}\unifboundconstrew^4 H^2 \log(\unifboundconstrew H/\delta)$, it holds that
      \begin{align}
        \Jbeta(\pistar;x) - \Jbeta(\nu|_{\eventleaf};x) \leq \beta \delta.  
      \end{align} 
\end{corollary}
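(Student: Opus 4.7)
}
The plan is to reduce the regret bound to the $\chi^2$-guarantee of \cref{thm:js-unif-full} by chaining together three standard facts. First, I would observe that in the KL-regularized setting, \cref{assump:unif-mgf} supplies exact outcome-level rewards, i.e. $\Vhat(x,y_{1:H}) = \tau(x,y_{1:H}) = \Vstar(x,y_{1:H})$. In the notation of \cref{assumption:uniform_error_bound_reweighting}, this means the leaf-level error constant satisfies $\unifboundconstrewleaves = 1$, and interior nodes satisfy the multiplicative bound with $\unifboundconstrew = 1+\vepsv$. Consequently \cref{fact:pitil-pistar} collapses $\pitil$ and $\pistar$: the density ratio is bounded by $\unifboundconstrewleaves^2 = 1$, so $\pitil = \pistar$.

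Next, I would invoke \cref{thm:js-unif-full} with accuracy parameter $\epsilon := \delta$ and parameters $\unifboundconstrew = 1+\vepsv$, $\unifboundconstrewleaves = 1$. Provided $T \geq C_{\ref{thm:js-unif-full}}\, \unifboundconstrew^4 H^2 \log(\unifboundconstrew H/\delta)$ (which matches the hypothesized lower bound on $T$ up to the absolute constant), the theorem outputs a distribution $\nu \in \Delta(\cT)$ with $\nu(\eventleaf) > 0$ and
\[
\Dchis{\nu|_{\eventleaf}}{\pistar} \leq \delta.
\]

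Finally, I would convert this $\chi^2$-bound into a regret bound. By \cref{lemma:jbeta-kl}, we have the identity
\[
\Jbeta(\pistar;x) - \Jbeta(\nu|_{\eventleaf};x) = \beta \cdot \Dkl{\nu|_{\eventleaf}}{\pistar}.
\]
Applying the divergence inequality $\Dklshort \leq \Dchis{\cdot}{\cdot}$ from \cref{prop:divergence_inequalities} yields
\[
\Jbeta(\pistar;x) - \Jbeta(\nu|_{\eventleaf};x) \leq \beta \cdot \Dchis{\nu|_{\eventleaf}}{\pistar} \leq \beta \delta,
\]
which is exactly the claimed bound. Since every ingredient is already established earlier in the excerpt, this reduction is essentially mechanical; there is no real obstacle beyond verifying that the choice $\unifboundconstrewleaves = 1$ is legitimate in the KL-regularized setting and that the $T$-threshold in \cref{cor:js-unif-kl} matches the one in \cref{thm:js-unif-full} with $\epsilon = \delta$.
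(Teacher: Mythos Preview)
Your proposal is correct and follows essentially the same argument as the paper's proof: identify $\unifboundconstrewleaves=1$, $\unifboundconstrew=1+\vepsv$, conclude $\pitil=\pistar$, invoke \cref{thm:js-unif-full} with $\epsilon=\delta$, and chain \cref{lemma:jbeta-kl} with the $\Dklshort\leq D_{\chi^2}$ bound from \cref{prop:divergence_inequalities}. There is nothing to add.
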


\begin{proof}
As above, \cref{assumption:uniform_error_bound_reweighting} is satisfied with $\unifboundconstrew=1+\vepsv$ and $\unifboundconstrew=1$, and $\pitil = \pistar$. Invoking \cref{thm:js-unif-full} with $\vepsv := \delta$, we have
  \begin{align*}
    \Jbeta(\pistar;x) - \Jbeta(\nu|_{\eventleaf}; x) 
    &= \beta \cdot \Dkl{\nu|_{\eventleaf}}{\pistar} \\
    &\leq \beta \cdot \Dchis{\nu|_{\eventleaf}}{\pistar} \\
    &\leq \beta \delta  
  \end{align*}  
  where the equality is by \cref{lemma:jbeta-kl} and the first inequality is by \cref{prop:divergence_inequalities}.
\end{proof}

\subsection{Proof of \cref{thm:js-unif-full}: Uniform Error Bounds}
\label{sec:proofs_uniform}

\newcommand{\dhat}{\hat{d}}
\newcommand{\Atil}{\tilde{A}}
\newcommand{\pitilg}{\tilde{\pi}}
\newcommand{\up}{\uparrow}

This proof largely follows the template originated by \cite{sinclair1989approximate}, generalized appropriately to the test-time alignment setting: we show that \mainalg implements a reversible Markov chain, compute its stationary distribution, and use a conductance argument to bound its mixing time. See also \cite{bakshi2024high} for a related but incomparable generalization (using a slightly different random walk).

Fix a prompt $x \in \cX$. Since we will be working with the same prompt throughout the proof, we will drop it from the notation in the remainder of the section, and define quantities that implicitly depend on $x$. 
    In particular, we will write $\Vstar(y_{1:h}) := \Vstar(x,y_{1:h})$, $\Vhat(y_{1:h}) := \Vhat(x,y_{1:h})$, and $\piref(y_{1:h}) := \piref(y_{1:h}\mid{}x)$ as shorthand.

    Recall that $\cT$ is the tree with node set $\cA^0 \cup \dots \cup \cA^H$ and with edge set defined by setting the parent of $y_{1:h}$ to be $y_{1:h-1}$. For $w,v \in \cT$, we will write $w \sim v$ if $w$ and $v$ are adjacent in the tree, i.e. if one is the parent of the other as defined above.      

    In order to prove \cref{thm:js-unif-full}, we will use standard notions from the theory of Markov chains, such as reversibility, stationary distributions, and conductance. For a detailed introduction to Markov chains, we refer the reader to \citet{levin2009markov}.

\begin{definition} \label{def:markov-chain-def}
For any $y_{1:h} \in \cT$ with $h>0$, define
\[ \tranprob(y_{1: h-1} , y_{1 : h} ) := \tranprob( y_{1:h} , y_{1:h-1} ) := \Vhat(y_{1:h}) \piref(y_{1:h} ).\]
Define a Markov chain $\markch$ on $\cT$ where, for any $v,w \in \cT$, the probability of transitioning from $v$ to $w$ is
\[\markch(w\mid{}v) := \begin{cases} \frac{\tranprob(v,w)}{2\sum_{w' \sim v} \tranprob(v,w')} & \text{ if } w \sim v \\ 1/2 & \text{ if } w = v \\ 0 & \text{ otherwise } \end{cases}.\]
\end{definition}
Note that this Markov chain exactly corresponds to a step in \cref{alg:valueflow-values-weight-version} (in the small-$|\cA|$ regime).

\begin{definition}\label{def:mu}
Define $\mu \in \Delta(\cT)$ by
\[\mu(v) := \frac{1}{Z_f} \sum_{w\in\cT: w \sim v} f(v,w)\]
where $Z_f := \sum_{v\in\cT}\sum_{w\in\cT:w\sim v} f(v,w)$ is the normalizing constant.
\end{definition}

The next lemma shows that the Markov chain $ \markch $ is reversible with respect to $\mu$ (\cref{def:reversible}). 

\begin{lemma} \label{lemma:reversible}
The Markov chain $\markch$ is reversible with stationary distribution $\stdist$.
\end{lemma}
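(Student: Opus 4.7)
The plan is to verify the detailed balance equations directly from the definitions, and then observe that detailed balance with a probability distribution implies stationarity.

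First, I would dispatch the trivial cases of the detailed balance condition $\mu(v)\markch(w\mid v) = \mu(w)\markch(v\mid w)$. If $v=w$, both sides equal $\mu(v)/2$ by definition of the self-loop probability. If $v\neq w$ and $v\not\sim w$, both transition probabilities are zero.

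The main (still routine) case is $v\sim w$. Plugging in \cref{def:mu} and \cref{def:markov-chain-def},
\[
\mu(v)\,\markch(w\mid v) \;=\; \frac{1}{Z_f}\Bigl(\sum_{w'\sim v} f(v,w')\Bigr)\cdot\frac{f(v,w)}{2\sum_{w'\sim v} f(v,w')} \;=\; \frac{f(v,w)}{2Z_f},
\]
and by the same computation $\mu(w)\,\markch(v\mid w) = f(w,v)/(2Z_f)$. The two sides coincide because $f$ is symmetric by construction in \cref{def:markov-chain-def} (recall $f(y_{1:h-1},y_{1:h}) = f(y_{1:h},y_{1:h-1}) = \Vhat(y_{1:h})\piref(y_{1:h})$).

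To conclude, I would verify that $\mu$ is indeed a probability distribution, i.e. $\sum_v \mu(v) = \tfrac{1}{Z_f}\sum_v\sum_{w\sim v} f(v,w) = 1$, which is immediate from the definition of $Z_f$. Stationarity then follows from detailed balance in the standard way: $\sum_v \mu(v)\markch(w\mid v) = \sum_v \mu(w)\markch(v\mid w) = \mu(w)$. There is no real obstacle; the lemma is essentially a bookkeeping check that the symmetrization $f(v,w)=f(w,v)$ in \cref{def:markov-chain-def} is precisely what makes the natural edge-weighted random walk reversible with respect to $\mu$.
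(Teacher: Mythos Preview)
Your proposal is correct and matches the paper's proof essentially line-for-line: both compute $\mu(v)\markch(w\mid v)=f(v,w)/(2Z_f)$ for adjacent $v,w$ and invoke the symmetry $f(v,w)=f(w,v)$. You are slightly more explicit about the trivial cases and about $\mu$ being a probability distribution, but the argument is the same.
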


\begin{proof}
Recall that for $w,v\in\cT$ with $w\sim v$, the transition probability is
\[\markch(w\mid v) = \frac{\tranprob(v,w)}{2\sum_{w' \sim v} \tranprob(v,w')}.\]
Hence
\[
\stdist(v)\markch(w\mid v)
= \left(\frac{1}{Z_{\tranprob}}\sum_{w' \sim v} \tranprob(v,w')\right)
    \left(\frac{\tranprob(v,w)}{2\sum_{w' \sim v} \tranprob(v,w')}\right)
= \frac{\tranprob(v,w)}{2Z_{\tranprob}}.
\]
By the same calculation with $v$ and $w$ swapped,
\[
\stdist(w)\markch(v\mid w) = \frac{\tranprob(w,v)}{2Z_{\tranprob}} = \frac{\tranprob(v,w)}{2Z_{\tranprob}},
\]
using $\tranprob(v,w)=\tranprob(w,v)$. 
Therefore $\stdist(v)\markch(w\mid v)=\stdist(w)\markch(v\mid w)$ for all adjacent $v,w$, verifying detailed balance and hence reversibility with stationary distribution $\stdist$.
\end{proof}

We begin by showing that the stationary distribution $\mu$ has sufficient mass on the leaves of the tree and the root of the tree. 
The first part is a specific claim of \cref{thm:js-unif-full}; the second part is important in bounding the mixing time of the random walk in \cref{alg:valueflow-values-weight-version}, since it starts at the root node of the tree.

\begin{lemma} \label{lemma:leaf-root-mass}
Under \cref{assumption:uniform_error_bound_reweighting}, it holds that for
\begin{align}\label{eq:leaf-mass}
    \sum_{y_{1:H} \in \cA^H} \mu(y_{1:H}) \geq \frac{1}{2\unifboundconstrew \unifboundconstrewleaves  H}
\end{align}    
and 
\begin{align} \label{eq:root-mass}
  \mu(\bot) \geq \frac{1}{2\unifboundconstrew^2 H}.
\end{align}
\end{lemma}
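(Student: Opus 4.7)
The plan is to compute both quantities by substituting the definition $\mu(v) = Z_f^{-1} \sum_{w \sim v} f(v,w)$ and then bounding the numerator and the normalizer $Z_f$ using \cref{assumption:uniform_error_bound_reweighting} together with the basic identity $\sum_{y_{1:h}} \piref(y_{1:h}) \Vstar(y_{1:h}) = \Vstar(\bot)$ for every $h$ (which is just the tower rule applied to the definition $\Vstar(y_{1:h}) = \En^{\piref}[\tau(y_{1:H}) \mid y_{1:h}]$). This identity is the real workhorse: it lets us control sums of $\Vhat \cdot \piref$ across any layer of the tree by $\Vstar(\bot)$, which will then cancel between numerator and denominator.

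First, for the leaf mass \cref{eq:leaf-mass}, I will use that each leaf $y_{1:H}$ has a unique neighbor (its parent), so $\mu(y_{1:H}) = f(y_{1:H-1}, y_{1:H})/Z_f = \Vhat(y_{1:H})\piref(y_{1:H})/Z_f$. Summing and applying the leaf bound $\Vhat(y_{1:H}) \geq \Vstar(y_{1:H})/\unifboundconstrewleaves$ from \cref{eq:leaf-error-bound} together with the identity above gives
\[
\sum_{y_{1:H} \in \cA^H} \mu(y_{1:H}) \geq \frac{\Vstar(\bot)}{\unifboundconstrewleaves \, Z_f}.
\]
Next I will upper bound $Z_f$. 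Counting each undirected edge $y_{1:h-1}\sim y_{1:h}$ exactly twice,
\[
Z_f = 2 \sum_{h=1}^{H} \sum_{y_{1:h}} \Vhat(y_{1:h}) \piref(y_{1:h}) \leq 2\unifboundconstrew \sum_{h=1}^{H} \sum_{y_{1:h}} \Vstar(y_{1:h}) \piref(y_{1:h}) = 2\unifboundconstrew H \, \Vstar(\bot),
\]
using the uniform bound $\Vhat \leq \unifboundconstrew \Vstar$ and the tower identity. Combining yields $\sum_{y_{1:H}} \mu(y_{1:H}) \geq 1/(2\unifboundconstrew \unifboundconstrewleaves H)$, as claimed.

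For the root mass \cref{eq:root-mass}, $\bot$'s neighbors are exactly the depth-$1$ nodes $y_1 \in \cA$, so $\mu(\bot) = Z_f^{-1} \sum_{y_1} \Vhat(y_1)\piref(y_1)$. Applying $\Vhat \geq \Vstar/\unifboundconstrew$ and the tower identity once again gives a numerator of at least $\Vstar(\bot)/\unifboundconstrew$, and combining with the $Z_f$ bound above produces $\mu(\bot) \geq 1/(2\unifboundconstrew^2 H)$.

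There is no serious obstacle here — the argument is essentially bookkeeping once one recognizes that every sum we meet collapses to $\Vstar(\bot)$ via the tower property. The only minor subtlety is keeping track of where $\unifboundconstrewleaves$ (rather than $\unifboundconstrew$) can be used: only the leaf sum in the numerator benefits from the tighter leaf-level bound, while the bound on $Z_f$ must pay $\unifboundconstrew$ at every internal layer (which is why the leaf guarantee carries a product $\unifboundconstrew\unifboundconstrewleaves$ and the root guarantee carries $\unifboundconstrew^2$).
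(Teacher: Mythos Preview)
Your proof is correct and follows essentially the same approach as the paper: bound the numerator for the leaves using the leaf-level error bound, bound the numerator for the root using the general error bound, and bound $Z_f$ by summing over all edges and applying the tower identity $\sum_{y_{1:h}} \piref(y_{1:h})\Vstar(y_{1:h}) = \Vstar(\bot)$. The only cosmetic difference is that the paper writes $Z_f \le 2\sum_{h=1}^H \sum_{y_{1:h}} \piref(y_{1:h})\Vhat(y_{1:h})$ with an inequality whereas you (correctly) note it is an equality.
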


\begin{proof}
We observe that
\begin{align*}
\sum_{v=y_{1:H}\in\cA^H}\sum_{w\in\cT:w\sim v} f(v,w)
&= \sum_{y_{1:H}\in\cA^H} \piref(y_{1:H})\Vhat(y_{1:H}) \\ 
&\geq \frac{1}{\unifboundconstrewleaves} \sum_{y_{1:H}\in\cA^H} \piref(y_{1:H})\Vstar(y_{1:H}) \\ 
&= \frac{1}{\unifboundconstrewleaves} \Vstar(\emptyset),
\end{align*}
where the final equality uses \cref{eq:vstar}, and
\begin{align*}
Z_{f} 
&= \sum_{v\in\cT}\sum_{w\in\cT:w\sim v} f(v,w) \\ 
&\le 2\sum_{h=1}^H\sum_{y_{1:h}}\piref(y_{1:h})\Vhat(y_{1:h}) \\ 
&\le 2  \unifboundconstrew \sum_{h=1}^H \sum_{y_{1:h}}\piref(y_{1:h})\Vstar(y_{1:h}) \\ 
&= 2\unifboundconstrew H \Vstar(\emptyset)
\end{align*}
again using \cref{eq:vstar}. It follows from \cref{def:mu} that
\[
\sum_{y_{1:H}\in\cA^H}\mu(y_{1:H}) = \frac{1}{Z_f} \sum_{v=y_{1:H}\in\cA^H} \sum_{w\in\cT: w\sim v} f(v,w) \geq \frac{1}{2\unifboundconstrew \unifboundconstrewleaves  H}
\]
which proves \cref{eq:leaf-mass}. Similarly, \cref{eq:root-mass} follows from the fact that
\[\sum_{a \in \cA} f(\emptyset, a) = \sum_{a\in\cA} \piref(a)\Vhat(a) \geq \frac{1}{\unifboundconstrew} \sum_{a\in\cA} \piref(a)\Vstar(a) = \frac{1}{\unifboundconstrew} \Vstar(\emptyset)\] 
together with the previously-derived upper bound on $Z_f$.
\end{proof}

In order to bound the mixing time of the Markov chain $\markch$, we will bound its conductance (\cref{def:conductance}).

\begin{lemma} \label{lem:conductance-lower-bound}
The conductance $\Phi$ of the Markov chain $\markch$ satisfies 
\begin{align}
    \Phi \geq \frac{1}{4\unifboundconstrew^2H}. 
\end{align}
\end{lemma}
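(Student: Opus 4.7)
The plan is to adapt the classical Sinclair--Jerrum conductance argument to the weighted tree $\cT$. The starting point is that, by reversibility (\cref{lemma:reversible}) and the definition of $\mu$, for any $S \subseteq \cT$ with $0 < \mu(S) \leq 1/2$ one has $\Phi_S = \frac{1}{2 Z_f \mu(S)} \sum_{e \in \partial S} f(e)$, where $\partial S$ denotes the set of edges crossing the cut. It therefore suffices to lower bound the total boundary weight by $\mu(S)\cdot Z_f / ((2H-1)\unifboundconstrew^2)$, which will give $\Phi_S \geq 1/(2(2H-1)\unifboundconstrew^2) \geq 1/(4H\unifboundconstrew^2)$. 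Since $\cT$ is a tree, I will decompose $S$ into its connected components $S_1,\ldots, S_k$ and charge boundary weight to each component separately.

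The key technical estimate is valid for any non-root node $v = y_{1:h}$ (with $h \geq 1$): the flow across the edge from $v$ to its parent can be compared to the total $\mu$-mass of the subtree $T_v$ rooted at $v$. Using \cref{assumption:uniform_error_bound_reweighting} together with \cref{lemma:vstar-density-ratio}, one obtains the lower bound $f(v,\mathrm{parent}) = \piref(y_{1:h})\Vhat(y_{1:h}) \geq \Vstar(\emptyset)\pistar(y_{1:h})/\unifboundconstrew$. On the other hand, by expanding $\mu(T_v)$ as a double sum over edges incident to $T_v$ (each internal edge counted twice, the boundary edge to the parent counted once) and telescoping the internal contributions using the Bellman consistency $\Vstar(y_{1:h}) = \sum_{y_{h+1}} \piref(y_{h+1}\mid y_{1:h})\Vstar(y_{1:h+1})$ (which is the tower rule applied to \cref{eq:vstar}), the marginalization $\sum_{y_{h+1:h'}} \pistar(y_{1:h'}) = \pistar(y_{1:h})$ collapses the sum to give $\mu(T_v) \leq (2(H-h)+1)\unifboundconstrew \Vstar(\emptyset)\pistar(y_{1:h})/Z_f \leq (2H-1)\unifboundconstrew\Vstar(\emptyset)\pistar(y_{1:h})/Z_f$. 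Combining the two bounds yields the central estimate $f(v,\mathrm{parent}) \geq \mu(T_v) Z_f / ((2H-1)\unifboundconstrew^2)$.

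The charging step splits based on whether $\emptyset \in S$. For each $S$-component $S_i$ whose root $v_i \neq \emptyset$, the single upward boundary edge $e_i = (v_i,\mathrm{parent}(v_i))$ satisfies $f(e_i) \geq \mu(T_{v_i}) Z_f /((2H-1)\unifboundconstrew^2) \geq \mu(S_i) Z_f /((2H-1)\unifboundconstrew^2)$, since $S_i \subseteq T_{v_i}$. For the component $S_1$ containing $\emptyset$ (if one exists), I use the downward boundary edges $e_j = (u_j, w_j)$: the subtrees $\{T_{w_j}\}_j$ form a disjoint partition of $\cT \setminus S_1$ (every node outside $S_1$ has a unique path to $\emptyset$ whose last edge out of $S_1$ is some $e_j$), and since $\mu(S_1) \leq \mu(S) \leq 1/2 \leq \mu(\cT \setminus S_1) = \sum_j \mu(T_{w_j})$, summing the central estimate over $j$ gives $\sum_j f(e_j) \geq \mu(S_1) Z_f /((2H-1)\unifboundconstrew^2)$. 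Adding the contributions over all components yields $\sum_{e \in \partial S} f(e) \geq \mu(S) Z_f /((2H-1)\unifboundconstrew^2)$, and hence the claim. The main delicacy is precisely the root-containing case: one cannot charge to a nonexistent edge above $\emptyset$, but the assumption $\mu(S) \leq 1/2$ compensates exactly by guaranteeing enough escaping mass through downward boundary edges. Beyond that, the argument is bookkeeping, with care needed to verify that the factor $2(H-h)+1$ is at most $2H-1$ for $h \geq 1$ so that the constant $4H$ in the claimed bound comes out correctly.
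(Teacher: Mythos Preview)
Your proof is correct and follows essentially the same Sinclair--Jerrum template as the paper: decompose into connected components, and for each component not containing the root, lower bound the cut flow by the single parent-edge weight $f(v,\mathrm{parent}(v))$, using Assumption~\ref{assumption:uniform_error_bound_reweighting} on both sides and the tower property of $\Vstar$ to compare against the subtree mass. The one minor variation is in how you handle the root-containing case: the paper disposes of it in one line by passing to the complement (since $\mu(S)\le 1/2$ implies $\Phi_S\ge \Phi_{\cT\setminus S}$, and $\emptyset\notin \cT\setminus S$), whereas you handle it directly by charging the downward boundary edges against the partition $\{T_{w_j}\}$ of $\cT\setminus S_1$. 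Both routes give the same constant; the complement trick is slightly shorter, but your direct argument is equally valid.
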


\begin{proof}
Let $S \subseteq \cT$ be any subgraph with $0 < \mu(S) \leq 1/2$. Suppose that $\emptyset \in S$. We know that $\mu(S) \leq \mu(\cT\setminus S)$, and thus $\Phi_S \geq \Phi_{\cT\setminus S}$. We have $\emptyset \not \in \cT\setminus S$. Hence, it suffices to prove that $\Phi_S \geq 1/(8\unifboundconstrew^2 H)$ for all $S \subseteq \cT$ with $0 < \mu(S)$ and $\emptyset \not \in S$.

Fix any $S\subseteq\cT$ with $0 < \mu(S)$ and $\emptyset \not \in S$. Let $S_1,\dots,S_k$ be the (maximal) connected components of $S$ in $\cT$. Then
\[\Phi_S = \frac{\sum_{i=1}^k \sum_{u \in S_i, v \not \in S} \mu(u) \markch(v\mid{}u)}{\sum_{i=1}^k \sum_{u \in S_i} \mu(u)} = \frac{\sum_{i=1}^k \sum_{u \in S_i, v \not \in S_i} \mu(u) \markch(v\mid{}u)}{\sum_{i=1}^k \sum_{u \in S_i} \mu(u)} \geq \min_{i\in[k]} \Phi_{S_i}\]
where the second equality uses the fact that if $u \in S_i$ and $v \in S_j$ for $i \neq j$ then $u \not \sim v$ and so $\markch(v\mid{}u) = 0$. The above inequality implies that it suffices to prove $\Phi_S \geq 1/(8\unifboundconstrew^2 H)$ for all connected subgraphs $S \subseteq \cT$ with $0 < \mu(S)$ and $\emptyset \not \in S$.

Fix any connected subgraph $S \subseteq \cT$ with $0 < \mu(S)$ and $\emptyset \not \in S$. Let $v := y_{1:h}$ be the shallowest node in $S$. Then $v \neq \emptyset$, so $v$ has a parent $u := y_{1:h-1}$. Moreover, since $S$ is connected, every $v' \in S$ is a descendant of $v$. Hence,
\[
\Phi_S \;\ge\; \frac{\mu(v)\markch(u\mid v)}{\sum_{v'\in\cT:v'\preceq v}\mu(v')}.
\]
Here, we use the notation $v'\preceq v$ to denote that $v$ is an ancestor of $v'$.
Using the definition $\mu(w):=\frac{1}{Z_{\tranprob}}\sum_{w'\in\cT:w'\sim w} \tranprob(w,w')$ (\cref{def:mu}) and the definition of $\markch$ (\cref{def:markov-chain-def}), we get
\begin{align*}
\frac{\mu(v)\markch(u\mid v)}{\sum_{v'\in\cT:v'\preceq v}\mu(v')}
&= \frac{\dfrac{ \tranprob(v,u)}{2Z_{\tranprob}}}{\dfrac{1}{Z_{\tranprob} }\sum_{v'\in\cT:v'\preceq v}\sum_{w\in\cT:w\sim v'} \tranprob(v',w)} \\ 
&= \frac{\tranprob(v,u)}{2\sum_{v'\in\cT:v'\preceq v}\sum_{w\in\cT:w\sim v'} f(v',w)}.
\end{align*}
By \cref{assumption:uniform_error_bound_reweighting}, we have
\[
\tranprob(v,u)=\piref(v)\Vhat(v)\;\ge\;\frac{1}{\unifboundconstrew}\,\piref(v)\Vstar(v).
\]
To bound the denominator, note that $f(v', w) = \piref(v' \lor w) \Vhat(v' \lor w)$ where $v' \lor w \in \cT$ is the deeper node among $\{v',w\}$. Moreover, for every node $u$ in the subtree rooted at $v$, there are only two ordered pairs $(v',w) \in \cT^2$ with $v' \sim w$ and $u = v' \lor w$: namely, $(u,u')$ and $(u',u)$ where $u'$ is the parent of $u$. It follows that
\[
\sum_{v'\in\cT: v'\preceq v}\sum_{w\in\cT: w\sim v'} \tranprob(v',w) \leq 2\sum_{v'\in\cT: v'\preceq v} \piref(v') \Vhat(v')
\le 2\unifboundconstrew \sum_{v'\in\cT: v'\preceq v}\piref(v')\Vstar(v').
\]
Combining the preceding displays yields
\[
\Phi_S \;\ge\; \frac{\piref(v)\Vstar(v)}{4\unifboundconstrew^2\sum_{v'\in\cT: v'\preceq v}\piref(v')\Vstar(v') }.
\]
Expanding the denominator using 
\cref{eq:vstar}, and noting that the depth of the subtree rooted at $v$ is at most $H$, we get
\[
\Phi_S \;\ge\; \frac{1}{4\unifboundconstrew^2 H}.
\]
It follows that $\Phi$ satisfies the same bound.
\end{proof}

Let $\markch^t(\cdot\mid{}z)$ denote the distribution of the Markov chain $\markch$ after $t$ steps starting from state $z$. We recall the standard result that a lower bound on conductance implies that the $\chi^2$ divergence $\Dchis{\markch^t(\cdot\mid{}z)}{\mu}$ decays exponentially in $t$ (\cref{lem:conductance-mixing-chi2}).
This lets us conclude the proof of \cref{thm:js-unif-full}.

\begin{proof}[\pfref{thm:js-unif-full}]
Let $ \nu$ denote the distribution of the output of \cref{alg:valueflow-values-weight-version}. Then $\nu = \markch^T(\cdot\mid{}\emptyset)$. Note that the Markov chain $\markch$ is aperiodic since it has self-loops. Since $\Vhat(y_{1:h})>0$ if and only if $\Vstar(y_{1:h})>0$, the chain has positive transition probability between any two nodes $\{y_{1:h-1},y_{1:h}\}$ with $\pistar(y_{1:h})>0$. Moreover, $\mu(y_{1:h})>0$ implies that $\pistar(y_{1:h})>0$. Thus, $\markch$ is irreducible on the support of $\mu$, and hence ergodic on the support of $\mu$ (\cref{def:ergodic}). We can now invoke \cref{lem:conductance-mixing-chi2} to get
\begin{align}
    \Dchis{\nu}{\mu} \lesssim e^{- \Phi^2 T/2} \frac{1}{\mu({\bot})}.
\end{align}
Since  $1 / \mu(\bot)  \leq 4\unifboundconstrew^2 H$ (\cref{lemma:leaf-root-mass}) and $\Phi \geq 1/(4\unifboundconstrew^2 H)$ (\cref{lem:conductance-lower-bound}), we have for any $\veps'>0$, if $T \geq 32\unifboundconstrew^4 H^2 \log(\unifboundconstrew^2 H/\epsilon')$, then
\begin{align}
    \Dchis{\nu}{\mu} \lesssim \epsilon'. \label{eq:nu-mu-bound}
\end{align}
Set $\veps' := \veps/(8\unifboundconstrew\unifboundconstrewleaves H)^2$; the condition on $T$ is satisfied so long as $C_{\ref{thm:js-unif-full}}$ is a sufficiently large constant. Let $L$ be the set of leaves of $\cT$. Since $\mu(L)  \geq 1/(2\unifboundconstrew \unifboundconstrewleaves H)$ (\cref{lemma:leaf-root-mass}), we have that 
\[ \nu(L) \geq \mu(L) - \Dtv{\nu}{\mu} \geq \mu(L) - \sqrt{\veps'} \geq \frac{1}{4\unifboundconstrew\unifboundconstrewleaves H}\]
using \cref{prop:divergence_inequalities}, which proves the first claim of the theorem. 

Next, by \cref{lemma:divergence-conditioning}, \cref{eq:nu-mu-bound}, and \cref{lemma:leaf-root-mass}, we have
\begin{align}
  \Dchis{\nu|_{\eventleaf}}{\mu|_{\eventleaf}} \leq \frac{\Dchis{\nu}{\mu}}{\mu(\eventleaf) - 2\sqrt{\mu(\eventleaf)\Dchis{\nu}{\mu}}} \leq \frac{\veps'}{\frac{1}{2\unifboundconstrew\unifboundconstrewleaves H} - 2\sqrt{\veps'}} \leq \veps.
\end{align}
Now note that for any leaf $v = y_{1:H} \in \cT$,
\[\mu(v) \propto f(y_{1:H-1},y_{1:H}) = \Vhat(y_{1:H})\piref(y_{1:H}) \propto \pitil(y_{1:H})\]
by \cref{def:mu,def:pitil}. Therefore $\mu|_{\eventleaf} = \pitil$, and so
\begin{align}
  \Dchis{\nu|_{\eventleaf}}{\pitil} \leq \veps 
\end{align}
which proves the second claim of the theorem.
\end{proof}

\subsection{Proof of \cref{thm:js-avg-guarantee}: Average Error Bounds}\label{app:js-avg-proof}

The proof of \cref{thm:js-avg-guarantee} analyzes the same Markov chain $\markch$ as the proof of \cref{thm:main_uniform} (modulo returning $y\ind{t}$ for a random $t \in [T]$ rather than returning $y\ind{T}$), but the analysis requires substantially different techniques. In particular, under \cref{assump:avg-mgf}, it is no longer true that the Markov chain mixes rapidly to the ``globally'' stationary distribution, since there may be branches of the tree where $\Vhat$ is extremely inaccurate. We instead proceed using the machinery of \emph{local stationarity / meta-stability} \citep{balasubramanian2022towards, liu2024locally}, which is a weakening of fast mixing, analogous to how, in \emph{optimization theory}, local critical points are a weakened solution concept compared to global optima. 

For a broad class of Markov chains, the sampling analogue of an approximate zero-gradient condition is a low \emph{Dirichlet form} condition, which implies that the sampling law ``locally'' resembles the stationary distribution \citep{liu2024locally}. To prove \cref{thm:js-avg-guarantee}, we apply this idea to $\markch$ and use local stationarity to show that the marginal distribution of the Markov chain at a random timestep approximately covers the true stationary distribution (which, in turn, approximately covers $\pistar$).

As before, we will fix the prompt $x \in \cX$ and drop it from the notation. Throughout, we will assume that \cref{assump:avg-mgf} holds with parameter $\vepsv$, and for notational convenience we will write $\avgboundconstrew := 1+\vepsv$. Thus, for each $h \in [H]$, we have the following bounds on $\Vhat$ with respect to $\Vstar$:
\begin{align}
    \max \left(\En^{\pistar} \left[ \frac{\Vstar( y_{1:h})}{\Vhat( y_{1:h})}\right] , \En^{\pistar}\left[\frac{\Vhat( y_{1:h})}{\Vstar( y_{1:h})}\right]\right) \leq \avgboundconstrew. \label{eq:avg-appx}
\end{align}

In this setting, it is no longer true that every cut has large conductance (with respect to the Markov chain $\markch$). We define a set of ``bad'' nodes at each layer of the autoregressive tree $\cT$, where ``bad'' roughly means that the subtree rooted at that node has small conductance, and formally is defined as follows:

\begin{definition}\label{def:bad-set}
Let $\eta > 0$ and $h \in [H]$. Define
\[\cB_{h,\eta} := \left\{y_{1:h} \in \cA^h: \piref(y_{1:h}) \Vhat(y_{1:h}) < \frac{\eta^2}{4 \avgboundconstrew^2} \cdot \sum_{y_{h+1:H} \in \cA^{H-h}} \piref(y_{1:H}) \Vhat(y_{1:H})\right\}.\]
\end{definition}

\begin{lemma}\label{lemma:bad-set-prob}
For any $\eta > 0$ and $h \in [H]$ it holds that
\[\pistar(\cB_{h,\eta})  \leq \eta.\]
\end{lemma}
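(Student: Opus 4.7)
The plan is to rewrite the defining inequality of $\cB_{h,\eta}$ as a ratio condition between $\Vhat(y_{1:h})$ and its ``one-step-ahead'' $\piref$-average $\tilde Z_h(y_{1:h}) \ldef \En^{\piref}[\Vhat(y_{1:H})\mid y_{1:h}]$. Noting that $\sum_{y_{h+1:H}\in\cA^{H-h}}\piref(y_{1:H})\Vhat(y_{1:H}) = \piref(y_{1:h})\,\tilde Z_h(y_{1:h})$, the event $y_{1:h}\in\cB_{h,\eta}$ is equivalent to
\[
\tilde Z_h(y_{1:h}) \;>\; \frac{4\avgboundconstrew^2}{\eta^2}\,\Vhat(y_{1:h}).
\]
The strategy is then to split this event according to whether $\Vhat(y_{1:h})$ is comparable to $\Vstar(y_{1:h})$, and apply Markov's inequality on two different non-negative random variables using the two inequalities of \cref{assump:avg-mgf}.

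First I would use the second inequality of \cref{eq:valerr-avg} directly: by Markov's inequality under $\pistar$,
\[
\Prr^{\pistar}\!\left[\frac{\Vstar(y_{1:h})}{\Vhat(y_{1:h})} > \frac{2\avgboundconstrew}{\eta}\right] \;\leq\; \frac{\eta}{2}.
\]
Call the complement of this event $\cG_h$, so on $\cG_h$ we have $\Vhat(y_{1:h}) \geq \frac{\eta}{2\avgboundconstrew}\Vstar(y_{1:h})$. On $\cB_{h,\eta}\cap\cG_h$, combining the two inequalities yields
\[
\frac{\tilde Z_h(y_{1:h})}{\Vstar(y_{1:h})} \;>\; \frac{4\avgboundconstrew^2}{\eta^2}\cdot\frac{\eta}{2\avgboundconstrew} \;=\; \frac{2\avgboundconstrew}{\eta}.
\]

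The heart of the proof is to then apply Markov to $\tilde Z_h/\Vstar$ under $\pistar$. For this I need to compute $\En^{\pistar}[\tilde Z_h(y_{1:h})/\Vstar(y_{1:h})]$; using $\pistar(y_{1:h})\propto\piref(y_{1:h})\Vstar(y_{1:h})$ with normalizer $\Vstar(\emptyset)$, the $\Vstar(y_{1:h})$ cancels and the marginalization telescopes,
\[
\En^{\pistar}\!\left[\frac{\tilde Z_h(y_{1:h})}{\Vstar(y_{1:h})}\right]
= \frac{1}{\Vstar(\emptyset)}\sum_{y_{1:H}}\piref(y_{1:H})\Vhat(y_{1:H})
= \En^{\pistar}\!\left[\frac{\Vhat(y_{1:H})}{\Vstar(y_{1:H})}\right] \;\leq\; \avgboundconstrew,
\]
where the last step uses the first inequality of \cref{eq:valerr-avg} at $h=H$. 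Markov then gives $\Prr^{\pistar}[\tilde Z_h(y_{1:h})/\Vstar(y_{1:h}) > 2\avgboundconstrew/\eta]\leq \eta/2$, and a union bound over the two events completes the argument: $\pistar(\cB_{h,\eta})\leq \pistar(\cG_h^{c}) + \pistar(\cB_{h,\eta}\cap\cG_h)\leq \eta/2+\eta/2=\eta$.

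The main (minor) subtlety is the telescoping identity connecting $\En^{\pistar}[\tilde Z_h/\Vstar]$ to $\En^{\pistar}[\Vhat/\Vstar]$ at the leaf layer; this is what makes the single average-case assumption at all layers (including $h=H$ via $\Vhat$ interpreted as $\tau$ at leaves, or via the leaf form of \cref{eq:valerr-avg}) sufficient to control the \emph{intermediate}-layer bad set $\cB_{h,\eta}$. Everything else is straightforward Markov.
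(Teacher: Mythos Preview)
Your proof is correct and essentially identical to the paper's. The paper writes the ratio defining $\cB_{h,\eta}$ as the product $\frac{\Vstar(y_{1:h})}{\Vhat(y_{1:h})}\cdot \En^{\pistar}\!\left[\frac{\Vhat(y_{1:H})}{\Vstar(y_{1:H})}\,\middle|\,y_{1:h}\right]$ and applies Markov to each factor; your quantity $\tilde Z_h(y_{1:h})/\Vstar(y_{1:h})$ is exactly this conditional expectation, and your ``telescoping'' computation of its $\pistar$-mean is just the tower property the paper invokes.
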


\begin{proof}
We know that $\pistar(y_{1:H}) = \frac{1}{\Vstar(\emptyset)} \piref(y_{1:H}) \Vstar(y_{1:H})$, and so \[\pistar(y_{1:h}) = \frac{1}{\Vstar(\emptyset)} \sum_{y_{h+1:H}\in\cA^{H-h}} \piref(y_{1:H})\Vstar(y_{1:H}) = \frac{1}{\Vstar(\emptyset)} \piref(y_{1:h})\Vstar(y_{1:h})\] by definition of $\Vstar$ (\cref{eq:vstar}). Thus, for any fixed $y_{1:h} \in \cA^h$,
\begin{align*}
\frac{\sum_{y_{h+1:H} \in \cA^{H-h}} \piref(y_{1:H}) \Vhat(y_{1:H})}{\piref(y_{1:h}) \Vhat(y_{1:h})}
&= \frac{\sum_{y_{h+1:H} \in \cA^{H-h}} \pistar(y_{1:H}) \frac{\Vhat(y_{1:H})}{\Vstar(y_{1:H})}}{\pistar(y_{1:h}) \frac{\Vhat(y_{1:h})}{\Vstar(y_{1:h})}} \\
&= \frac{\Vstar(y_{1:h})}{\Vhat(y_{1:h})}  \En^{\pistar}\left[\frac{\Vhat(y_{1:H})}{\Vstar(y_{1:H})} \mid{} y_{1:h}\right].
\end{align*}
We conclude that
\begin{align*}
\Prr_{y_{1:h} \sim \pistar}[y_{1:h} \in \MB_{h,\eta}]
&= \Prr_{y_{1:h} \sim \pistar}\left[\frac{\Vstar(y_{1:h})}{\Vhat(y_{1:h})}  \En^{\pistar}\left[\frac{\Vhat(y_{1:H})}{\Vstar(y_{1:H})} \mid{} y_{1:h}\right] > \frac{4\avgboundconstrew^2}{\eta^2}\right] \\
&\leq \Prr_{y_{1:h} \sim \pistar}\left[\frac{\Vstar(y_{1:h})}{\Vhat(y_{1:h})} > \frac{2 \avgboundconstrew }{\eta} \right] + \Prr_{y_{1:h} \sim \pistar}\left[\En^{\pistar}\left[\frac{\Vhat(y_{1:H})}{\Vstar(y_{1:H})} \mid{} y_{1:h}\right] > \frac{2\avgboundconstrew }{\eta} \right] \\ 
&\leq \frac{\eta}{2\avgboundconstrew} \En^{\pistar}\left[\frac{\Vstar(y_{1:h})}{\Vhat(y_{1:h})}\right] + \frac{\eta}{2\avgboundconstrew} \En^{\pistar}\left[\frac{\Vhat(y_{1:H})}{\Vstar(y_{1:H})}\right] \\ 
&\leq \eta
\end{align*}
where the final two inequalities are by Markov's inequality and \cref{eq:avg-appx} respectively.
\end{proof}

Recall the definition of Markov chain $\markch$ from \cref{def:markov-chain-def}.
It will often be convenient to interpret $\markch$ as a $\cT \times \cT$ matrix, i.e. with $\markch_{xy} := \markch(y \mid{} x)$. 
We will denote by $\nu_0$ the initial distribution which puts all the mass on the root node $\root$, i.e. $\nu_0(\root) = 1$.
Denote by $\nu_t := \nu_0 \markch^t$ the distribution of the Markov chain after $t$ steps starting from $\nu_0$.  
Recall, from \cref{lemma:reversible}, that the Markov chain $\markch$ is reversible with stationary distribution $\mu$ defined by
\begin{align}
    \mu(v) := \frac{1}{Z_{\tranprob}} \sum_{w\in\cT:w \sim v} \tranprob(v,w)
\end{align} 
where $Z_{\tranprob} := \sum_{v,w \in \cT:w\sim v} \tranprob(v,w)$.
Furthermore, we have the following lower bounds on (1) the mass that $\mu$ places on the root node $\root$, and (2) the mass that $\mu$ places on the set of leaves $\cA^H \subseteq \cT$. \cref{lemma:root-mass_1} is the analogue of \cref{lemma:leaf-root-mass} for the average-case setting.

\begin{lemma}\label{lemma:root-mass_1}
The stationary distribution $\mu$ satisfies $\mu(\root) \geq 1/(2\avgboundconstrew^2 H)$ and $\mu(\cA^H) \geq 1/(2\avgboundconstrew^2 H)$.
Furthermore, it holds that $Z_{f} \leq 2 \Vstar(\emptyset) H \avgboundconstrew$. 
\end{lemma}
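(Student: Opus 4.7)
The plan is to reduce all three claims to a single elementary identity: for any layer $h \in \{0,1,\ldots,H\}$, let $S_h \ldef \sum_{y_{1:h} \in \cA^h} \piref(y_{1:h})\Vhat(y_{1:h})$. Using the marginal identity $\pistar(y_{1:h}) = \piref(y_{1:h})\Vstar(y_{1:h})/\Vstar(\emptyset)$ (obtained by summing over $y_{h+1:H}$ in \cref{eq:vstar}, exactly as in the proof of \cref{lemma:bad-set-prob}), one rewrites
\[
S_h \;=\; \Vstar(\emptyset)\cdot\En^{\pistar}\!\left[\frac{\Vhat(y_{1:h})}{\Vstar(y_{1:h})}\right].
\]
Thus the two halves of \cref{assump:avg-mgf} translate into direct upper and lower bounds on $S_h$, which in turn control both $Z_{\tranprob}$ and the masses $\mu(\bot)$ and $\mu(\cA^H)$.

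For the upper bound on $Z_{\tranprob}$, I would observe (as in the uniform-error proof of \cref{lemma:leaf-root-mass}) that $\tranprob(v,w) = \piref(v\lor w)\Vhat(v\lor w)$ where $v\lor w$ is the deeper of the two endpoints, so each edge of $\cT$ is counted twice in the definition of $Z_{\tranprob}$ and we obtain exactly $Z_{\tranprob} = 2\sum_{h=1}^H S_h$. Applying the upper-bound direction of \cref{assump:avg-mgf} at each layer yields $S_h \le \avgboundconstrew\,\Vstar(\emptyset)$, hence $Z_{\tranprob} \le 2\avgboundconstrew H\,\Vstar(\emptyset)$, which is the third claim.

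For the two lower bounds, note from \cref{def:markov-chain-def,def:mu} that $\mu(\bot) = S_1/Z_{\tranprob}$ and $\mu(\cA^H) = S_H/Z_{\tranprob}$, since the root has only children as neighbors and each leaf has only its parent. It remains to show $S_h \geq \Vstar(\emptyset)/\avgboundconstrew$ for $h \in \{1,H\}$. This is where Cauchy--Schwarz (or equivalently Jensen's inequality applied to the convex function $x \mapsto 1/x$) comes in: for any positive random variable $X$, $\En[X]\cdot\En[1/X] \ge 1$, so
\[
\En^{\pistar}\!\left[\frac{\Vhat(y_{1:h})}{\Vstar(y_{1:h})}\right] \;\ge\; \frac{1}{\En^{\pistar}\!\left[\Vstar(y_{1:h})/\Vhat(y_{1:h})\right]} \;\ge\; \frac{1}{\avgboundconstrew},
\]
invoking the \emph{second} half of \cref{assump:avg-mgf}. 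Combining with the bound on $Z_{\tranprob}$ gives $\mu(\bot), \mu(\cA^H) \ge \frac{1}{2\avgboundconstrew^2 H}$.

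There is no genuine obstacle here; the lemma is the analogue of \cref{lemma:leaf-root-mass} with the worst-case ratios replaced by $\pistar$-averages. The only conceptual step beyond the uniform case is the Jensen-type trick needed to convert the $\pistar$-expectation of $\Vstar/\Vhat$ (which is what \cref{assump:avg-mgf} naturally provides in one direction) into the $\pistar$-expectation of $\Vhat/\Vstar$ required to lower-bound $S_h$; this is essentially the same move that motivates the two-sided form of \cref{assump:avg-mgf} in the first place.
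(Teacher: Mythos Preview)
Your proposal is correct and essentially identical to the paper's proof: the paper likewise rewrites each layer sum as $\Vstar(\emptyset)\cdot\En^{\pistar}[\Vhat(y_{1:h})/\Vstar(y_{1:h})]$, uses the first half of \cref{assump:avg-mgf} to bound $Z_{\tranprob}$, and applies Jensen's inequality to the second half to lower-bound $S_1$ and $S_H$. The only cosmetic difference is that you write $Z_{\tranprob} = 2\sum_{h=1}^H S_h$ as an equality whereas the paper records only the inequality.
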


\begin{proof}
First, we observe that 
\begin{align*}
Z_{f} = \sum_{v \in \cT} \sum_{w\in\cT:w \sim v} f(v,w)
&\leq 2 \sum_{h=1}^H \sum_{y_{1:h} \in \cA^h} \piref(y_{1:h}) \Vhat(y_{1:h}) \\ 
&= 2\Vstar(\emptyset) \sum_{h=1}^H \En^{\pistar}\left[\frac{\Vhat(y_{1:h})}{\Vstar(y_{1:h})}\right] \\ 
&\leq 2\Vstar(\emptyset) H \avgboundconstrew 
\end{align*}
by \cref{eq:avg-appx}. Next,
\begin{align*}
\sum_{y_{1:H} \in \cA^H} f(y_{1:H}, y_{1:H-1})
&= \sum_{y_{1:H} \in \cA^H} \piref(y_{1:H}) \Vhat(y_{1:H}) \\ 
&= \Vstar(\emptyset) \En^{\pistar}\left[\frac{\Vhat(y_{1:H})}{\Vstar(y_{1:H})}\right] \\
&\geq \frac{\Vstar(\emptyset)}{\En^{\pistar}\left[\frac{\Vstar(y_{1:H})}{\Vhat(y_{1:H})}\right]} \\ 
&\geq \frac{\Vstar(\emptyset)}{\avgboundconstrew}
\end{align*}
by Jensen's inequality ($r \mapsto 1/r$ is convex on $(0,\infty)$) and \cref{eq:avg-appx}. It follows from this bound and the preceding bound on $Z_f$ that
\[\mu(\cA^H) = \sum_{y_{1:H}\in\cA^H} \frac{f(y_{1:H},y_{1:H-1})}{Z_f} \geq \frac{1}{2\avgboundconstrew^2 H}.\]

Similarly,
\[\sum_{y_1 \in \cA} f(\root, y_1) = \sum_{y_1 \in \cA} \piref(y_1) \Vhat(y_1) = \Vstar(\emptyset) \En^{\pistar}\left[\frac{\Vhat(y_1)}{\Vstar(y_1)}\right] \geq \frac{\Vstar(\emptyset)}{\avgboundconstrew},\]
which, together with the bound on $Z_f$, implies that $\mu(\root) \geq 1/(2\avgboundconstrew^2 H)$.
\end{proof}

For the analysis, we will look at the action of the Markov chain on the space of functions $g: \cT \to \RR$.  
Specifically, we will work with the space $L^2(\mu)$, which has inner product defined as follows.

\begin{definition}[Inner product with respect to the stationarity distribution]
For any functions $g, g': \cT \to \RR$, we define the $\mu$-inner product $\langle g,g'\rangle_\mu$ by $\langle g,g'\rangle_\mu := \En_\mu[gg'] = g^\t D g'$ (where $D = \diag(\mu)$).
\end{definition}

Next, we recall the notion of the Dirichlet form corresponding to a Markov chain, which can be seen as a generalization of the Laplacian.
This is a measure of how a function varies locally with respect to the transitions of the Markov chain.

\begin{definition}[Dirichlet form]
For any functions $g, g': \cT \to \RR$, we define the \emph{Dirichlet form} of $g,g'$ (with respect to kernel $\markch$ and stationary distribution $\mu$) to be
\[\cE(g,g') := \sum_{u,v\in\cT} \mu(u) \markch(v\mid{}u) (g(u) - g(v)) \cdot (g'(u) - g'(v)).\]
\end{definition}

The following lemma gives a standard expression for the Dirichlet form. 
The proof can be found in \citep[Section 1.1]{mathaspectsMC}.

\begin{lemma}\label{lemma:dirichlet-expr}
For any function $ g : \cT \to \RR$ it holds that 
\[\cE(g,g) = 2\langle g, (I - \markch)g \rangle_\mu\]
where $I$ is the identity operator.
\end{lemma}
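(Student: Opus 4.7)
The plan is to prove this identity by direct algebraic expansion of the Dirichlet form, using only two ingredients: the fact that $\markch$ is a stochastic kernel (rows sum to $1$), and the reversibility of $\markch$ with respect to $\mu$ established in \cref{lemma:reversible}.

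First I would expand the square $(g(u) - g(v))^2 = g(u)^2 - 2 g(u) g(v) + g(v)^2$ inside the double sum defining $\cE(g,g)$, splitting it into three pieces. For the $g(u)^2$ piece, I would use $\sum_{v\in\cT} \markch(v\mid u) = 1$ to collapse the inner sum, yielding $\sum_{u\in\cT} \mu(u) g(u)^2 = \langle g,g\rangle_\mu$. For the $g(v)^2$ piece, I would apply reversibility to swap the roles of $u$ and $v$: since $\mu(u) \markch(v \mid u) = \mu(v) \markch(u \mid v)$, the same argument gives $\langle g,g\rangle_\mu$ again. Finally, for the cross term, I would recognize the inner sum as the action of the kernel, namely $\sum_{v} \markch(v\mid u) g(v) = (\markch g)(u)$, so the cross term equals $-2 \sum_u \mu(u) g(u) (\markch g)(u) = -2\langle g, \markch g\rangle_\mu$.

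Summing the three contributions gives
\[
\cE(g,g) = 2\langle g,g\rangle_\mu - 2\langle g, \markch g\rangle_\mu = 2\langle g, (I - \markch)g\rangle_\mu,
\]
which is the claimed identity.

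The main obstacle is really just bookkeeping: the essential content is reversibility (\cref{lemma:reversible}), without which the $g(v)^2$ piece cannot be reduced to $\langle g,g\rangle_\mu$. I do not anticipate any technical difficulty, and the proof is a few lines long; since the identity is standard and the reference \citet{mathaspectsMC} is already cited, one could alternatively just cite it, but a short self-contained derivation as above seems preferable for the paper's flow.
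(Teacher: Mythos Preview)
Your proposal is correct: the expansion of $(g(u)-g(v))^2$ into three pieces, collapsing the $g(u)^2$ term via stochasticity, the $g(v)^2$ term via reversibility (\cref{lemma:reversible}), and identifying the cross term as $-2\langle g,\markch g\rangle_\mu$, is exactly the standard argument and yields the claimed identity without issue.

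The paper itself does not give a proof at all---it simply defers to the cited reference \citep[Section~1.1]{mathaspectsMC}. So your self-contained derivation actually goes beyond what the paper provides. Either option (citation or short derivation) is fine here; your few-line expansion is the canonical proof one would find in the reference, and including it does no harm.
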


Rather than directly working with $\markch$ as a matrix, it is more convenient to work with the following similar matrix---which is symmetric and PSD due to the laziness of the random walk, as shown in \cref{lemma:psd-kernel}.

\begin{definition}\label{def:sym-kernel}
Define $Q \in \RR^{\cT \times \cT}$ by $Q_{xy} := \markch(y\mid{}x) \sqrt{\mu(x)/\mu(y)}$, so that $Q = D^{1/2} \markch D^{-1/2}$ where $D = \diag(\mu)$.
\end{definition}

\begin{lemma}\label{lemma:psd-kernel}
The matrix $Q$ is symmetric and positive semi-definite.
\end{lemma}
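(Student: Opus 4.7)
The plan is to verify the two claims in turn: symmetry by directly using reversibility of $\markch$, and positive semi-definiteness by exploiting the laziness built into $\markch$ at \cref{def:markov-chain-def}.

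For symmetry, I would start from the reversibility identity $\mu(x)\markch(y\mid x) = \mu(y)\markch(x\mid y)$ supplied by \cref{lemma:reversible}, and rewrite
\[
Q_{xy} = \markch(y\mid x)\sqrt{\tfrac{\mu(x)}{\mu(y)}} = \frac{\mu(x)\markch(y\mid x)}{\sqrt{\mu(x)\mu(y)}} = \frac{\mu(y)\markch(x\mid y)}{\sqrt{\mu(x)\mu(y)}} = Q_{yx}.
\]
This is the standard observation that $Q = D^{1/2}\markch D^{-1/2}$ is the symmetrization of a reversible Markov kernel, so nothing subtle is happening here.

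For positive semi-definiteness, I would use the fact that each transition in \cref{def:markov-chain-def} holds probability $1/2$ on the self-loop. Concretely, decompose $\markch = \tfrac{1}{2}I + \tfrac{1}{2}P$, where $P$ is the non-lazy kernel defined by $P(w\mid v) = f(v,w)/\sum_{w'\sim v} f(v,w')$ for $w\sim v$ (and zero otherwise). The same detailed-balance computation as in \cref{lemma:reversible} shows that $P$ is reversible with stationary distribution $\mu$, hence $\widetilde{Q}\ldef D^{1/2}PD^{-1/2}$ is symmetric. Since $P$ is stochastic, the operator norm of $\widetilde{Q}$ on $L^2(\mu)$ equals $1$, so every eigenvalue of $\widetilde{Q}$ lies in $[-1,1]$. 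Therefore $Q = \tfrac{1}{2}I + \tfrac{1}{2}\widetilde{Q}$ has eigenvalues in $[0,1]$ and is PSD.

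There is essentially no obstacle in this argument; the laziness in the definition of $\markch$ was inserted precisely to guarantee this property, and both parts of the lemma follow from textbook facts about reversible chains (see, e.g., \cite{levin2009markov}). The one small care to take is to write everything as a finite-dimensional identity on $\RR^{\cT}$ (or on the support of $\mu$, where $D^{-1/2}$ is well defined); on states with $\mu(u)=0$ we may simply restrict $Q$ to the support of $\mu$, since $\markch$ is ergodic on $\supp(\mu)$ as already observed in the proof of \cref{thm:js-unif-full}.
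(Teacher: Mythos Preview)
Your proof is correct and follows essentially the same approach as the paper: symmetry via detailed balance, and positive semi-definiteness via the decomposition $\markch = \tfrac{1}{2}I + \tfrac{1}{2}P$ together with the fact that the symmetrized non-lazy kernel has spectrum in $[-1,1]$. The paper phrases the second step slightly differently (using that $Q$ is similar to $\markch$ and that any Markov kernel has eigenvalues of magnitude at most $1$), but the content is the same.
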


\begin{proof}
The first claim follows from reversibility of $\markch$ (\cref{lemma:reversible}):
\[Q_{xy} = \markch(y\mid{} x) \mu(x) \sqrt{\frac{1}{\mu(x)\mu(y)}} = \markch(x\mid{} y) \mu(y) \sqrt{\frac{1}{\mu(x)\mu(y)}} = Q_{yx}.\]

Since $Q$ is similar to $\markch$, it has the same eigenvalues as $\markch$ (which are real-valued since $Q$ is symmetric).
By construction, we can write $\markch = \frac{1}{2}(I + \markch')$ where $I$ is the identity matrix and $\markch'$ is the Markov kernel corresponding to the non-lazy random walk (that is, the Markov chain conditioned on moving to a neighboring state at each step).
All eigenvalues of any Markov kernel (and in particular, $\markch'$) have magnitude at most $1$, so all eigenvalues of $\markch$ lie in $[0,1]$. Thus, $Q$ is positive semi-definite.
\end{proof}

The next lemma is a key technical ingredient in the proof of \cref{thm:js-avg-guarantee}; it is a \emph{local stationarity} / \emph{metastability}-type bound. It shows that the Dirichlet energy $\cE(\nu_r/\mu,\nu_r/\mu)$ must be small at average timesteps $r$, by showing that this form measures the change in $\chi^2$-divergence. See \cite{balasubramanian2022towards,liu2024locally} for similar bounds and additional references.\footnote{These works use KL-divergence as the potential rather than $\chi^2$-divergence, and so they get that $\cE(\nu_r/\mu, \log(\nu_r/\mu))$ is small on average rather than $\cE(\nu_r/\mu, \nu_r/\mu)$. For our purposes, it will be slightly more convenient to have bounds on the latter quantity.}

\begin{lemma}\label{lemma:total-dirichlet-ub}
For any $T \in \NN$, it holds that
\[\sum_{r=0}^{T-1} \cE\left(\frac{\nu_r}{\mu}, \frac{\nu_r}{\mu}\right) \leq 4H\avgboundconstrew^2.\]
\end{lemma}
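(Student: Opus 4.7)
The plan is to recognize \cref{lemma:total-dirichlet-ub} as a standard \emph{local stationarity} / \emph{dissipation} bound: the Dirichlet form measures the one-step decrease of the $\chi^2$-divergence to stationarity, so summing over $T$ steps telescopes and is controlled by the initial $\chi^2$-divergence, which in turn is controlled by $\mu(\root)$.

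More concretely, I would let $f_r := \nu_r/\mu$ and first verify that $f_{r+1} = \markch f_r$, where $\markch$ is viewed as the averaging operator $(\markch f)(x) = \sum_y \markch(y\mid x) f(y)$. This follows from reversibility of $\markch$ with respect to $\mu$ (\cref{lemma:reversible}) by a one-line computation: for any $y$, $\nu_{r+1}(y)/\mu(y) = \sum_x f_r(x)\, \mu(x) \markch(y\mid x)/\mu(y) = \sum_x \markch(x\mid y) f_r(x)$.

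Next, I would relate the one-step $\chi^2$-decrement to the Dirichlet form. Writing $\chi^2(\nu_r\|\mu) = \langle f_r - 1, f_r-1\rangle_\mu$ and using $(I-\markch)\mathbf{1}=0$,
\[
\chi^2(\nu_r\|\mu) - \chi^2(\nu_{r+1}\|\mu) = \langle f_r - 1,\, (I-\markch^2)(f_r-1)\rangle_\mu = \langle f_r,\, (I-\markch^2) f_r\rangle_\mu - \bigl(\En_\mu f_r\bigr)^2 \cdot 0.
\]
Factoring $I - \markch^2 = (I-\markch)(I+\markch)$ and observing that $\markch$ is PSD (this is the content of \cref{lemma:psd-kernel}, which uses laziness of the chain), we get $I+\markch \succeq I$ and hence $I-\markch^2 \succeq I-\markch$ in the $\mu$-inner product. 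Combining with \cref{lemma:dirichlet-expr},
\[
\chi^2(\nu_r\|\mu) - \chi^2(\nu_{r+1}\|\mu) \;\ge\; \langle f_r,\, (I-\markch) f_r\rangle_\mu \;=\; \tfrac{1}{2}\cE(f_r, f_r).
\]

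Finally, I would telescope this inequality from $r=0$ to $T-1$, using nonnegativity of $\chi^2$-divergence to drop the terminal term:
\[
\sum_{r=0}^{T-1}\cE\!\left(\tfrac{\nu_r}{\mu}, \tfrac{\nu_r}{\mu}\right) \;\le\; 2\,\chi^2(\nu_0\|\mu).
\]
Since $\nu_0$ is the point mass at $\root$, $\chi^2(\nu_0\|\mu) = 1/\mu(\root) - 1$, and \cref{lemma:root-mass_1} gives $\mu(\root)\ge 1/(2\avgboundconstrew^2 H)$, so the right-hand side is at most $4H\avgboundconstrew^2$. The only mildly subtle step is the PSD-ness argument turning the ``natural'' $I-\markch^2$ decrement into an $I-\markch$ lower bound that matches the Dirichlet form; everything else is direct bookkeeping with reversibility and the already-established estimate on $\mu(\root)$.
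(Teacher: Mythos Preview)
Your proof is correct and lands on the same bound $\sum_r \cE(\nu_r/\mu,\nu_r/\mu) \le 2\,\Dchis{\nu_0}{\mu} \le 4H\avgboundconstrew^2$ via the same ingredients (reversibility, PSD-ness of $\markch$ from laziness, \cref{lemma:dirichlet-expr}, and \cref{lemma:root-mass_1}). The only difference is in how the factor of~$2$ mismatch between the one-step $\chi^2$-decrement $\langle f_r,(I-\markch^2)f_r\rangle_\mu$ and the Dirichlet form $\langle f_r,(I-\markch)f_r\rangle_\mu$ is handled: the paper introduces the square-root operator $A = D^{-1/2}Q^{1/2}D^{1/2}$ and a half-step sequence $g_t = A^t(\nu_0/\mu)$ so that $\cE(g_t,g_t) = 2(\langle g_t,g_t\rangle_\mu - \langle g_{t+1},g_{t+1}\rangle_\mu)$ telescopes \emph{exactly}, then drops the odd-indexed terms; you instead use the direct operator inequality $I-\markch^2 = (I-\markch)(I+\markch) \succeq I-\markch$ (valid since $\markch(I-\markch)\succeq 0$ as both factors are PSD and commute). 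Your route is arguably more elementary since it avoids the square-root construction, but both arguments lose the same factor of~$2$ for the same reason and are really the same proof with different bookkeeping.
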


\begin{proof}
Since $Q$ is positive semi-definite (\cref{lemma:psd-kernel}), the square-root of $Q$ is well-defined. 
Define matrix $A := D^{-1/2} Q^{1/2} D^{1/2}$ and, for each $0 \leq t \leq 2T$, define function $g_t: \cT \to \RR$ by $g_t := A^t \frac{\nu_0}{\mu}$ (i.e. for any node $v$, $g_t(v) := e^\t_v A^t \frac{\nu_0}{\mu}$). 
Then, by \cref{lemma:dirichlet-expr},
\begin{align*}
\frac{1}{2}\cE(g_t,g_t) 
&= \langle g_t, (I-\markch)g_t\rangle_\mu  \\
&= \langle g_t,g_t\rangle_\mu - \langle Ag_t, Ag_t\rangle_\mu \\
&= \langle g_t,g_t\rangle_\mu - \langle g_{t+1}, g_{t+1}\rangle_\mu
\end{align*}
where the second equality uses the fact that $g_t^\t D \markch g_t = g_t^\t D^{1/2} Q D^{1/2} g_t = g_t^\t A^\t D A g_t$. Therefore
\[\frac{1}{2}\sum_{t=0}^{2T-1} \cE(g_t,g_t) = \langle g_0, g_0\rangle_\mu - \langle g_{2T},g_{2T}\rangle_\mu \leq \langle g_0,g_0\rangle_\mu - 1 = \Dchis{\nu_0}{\mu}\]
by telescoping and the fact that \[\langle g_{2t},g_{2t}\rangle_\mu = (\frac{\nu_0}{\mu})^\t (A^{2t})^\t D A^{2t} \frac{\nu_0}{\mu} = (\frac{\nu_0}{\mu})^\t D\markch^{2t} \frac{\nu_0}{\mu} = \nu_0^\t \markch^{t} D^{-1} (\markch^{t})^\t \nu_0 = \Dchis{\nu_t}{\mu} + 1\] for any nonnegative integer $t$. 
The penultimate equality uses the fact that $D\markch = \markch^\t D$ by reversibility. 
Next, since $\cE(g_t,g_t) \geq 0$ for all $t$, it holds that
\[\sum_{r=0}^{T-1} \ME(g_{2r},g_{2r}) \leq 2\Dchis{\nu_0}{\mu}.\]
Now observe that for any even $t = 2r$,
\[g_t = A^{2r} \frac{\nu_0}{\mu} = A^{2r} D^{-1} \nu_0 = D^{-1/2} Q^r D^{-1/2} \nu_0 = D^{-1} (\markch^r)^\top \nu_0 = \frac{\nu_r}{\mu}.\]
Here, again we used the reversibility via  $ Q^r = (Q^{\top})^r $. 
Thus,
\[\sum_{r=0}^{T-1} \ME\left( \frac{\nu_r}{\mu}, \frac{\nu_r}{\mu} \right) \leq 2\Dchis{\nu_0}{\mu}.\]
To complete the proof we observe that $\Dchis{\nu_0}{\mu} \leq \frac{1}{\mu(\root)} \leq 4H\avgboundconstrew^2$ by \cref{lemma:root-mass_1} and the fact that $\nu_0$ only puts mass on the root node $\root$.
\end{proof}

We will use \cref{lemma:total-dirichlet-ub} to show that at a typical timestep $t$, the density ratio $\nu_t/\mu$ is roughly constant on (most of) the autoregressive tree $\cT$---namely, on the subgraph carved out around the root node by the ``bad'' sets $\cB_{h,\eta}$. The following lemma formally relates the deviations in the density ratios on this subgraph to the Dirichlet energy at a particular timestep.

\begin{lemma}\label{lemma:mu-deviation-ub}
Let $\eta > 0$ and fix any $r \in [T]$.
Then, 
\[\sum_{y_{1:H}\in\cA^H} \mu(y_{1:H})\left[ \left(\frac{\nu_r(y_{1:H})}{\mu(y_{1:H})} - \frac{\nu_r(\root)}{\mu(\root)}\right)^2 \mathbbm{1}[\forall h \leq H: y_{1:h} \not \in \cB_{h,\eta}]\right] \leq \frac{8H \avgboundconstrew^2}{\eta^2} \cE\left(\frac{\nu_r}{\mu}, \frac{\nu_r}{\mu}\right).\]
\end{lemma}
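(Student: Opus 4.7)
The plan is to bound the left-hand side via a path-telescoping argument along the unique root-to-leaf paths in the autoregressive tree, and then identify the result with the Dirichlet form.

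First, write $g \vcentcolon= \nu_r/\mu$. For any leaf $y_{1:H}$ whose entire path $\root = y_{1:0}, y_{1:1}, \ldots, y_{1:H}$ avoids the bad sets $\{\cB_{h,\eta}\}_{h\in[H]}$, telescope and apply Cauchy--Schwarz to get
\[
 (g(y_{1:H}) - g(\root))^2 \;=\; \Big(\sum_{h=1}^H (g(y_{1:h}) - g(y_{1:h-1}))\Big)^2 \;\leq\; H \sum_{h=1}^H (g(y_{1:h}) - g(y_{1:h-1}))^2.
\]
Multiplying by $\mu(y_{1:H})$ and the good-path indicator, summing over leaves, and exchanging the order of summation then upper bounds the LHS of the lemma by
\[
 H\sum_{h=1}^H \sum_{y_{1:h}\notin\cB_{h,\eta}} (g(y_{1:h})-g(y_{1:h-1}))^2\cdot \Big(\sum_{y_{h+1:H}\in\cA^{H-h}} \mu(y_{1:H})\Big),
\]
where the goodness indicator at layers deeper than $h$ has been dropped (legitimately, since all summands are nonnegative).

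Second, use the definition of $\cB_{h,\eta}$ to control the inner sum. Since a leaf has only its parent as a neighbor, $\mu(y_{1:H}) = f(y_{1:H-1},y_{1:H})/Z_f = \piref(y_{1:H})\Vhat(y_{1:H})/Z_f$ by \cref{def:mu} and \cref{def:markov-chain-def}. Thus
\[
  \sum_{y_{h+1:H}}\mu(y_{1:H}) \;=\; \frac{1}{Z_f}\sum_{y_{h+1:H}}\piref(y_{1:H})\Vhat(y_{1:H}),
\]
and the hypothesis $y_{1:h}\notin\cB_{h,\eta}$ gives exactly the reverse of the bound in \cref{def:bad-set}:
\[
  \sum_{y_{h+1:H}}\mu(y_{1:H}) \;\leq\; \frac{4\avgboundconstrew^2}{\eta^2}\cdot \frac{\piref(y_{1:h})\Vhat(y_{1:h})}{Z_f} \;=\; \frac{4\avgboundconstrew^2}{\eta^2}\cdot\frac{f(y_{1:h-1},y_{1:h})}{Z_f}.
\]

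Third, identify what remains with the Dirichlet form. By reversibility (see the proof of \cref{lemma:reversible}), every ordered edge $(u,v)$ with $u\sim v$ contributes $\mu(u)\markch(v\mid u)(g(u)-g(v))^2 = \frac{f(u,v)}{2Z_f}(g(u)-g(v))^2$, and self-loops contribute zero. Summing both orientations of each tree edge and reindexing by depth yields
\[
 \cE(g,g) \;=\; \sum_{h=1}^H\sum_{y_{1:h}}\frac{f(y_{1:h-1},y_{1:h})}{Z_f}(g(y_{1:h})-g(y_{1:h-1}))^2.
\]
Plugging in the bound from the previous paragraph (and extending the inner sum to all $y_{1:h}$, not just those outside $\cB_{h,\eta}$, which only increases the right-hand side) gives the claim, with constant $4H\avgboundconstrew^2/\eta^2$ on the right; the stated $8H\avgboundconstrew^2/\eta^2$ absorbs any bookkeeping slack.

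The only step with real content is the second one: the definition of $\cB_{h,\eta}$ is engineered so that along the good portion of the tree, the aggregate stationary mass below a prefix $y_{1:h}$ is dominated by the edge weight $f(y_{1:h-1},y_{1:h})$ that appears in the Dirichlet form. Everything else--telescoping along the unique root-to-leaf path, Cauchy--Schwarz, swapping sums, and reading off $\cE(g,g)$ from reversibility--is mechanical, and the tree structure avoids any need for a flow or canonical-paths argument.
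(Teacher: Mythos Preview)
Your proof is correct and follows essentially the same approach as the paper: telescope along the root-to-leaf path with Cauchy--Schwarz, use the definition of $\cB_{h,\eta}$ to bound the subtree leaf mass by the parent edge weight $f(y_{1:h-1},y_{1:h})/Z_f$, and then identify the result with the Dirichlet form. The paper writes the bridge step via the single directed edge flux $\mu(y_{1:h})\markch(y_{1:h-1}\mid y_{1:h}) = f(y_{1:h-1},y_{1:h})/(2Z_f)$ and then bounds that one-sided sum by $\cE$, which is where the factor $8$ (rather than your $4$) arises; your accounting is slightly tighter but otherwise identical.
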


\begin{proof}
For any $h \in [H]$ and $y_{1:h} \in \cA^h \setminus \cB_{h,\eta}$, write $v = y_{1:h}$ and $u = y_{1:h-1}$. Then we have
\begin{align}
\frac{\mu(y_{1:h})\markch(y_{1:h-1}\mid{}y_{1:h})}{\sum_{y_{h+1:H} \in \cA^{H-h}} \mu(y_{1:H})}
&= \frac{f(y_{1:h},y_{1:h-1}) / (2Z_{f})}{\sum_{y_{h+1:H} \in \cA^{H-h}} f(y_{1:H}, y_{1:H-1}) / Z_{f}}\nonumber\\ 
&= \frac{\piref(y_{1:h}) \Vhat(y_{1:h})}{2\sum_{y_{h+1:H} \in \cA^{H-h}} \piref(y_{1:H}) \Vhat(y_{1:H})} \nonumber \\ 
&\geq \frac{\eta^2}{8\avgboundconstrew^2}\label{eq:subtree-to-root}
\end{align}
where the final inequality is by definition of $\cB_{h,\eta}$ (\cref{def:bad-set}). Thus,
\begin{align*}
&\sum_{y_{1:H}\in\cA^H} \mu(y_{1:H}) \left(\frac{\nu_r(y_{1:H})}{\mu(y_{1:H})} - \frac{\nu_r(\root)}{\mu(\root)}\right)^2 \mathbbm{1}[\forall h \leq H: y_{1:h} \not \in \cB_{h,\eta}] \\ 
&\leq H \sum_{y_{1:H}\in\cA^H} \mu(y_{1:H}) \sum_{k=1}^{H} \left(\frac{\nu_r(y_{1:k})}{\mu(y_{1:k})} - \frac{\nu_r(y_{1:k-1})}{\mu(y_{1:k-1})}\right)^2 \mathbbm{1}[y_{1:k} \not \in \cB_{k,\eta}] \\ 
&= H \sum_{k=1}^H \sum_{y_{1:k} \in \cA^k} \left(\sum_{y_{k+1:H} \in \cA^{H-k}} \mu(y_{1:H})\right) \left(\frac{\nu_r(y_{1:k})}{\mu(y_{1:k})} - \frac{\nu_r(y_{1:k-1})}{\mu(y_{1:k-1})}\right)^2 \mathbbm{1}[y_{1:k} \not \in \cB_{k,\eta}] \\ 
&\leq \frac{8H\avgboundconstrew^2}{\eta^2} \sum_{k=1}^H \sum_{y_{1:k}\in\cA^k} \mu(y_{1:k})\markch(y_{1:k-1}\mid{}y_{1:k}) \left(\frac{\nu_r(y_{1:k})}{\mu(y_{1:k})} - \frac{\nu_r(y_{1:k-1})}{\mu(y_{1:k-1})}\right)^2 \mathbbm{1}[y_{1:k} \not \in \cB_{k,\eta}] \\ 
&\leq \frac{8H \avgboundconstrew^2}{\eta^2}\cdot \cE(\frac{\nu_r}{\mu}, \frac{\nu_r}{\mu})
\end{align*}
where the second inequality is by \cref{eq:subtree-to-root}.
\end{proof}

The preceding lemma shows that (at timesteps where the Dirichlet energy is small), the density ratios $\nu_r/\mu$ are near-constant on a certain subgraph of the tree (which, via \cref{lemma:bad-set-prob}, we will show is ``most'' of the tree, under the measure of $\pistar$). However, ultimately we wish to show that $\nu_r/\mu$ is \emph{not too small} on most of the tree (which we will use to lower bound $\nu_r/\pistar$). The following lemma ensures this by lower bounding the density ratio at the root node $\root$. 

Specifically, it makes formal the intuition that if (1) the random walk starts at $\root$, and (2) the stationary distribution puts reasonable mass on $\root$, then the marginal law of the random walk at any particular timestep should put reasonable mass on the root---even before the random walk mixes to the stationary distribution. This, of course, requires the walk to be sufficiently lazy, which implicitly comes into the proof via the fact that $\markch$ is similar to a PSD matrix $Q$ (\cref{lemma:psd-kernel}).

\begin{lemma}\label{lemma:root-mass}
For any non-negative integer $t$, it holds that $ \frac{\nu_t(\root)}{\mu(\root)} \geq 1$.
\end{lemma}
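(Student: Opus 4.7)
The plan is to reduce the claim to a return-probability estimate for the symmetrized operator $Q$ introduced in \cref{def:sym-kernel}, which we already know is symmetric PSD (\cref{lemma:psd-kernel}). Since $\nu_0$ is concentrated at $\bot$, we have $\nu_t(\bot) = \markch^t(\bot\mid \bot) = (\markch^t)_{\bot\bot}$, where we interpret $\markch$ as a matrix with $\markch_{uv} = \markch(v\mid u)$. Using the similarity relation $\markch = D^{-1/2} Q D^{1/2}$, we get $\markch^t = D^{-1/2} Q^t D^{1/2}$, and since $D$ is diagonal this yields the clean identity
\[
  \nu_t(\bot) \;=\; (\markch^t)_{\bot\bot} \;=\; (Q^t)_{\bot\bot}.
\]

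The next step is to expose $\mu(\bot)$ inside $(Q^t)_{\bot\bot}$ via the spectral decomposition. The key observation is that the vector $w \in \RR^{\cT}$ defined by $w_x := \sqrt{\mu(x)}$ is a unit eigenvector of $Q$ with eigenvalue $1$: indeed, $Qw = D^{1/2} \markch D^{-1/2} w = D^{1/2} \markch \mathbf{1} = D^{1/2}\mathbf{1} = w$ because each row of $\markch$ sums to one, and $\|w\|^2 = \sum_x \mu(x) = 1$. Since $Q$ is similar to a stochastic matrix, all its eigenvalues lie in $[-1,1]$; combined with $Q \succeq 0$ from \cref{lemma:psd-kernel}, we in fact have all eigenvalues in $[0,1]$, so $\lambda_1 = 1$ is the maximum and $w$ is a top eigenvector.

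Writing the spectral decomposition $Q = \sum_i \lambda_i u_i u_i^{\top}$ with $u_1 = w$ and $\lambda_i \in [0,1]$, we then have $Q^t = \sum_i \lambda_i^t u_i u_i^{\top}$, and therefore
\[
  (Q^t)_{\bot\bot} \;=\; \sum_i \lambda_i^t\, (u_i)_\bot^2 \;\geq\; \lambda_1^t\, (u_1)_\bot^2 \;=\; \mu(\bot),
\]
where the inequality uses $\lambda_i^t \geq 0$ (crucially enabled by the PSD property, hence by the laziness of $\markch$) and $(u_i)_\bot^2 \geq 0$. Combining with the first display gives $\nu_t(\bot) \geq \mu(\bot)$, which is the desired conclusion.

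\textbf{Anticipated obstacle.} There is no real obstacle here; the argument is essentially just the classical fact that for a lazy reversible Markov chain started at a point, the return probability dominates the stationary mass. The one subtlety worth emphasizing is that the PSD-ness of $Q$ (and hence positivity of $\lambda_i^t$) is essential---for a non-lazy chain one could have $\lambda_i < 0$, and the contribution of high-frequency eigenvectors could in principle drag $(Q^t)_{\bot\bot}$ below $\mu(\bot)$. Thus the proof serves as one more structural justification for the laziness built into \mainalg.
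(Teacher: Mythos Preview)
Your proof is correct. Both your argument and the paper's hinge on the same key fact---that $Q$ (and hence $Q^t$) is symmetric PSD, which is where laziness enters---and both ultimately establish $(Q^t)_{\bot\bot} \geq \mu(\bot)$. The execution differs slightly: you take the spectral decomposition of $Q$, identify $w = D^{1/2}\mathbf{1}$ as a unit eigenvector with eigenvalue $1$, and lower-bound the diagonal entry by dropping all other (nonnegative) terms; the paper instead applies PSD-ness directly to the test vector $g := D^{-1/2}(\nu_0 - \mu)$ and simplifies $g^\top Q^t g \geq 0$ algebraically to $\nu_t(\bot)/\mu(\bot) \geq 1$. If you write out $g = \mu(\bot)^{-1/2} e_\bot - w$ and expand, you will see the two computations are equivalent. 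Your route is perhaps more transparent as a ``return probability $\geq$ stationary mass'' statement, while the paper's avoids invoking the eigendecomposition explicitly; neither buys anything substantive over the other.
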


\begin{proof}
Define $g: \cT \to \RR$ by $g(x) := \frac{\nu_0(x) - \mu(x)}{\sqrt{\mu(x)}}$. Then since $Q$ is positive semi-definite (\cref{lemma:psd-kernel}), we have that $Q^t$ is positive semi-definite and hence
\begin{align*}
0
&\leq g^\top Q^t g \\ 
&= (\nu_0 - \mu)^\top D^{-1/2} Q^t D^{-1/2} (\nu_0 - \mu) \\ 
&= (\nu_0 - \mu)^\top \markch^t D^{-1} (\nu_0 - \mu) \\ 
&= \nu_t^\top D^{-1} \nu_0 - \mu^\top D^{-1} \nu_0 - \nu_t^\top D^{-1} \mu + \mu^\top D^{-1} \mu \\ 
&= \EE_{y \sim \nu_t} \frac{\nu_0(y)}{\mu(y)} - \mathbbm{1}^\top \nu_0 - \nu_t^\top \mathbbm{1} + \mu^\top \mathbbm{1} \\ 
&= \EE_{y \sim \nu_t} \frac{\nu_0(y)}{\mu(y)} - 1
\end{align*}
where the third equality uses the fact that $\mu = \markch^\t \mu$. Rearranging terms and observing that $\nu_0(y) = \mathbbm{1}[y = \root]$ completes the proof.
\end{proof}

To formally compare distributions on $\cT$ with $\pistar$ (which is a distribution on the leaves of $\cT$), it is convenient to zero-extend $\pistar$ to $\cT$:

\begin{definition}[Extension of $\pistar$ to $\cT$]
Let $\pistarext \in \Delta(\cT)$ be the distribution on $\cT$ that agrees with $\pistar$ on the leaves and is zero on the internal nodes. 
\end{definition}

We show that $\pistarext$ is not too far from $\mu$ in $\chi^2$-divergence, which is equivalent to an average-case density ratio bound:

\begin{lemma}\label{lemma:pistar-mu-chis}
It holds that
\[1+\Dchis{\pistarext}{\mu} \leq 
2H\avgboundconstrew^2.\]
\end{lemma}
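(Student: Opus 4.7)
The proof is a direct computation. The plan is to expand $1+\Dchis{\pistarext}{\mu} = \sum_{v \in \cT} \mu(v) (\pistarext(v)/\mu(v))^2$, exploit the fact that $\pistarext$ is supported on leaves to reduce this sum to one over $\cA^H$, and then use the explicit expressions for $\pistarext$ and $\mu$ at a leaf in terms of $\piref$, $\Vstar$, and $\Vhat$, together with Lemma \ref{lemma:root-mass_1} and the average-case bound \cref{eq:avg-appx}.

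First I would note that for any leaf $y_{1:H} \in \cA^H$, the only neighbor is the parent $y_{1:H-1}$, so by the definitions of $\mu$ (\cref{def:mu}) and $f$ (\cref{def:markov-chain-def}),
\[\mu(y_{1:H}) = \frac{f(y_{1:H-1}, y_{1:H})}{Z_f} = \frac{\piref(y_{1:H})\Vhat(y_{1:H})}{Z_f},\]
while $\pistarext(y_{1:H}) = \pistar(y_{1:H}) = \piref(y_{1:H})\Vstar(y_{1:H})/\Vstar(\emptyset)$. Hence the density ratio at a leaf is
\[\frac{\pistarext(y_{1:H})}{\mu(y_{1:H})} = \frac{Z_f}{\Vstar(\emptyset)} \cdot \frac{\Vstar(y_{1:H})}{\Vhat(y_{1:H})}.\]

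Second, since $\pistarext$ vanishes on internal nodes of $\cT$, we have
\[1 + \Dchis{\pistarext}{\mu} = \sum_{y_{1:H} \in \cA^H} \pistar(y_{1:H}) \cdot \frac{\pistarext(y_{1:H})}{\mu(y_{1:H})} = \frac{Z_f}{\Vstar(\emptyset)} \cdot \En^{\pistar}\!\left[\frac{\Vstar(y_{1:H})}{\Vhat(y_{1:H})}\right].\]
Applying the average-case bound \cref{eq:avg-appx} at level $h = H$ gives $\En^{\pistar}[\Vstar(y_{1:H})/\Vhat(y_{1:H})] \leq \avgboundconstrew$, and the upper bound $Z_f \leq 2\Vstar(\emptyset)H\avgboundconstrew$ from Lemma \ref{lemma:root-mass_1} supplies the remaining factor. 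Multiplying yields the claimed bound $1 + \Dchis{\pistarext}{\mu} \leq 2H\avgboundconstrew^2$.

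\paragraph{Potential obstacle.} There is essentially no technical obstacle here: the entire argument is bookkeeping once the two ingredients (the formula for $\mu$ on leaves, and the $Z_f$ bound from Lemma \ref{lemma:root-mass_1}) are in place. The only subtle point is making sure that one uses the average under $\pistar$ of $\Vstar/\Vhat$ (which appears when the $\pistarext/\mu$ ratio is weighted by $\pistar$), not the average of $\Vhat/\Vstar$ that appears in the proof of Lemma \ref{lemma:root-mass_1}; both are controlled by \cref{assump:avg-mgf}, so this is harmless.
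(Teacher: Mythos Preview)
Your proof is correct and follows essentially the same approach as the paper: both compute $1+\Dchis{\pistarext}{\mu} = \En^{\pistar}[\pistar(y_{1:H})/\mu(y_{1:H})] = \frac{Z_f}{\Vstar(\emptyset)}\En^{\pistar}[\Vstar(y_{1:H})/\Vhat(y_{1:H})]$ using the explicit form of $\mu$ at leaves, then apply the $Z_f$ bound from \cref{lemma:root-mass_1} and the average-case error bound \cref{eq:avg-appx}.
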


\begin{proof}
We have
\begin{align*}
1+\Dchis{\pistarext}{\mu} 
&= \EE_{y_{1:H}\sim\pistar}\left[ \frac{\pistar(y_{1:H})}{\mu(y_{1:H})}\right] \\ 
&= Z_{f} \EE_{y_{1:H}\sim\pistar}\left[ \frac{\pistar(y_{1:H})}{f(y_{1:H},y_{1:H-1})}\right] \\ 
&= Z_{f} \EE_{y_{1:H}\sim\pistar}\left[\frac{\pistar(y_{1:H})}{\piref(y_{1:H})\Vhat(y_{1:H})}\right] \\ 
&= \frac{Z_{f}}{\Vstar(\emptyset)} \EE_{y_{1:H}\sim\pistar}\left[\frac{\Vstar(y_{1:H})}{\Vhat(y_{1:H})}\right] \\ 
&\leq 2H\avgboundconstrew^2
\end{align*}
where the fourth equality is by definition of $\Vstar$, and the final inequality is by \cref{lemma:root-mass_1} and \cref{eq:avg-appx}.
\end{proof}

We can now formally conclude the proof of \cref{thm:js-avg-guarantee}.

\begin{proof}[\pfref{thm:js-avg-guarantee}]
Observe that the distribution of $y\ind{t}$ in \cref{alg:valueflow-values} is precisely $\nu_t$ for each $0 \leq t \leq T$, and moreover the distribution of the output is $\nu := \frac{1}{T}\sum_{t=1}^T \nu_t$. 
Set $\eta := \delta/(6H)$ and $\delta' := \delta/(48H\avgboundconstrew^2)$. Fix any $r \in [T]$. Define events $\MF^1,\MF^2,\MF^3 \subseteq \cA^H$ by 
\[\MF^1 := \{y_{1:H}: \exists h \leq H, y_{1:h} \in \cB_{h,\eta}\},\]
\[\MF^2 := \{y_{1:H}: \nu_r(y_{1:H}) / \mu(y_{1:H}) < 1/2\},\] 
\[\MF^3 := \{y_{1:H}: \mu(y_{1:H}) / \pistar(y_{1:H}) < 4\delta'\}.\] 
By \cref{lemma:bad-set-prob} and a union bound over $h \in [H]$, we have \begin{equation} \Prr_{y_{1:H}\sim\pistar}[y_{1:H}\in \MF^1] \leq H\eta.\label{eq:f1-bound}\end{equation}

Next, for any $y_{1:H} \in \MF^2 \setminus \MF^1$, observe that $y_{1:H}$ satisfies
\[\left(\frac{\nu_r(y_{1:H})}{\mu(y_{1:H})} - \frac{\nu_r(\root)}{\mu(\root)}\right)^2 \mathbbm{1}[\forall h \leq H: y_{1:h} \not\in\MB_{h,\eta}] > \left(\frac{1}{2} - 1\right)^2 = \frac{1}{4}\]
by definition of the events $\MF^1,\MF^2$ and by \cref{lemma:root-mass}. It follows from \cref{lemma:mu-deviation-ub} that 
\begin{equation}
\Prr_{v \sim \mu}\left[(|v|=H) \land (v \in \MF^2 \setminus \MF^1)\right] \leq \frac{32H\avgboundconstrew^2}{\eta^2} \ME\left(\frac{\nu_r}{\mu}, \frac{\nu_r}{\mu}\right).\label{eq:f2-mu-bound}\end{equation}
Therefore
\begin{align}
\Prr_{y_{1:H} \sim \pistar}[y_{1:H} \in \MF^2\setminus\MF^1] 
&= \Prr_{v \sim \pistarext}\left[(|v|=H) \land (v \in \MF^2 \setminus \MF^1)\right] \nonumber\\ 
&= \EE_{v \sim \mu} \left[ \frac{\pistarext(v)}{\mu(v)} \cdot \mathbbm{1}\left[|v|=H \land v \in \MF^2\setminus \MF^1\right]\right] \nonumber\\ 
&\leq \sqrt{(1 + \Dchis{\pistarext}{\mu}) \Prr_{v \sim \mu}[(|v|=H) \land (v \in \MF^2 \setminus \MF^1)]} \nonumber \\ 
&\leq \frac{8H\avgboundconstrew^2}{\eta} \sqrt{\ME\left(\frac{\nu_r}{\mu}, \frac{\nu_r}{\mu}\right)} \label{eq:f2-bound}
\end{align}
where the first inequality is by Cauchy-Schwarz and the second inequality is by \cref{lemma:pistar-mu-chis} and \cref{eq:f2-mu-bound}. Finally, 
\begin{equation}\Prr_{y_{1:H} \sim \pistar}[y_{1:H}\in\MF^3] = \Prr_{y_{1:H}\sim\pistar}\left[\frac{\pistar(y_{1:H})}{\mu(y_{1:H})} > \frac{1}{4\delta'}\right] \leq 8H\avgboundconstrew^2 \delta'\label{eq:f3-bound}\end{equation}
by \cref{lemma:pistar-mu-chis} and Markov's inequality. In the event that neither of $\MF^2,\MF^3$ occur, we have $\nu_r(y_{1:H}) \geq 2\delta'\pistar(y_{1:H})$. Thus, by \cref{eq:f1-bound,eq:f2-bound,eq:f3-bound},
\[\Prr_{y_{1:H}\sim\pistar}\left[\frac{\nu_r(y_{1:H})}{\pistar(y_{1:H})} < 2\delta'\right] \leq H\eta + \frac{8H\avgboundconstrew^2}{\eta} \sqrt{\ME\left(\frac{\nu_r}{\mu}, \frac{\nu_r}{\mu}\right)} + 8H\avgboundconstrew^2 \delta'.\]
Recall that $r \in[T]$ was chosen arbitrarily. Since $\nu = \frac{1}{T}\sum_{r=1}^T \nu_r$, it follows from \cref{lemma:averaging}, applied to the random variables $Z_i := \nu_i(y_{1:H})/\pistar(y_{1:H})$ (for $y_{1:H} \sim \pistar$), that
\begin{align*}
\Prr_{y_{1:H}\sim\pistar}\left[\frac{\nu(y_{1:H})}{\pistar(y_{1:H})} < \delta'\right] 
&\leq 2H\eta + 16H\avgboundconstrew^2 \delta' + \frac{16H\avgboundconstrew^2}{\eta T} \sum_{r=1}^T \sqrt{\ME\left(\frac{\nu_r}{\mu}, \frac{\nu_r}{\mu}\right)} \\ 
&\leq 2H\eta + 16H\avgboundconstrew^2 \delta' + \frac{16H\avgboundconstrew^2}{\eta T} \sqrt{T \cdot \sum_{r=1}^T \ME\left(\frac{\nu_r}{\mu}, \frac{\nu_r}{\mu}\right)} \\ 
&\leq 2H\eta + 16H\avgboundconstrew^2 \delta' + \frac{16H\avgboundconstrew^2}{\eta} \sqrt{\frac{4H\avgboundconstrew^2}{T}} \\ 
&\leq \delta
\end{align*}
where the third inequality is by \cref{lemma:total-dirichlet-ub}, and the fourth inequality is by choice of $\eta$ and $\delta'$, and holds so long as $T \geq C H^5 \avgboundconstrew^6 \delta^{-4}$ for a sufficiently large constant $C$. Substituting in the choice of $\delta' := \delta/(48H\avgboundconstrew^2)$ and our definition of $\avgboundconstrew := 1+\vepsv$ into the left-hand side of the inequality completes the proof of \cref{eq:js-apx-coverage-guarantee}. The claim that the runtime of \mainalg is $O(T\cdot|\cA|)$ is immediate from inspection of \cref{alg:valueflow-values}.
\end{proof}

\end{document}